\title{TVineSynth: A Truncated C-Vine Copula Generator of Synthetic Tabular Data to Balance Privacy and Utility}
\author{Elisabeth Griesbauer \\
Norwegian Computing Center \\
University of Oslo \\
Integreat - Norwegian Centre for Knowledge-driven Machine Learning \\
\texttt{elismg@uio.no} \\
\And
Claudia Czado \\
Technical University of Munich \\
Munich Data Science Institute
\AND
Arnoldo Frigessi \\
University of Oslo \\
Integreat - Norwegian Centre for Knowledge-driven Machine Learning \\
\And
Ingrid Hobæk Haff \\
University of Oslo \\
Integreat - Norwegian Centre for Knowledge-driven Machine Learning
}
\newtheorem{theorem}{Theorem}[section]
\newtheorem{proposition}[theorem]{Proposition}
\newtheorem{definition}[theorem]{Definition}
\newtheorem{remark}[theorem]{Remark}
\DeclareMathOperator*{\argmax}{arg\,max}
\newcommand\smallO{
  \mathchoice
    {{\scriptstyle\mathcal{O}}}%
    {{\scriptstyle\mathcal{O}}}%
    {{\scriptscriptstyle\mathcal{O}}}%
    {\scalebox{.7}{$\scriptscriptstyle\mathcal{O}$}}%
}
\begin{document}

\maketitle

\doparttoc %
\faketableofcontents %

\begin{abstract}
    We propose TVineSynth, a vine copula based synthetic tabular data generator, which is designed to balance privacy and utility, using the vine tree structure and its truncation to do the trade-off. Contrary to synthetic data generators that achieve DP by globally adding noise, TVineSynth performs a controlled approximation of the estimated data generating distribution, so that it does not suffer from poor utility of the resulting synthetic data for downstream prediction tasks. TVineSynth introduces a targeted bias into the vine copula model that, combined with the specific tree structure of the vine, causes the model to zero out privacy-leaking dependencies while relying on those that are beneficial for utility. Privacy is here measured with membership (MIA) and attribute inference attacks (AIA). Further, we theoretically justify how the construction of TVineSynth ensures AIA privacy under a natural privacy measure for continuous sensitive attributes. When compared to competitor models, with and without DP, on simulated and on real-world data, TVineSynth achieves a superior privacy-utility balance.
\end{abstract}

\section{INTRODUCTION}\label{sec:introduction}

The availability of diverse, high-quality data has led to tremendous advances in science, technology and society at large, when analysed by means of statistical and machine learning (ML) methods. However, real-world data are in many cases %
limited, imbalanced or cannot be made public to the research community due to privacy restrictions, obstructing progress especially in bio-medical research. 
Synthetic data can augment the real data, and as long as they do not disclose private aspects, they can also substitute sensitive real data. Both substituting and augmenting real with synthetic data have proven to be successful in training downstream ML applications
\citep{gao2023synthetic, 
morales2023evaluation, %
shetty2023data, %
wang2023data, %
jain2023fair, %
pezoulasboosting, %
ye2023exploiting, %
goldschmidt2023improving, %
wang2023learning, %
saisho2023sandbox, %
schaufelberger2023impact}. %

We focus on scenarios where the main concern about the real tabular data is privacy and the downstream ML application is classification or regression. Our work is motivated by two objectives: (i) The synthetic data should retain joint dependence and marginal behavior of the real data, so that a regression method trained on the synthetic data performs comparably well on unseen data as it would have done if trained on the real data (\textbf{utility}); (ii) the generative model should not leak sensitive information on an instance of the real data into the synthetic data (\textbf{privacy}). 
While differential privacy (DP) \citep{dwork2014algorithmic} provides a sound approach for privacy preserving generative modelling, the resulting synthetic data have shown to score poorly in terms of utility \citep{jayaraman2019evaluating, %
bagdasaryan2019differential, %
cheng2021can}. %
On the other hand, popular generative models without privacy guarantees, such as generative adversarial networks (GANs) \citep{goodfellow2014generative} or variational autoencoders (VAEs) \citep{kingma2013auto}, tend to generate realistic, but privacy violating synthetic data \citep{chen2020gan, %
van2023membership, %
andrei2023overview}. %
Further, their training is data intensive, making them inappropriate as synthetic data generators for small and moderately sized real data sets.

\paragraph{Contributions} 
We propose TVineSynth, a vine copula based synthetic data generator, designed to balance privacy against utility. 
In contrast to globally adding noise, as is done to obtain DP guarantees, TVineSynth approximates the data generating distribution by (1) setting the focus of the model on dependencies that are relevant for the prediction task and (2) introducing a targeted bias into dependencies that would otherwise leak sensitive information into the synthetic data. 

This is achieved as follows: 
We propose an algorithm to re-order the features\footnote{The terms feature and covariate are used interchangeably in the following.} in the real data, to obtain a block structure dependence. Then, we set the vine tree structure such that we achieve (1) and truncate away tree by tree from the vine copula, cutting off privacy leaking dependencies to achieve (2). 
Re-ordering the real data to obtain a block structured dependence is central in TVineSynth as it amplifies the effect of truncation on privacy, thus making it easier to find a suitable vine tree structure that also keeps high utility.

We conduct an in depth analysis of the privacy of the TVineSynth generated synthetic data under membership and attribute inference attacks \citep{shokri2017membership, yeom2018privacy} and of their utility w.r.t. prediction performance. 
We asses AIA privacy with the mean absolute $\beta$-coefficient (MAB), a measure for AIA privacy, that naturally builds on the implementation of AIA attacks by \citet{stadler2022synthetic} and addresses the weaknesses of previously used measures.
We theoretically justify the construction of TVineSynth by showing how the truncation of the vine copula and the order of the covariates in the vine tree structure ensure AIA privacy under the MAB.
TVineSynth's privacy and utility are compared with those of other generative models with and without privacy guarantees.
We show that if privacy \textit{and} utility matter, TVineSynth is preferable over private and non-private competitors. 

\paragraph{Why do we not use DP?}
TVineSynth does not use the concept of DP in its model design. We argue that it is common that the data holder wants to protect specific sensitive features, while regarding the protection of the remaining features as less important. Contrary to how we design TVineSynth, this knowledge about the real data is not exploited in favor of either utility or of privacy when noise is added uniformly on (statistics of) all features to obtain DP guarantees. 
Real-world medical data is highly complex and inherently noisy and preserving its joint distribution is critical for decision making, which makes the application of DP less suitable in the medical domain. There the most relevant risk to analyse is the risk of identifying a finite set of real patients from a finite synthetic data set generated, which DP does neither address nor provide theoretical bounds for. Finally, DP 'fails to address ethical concerns pertaining to the risk benefit ratio, where minimal risk may be deemed allowable if the societal benefits' from more rapid development of medical treatments are high, \citep{yoon2020anonymization}. 
DP offers theoretical bounds on the effect of substituting a single training data point on the probability of observing an outcome of an algorithm. However, these bounds become weak to meaningless when a privacy budget is chosen, that is non-prohibitive to utility \citep{stock2022defending}. While \citet{ziller2024reconciling} claim that for image data a meaninglessly high $\epsilon$ provides sufficient protection against relaxed but realistic privacy attacks, we argue that their results cannot directly be transferred to tabular data and a worst-case privacy assessment through MIAs is indispensable in the medical domain.
On top of that, the bounds provided by DP are hard to interpret for real-world applications and risks. Data protection laws, such as the GDPR \citep{gdpr2016general}, do not build on DP, which further highlights the problem of translating DP into practice, \citep{yoon2020anonymization}.
While DP translates into a theoretical lower bound on MIA privacy gain (PG) \citep{yeom2018privacy}, empirically TVineSynth achieves a PG comparable to the DP competitors due to the MLE's robustness in TVineSynth. No theoretical bounds on AIA success have been developed w.r.t. DP yet, and we show that through its model design, TVineSynth can handle this attribute specific risk on par with its DP competitors.

\paragraph{Related Work}
\citet{xu2019modeling} extend GANs \citep{goodfellow2014generative} and VAEs \citep{kingma2013auto} with a conditional generator and specific preprocessing to obtain their counterparts for tabular data, namely CTGAN and TVAE.  \citet{kotelnikov2023tabddpm} adapt denoising diffusion probabilistic models to model tabular data. 
These approaches model the real data closely, but do not exploit model structure to achieve privacy like TVineSynth. 
Taking privacy into account, \citet{jordon2018pate} (PATE-GAN), \citet{xie2018differentially} (DP-GAN) and \citet{zhang2017privbayes} (PrivBayes) modify non-private GANs and Bayesian networks to fulfill DP, while \citet{donhauser2024privacy} utilize a particle based approach on privatized marginals (PrivPGD). 
These models add noise in a global fashion in order to guarantee DP, contrasting the precise model approximation through truncation of a vine copula in TVineSynth.
Another line of work generates synthetic data with copulas, such as Gaussian copulas \citep{patki2016sdv, kumi2023sleepsynth}, Student's $t$-copulas \citep{benali2021mtcopula}, an empirical beta copula as latent space distribution in a pre-trained autoencoder \citep{coblenz2023learning}, a DP Gaussian copula by applying DP marginal histograms and correlation matrix \citep{li2014differentially} or a copula estimated with normalizing flows \citep{kamthe2021copula}. 
These copulas lack the flexibility of the vine copula and are not tailored towards the privacy needs of the data holder. 
Generative modeling with vine copulas naturally builds on this line of work. While \citet{chu2022vine} limit themselves to C- and D-vines, \citet{meyer2021copula} utilize an R-vine copula for data generation. Moreover, \citet{sun2019learning} re-formulate the structure selection in an R-vine as a reinforcement learning problem to increase modelling flexibility, \citet{tagasovska2019copulas} model high dimensional data using a vine copula as latent space distribution in an autoencoder and \citet{gambs2021growing} obtain a DP vine copula by applying DP marginal histograms. 
While these generative models benefit from the flexibility of vine copulas, they do not balance utility with privacy, as we do by exploiting the vine structure and truncation.
 \citet{patki2016sdv} and \citet{qian2023synthcity} offer implementations of several SOTA generative models and evaluation metrics, while \citet{meyer2021synthia} focus specifically on vine copulas, with no attention to privacy.

\section{METHODS}\label{sec:methods}

\subsection{Vine Copula Based Synthetic Data Generation}\label{sec:vinecopula_based_synthdata_generation}
A vine copula\footnote{For an extended introduction to vine copulas please consult Appendix \ref{app:intro_to_vines}.} is a probabilistic model that builds on copulas: A $d$-dimensional copula $C:[0,1]^d \rightarrow [0,1]$ is a $d$-dimensional distribution on the unit cube with uniform marginals and corresponding copula density $c$. Sklar's theorem \citep{sklar1959fonctions} states that any multivariate distribution $F$ can be expressed in terms of a copula $C$; and if all densities exist, a multivariate density $f$ can be expressed as a product of the corresponding copula density $c$ and marginal densities.
Vine copulas \citep{joe1997multivariate, bedford2001probability, bedford2002vines, aas2009pair, joe2014dependence, czado2019analyzing} are hierarchical probabilistic graphical models constructed from univariate distributions and bivariate (conditional) copulas. The vine tree structure $\mathcal{V} = (T_1, \dots, T_{d-1})$, which is a nested sequence of $d-1$ trees $T_k = (V_k, E_k), \; k \in [d-1]$, serves as a construction plan of the vine copula.  
Here an edge in $T_1$ represents a bivariate copula $c_{a_e, b_e}$ of the unconditional pair of random variables $(X_{a_e}, X_{b_e}), \; a_e, b_e \in [d]$, and an edge $e$ in $T_k, \; k \in \{2, \dots, d-1\}$ represents a bivariate copula $c_{a_e, b_e ; D_e}$ of a pair $(X_{a_e}, X_{b_e})$, conditioned on $k-1$ random variables $X_j, \; j \in D_e \subset [d]$. 
Taking $\mathcal{V}$ and the pair copulas together, the $d$-dimensional copula density $c$ can be expressed as a product of (conditional) pair copulas over the edges of the trees in $\mathcal{V}$ giving the \textit{vine copula} $c = \prod_{k \in [d-1]} \prod_{e \in E_k} c_{a_e, b_e ; D_e}$.

With Sklar's theorem \citep{sklar1959fonctions} the full joint density is then $f = c \cdot f_1 \cdots f_d$.
The structure of a vine copula and the corresponding conditioning sets are by construction such that computing the vine copula is iterative along the trees and thus efficient.
The univariate margins and the pair copulas can be chosen freely. 
This makes vine copulas a highly flexible, yet tractable model class, that allows to capture complex dependence structures.
A way to reduce a vine copula's capacity to approximate a multivariate distribution, is to truncate the vine copula at a specific tree level $t \in [d-1] := \{1, 2, ..., d-2, d-1\}$. This is equivalent to setting all pair copulas of trees $T_{t+1}, \dots, T_{d-1}$ to independence. 

\begin{definition}[Truncation of the Vine Copula at Level $t$]
    Let $c$ be a vine copula as given above. We define the vine copula truncated at truncation level $t \in [d-1]$ as: $\prod_{k \in [t]} \prod_{e \in E_k} c_{a_e, b_e ; D_e}$.
\end{definition}

Thus, in the resulting vine copula, only trees $T_1, ... T_t$ are left in the model.  For $t=d-1$, we obtain the un-truncated vine copula, while for $t=1$, only the first tree is retained.
Special shapes of trees in the vine tree structure lead to certain sub-classes of vines. In particular, in a C-vine, each tree is star-shaped, i.e. contains a fully connected node called root node.

We use vine copulas to generate synthetic data to substitute the private real data in a general regression setting with response variable $Y$. The synthetic data generated for this case should not leak sensitive information about any real observation (privacy) and at the same time allow training a regression method equally well as would happen on the real data (utility). Our idea is to strike a balance between privacy and utility of the vine copula generated synthetic data, by exploiting weak stochastic dependencies of sensitive covariates with the covariates that are important for the prediction task. 
Thus, it might not be necessary to protect all covariates equally well by adding noise in a global fashion (which decreases utility), or to capture all dependencies present in the real data (which might impair privacy). 
Based on these considerations we propose TVineSynth, a framework building on a star-shaped C-vine copula with $Y$ as root node of $T_1$, to focus early specifically on those dependencies that matter for the prediction task.
Further, TVineSynth finds an order $\cal{O}^*$ of the $d$ covariates in which they are arranged in the remaining trees of the C-vine, that yields a block structured dependence. Truncation of the resulting C-vine then cuts off privacy leaking dependencies, while maintaining high utility.
By truncating the vine copula at a moderate tree level $t$, we cut away dependencies that might not add to utility, but challenge privacy. 
Combining the C-vine structure with the appropriate order of the covariates and truncation of the vine copula model in TVineSynth, we obtain a generative model that can be tailored towards the desired privacy and utility requirements for the real data.

\subsection{TVineSynth Construction}\label{sec:TVineSynth_construction}

The construction of TVineSynth consists of three steps:
\begin{itemize}
    \item[(1)] Execute Algorithm \ref{alg:find_order}\footnote{The implementation of Algorithm \ref{alg:find_order} as well as experiments can be found at: \url{https://github.com/ElisabethGriesbauer/T-Vine-Synth}.} to determine the order $\cal{O}^*$ in which the covariates of the real data enter the C-vine copula.
    \item[(2)] For the specific order $\cal{O}^*$ of the covariates in step (1), generate synthetic data from the C-vine at all candidate truncation levels, and for each truncation level assess their privacy and utility.
    \item[(3)] Find the truncation that offers optimal privacy-utility balance to the user by consulting the privacy-utility plot.
\end{itemize}

Steps (1) to (3) are executed by the data holder and are not made public; only the resulting synthetic data fulfilling the data holder's privacy and utility demands is published. 
See Figure \ref{fig:workflow_tvinesynth} in Appendix \ref{sec:TVineSynth_workflow} for a graphical illustration of TVineSynth.

In Algorithm \ref{alg:find_order} in step (1), user knowledge about sensitive covariates is considered together with the empirical dependence properties of the real data in order to find an order $\cal{O}^*$ of the $d$ covariates in which they will enter the C-vine copula model, such that truncation of the C-vine cuts off privacy leaking dependencies, while maintaining high utility. A theoretical justification of Algorithm \ref{alg:find_order} is given in Section \ref{sec:theory_TVineSynth} and further details on the algorithm are provided in Appendix \ref{sec:orderRVs}.
The order $\mathcal{O}^*$, together with the vine tree structure $\mathcal{V}$ of the star-shaped C-vine, then determines the cascade of pair copulas of conditional distributions across the hierarchy of vine trees, see Proposition \ref{prop:order_V}. In each order $\mathcal{O}^*$ we require the response $Y$ to be the center of the first tree in the C-vine. More specifically, let $X_{(1)},\ldots,X_{(d)}$ be the covariates in the chosen order. Then $T_{1}$ of the C-vine models the pairwise dependence between $Y$ and each of $X_{(1)},\ldots,X_{(d)}$, $T_{2}$ the pairwise dependence between $X_{(d)}$ and each of $X_{(1)},\ldots,X_{(d-1)}$, conditioning on $Y$, $T_{3}$ the pairwise dependence between $X_{(d-1)}$ and each of $X_{(1)},\ldots,X_{(d-2)}$, conditioning on $Y$ and $X_{(d)}$, and so on until the last tree $T_{d}$, which captures the pairwise dependence between $X_{(2)}$ and $X_{(1)}$, conditioning on $Y,X_{(d)},\ldots,X_{(3)}$. In Appendix \ref{sec:orderRVs}, we explore how orders other than $\mathcal{O}^*$ affect privacy.

For a selected ordering $\mathcal{O}^*$ of the covariates, the C-vine is estimated at user-defined maximal truncation level $t_{max} \leq d$.
We advise to set $t_{max} := d + 1 - j$ or lower,  where $j$ is the position of the first sensitive feature to appear in the center node of a tree of the C-vine 
according to $\mathcal{O}^*$, as tree levels that model pairwise (conditional) dependencies with a sensitive feature and all other features should not be considered, see Section \ref{sec:theory_TVineSynth} for further theoretical grounding, and for large $d$ (e.g. $d = 500$) we recommend to set $t_{max} << d$ because of uncertainty in the parameter estimation.
Then for user-defined candidate truncation levels $t \in T \subset [t_{max}]$\footnote{This can for example be every 5th truncation level, i.e. $T := \{1, 5, 10, 15, 20, 26\}$ for $d = 26$.} 
the C-vine truncated at level $t \in T \setminus \{t_{max} \}$ is obtained by setting pair copulas of tree levels $t+1$ and above to independence, i.e. removing tree after tree from the model. This means that for obtaining the C-vines of all candidate truncation levels $t \in T$ the sensitive real data only needs to be accessed once, namely for estimating the un-truncated C-vine.
More precisely, let $(X, \bm{y}) \in \mathbb{R}^{n \times (d+1)}$ denote the real data where $X := (x_{i j}) \in \mathbb{R}^{n \times d}$, $i \in [n], \; j \in [d]$, is the matrix of $n$ realizations of the random vector $(X_1, \dots, X_d)$ and $\bm{y} := (y_1, \dots, y_n)^T$ is the vector of $n$ realizations of the random variable $Y$.
Then the vine copula model $g$ with truncation level $t$ is fit to the real data resulting in $\hat{g} := g\big((X, \bm{y}); \mathcal{V}, t \big)$ and the synthetic data $(Z, \bm{w}) \in \mathbb{R}^{n \times (d+1)}$ are sampled from $\hat{g}$.
We use the estimation and sampling algorithms introduced in \citet{dissmann2013selecting} and implemented by \citet{rvinecopulib}. The vine copula model is estimated in an iterative, hierarchical fashion: Proceeding tree by tree, a greedy maximum spanning tree search with pairwise association measure as edge weights is conducted and parametric pair copulas corresponding to the edges are estimated with MLE and selected with AIC \citep{akaike1998information}.

After synthetic data have been generated from the C-vine truncated at each $t \in T$, their privacy $P_t$ and utility $U_t$ are assessed in step (2) using the methods explained hereafter. This results in points $(U_t,P_t)$ for truncation levels $t \in T$ in the privacy-utility plot of step (3). Here, $U_t$ is a measure of prediction performance over several synthetic data sets generated from the C-vine with specific ordering and truncation level $t$ (see below for details), whereas $P_t$ is either the median MAB of an AIA or the median PG of a MIA over several synthetic data sets generated from the model and several runs of the privacy attack, see definitions below. 
Due to Theorem \ref{thm:utility}, it is necessary to evaluate all truncation levels $t$ in the candidate set $T$.
Figure \ref{fig:2dPrivUt_simreal_X6AIA_MAB_Cvine} shows a plot of $(U_t,P_t)$, where higher values along each axis indicate a better privacy or utility.\footnote{Boxes and whiskers are not displayed in the privacy-utility plots, as they can already be found in Figures \ref{fig:AIA_MAB_Cvine_competitors_simreal_X1_X6_X11} (AIA privacy) and \ref{fig:utility_Cvine_competitors_simreal} (utility) in Appendix \ref{app:sim_real_results_appendix}, and to simplify visual inspection of the figure.} The privacy-utility plot allows to observe a trajectory of how the privacy-utility trade-off of a C-vine with a specific ordering develops with its truncation level. Adding the results of competitor models, the privacy-utility plot allows to take a well-informed decision on the TVineSynth model, offering the desired privacy-utility balance. In Appendix \ref{sec:TVineSynth_trunc_opt} we further elaborate on how finding the best truncation level according to user demands can be formalized as an optimization problem. Considerations on the computational complexity of TVineSynth can be found in Appendix \ref{sec:scaling_vines}.

\begin{algorithm}[tb]
   \caption{Finding Order $\mathcal{O}^*$}
   \label{alg:find_order}
        \begin{algorithmic}
           \STATE {\bfseries Input:} $(X, \bm{y})$, initial order $\mathcal{O}^0 = (X_1, ..., X_d, Y)$ with $X_{d+1} := Y$, pairwise association measure $\rho: \mathbb{R}^{n \times 2} \rightarrow \mathbb{R}$,\footnotemark pairwise association threshold $\rho^* > 0$, sensitive covariates $X_{j^*}$ with $j^* \in S \subset [d]$
           \STATE {\bfseries Output:} order $\mathcal{O}^*$
           \STATE set $\mathcal{O}^*_{d+1} := Y$
           \STATE compute $\rho_{j,k} := \rho(\bm{x}_j, \bm{x}_k)$ for $j \in [d] \; , k > j$
           \STATE set $K := \{k \in [d]: \; |\rho_{j^*,k}| > \rho^* \; \text{for} \; j^* \in S \}$, the set of variables highly associated with sensitive features 
           \FOR{$j \in \{1, ...,|S| \}$}
           \STATE set $\mathcal{O}^*_{j} := X_{j^*}$ with $j^* \in S$
           \ENDFOR
           \STATE order $|\rho_{k,j^*}|$ for $k \in K$ and $j^* \in S$ in descending order $|\rho_{(1)}|, ..., |\rho_{(|K|)}|$
           \FOR{$j \in \{1, ..., |K| \}$}
           \STATE set $\mathcal{O}^*_{|S| + j} := X_k \; $ if $\; |\rho_{(j)}| = |\rho_{k,j^*}|$ with $k \in K$
           \ENDFOR
            \STATE $r := 1$
            \FOR{$j \in [d]\setminus (S\cup K)$}
            \STATE $\mathcal{O}^*_{|S|+|K|+r} := X_{j}$
            \STATE $r := r+1$
            \ENDFOR
        \end{algorithmic}
\end{algorithm}

\footnotetext{We recommend choosing a pairwise association measure that is scale invariant, such as Kendall's $\tau$.}

\begin{figure}[t!]
    \centering
   \begin{subfigure}[t]{0.33\columnwidth}
        \centering
        \includegraphics[width=\columnwidth]{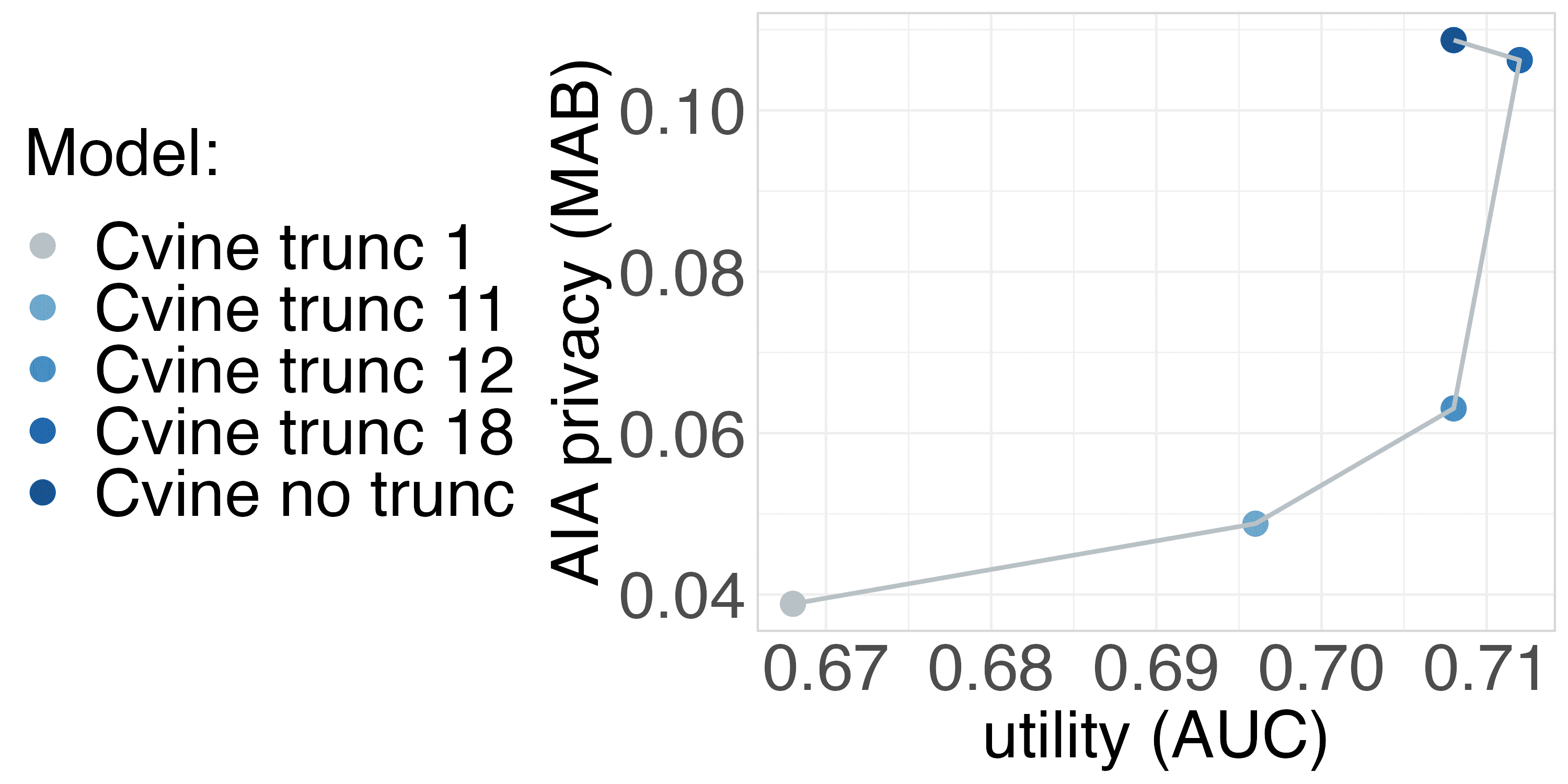}
        \caption{Sensitive feature $X_6$.}\label{fig:2dPrivUt_simreal_X6AIA_MAB_Cvine_1}
    \end{subfigure}
    ~
    \begin{subfigure}[t]{0.37\columnwidth}
        \centering
        \includegraphics[width=\columnwidth]{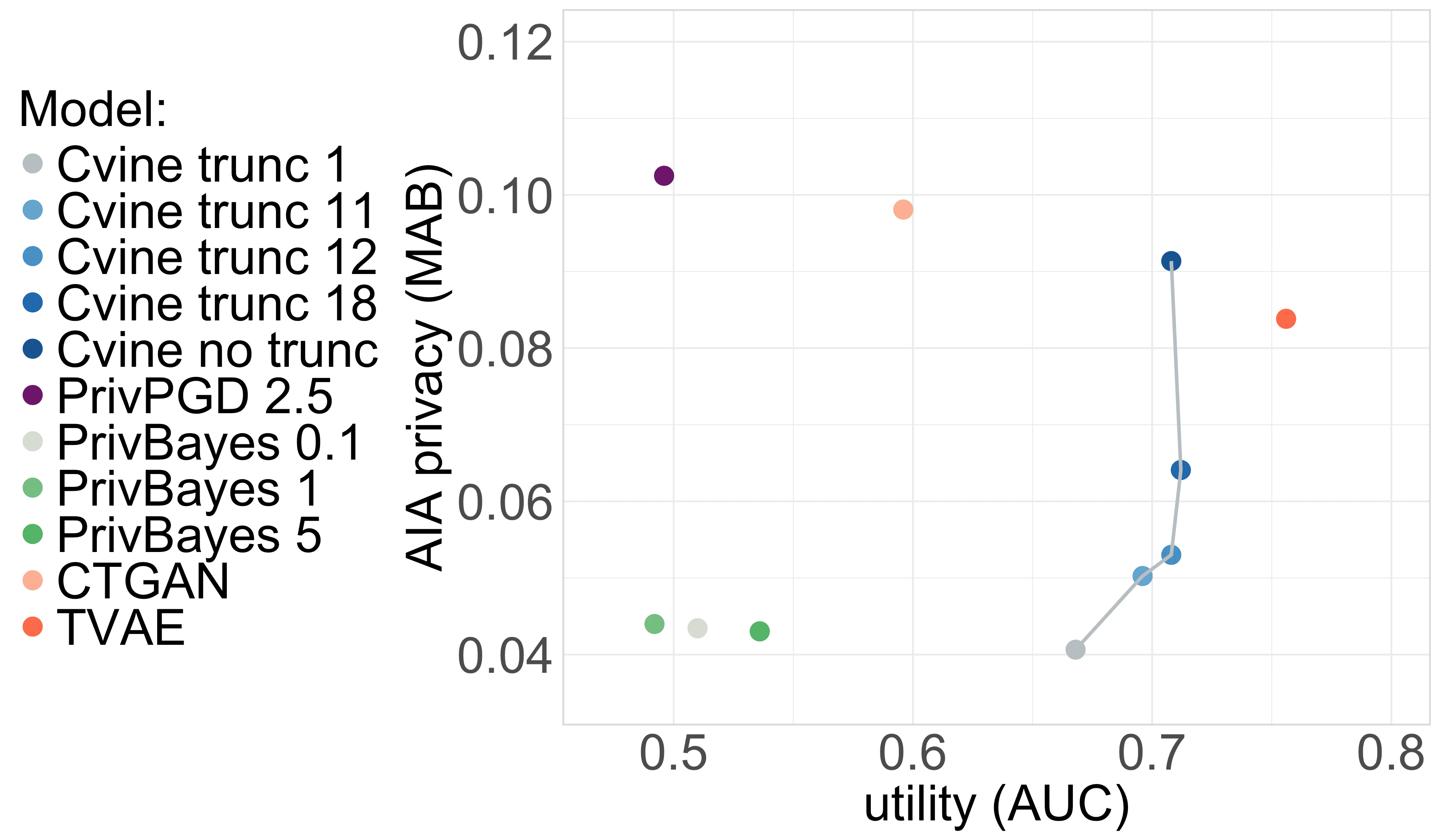}
        \caption{Sensitive feature $X_1$.}\label{fig:2dPrivUt_simreal_X6AIA_MAB_Cvine_2}
    \end{subfigure}
    \caption{Privacy-utility plot of synthetic data generated with a C-vine truncated at $t \in \{1,11,12,18\}$ and no truncation (a) and competitors (b) from simulated real data. For AIA privacy, the MAB and for utility the median over 50 synthetic data sets are reported. Parameters of the generative models and privacy attacks can be found in Appendix \ref{sec:model_and_attack_parameters}.}
    \label{fig:2dPrivUt_simreal_X6AIA_MAB_Cvine}
\end{figure}

\subsection{Competitor Models}
TVineSynth is benchmarked against PrivBayes \citep{zhang2017privbayes} and PrivPGD \citep{donhauser2024privacy}, which offer DP guarantees, and CTGAN and TVAE \citep{xu2019modeling}, which do not provide any DP guarantees, but are designed to resemble the real data as closely as possible. For details on the competitor models and their choice see Appendix \ref{sec:competitors_appendix}.

\subsection{Utility}
We generate synthetic data to substitute private real data in a general regression task with response variable $Y$. This includes classification when $Y$ is binary. Fitting a vine copula is more challenging on discrete than on continuous data\footnote{The copula is uniquely defined only on the Cartesian product of the ranges of the marginal distributions for a discrete $Y$, \citep{panagiotelis2012pair}.}. Therefore we focus on a binary classification task. 
For assessing the utility of the synthetic data we compare Train on Synthetic - Test on Real (TSTR) to Train on Real - Test on Real (TRTR).
Let $(X^*, \bm{y}^*)$ be a hold-out, real test data set of size $n_{test}$ that was not used to learn the generative models.
Let $f: \mathbb{R}^d \rightarrow \{0,1\}$ be a classifier and $\hat{f}$ be its estimate from the real data $(X, \bm{y})$. Let $\hat{\bm{y}}^*$ be the prediction of the classifier $f$ estimated from $(X, \bm{y})$ applied to the test data $(X^*, \bm{y}^*)$ and let $\hat{\bm{w}}^*$ be the prediction of the classifier $f$ estimated from the synthetic data $(Z, \bm{w})$ applied to $(X^*, \bm{y}^*)$.
The utility of the synthetic data is assessed by comparing $\hat{\bm{w}}^*$ to $\hat{\bm{y}}^*$ through comparing $AUC(\bm{y}^*, \hat{\bm{w}}^*)$ and $AUC(\bm{y}^*, \hat{\bm{y}}^*)$, the area under the receiver operating characteristic curve (AUC). This allows us to analyse how the performance of the classifier on real test data changes when it is trained on synthetic instead of real data.

\subsection{Privacy}
The privacy of the synthetic data is assessed through a membership and an attribute inference attack (MIA and AIA) \citep{shokri2017membership, yeom2018privacy}. We follow the framework of \citet{stadler2022synthetic}, who model these attacks as privacy games between an attacker and a challenger, the data holder.

\paragraph{Membership Inference Attack (MIA)}
In a MIA, the attacker aims to infer from $(Z, \bm{w})$ whether a target observation  $(\bm{x}_t^T, y_t)$ is part $(X, \bm{y})$. The attacker has access to a reference data set $(X, \bm{y})_{ref}$ coming from the same distribution as the real data, and knows the size of the real and synthetic data (both $n$) and which generative model class is used. Then the attacker repeatedly samples data sets of fixed size from the reference data, adds the target observation half of the time and trains the generative model on them. After that, the attacker samples several synthetic data sets from each trained model and labels them according to whether the target observation has been added to the training data or not. A classifier is trained on the labeled synthetic data sets to estimate whether the target observation was part of the real data. The MIA game is repeated $N \in \mathbb{N}$ times.

\paragraph{Attribute Inference Attack (AIA)}
In an AIA, the attacker aims to infer the sensitive feature value $x_{t,j^*}$ of a target observation $(\bm{x}_t^T, y_t)$ from $(Z, \bm{w})$ for some sensitive feature $X_{j^*}$, $j^* \in S \subset [d]$.
The attacker has access to $(X, \bm{y})_{ref}$, a reference data set of fixed size coming from the same distribution as the real data, and knows which generative model class is used. Then the attacker trains the generative model on the reference data and samples $n_{synth} \in \mathbb{N}$ synthetic data sets from the estimated model. Subsequently, the attacker standardizes\footnote{By standardizing $\bm{x} \in \mathbb{R}^n$ to obtain $\tilde{\bm{x}} = (\tilde{x}_1, \dots, \tilde{x}_n)^T$ we refer to $\tilde{x}_i := \frac{x_i - \bar{x} }{\big(\frac{1}{n-1} \sum_{i = 1}^n (x_i - \bar{x})^2 \big)^{\frac{1}{2}}}$ with $\bar{x} := \frac{1}{n} \sum_{i=1}^n x_i$.} the synthetic data to obtain $(\tilde{Z}, \tilde{\bm{w}})$, fits a linear regression model on the non-sensitive features of $(\tilde{Z}, \tilde{\bm{w}})$ with $\tilde{\bm{z}}_{j^*}$ as response and issues a guess $\hat{\tilde{x}}_{t,j^*}$ based on real $(\tilde{\bm{x}}_{t, - j^*}^T, y_t)$ that was standardized by the data holder. The AIA game is repeated $N \in \mathbb{N}$ times.

\paragraph{Choice of Sensitive Features}
The definition of sensitive features is based on domain knowledge and legal considerations such as GDPR \citep{gdpr2016general}. Sensitive features involve personal information about health, demography, financial situation, behaviors, etc. that, if available to adversaries can be used to cause harm to data subjects or related people \citep{ohm2014sensitive}. For the case that domain knowledge is lacking, \citet{yoon2020anonymization} propose a definition of sensitive features: They consider features as sensitive, if they allow identification of an individual with high probability, for example because the feature values are extreme or rare.

\paragraph{Measures of Privacy}
As a measure of privacy protection against MIAs we use the \textit{privacy gain (PG)} w.r.t. a given target observation, as proposed in \citet{stadler2022synthetic}. The PG is defined as the 'reduction in the attacker’s advantage when given access to the synthetic data instead of the real data', where $PG \in [0,2]$ and $PG = 1$ indicates best possible privacy. 

The definition of the PG for AIAs provided by \citet{stadler2022synthetic} does not make sense for continuous sensitive features, see Appendix \ref{sec:privacy_measures}. \citet{olatunji2023does} propose to use the MSE in this case, which measures distance between the attacker's guess and the actual sensitive feature value. However, the MSE may be low just because the actual sensitive feature value is close to the sensitive feature's mean and not because the non-sensitive features inform the sensitive feature in the synthetic data, see Appendix \ref{sec:privacy_measures} for an example.
The influence of a covariate in a regression model (non-sensitive feature) on the dependent variable (sensitive feature) can be assessed by the magnitude of its regression coefficient. The \textit{mean absolute $\beta$-coefficient (MAB)} summarizes how much the non-sensitive features inform the sensitive feature when the target observation was part of the generative model training in one number and naturally builds on how AIAs are commonly implemented, such as by \citet{stadler2022synthetic}.

\begin{definition}[Mean Absolute $\beta$-Coefficient, MAB]\label{def:MAB}    
    Let an attacker perform an AIA according to \citet{stadler2022synthetic}. Then in a given run $m$ of the game, with $X_{j^*}$ as the sensitive feature, a linear Gaussian regression is fitted by ordinary least squares to each of the $l$ standardized synthetic data sets $\bm{V}_{l}=(\tilde{Z}_{l},\tilde{\bm{w}}_{l})$. This results in the coefficients $\hat{\bm{\beta}}^{(j^*)}_{m, l}=(\hat{\beta}^{(j^*)}_{1,m,l},\ldots,\hat{\beta}^{(j^*)}_{d,m,l})^{T}$, with $m \in [N]$ and $l \in [n_{synth}]$. The $MAB_{j^*}$ for sensitive covariate $X_{j^*}, \; j^* \in S \subset [d]$ is defined as:
    \begin{align}\label{def:MAB_equ} 
        MAB_{j^*} := \frac{1}{d N n_{synth}} \sum_{k \in [d]} \sum_{m \in [N]} \sum_{l \in [n_{synth}]}  | \hat{\beta}^{(j^*)}_{k, m, l} | \; .
    \end{align}
\end{definition}

The intercept $\hat{\beta}^{(j^*)}_{0, m, l}$ is not included in the definition of the MAB. The MAB will only be low if the AIA is unsuccessful. 
In Appendix \ref{sec:privacy_measures} we present an extension of the MAB to measure worst-case AIA privacy. 

\paragraph{Choice of Target Observations}\label{sec:choice_of_targets}
The PG is defined w.r.t. a \textit{single target observation} of the real data. Thus, the results of an MIA do not only depend on the synthetic data, but also on the choice of target observation. To provide a realistic privacy evaluation we follow \citet{stadler2022synthetic} and pick two sets of target observations: outlying targets outside the 95\% quantile and randomly sampled targets. Although a set of target observations is needed to conduct an AIA, the MAB is independent of the choice of target observation, see Definition \ref{def:MAB}.

\subsection{Theoretical Justification of TVineSynth}\label{sec:theory_TVineSynth}
In the theorems below, we provide a theoretical justification for the TVineSynth
construction. Let $(\bm{x}^T_{i},y_{i})$, $i \in [n]$ be i.i.d. samples of 
$(\bm{X}^T,Y)$, that follow a C-vine distribution with parameters
$\bm{\theta}=(\bm{\theta}_{1},\ldots,\bm{\theta}_d)$, where
$\bm{\theta}_{t}$ are the parameters of tree number $t$\footnote{The index $t$, that in prior sections was used to denote the truncation level, is in this section used as a running index for trees; the truncation level will instead be denoted by $\tau$.}, $\bm{X}$ is
arranged according to the order of the C-vine and $Y$ is binary with $P(Y=1)=\pi_{Y}$ and
$X_{j} \sim U(0,1)$, $j \in [d]$. Further, let $\psi$ be the log-odds ratio
for a given observation $\bm{x}$ of $\bm{X}$, i.e.
\begin{align}
    \psi &= \psi(\pi_{Y},\bm{\theta}_{1},\ldots,\bm{\theta}_{d};\bm{x}) = \log\frac{P(Y=1|\bm{X}=\bm{x};\pi_{Y},\bm{\theta}_{1},\ldots,\bm{\theta}_{d})}{P(Y=0|\bm{X}=\bm{x};\pi_{Y},\bm{\theta}_{1},\ldots,\bm{\theta}_{d})} \; .
\end{align}
It is easily shown that $\psi$ is of the form:
\begin{align}
    \psi &= \sum_{t=1}^{d}\psi_{t}(\pi_{Y},\bm{\theta}_{1},\ldots,\bm{\theta}_{t};\bm{x}) \; ,
\end{align}
where the $\psi_{t}$'s are given by:
\begin{align}
    \psi_{1} &= \log\frac{\pi_{Y}}{1-\pi_{Y}}+\sum_{j=1}^{d}\log\frac{f_{j|y}(x_{j}|1)}{f_{j|y}(x_{j}|0)} \; ,
\end{align}
and 
\begin{align}
    \psi_{t} &= \sum_{j=1}^{d+1-t}\log\frac{c_{j,d+2-t; d+3-t, \ldots, d,y}^{1}}{c_{j,d+2-t; d+3-t, \ldots,  d,y}^{0}} \; ,
\end{align}
with $\ t \in \{2,\ldots,d \}$ where $c_{j,d+2-t; d+3-t\ldots d,y}^{k}$ is evaluated at $(\bm{x},y)=(\bm{x},k)$. Moreover, let:
\begin{align}
    \hat{\psi} &= \psi(\hat{\pi}_{Y},\hat{\bm{\theta}}_{1},\ldots,\hat{\bm{\theta}}_{d};\bm{x}) \; ,
\end{align}
where $(\hat{\pi}_{Y},\hat{\bm{\theta}}_{1},\ldots,\hat{\bm{\theta}}_{d})$ are
the maximum likelihood estimators of $(\pi_{Y},\bm{\theta}_{1},\ldots,\bm{\theta}_{d})$,
and
\begin{align}
    \tilde{\psi}^{\tau} &= \sum_{t=1}^{\tau}\psi_{t}(\hat{\pi}_{Y},\hat{\bm{\theta}}_{1},\ldots,\hat{\bm{\theta}}_{t};\bm{x})   
\end{align}
be the estimator of $\psi$ from the C-vine truncated at level $\tau$.

\begin{theorem}\label{thm:utility}
Under these assumptions, it holds for large enough $n$ that:
\begin{align}
    MSE(\hat{\psi}) &= E\Big[ (\hat{\psi}-\psi)^{2} \Big] = \frac{1}{n}\cdot \bm{v}^{T}\bm{J}^{-1}\bm{v}+\smallO\left(\frac{1}{n}\right) \; , \\
    MSE(\tilde{\psi}^{\tau}) &= \left( \sum_{t=\tau + 1}^{d} \psi_{t} (\pi_{Y}, \bm{\theta}_{1}, \ldots, \bm{\theta}_{t}; \bm{x}) \right)^{2} + \frac{1}{n}\cdot \left(\bm{v}^{1\ldots \tau}\right)^{T}\bm{J}^{1\ldots \tau,1\ldots \tau}\bm{v}^{1\ldots \tau} + \smallO\left(\frac{1}{n}\right) \; ,
\end{align}
with: 
\begin{align}
    \bm{v} &= \frac{\partial \psi}{\partial(\pi_{Y},\bm{\theta}_{1},\ldots,\bm{\theta}_{d})} \; , \\
    \bm{v}^{1\ldots \tau} &= \frac{\partial \sum_{t=1}^{\tau}\psi_{t}}{\partial(\pi_{Y},\bm{\theta}_{1},\ldots,\bm{\theta}_{\tau})} \; , \\
    \bm{J} &= - E\Big[\frac{\partial^{2}}{\partial(\pi_{Y},\bm{\theta}_{1},\ldots,\bm{\theta}_{d})\partial(\pi_{Y},\bm{\theta}_{1},\ldots,\bm{\theta}_{d})^{T}}  \log f(\bm{X},Y)\Big]
\end{align}
and $\bm{J}^{1\ldots \tau, 1 \ldots \tau}$ is the upper left sub-matrix of $\bm{J}^{-1}$ corresponding 
to the parameters $(\pi_{Y},\bm{\theta}_{1},\ldots,\bm{\theta}_{\tau})$.
\end{theorem}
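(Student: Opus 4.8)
The plan is to derive both identities from the standard large-sample theory of the maximum likelihood estimator, combined with a bias--variance decomposition. Write $\bm{\phi}=(\pi_{Y},\bm{\theta}_{1},\ldots,\bm{\theta}_{d})$ for the full parameter vector and $\hat{\bm{\phi}}$ for its MLE. Under the stated i.i.d.\ and regularity assumptions, $\hat{\bm{\phi}}$ is consistent and asymptotically normal, $\sqrt{n}(\hat{\bm{\phi}}-\bm{\phi})\xrightarrow{d}\mathcal{N}\!\big(\bm{0},\bm{J}^{-1}\big)$, where $\bm{J}$ is exactly the per-observation Fisher information matrix defined in the statement. Since $\psi$ and each $\psi_{t}$ are smooth functions of $\bm{\phi}$ for fixed $\bm{x}$ (differentiability of the parametric pair-copula densities in their parameters), everything reduces to a first-order delta-method expansion of these functions evaluated at $\hat{\bm{\phi}}$, whose leading stochastic term is linear in $\hat{\bm{\phi}}-\bm{\phi}$.

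For the first identity I would Taylor-expand $\hat{\psi}=\psi(\hat{\bm{\phi}};\bm{x})$ about $\bm{\phi}$, giving $\hat{\psi}-\psi=\bm{v}^{T}(\hat{\bm{\phi}}-\bm{\phi})+R_{n}$ with $\bm{v}=\partial\psi/\partial\bm{\phi}$ and a quadratic remainder $R_{n}=O_{P}(1/n)$. The linear term is asymptotically $\mathcal{N}\!\big(0,\tfrac{1}{n}\bm{v}^{T}\bm{J}^{-1}\bm{v}\big)$, and since the MLE is asymptotically unbiased there is no deterministic bias, so squaring and taking expectations yields $MSE(\hat{\psi})=\tfrac{1}{n}\bm{v}^{T}\bm{J}^{-1}\bm{v}+\smallO(1/n)$.

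For the second identity the new ingredient is a deterministic truncation bias. I would split $\tilde{\psi}^{\tau}-\psi=(\psi^{\tau}-\psi)+(\tilde{\psi}^{\tau}-\psi^{\tau})$, where $\psi^{\tau}:=\sum_{t=1}^{\tau}\psi_{t}(\bm{\phi};\bm{x})$ is the truncated log-odds evaluated at the \emph{true} parameters. The first bracket is the purely deterministic quantity $-\sum_{t=\tau+1}^{d}\psi_{t}(\pi_{Y},\bm{\theta}_{1},\ldots,\bm{\theta}_{t};\bm{x})$, which produces the squared-bias term. For the second bracket I would expand each retained $\psi_{t}(\hat{\bm{\phi}};\bm{x})$ to first order; because $\sum_{t=1}^{\tau}\psi_{t}$ depends on $\bm{\phi}$ only through its first $\tau$ blocks $\bm{\phi}_{1:\tau}=(\pi_{Y},\bm{\theta}_{1},\ldots,\bm{\theta}_{\tau})$, the stochastic error equals $(\bm{v}^{1\ldots\tau})^{T}(\hat{\bm{\phi}}_{1:\tau}-\bm{\phi}_{1:\tau})+O_{P}(1/n)$. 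The key algebraic observation is that the marginal limiting law of the subvector $\hat{\bm{\phi}}_{1:\tau}$ of the \emph{full}-model MLE is $\mathcal{N}\!\big(\bm{0},\tfrac{1}{n}\bm{J}^{1\ldots\tau,1\ldots\tau}\big)$, i.e.\ its covariance is the upper-left block of $\bm{J}^{-1}$ (\emph{not} the inverse of the upper-left block of $\bm{J}$), which is precisely the matrix in the statement; this reflects that estimating the discarded later-tree parameters still inflates the precision bookkeeping for the retained ones. Combining the deterministic bias with this asymptotic variance through $MSE=\mathrm{bias}^{2}+\mathrm{variance}$ produces the claimed expression.

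The main obstacle is the passage from convergence in distribution to convergence of the second moment that defines the MSE: one must justify that $E[(\hat{\psi}-\psi)^{2}]$ and $E[(\tilde{\psi}^{\tau}-\psi)^{2}]$ genuinely agree with the moments of the limiting Gaussian linearization up to $\smallO(1/n)$, which requires a uniform-integrability (or bounded-moment) argument for $\sqrt{n}(\hat{\bm{\phi}}-\bm{\phi})$ and control of the quadratic Taylor remainders so that they enter only at order $\smallO(1/n)$. A further delicate point in the truncated case is the cross-term between the $O(1)$ deterministic bias and the mean of the stochastic error: the stated formula corresponds to evaluating moments under the mean-zero asymptotic linearization of $\hat{\bm{\phi}}_{1:\tau}$, so this cross-term vanishes at that level, and I would treat the genuine $O(1/n)$ finite-sample MLE bias as part of the same asymptotic-moment convention captured by $\smallO(1/n)$. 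The remaining verifications—smoothness of the $\psi_{t}$ in $\bm{\phi}$, that $\bm{J}$ is the correct information matrix for the C-vine likelihood, and that $\bm{v}^{1\ldots\tau}$ is indeed the gradient of $\sum_{t\le\tau}\psi_{t}$—are routine given the explicit tree-wise form of $\psi$.
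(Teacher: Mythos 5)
Your proposal is correct and follows essentially the same route as the paper's proof: a first-order delta-method expansion of $\psi$ around the full-model MLE using $\sqrt{n}(\hat{\bm{\phi}}-\bm{\phi})\xrightarrow{d}\mathcal{N}(\bm{0},\bm{J}^{-1})$ for the first identity, and a bias--variance split for the truncated estimator in which the retained sum $\sum_{t\le\tau}\psi_{t}$ depends only on $(\pi_{Y},\bm{\theta}_{1},\ldots,\bm{\theta}_{\tau})$, so that the variance term is the quadratic form in the upper-left block of $\bm{J}^{-1}$. Your phrasing via the marginal law of the subvector $\hat{\bm{\phi}}_{1:\tau}$ is just a repackaging of the paper's computation with the zero-padded gradient multiplied by the full $\bm{J}^{-1}$, and your remarks on uniform integrability and the vanishing bias--noise cross-term make explicit points the paper leaves implicit.
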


This means that as the size $n$ of the training data increases, the MSE of the estimated log-odds ratio for the full vine vanishes, while the one for the $\tau$-truncated vine is dominated by the squared bias. Hence, for large $n$ the utility of the truncated vine is lower than that of the full one. However, the bias does not necessarily increase monotonically as $\tau$ decreases, i.e. as more trees are truncated away. This is due to the fact that it consists of sums of log-differences of pair copula densities, the sign of which will vary with the copula families, parameters and $\bm{x}$. Further, the variance term of the MSE will typically be smaller for the truncated vine, meaning that for smaller $n$ the utility of the full vine is not necessarily higher than that of a $\tau$-truncated one, as seen e.g. in Figure \ref{fig:2dPrivUt_simreal_X6AIA_MAB_Cvine}. This is one of the reasons why we recommend to go through all, or at least several, truncation levels in order to find the best one.

Assume now that the order of the columns of $\bm{V}$, defined in 
Definition \ref{def:MAB}, is the same as order of the variables in the 
C-vine, where we omit the subscripts $m$ and $l$ for simplicity. Also, 
note that a multivariate normal distribution may be expressed as a C-vine 
with only Gaussian pair copulas, combined with normal margins. 

\begin{theorem}\label{thm:mab1}
    Assume that each row of $\bm{V}$ follows a standard $(d+1)$-variate
    normal distribution with correlation matrix $\bm{\rho}$. Then $\hat{\bm{\beta}}$
    follows a $d$-variate normal distribution with:
    \begin{align}
        E[ \hat{\bm{\beta}}^{(j^*)} ] &= \bm{\beta}^{(j^*)} = \bm{\rho}_{[d+1]\setminus \{j^*\},[d+1]\setminus \{j^*\}}^{-1}\bm{\rho}_{[d+1]\setminus \{j^*\},j^*}   
    \end{align}
    and covariance matrix:
    \begin{align}
        Var(\hat{\bm{\beta}}^{(j^*)}) &= (\sigma^{(j^*)})^{2}(\bm{V}_{[d+1]\setminus \{j^*\}}^{T}\bm{V}_{[d+1]\setminus \{j^*\}})^{-1}   
    \end{align}
    with: 
    \begin{align}
        (\sigma^{(j^*)})^{2} &= 1-\bm{\rho}_{[d+1]\setminus \{j^*\},j^*}^{T}\bm{\rho}_{[d+1]\setminus \{j^*\},[d+1]\setminus \{j^*\}}^{-1} \bm{\rho}_{[d+1]\setminus \{j^*\},j^*} \; .
    \end{align}
    Let now the C-vine of $\bm{V}$ be truncated at level $\tau \leq d+1-j^*$ and let $\bm{\beta}_{(\tau)}^{(j^*)}$ be the coefficient corresponding to the C-vine truncated at level $\tau$. Then:
    \begin{align}
        \bm{\beta}_{(\tau) \; 1\ldots d- \tau}^{(j^*)} &= \bm{0} \; , \\
        \bm{\beta}_{(\tau) \; d+1-\tau \ldots d}^{(j^*)} &= \bm{\rho}_{d+2-\tau \ldots d+1,d+2-\tau \ldots d+1}^{-1}\bm{\rho}_{d+2-\tau\ldots d+1,j^*}
    \end{align}
    and: 
    \begin{align}
        (\sigma^{(j^*)}_{(\tau)})^{2} &= 1- \bm{\rho}_{d+2-\tau \ldots d+1,j^*}^{T} \bm{\rho}_{d+2-\tau \ldots d+1,d+2-\tau \ldots d+1}^{-1} \bm{\rho}_{d+2-\tau \ldots d+1,j^*} \; .
    \end{align}
\end{theorem}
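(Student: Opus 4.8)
The plan is to split the argument into the two regimes that the statement itself separates: first the full (untruncated) regression under joint normality, then the structural simplification produced by truncation.

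For the first block of identities I would rely purely on standard multivariate-normal and ordinary-least-squares theory, with no vine structure entering. Writing the design as $\bm{V}_{[d+1]\setminus\{j^*\}}$ and recalling that the rows of $\bm{V}$ are i.i.d.\ $N(\bm{0},\bm{\rho})$, the conditional law of the $j^*$-th coordinate given the remaining $d$ coordinates is normal with mean linear in those coordinates, slope $\bm{\rho}_{[d+1]\setminus\{j^*\},[d+1]\setminus\{j^*\}}^{-1}\bm{\rho}_{[d+1]\setminus\{j^*\},j^*}$, and variance equal to the Schur complement $(\sigma^{(j^*)})^{2}$. Since $\hat{\bm{\beta}}^{(j^*)}$ is the OLS slope of this correctly specified linear Gaussian regression, unbiasedness yields $E[\hat{\bm{\beta}}^{(j^*)}]=\bm{\beta}^{(j^*)}$, and conditioning on the design gives both the Gaussianity and the covariance $(\sigma^{(j^*)})^{2}(\bm{V}_{[d+1]\setminus\{j^*\}}^{T}\bm{V}_{[d+1]\setminus\{j^*\}})^{-1}$.

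The substance is the truncated block, and here the key is to translate truncation of the Gaussian C-vine into a single conditional-independence statement. Because a multivariate normal coincides with a C-vine built from Gaussian pair copulas and normal margins (as noted above the theorem), truncating at level $\tau$ sets every pair copula in trees $T_{\tau+1},\dots,T_{d}$ to independence, i.e.\ forces the corresponding partial correlations to zero, and leaves a Gaussian law. I would first invoke Proposition \ref{prop:order_V} to identify the center variables of $T_{1},\dots,T_{\tau}$ as exactly those in the last $\tau$ positions $d+2-\tau,\dots,d+1$ (the block containing $Y$), and to record that any pair copula linking two variables among positions $1,\dots,d+1-\tau$ first appears in a tree of level strictly greater than $\tau$. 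Consequently, in the truncated density the only retained factors involving a non-center variable $X_{(a)}$ with $a\le d+1-\tau$ are the pair copulas $c_{(a),v_{k};v_{1}\dots v_{k-1}}$ for $k\le\tau$, whose arguments depend on $x_{(a)}$ and the center coordinates alone. From this factorization I would conclude that, under the truncated law, the non-center coordinates are mutually conditionally independent given the center block, so that the conditional distribution of $X_{j^*}$ (a non-center, since $\tau\le d+1-j^*$ forces $j^*\le d+1-\tau$) given all other coordinates equals its conditional distribution given only the centers. Applying the standard Gaussian fact that a coordinate's regression coefficients vanish precisely on the variables it is conditionally independent of given the rest, the coefficients on the other non-centers are zero, which after the relabeling that deletes position $j^*$ gives $\bm{\beta}_{(\tau)\,1\dots d-\tau}^{(j^*)}=\bm{0}$; the surviving coefficients then coincide with the population regression of $X_{j^*}$ on the center block, namely $\bm{\rho}_{d+2-\tau\dots d+1,d+2-\tau\dots d+1}^{-1}\bm{\rho}_{d+2-\tau\dots d+1,j^*}$, with residual variance the corresponding Schur complement $(\sigma^{(j^*)}_{(\tau)})^{2}$.

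I expect the main obstacle to be this middle step: making rigorous that the per-tree, per-pair truncation statements (each pair conditionally independent given its own conditioning set) aggregate into the single joint claim that \emph{all} non-centers are conditionally independent given the center block. I would handle this through the explicit truncated-vine density factorization rather than by manipulating partial correlations one at a time, because the factorization directly exhibits the conditional law of each non-center as a function of the centers only, yielding the joint conditional independence in one stroke and hence the vanishing-coefficient pattern and the reduced Schur complement without further computation.
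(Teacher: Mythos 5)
Your proof is correct, but it takes a genuinely different route from the paper's. For the untruncated block you do exactly what the paper does (conditional Gaussian law plus standard OLS theory). For the truncated block, however, the paper argues computationally: it translates truncation into vanishing partial correlations, derives the identity $\rho_{kj}=\bm{\rho}_{j+1\ldots d+1,k}^{T}\bm{\rho}_{j+1\ldots d+1,j+1\ldots d+1}^{-1}\bm{\rho}_{j+1\ldots d+1,j}$ from the zero numerator of each partial correlation, and then pushes this through an explicit Schur-complement block inversion of $\bm{\rho}_{[d+1]\setminus\{j^*\},[d+1]\setminus\{j^*\}}$ to exhibit the zero pattern of $\bm{\beta}^{(j^*)}_{(\tau)}$, finishing with a one-tree-at-a-time reduction from $\tau$ to $\tau-1$. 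You instead read the zero pattern off the truncated vine density: every retained factor involving a non-center variable depends only on that variable and the center block, so the conditional density of the non-centers given the centers factorizes, the non-centers are mutually conditionally independent given the centers, and the precision-matrix characterization of Gaussian conditional independence kills the corresponding coefficients in one stroke. This is a cleaner and more conceptual argument that delivers the whole zero pattern at once, whereas the paper's computation has the advantage of producing the explicit surviving formulas $\bm{\rho}_{d+2-\tau\ldots d+1,d+2-\tau\ldots d+1}^{-1}\bm{\rho}_{d+2-\tau\ldots d+1,j^*}$ and $(\sigma^{(j^*)}_{(\tau)})^{2}$ as byproducts of the block inversion. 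One step you use implicitly and should state: writing the surviving coefficients and residual variance in terms of the \emph{original} $\bm{\rho}$ requires that truncation at level $\tau$ leaves the joint law of $(X_{j^*},X_{(d+2-\tau)},\ldots,X_{(d+1)})$ unchanged; this does follow from your own factorization, since every pair copula whose conditioned and conditioning sets lie inside $\{j^*\}\cup\{d+2-\tau,\ldots,d+1\}$ sits in a tree of level at most $\tau$ and is therefore retained, but it deserves a sentence.
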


\begin{theorem}\label{thm:mab2}
    Under the same assumptions as Theorem \ref{thm:mab1}, if $\bm{\rho}$ has a block
    structure with $\rho_{kl}=0$, $\forall (k,l)$ with $k \in (K \cup S)$ and $l \in [d+1]\setminus (K \cup S)$, where $K$ and $S$ are as defined in Algorithm \ref{alg:find_order}, and
    the C-vine of $\bm{V}$ is truncated at level $\tau \leq d+1-|K|-|S|$, then
    $\bm{\beta}^{(j^*)}_{(\tau)}=\bm{0}$. 
\end{theorem}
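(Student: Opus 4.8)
The plan is to obtain Theorem~\ref{thm:mab2} as a direct corollary of Theorem~\ref{thm:mab1}, so that the whole argument reduces to index bookkeeping together with a single application of the block-structure hypothesis. No asymptotics or analytic estimates are needed; everything is exact and algebraic.

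First I would check that Theorem~\ref{thm:mab1} is applicable, i.e. that $\tau \le d+1-j^*$. Since the variables enter the C-vine in the order $\mathcal{O}^*$ of Algorithm~\ref{alg:find_order}, the sensitive features $S$ occupy the first $|S|$ positions and the highly-associated features $K$ the next $|K|$ positions. Identifying each variable with its position in the reordered correlation matrix $\bm{\rho}$, the block $K\cup S$ is exactly the index set $\{1,\ldots,|K|+|S|\}$. In particular $j^*\le|S|\le|K|+|S|$, so the hypothesis $\tau\le d+1-|K|-|S|$ forces $\tau\le d+1-|S|\le d+1-j^*$, which is precisely the condition under which Theorem~\ref{thm:mab1} holds.

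Next I would invoke Theorem~\ref{thm:mab1} itself. It states that the truncated coefficient vector $\bm{\beta}^{(j^*)}_{(\tau)}$ has its first $d-\tau$ entries equal to $\bm{0}$ and its remaining $\tau$ entries equal to $\bm{\rho}_{R,R}^{-1}\bm{\rho}_{R,j^*}$, where $R:=\{d+2-\tau,\ldots,d+1\}$ is the index block of the last $\tau$ variables, i.e. the root nodes of the retained trees $T_1,\ldots,T_\tau$. As $\bm{\rho}_{R,R}$ is a principal submatrix of a non-degenerate correlation matrix and hence invertible, this nonzero block vanishes if and only if $\bm{\rho}_{R,j^*}=\bm{0}$. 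The crux is therefore to show $R$ lies entirely outside $K\cup S$: the bound $\tau\le d+1-|K|-|S|$ gives $d+2-\tau\ge|K|+|S|+1$, so $R\subseteq\{|K|+|S|+1,\ldots,d+1\}=[d+1]\setminus(K\cup S)$. Since $j^*\in S\subseteq K\cup S$, the block-structure assumption yields $\rho_{j^*,l}=0$ for every $l\in[d+1]\setminus(K\cup S)$, and in particular for every $l\in R$; thus $\bm{\rho}_{R,j^*}=\bm{0}$, and substituting into Theorem~\ref{thm:mab1} gives $\bm{\beta}^{(j^*)}_{(\tau)}=\bm{0}$.

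I expect the only delicate point to be the positional bookkeeping that pins down $R$ as the roots of the retained trees and identifies $K\cup S$ with the leading index block $\{1,\ldots,|K|+|S|\}$ in the reordered matrix. Once one sees that the truncation bound $\tau\le d+1-|K|-|S|$ is exactly the condition making $R$ disjoint from $K\cup S$, the conclusion is immediate from Theorem~\ref{thm:mab1} and the assumed vanishing cross-block correlations. It is worth emphasising in the write-up why the bound is sharp: a single unit of truncation beyond it would pull a variable of $K\cup S$ into $R$, reintroducing a potentially nonzero correlation $\rho_{j^*,l}$ and hence a nonzero coefficient.
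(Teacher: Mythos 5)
Your proposal is correct and follows essentially the same route as the paper: invoke Theorem \ref{thm:mab1} (noting $\tau \le d+1-|K|-|S| \le d+1-j^*$ so it applies), observe that the only potentially nonzero block of $\bm{\beta}^{(j^*)}_{(\tau)}$ is $\bm{\rho}_{d+2-\tau\ldots d+1,d+2-\tau\ldots d+1}^{-1}\bm{\rho}_{d+2-\tau\ldots d+1,j^*}$, and then use the block structure to conclude $\bm{\rho}_{d+2-\tau\ldots d+1,j^*}=\bm{0}$ because the retained root indices lie entirely outside $K\cup S$ while $j^*\in S$. Your write-up is in fact slightly more explicit than the paper's (which states the vanishing of the cross-correlation vector without spelling out the index bookkeeping), but the argument is identical in substance.
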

Proofs of Theorems \ref{thm:utility}, \ref{thm:mab1} and \ref{thm:mab2}
are given in Appendix \ref{sec:proofs_TVineSynth}.

This means that if the C-vine is truncated somewhere below the tree where the sensitive feature appears in the center node, some of the $\hat{\beta}_{k}$s will have mean $0$, and will thus tend to be small, which reduces the attacker's ability to guess the value of the sensitive variable, and improves the protection of privacy. Further, the number of $\hat{\beta}_{k}$s with mean $0$ increases by $1$ for each tree that is truncated away. If in addition the correlation matrix of the C-vine follows a block structure, where the block containing the sensitive features is approximately uncorrelated with the remaining block(s), then all $\hat{\beta}_{k}$s will have mean (approximately) $0$ already at truncation level $\tau = d+1-|K|-|S|$, where the first variable not in the sensitive block appears in the center of the tree. This gives a high protection of privacy, without truncating away too many trees, thus increasing the potential for high utility. This is exactly the purpose of ordering the C-vine according to Algorithm \ref{alg:find_order}. Without the block structure, one might have to truncate away all trees to obtain the same protection of privacy, which would correspond to removing all dependencies between the variables, and a correspondingly minuscule utility. Note that the choice of $\rho^*$ in Algorithm \ref{alg:find_order} affects the block structure of the correlation matrix $\bm{\rho}$ and thus the $\beta$s. A larger $\rho^*$ leads to a smaller $K$, and potentially more correlated blocks, which reduces the protection of privacy.

\section{RESULTS}\label{sec:results}

\paragraph{Simulated Data}\label{subsec:results_simulated_realId20}

We simulate a real data set to study the effect of truncation and $\mathcal{V}$ on privacy and utility, see Appendix \ref{sec:simulated_real_data_d20_appendix} for details. AIA results of the C-vine confirm Theorem \ref{thm:mab2} as the MAB jumps at the truncation level expected from the block structure of the  real data's correlation matrix. Truncated at the level corresponding to the position of the sensitive covariate in $\mathcal{O}^*$, TVineSynth offers AIA and MIA privacy as good as PrivBayes and superior to CTGAN, TVAE and PrivPGD and a utility superior to CTGAN and especially to PrivBayes and PrivPGD, and comparable to TVAE, see Figure \ref{fig:2dPrivUt_simreal_X6AIA_MAB_Cvine_2}. For more detailed results, please consult Appendix \ref{app:sim_real_results_appendix}.

\subsection{Real-world Data}\label{subsec:results_support2}

We apply TVineSynth to the real-world SUPPORT2 data containing patients suffering from various conditions \citep{support2data}. The binary response $Y$ indicates if a patient died during the study. Covariates \textit{crea} and \textit{totcst} are selected as sensitive features, see Appendix \ref{sec:support2_data} for details.

\paragraph{Privacy: Attribute Inference Attack}\label{sec:AIA_support2_results}

In accordance with the block correlation matrix of the real data, see Figure \ref{fig:Cvine_synth_data_corr_rrealsupport2} in Appendix \ref{sec:support2_data}, after applying Algorithm \ref{alg:find_order} and Theorem \ref{thm:mab2}, TVineSynth provides high AIA privacy when the C-vine is truncated below level 15, outperforming TVAE and PrivPGD and comparable to CTGAN. For truncation at level 10 and lower (\textit{totcst}) and at level 1 (\textit{crea}) the C-vine's MAB is as low as for PrivBayes, see Figure \ref{fig:AIA_Cvine_competitors_support2_small_MAB}. Moving from truncation at level 20 to no truncation the $MAB_{totcst}$ of the C-vine changes its trend and decreases. This is because the un-truncated C-vine starts to model noise in the real data. Figure \ref{fig:utility_Cvine_competitors_support2} confirms this, showing a decrease in utility for the un-truncated C-vine. 
Comparing to the generative models' utility, Figure \ref{fig:utility_Cvine_competitors_support2}, we observe that the privacy protection offered by PrivBayes and CTGAN comes at the cost of utility. 
The PrivPGD exhibits a surprisingly high MAB. For this reason we additionally consulted the AIA's estimated $\beta$-coefficients. These exhibit a mean close to 0, indicating moderate privacy protection, but a high variation, which explains the PrivPGD's high MAB. This is confirmed by the MSE which for outlying targets is moderate to high, see Appendix \ref{sec:realsupport2_small_AIA_MSE}. Hence the PrivPGD seems quite unstable compared to the other synthetic data generators between different runs of the AIA.

\paragraph{Privacy: Membership Inference Attack}\label{sec:MIA_support2_results}

The PG of C-vine generated synthetic data is around 1 with low variation for all truncation levels, indicating optimal MIA privacy for outlying (orange) and randomly sampled (blue) targets, Figure \ref{fig:MIA_Cvine_competitors_support2_small}. 
The C-vine's PG is seemingly independent of truncation level because the estimation of the un-truncated C-vine with Maximum Likelihood (ML) is robust w.r.t. adding/removing a single observation to the real data.\footnote{The robustness of ML estimation depends on the sample size of the (real) data. Thus, for lower sample sizes than the ones used here, we would expect to see a MIA PG that varies more with truncation level.} As a consequence, also a C-vine truncated at level $t < d$ shows the same robustness as the un-truncated C-vine. 
These results compare to PrivBayes for $\epsilon \in \{0.1, 1, 5\}$ and PrivPGD with $\epsilon=2.5$ and $\delta=10^{-5}$. The PG of CTGAN is about 1 at median, but exhibits a high variation over different observations and repetitions of the MIA. The TVAE provides very low MIA privacy with a PG of around 0. This indicates that the TVAE generated synthetic data reproduce the SUPPORT2 data too detailed, harming privacy. 

\paragraph{Utility}\label{sec:utility_support2}

For evaluating utility, 50 synthetic data sets of the same size as the real data ($n=884$) are generated from each model, a random forest classifier is trained on each of them and tested on hold-out test data ($n_{test} = 220$).
The C-vine generated synthetic data consistently outperform synthetic data generated from a CTGAN and PrivPGD and by far PrivBayes for all truncation levels, yielding an $AUC(\bm{y}^*, \hat{\bm{w}}^*)$ almost as high as $AUC(\bm{y}^*, \hat{\bm{y}}^*) \approx 0.71$, Figure \ref{fig:utility_Cvine_competitors_support2}. Only the TVAE performs comparable to the C-vine. Considering its low PG and high MAB in Figures \ref{fig:AIA_Cvine_competitors_support2_small_MAB} and \ref{fig:MIA_Cvine_competitors_support2_small}, the TVAE violates the privacy by modeling the real data too closely. The C-vine, contrarily, captures the dependencies in the real data without compromising privacy.

\paragraph{Privacy-Utility Plots}\label{sec:2dPrivUt_AIAX1_support2}

If we truncate at level 10 or lower for sensitive covariate \textit{totcst} and at level 5 or lower for \textit{crea}, TVineSynth generated synthetic data offer a privacy-utility balance superior to that of the competitors, Figure \ref{fig:2d_priv-ut_support2_crea_totcstAIA_MAB}. 
See Appendix \ref{sec:2d_priv-ut_support2_additional} for further results.

\paragraph{Statistical Fidelity} In terms of the statistical fidelity and discrepancy between real and synthetic joint and marginal distributions TVineSynth outperforms its competitors, see Appendix \ref{app:stat_discrepancy}.

\begin{figure}[t!]
    \centering
    \begin{subfigure}[t]{0.48\textwidth}
        \centering
        \includegraphics[width=\columnwidth]{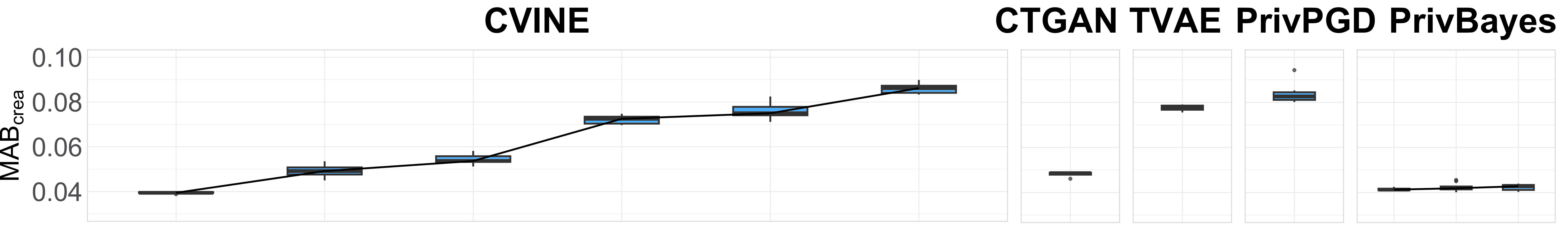}
        \includegraphics[width=\columnwidth]{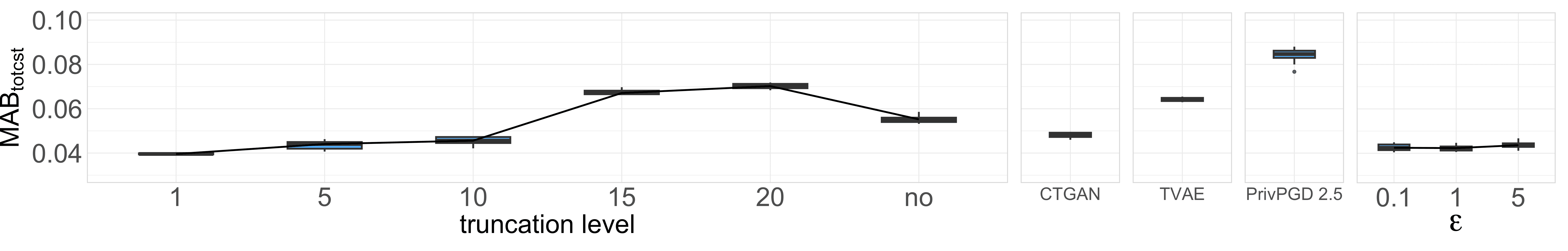}
        \caption{$MAB_j$ under an AIA w.r.t. sensitive covariate \textit{crea} (top row) and \textit{totcst} (bottom row). }\label{fig:AIA_Cvine_competitors_support2_small_MAB}
    \end{subfigure}
    ~
    \begin{subfigure}[t]{0.48\textwidth}
        \centering
        \includegraphics[width=\columnwidth]{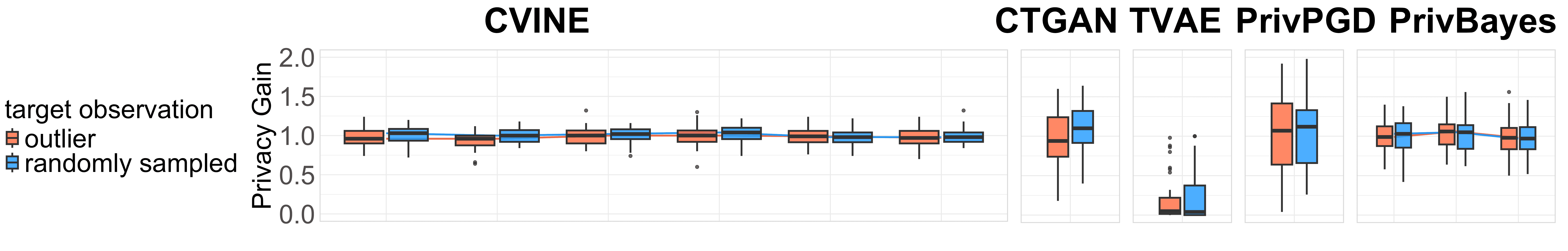}
        \includegraphics[width=\columnwidth]{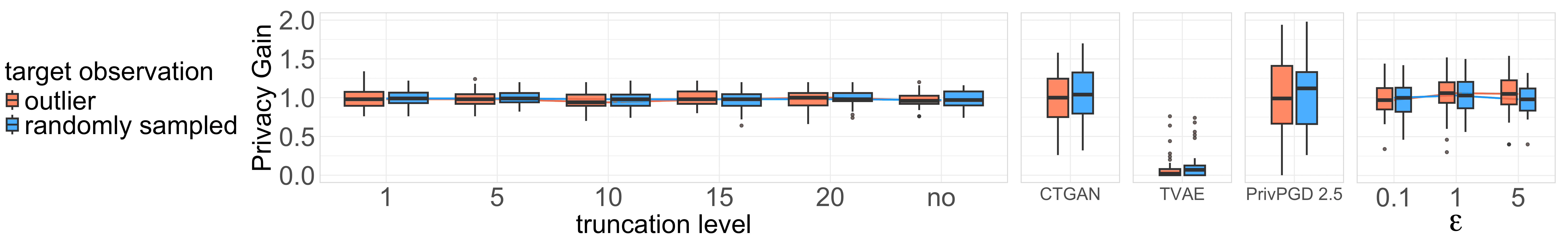}
        \caption{PG under a MIA w.r.t randomly sampled (blue) and outlying targets (orange) w.r.t. sensitive covariate \textit{crea} (top row) and \textit{totcst} (bottom row).}\label{fig:MIA_Cvine_competitors_support2_small}
    \end{subfigure}%
    \\
    \begin{subfigure}[t]{0.48\textwidth}
        \includegraphics[width=\columnwidth]{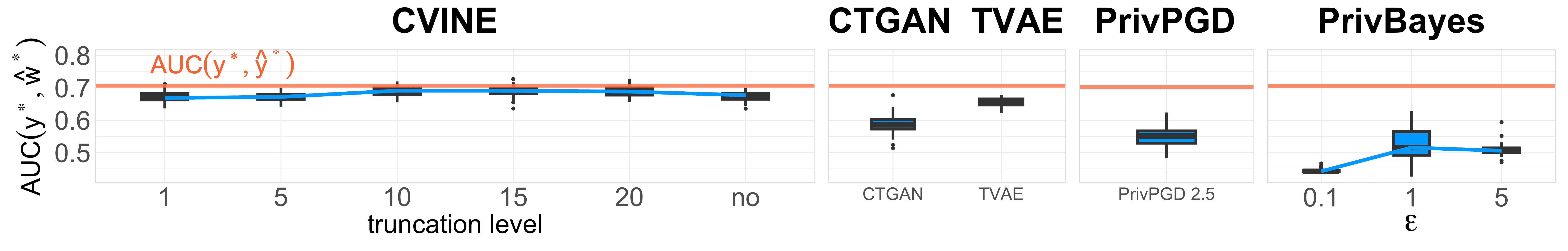}
        \caption{Utility measured with $AUC(\bm{y}^*, \hat{\bm{w}}^*)$ (blue) w.r.t.  a random forest classifier and compared to $AUC(\bm{y}^*, \hat{\bm{y}}^*)$ (orange).}
    \label{fig:utility_Cvine_competitors_support2}
    \end{subfigure}
    ~
    \begin{subfigure}[t]{0.48\textwidth}
        \centering
        \includegraphics[width=0.498\columnwidth]{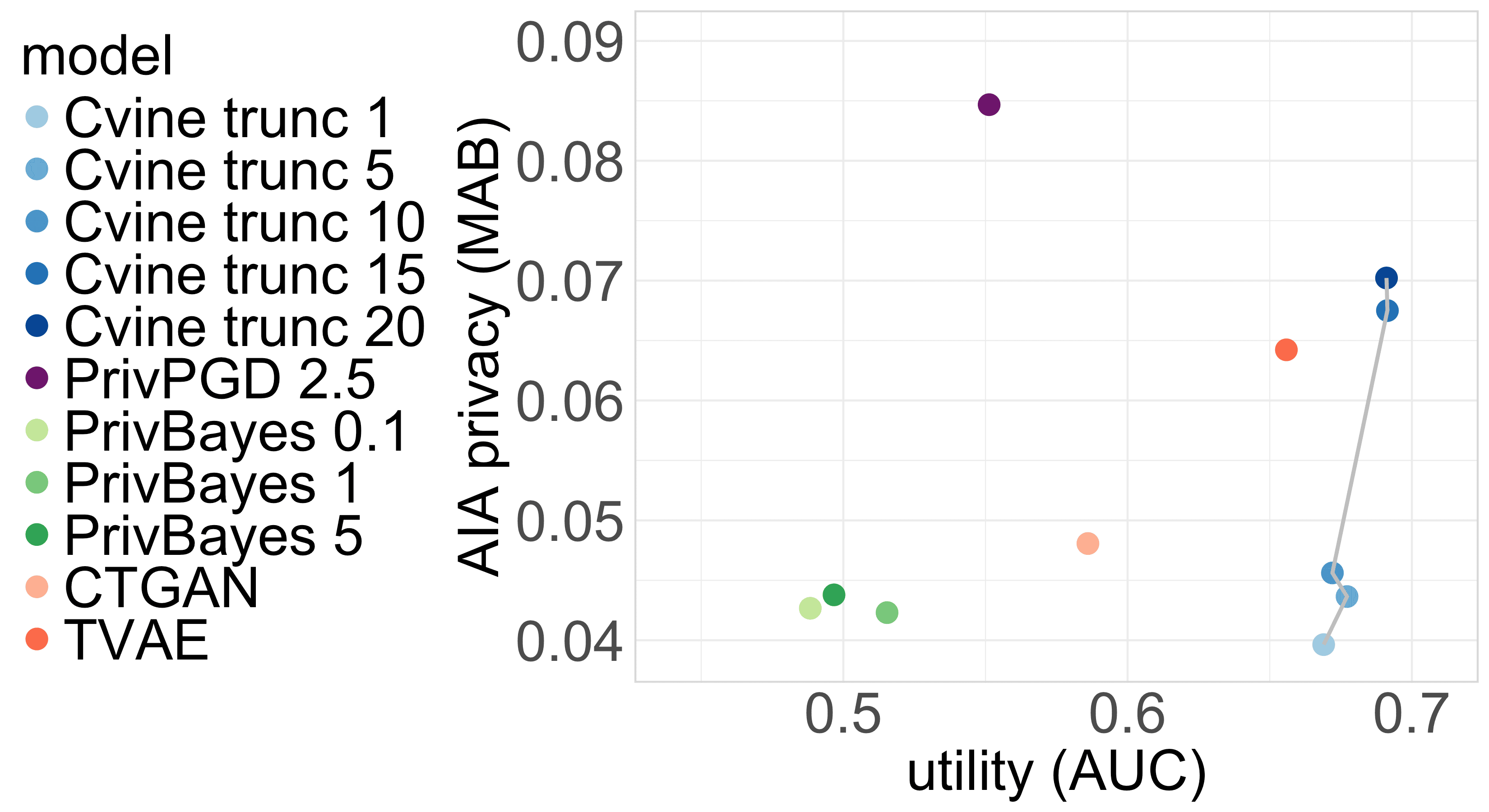}
        \includegraphics[width=0.32868\columnwidth]{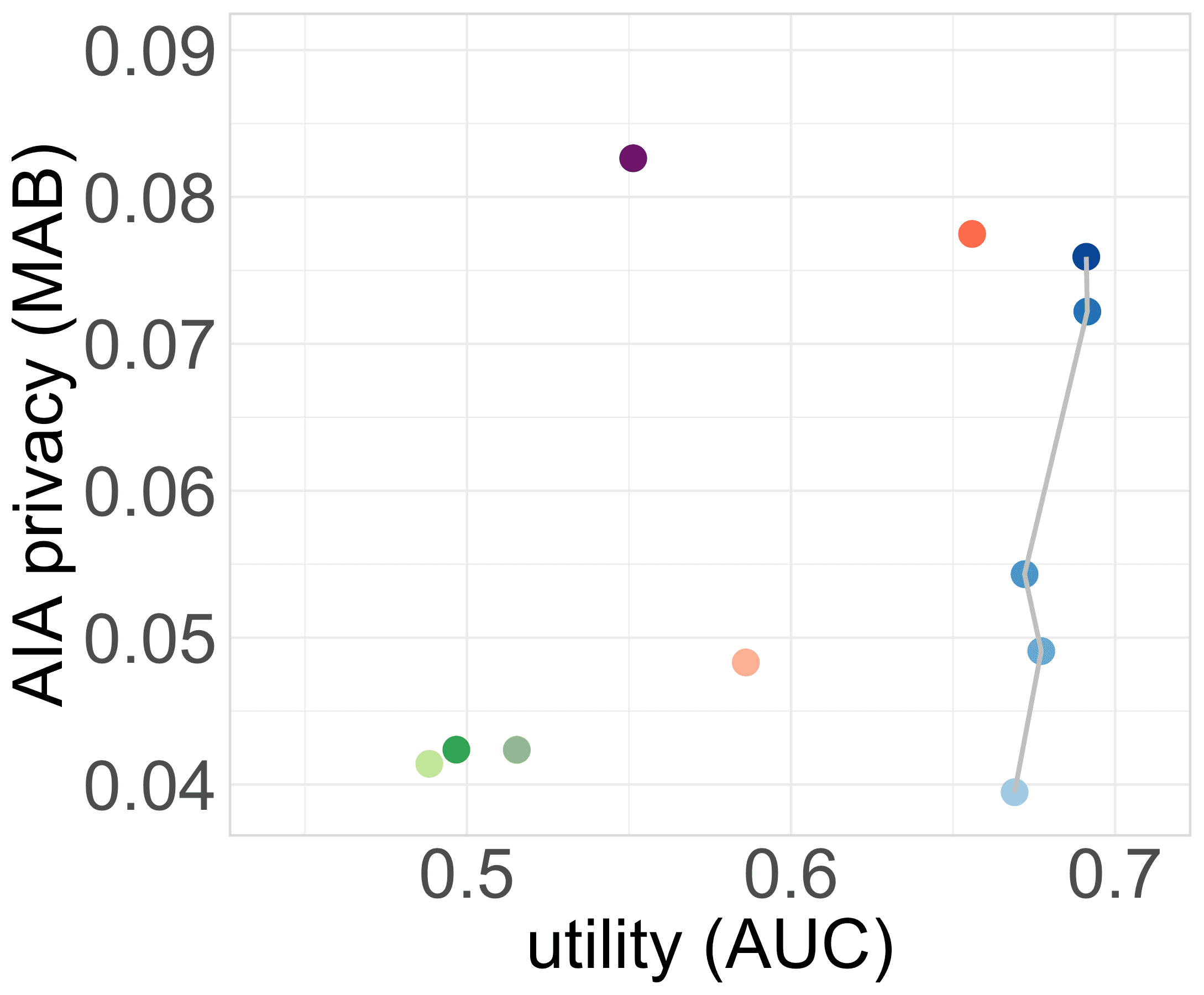}
        \caption{Privacy-utility plot w.r.t. AIA and sensitive features \textit{totcst} (left) and \textit{crea} (right).}
    \label{fig:2d_priv-ut_support2_crea_totcstAIA_MAB}
    \end{subfigure}
    
    \caption{MAB (a), PG (b), utility (c) and privacy-utility plots (d) of synthetic data generated with a C-vine for different truncation levels, CTGAN, TVAE, PrivPGD ($\epsilon=2.5, \, \delta=10^{-5}$) and PrivBayes ($\epsilon \in \{0.1, 1, 5\}$). Boxplots are obtained from 10 game iterations in the AIA and MIA, 50 synthetic data sets in the utility evaluation. Model and privacy attack parameters can be found in Appendix \ref{sec:model_and_attack_parameters}.}
\end{figure}

\section{CONCLUSION}\label{sec:conclusion}
We present TVineSynth, a synthetic tabular data generator based on a truncated C-vine to balance privacy and utility and theoretically justify its construction. Experiments show that TVineSynth offers a privacy-utility trade-off superior to that of competitors. While TVineSynth is not limited to supervised ML tasks, the vine structure might have to be changed for applications such as clustering. Further work could focus on improving scalability, as inference on a vine copula is computationally difficult for more than 500 dimensions, and evaluating the synthetic data also w.r.t. fairness and explainability of predictions.

\subsection*{Acknowledgments}
This work was supported by the Research Council of Norway, Integreat - Norwegian Centre for knowledge-driven machine learning, project number 332645. The work of C. Czado is supported in part by the Deutsche Forschungsgemeinschaft (DFG CZ-86/6-1 CZ-86/10-1).

\bibliography{references}
\bibliographystyle{plainnat}
\section*{Checklist}

 \begin{enumerate}

 \item For all models and algorithms presented, check if you include:
 \begin{enumerate}
   \item A clear description of the mathematical setting, assumptions, algorithm, and/or model. [Yes] See Section \ref{sec:TVineSynth_construction} and Appendix \ref{sec:orderRVs}.
   \item An analysis of the properties and complexity (time, space, sample size) of any algorithm. [Yes] See Appendix \ref{sec:scaling_vines} for the computational complexity of TVineSynth.
   \item (Optional) Anonymized source code, with specification of all dependencies, including external libraries. [Yes] See the submitted zip file containing the anonymized code. 
 \end{enumerate}

 \item For any theoretical claim, check if you include:
 \begin{enumerate}
   \item Statements of the full set of assumptions of all theoretical results. [Yes] See Section \ref{sec:theory_TVineSynth} where we give the full set of assumptions made to theoretically justify TVineSynth.
   \item Complete proofs of all theoretical results. [Yes] See Appendix \ref{sec:proofs_TVineSynth}.
   \item Clear explanations of any assumptions. [Yes] See Section \ref{sec:theory_TVineSynth}.
 \end{enumerate}

 \item For all figures and tables that present empirical results, check if you include:
 \begin{enumerate}
   \item The code, data, and instructions needed to reproduce the main experimental results (either in the supplemental material or as a URL). [Yes] See submitted code and Appendices \ref{sec:model_and_attack_parameters}, \ref{sec:append_simreald20_choice_of_targets} and \ref{sec:append_support2_choice_of_targets}.
   \item All the training details (e.g., data splits, hyperparameters, how they were chosen). [Yes] See Appendices \ref{sec:model_and_attack_parameters}, \ref{sec:simulated_real_data_d20_appendix} and \ref{sec:support2_data}.
    \item A clear definition of the specific measure or statistics and error bars (e.g., with respect to the random seed after running experiments multiple times). [Yes] See captions of figures for description of error bars.
    \item A description of the computing infrastructure used. (e.g., type of GPUs, internal cluster, or cloud provider). [Yes] See Appendix \ref{sec:compute_resources}.
 \end{enumerate}

 \item If you are using existing assets (e.g., code, data, models) or curating/releasing new assets, check if you include:
 \begin{enumerate}
   \item Citations of the creator If your work uses existing assets. [Yes] See Section \ref{subsec:results_support2} and Appendix \ref{sec:support2_data}.
   \item The license information of the assets, if applicable. [Yes] See submitted zip file.
   \item New assets either in the supplemental material or as a URL, if applicable. [Yes] See Appendix \ref{sec:simulated_real_data_d20_appendix} and submitted zip file.
   \item Information about consent from data providers/curators. [Not Applicable] The data set used is published online, see Appendix \ref{sec:support2_data}.
   \item Discussion of sensible content if applicable, e.g., personally identifiable information or offensive content. [Not Applicable] The data used is published and does not contain offensive content.
 \end{enumerate}

 \item If you used crowdsourcing or conducted research with human subjects, check if you include:
 \begin{enumerate}
   \item The full text of instructions given to participants and screenshots. [Not Applicable] No crowdsourcing or research with human subjects was conducted.
   \item Descriptions of potential participant risks, with links to Institutional Review Board (IRB) approvals if applicable. [Not Applicable] No crowdsourcing or research with human subjects was conducted.
   \item The estimated hourly wage paid to participants and the total amount spent on participant compensation. [Not Applicable] No crowdsourcing or research with human subjects was conducted.
 \end{enumerate}

 \end{enumerate}

 \onecolumn

\newpage
\appendix
\addcontentsline{toc}{section}{Appendix} %
\part{Appendix} %
\parttoc %

\section{An Introduction to Vine Copulas}\label{app:intro_to_vines}
 This introduction to vine copulas is based on \citet{griesbauer2022vine} which again is based on \citet{czado2019analyzing}. In the latter more details can be found. Vine copulas build on the concept of copulas.

\subsection{Copulas}

\begin{definition}
Let $d \in \mathbb{N}$. The function $C: [0,1]^d \rightarrow [0,1]^d$ is a \textit{d-dimensional copula} if it is a $d$-dimensional cumulative distribution function with uniform marginal distributions $U[0,1]$.
\end{definition}

So for the random vector $(U_1, \dots U_d)$ taking on values $(u_1, \dots, u_d) \in [0,1]^d$ it is:
\begin{align}
    C(u_1, \dots, u_d) = P(U_1 \leq u_1, \dots, U_d \leq u_d) \; .
\end{align}

\begin{theorem}[Sklar's Theorem] \label{thm:Sklar}
    Let $\bm{X}$ be a $d$-dimensional random vector with distribution function $F$ and marginal distributions $F_1, \dots F_d$. Then $F$ can be expressed as:
    \begin{align} \label{thm:Sklar1}
        F(x_1, \dots, x_d) = C(F_1(x_1), \dots, F_d(x_d)) \; , \quad  (x_1, \dots, x_d) \in \mathbb{R}^d \; . 
    \end{align} 
    where $C$ is a copula. If $F$ is absolutely continuous, the copula $C$ is unique. We then say that the copula $C$ is corresponding to the distribution $F$. In the case of absolute continuity all densities exist and we can express the joint density $f$ of $\bm{X}$ as:
    \begin{align} \label{thm:Sklar1_density}
        f(x_1, \dots, x_d) = c(F_1(x_1), \dots, F_d(x_d)) \cdot f_1(x_1) \cdot ... \cdot f_d(x_d) \; . 
    \end{align} 
    Conversely, let $C$ be the $d$-dimensional copula corresponding to the joint distribution function $F$ of $\bm{X}$ with marginal distributions $F_1, \dots F_d$. Then we can express $C$ as:
    \begin{align}\label{thm:Sklar2}
        C(u_1, \dots, u_d) = F(F_1^{-1}(u_1), \dots, F_d^{-1}(u_d)) 
    \end{align}
    with copula density:
    \begin{align}\label{thm:Sklar2_density}
        c(u_1, \dots, u_d) = \frac{f(F_1^{-1}(u_1), \dots, F_d^{-1}(u_d))}{f_1(F_1^{-1}(u_1)) \cdot ... \cdot f_d(F_d^{-1}(u_d))} \; .
    \end{align}
\end{theorem}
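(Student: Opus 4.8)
The plan is to split Sklar's theorem into three parts and prove them in the order existence, uniqueness, converse, obtaining the two density formulas as consequences of differentiation. First I would establish the representation \eqref{thm:Sklar1}. The cleanest route when the marginals $F_1,\dots,F_d$ are continuous is the probability integral transform: setting $U_j := F_j(X_j)$, each $U_j$ is uniform on $[0,1]$, and I define $C$ to be the joint distribution function of $(U_1,\dots,U_d)$. By construction $C$ has uniform univariate margins, hence is a copula, and since the events $\{X_j \le x_j\}$ and $\{U_j \le F_j(x_j)\}$ coincide up to a null event when $F_j$ is continuous, evaluating $C$ at $(F_1(x_1),\dots,F_d(x_d))$ recovers $F(x_1,\dots,x_d)$, which is \eqref{thm:Sklar1}. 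For general (possibly discontinuous) marginals the same conclusion follows from the randomized distributional transform (R\"uschendorf), which I would invoke to define a uniform $U_j$ with $X_j = F_j^{-1}(U_j)$ almost surely; I expect to cite this rather than reprove it.

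Next I would treat uniqueness under absolute continuity. Continuity of each $F_j$ makes it surjective onto $[0,1]$ by the intermediate value theorem, so every $u \in [0,1]^d$ is of the form $(F_1(x_1),\dots,F_d(x_d))$; substituting the quantile functions $x_j = F_j^{-1}(u_j)$ into \eqref{thm:Sklar1} forces $C(u_1,\dots,u_d) = F(F_1^{-1}(u_1),\dots,F_d^{-1}(u_d))$, which is exactly \eqref{thm:Sklar2} and pins $C$ down uniquely on all of $[0,1]^d$. This one substitution therefore delivers both the uniqueness claim and the converse representation \eqref{thm:Sklar2} simultaneously.

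The density statements \eqref{thm:Sklar1_density} and \eqref{thm:Sklar2_density} I would obtain by differentiation. Writing $c := \partial^d C / \partial u_1 \cdots \partial u_d$ for the copula density, applying the mixed partial $\partial^d / \partial x_1 \cdots \partial x_d$ to \eqref{thm:Sklar1} and using the chain rule — each argument $F_j(x_j)$ depending only on its own $x_j$ — yields $f = c(F_1(x_1),\dots,F_d(x_d)) \prod_{j} f_j(x_j)$, i.e. \eqref{thm:Sklar1_density}. For \eqref{thm:Sklar2_density} I would instead differentiate \eqref{thm:Sklar2}, invoking the inverse function theorem in the form $\frac{d}{du}F_j^{-1}(u) = 1/f_j(F_j^{-1}(u))$ on the support, so that the product of these Jacobian factors produces exactly the denominator $\prod_j f_j(F_j^{-1}(u_j))$.

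The main obstacle is the general, non-continuous case rather than the absolutely continuous one that the paper actually uses: when some $F_j$ has jumps, the event identity $\{X_j \le x_j\} = \{U_j \le F_j(x_j)\}$ breaks down, $C$ is no longer uniquely determined off the closure of the range of the marginals, and the naive definition of $U_j$ fails to be uniform. Handling this cleanly requires the randomized distributional transform, which is why I would quarantine it into a cited lemma and carry out the self-contained argument only under continuity and absolute continuity — precisely the regime in which the density factorizations \eqref{thm:Sklar1_density} and \eqref{thm:Sklar2_density}, the objects this paper relies on for the vine construction, are meaningful. A secondary technical point is justifying the differentiation: existence of the mixed partials almost everywhere and the validity of the chain-rule and inverse-function steps on the set where the margins have positive density, which I would address by reading the density identities as statements on the support of $f$.
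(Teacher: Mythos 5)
Your proposal is correct, but there is nothing in the paper to compare it against: the paper states Sklar's theorem in its vine-copula primer (Appendix \ref{app:intro_to_vines}) without any proof, simply citing \citet{sklar1959fonctions}, with the exposition adapted from \citet{czado2019analyzing}. What you have written is the standard textbook argument and it holds up: the probability integral transform $U_j = F_j(X_j)$ together with the observation that $\{X_j \le x_j\}$ and $\{U_j \le F_j(x_j)\}$ differ only on flat pieces of $F_j$, which are null under the law of $X_j$, gives existence and \eqref{thm:Sklar1}; substituting $x_j = F_j^{-1}(u_j)$ and using $F_j \circ F_j^{-1} = \mathrm{id}$ (valid for continuous $F_j$) gives uniqueness and \eqref{thm:Sklar2} in one stroke; and mixed partial differentiation with the chain rule, respectively the inverse-function identity $\frac{d}{du}F_j^{-1}(u) = 1/f_j(F_j^{-1}(u))$, yields \eqref{thm:Sklar1_density} and \eqref{thm:Sklar2_density}. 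Your quarantining of the discontinuous case into a cited distributional-transform lemma is also the right call, since the paper only ever uses the absolutely continuous regime. Three minor points worth tightening if you write this out in full: uniqueness requires only \emph{continuity} of the marginals, which is weaker than the absolute continuity the statement assumes (the extra strength is needed solely so the densities in \eqref{thm:Sklar1_density}--\eqref{thm:Sklar2_density} exist); the range of a continuous $F_j$ is $(0,1)$ plus possibly the endpoints, so uniqueness on the boundary of $[0,1]^d$ should be settled by groundedness, the uniform-margin conditions and the Lipschitz continuity of copulas rather than by surjectivity alone; and the density identities hold almost everywhere (equivalently, as statements about a chosen version of the densities on the support), exactly as you flag in your closing remark.
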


Sklar's Theorem, \cite{sklar1959fonctions} provides the link between the copula on the $d$-dimensional hypercube and the probability distribution of the random vector $(X_1, \dots, X_d)$. Equation \eqref{thm:Sklar1_density} illustrates how the joint density $f$ of a random vector $(X_1, \dots, X_d)$ can be split into the joint copula density, which captures the dependence structure of $X_1, \dots X_d$, and the marginal densities $f_1, \dots f_d$.

The inverse Sklar's Theorem \ref{thm:Sklar} gives the construction of the \textit{elliptical copulas}, to which the Gauss copula belongs.

\begin{definition}[bivariate Gauss copula]
    Let $\Phi_2 ( \cdot, \cdot; \rho)$ be the $2$-dimensional standard normal distribution with mean vector $\bm{\mu} = 0$ and correlation parameter $\rho \in (0, 1)$, and let $\Phi^{-1}(\cdot)$ be the quantile function of the univariate standard normal distribution. Then by Sklar's Theorem  \ref{thm:Sklar} we obtain the \textit{bivariate Gauss copula} by:
    \begin{align}
        C(u_1, u_2; \rho) = \Phi_2(\Phi^{-1}(u_1), \Phi^{-1}(u_2); \rho) \; .
    \end{align}
\end{definition}

The Clayton, Gumbel, Frank and Joe copulas belong to the class of \textit{Archimedean copulas}, which is covered in more detail for example in \cite{nelsen2007introduction}.

\subsubsection*{Pair Copula Construction (PCC)} \label{section:pcc}
The set of multivariate copulas to choose from, i.e. elliptical and Archimedean copulas, is rather limited and constrained in modeling flexibility. However, complex high-dimensional dependence structures call for more flexible multivariate copulas. \citet{aas2009pair}, which the following subsection is based on, decompose a multivariate density by using a cascade of bivariate building blocks: pair copulas. Knowing how to decompose a multivariate distribution, this approach can be reversed in order to construct multivariate copulas and distribution functions respectively. These are flexible and their construction is simple. This is the idea of \textit{pair copula construction}. \\

The following notation is defined:

\begin{definition}\label{def:notation_copula_cond_dist}
    Let $\bm{X}_D \in \mathbb{R}^d$ be a random vector and $\bm{x}_D \in \mathbb{R}^d$, let $i, j, d \in \mathbb{N}$ and $D \subset \mathbb{N}$ with $i, j \notin D$ and $| D | = d$. Let $F_{i j \, | \, D}(\cdot, \cdot \, | \, \bm{X}_D = \bm{x}_D)$ be the conditional distribution of $(X_i, X_j)$ given that $\bm{X}_D = \bm{x}_D $. The copula distribution associated with $F_{i j \, | \, D}(\cdot, \cdot \, | \, \bm{X}_D = \bm{x}_D)$ is denoted by:
    \begin{align*}
        C_{i j ; D}( \cdot, \cdot ; \bm{x}_D) \; .
    \end{align*}
    If existing, its corresponding density is denoted by:
    \begin{align*}
        c_{i j ; D}( \cdot, \cdot ; \bm{x}_D) \; .
    \end{align*}
\end{definition}

    Let $\bm{X} = (X_1, X_2, X_3)$ be a random vector with joint density function $f_{1 2 3}$ and marginal density functions $f_1, f_2$ and $f_3$. Using conditioning we can rewrite the joint density function:
    
    \begin{align}
        f_{1 2 3}(x_1, x_2, x_3) = f_{1 | 2 3}(x_1 \, | \, x_2, x_3) f_{2 | 3}(x_2 \, | \, x_3) f_3(x_3) \; , \label{pcc:start_term}
    \end{align}
    
    with:
    
    \begin{align}
        f_{2 | 3}(x_2 \, | \, x_3) &= \frac{f_{2 3}(x_2, x_3)}{f_3(x_3)} \; , \label{pcc:umformung1}\\
        f_{1 | 2 3}(x_1 \, | \, x_2, x_3) &= \frac{f_{1 2 3}(x_1, x_2, x_3)}{f_{2 3}(x_2, x_3)} = \frac{f_{1 3 | 2}(x_1, x_3 \, | \, x_2)}{f_{3 | 2}(x_3 \, | \, x_2)}\; . \label{pcc:umformung2}
    \end{align}
    
    By Sklar's Theorem \ref{thm:Sklar} we know, that:
    
    \begin{align}
       f_{2 3}(x_2, x_3) = c_{2 3} (F_2(x_2), F_3(x_3)) f_2(x_2) f_3(x_3) \; , 
    \end{align}
    
    and thus (\ref{pcc:umformung1}) becomes:
    
    \begin{equation}
        \begin{aligned}[b]
            f_{2 | 3}(x_2 \, | \, x_3) &:= \frac{f_{2 3}(x_2, x_3)}{f_3(x_3)} = c_{2 3} (F_2(x_2), F_3(x_3)) f_2(x_2) \; . \label{pcc:zwischen_ergebnis1}
        \end{aligned}
    \end{equation}
    
    In the same manner we obtain (\ref{pcc:umformung2}):
    
    \begin{equation}
        \begin{aligned}[b]
            f_{1 | 2 3}(x_1 \, | \, x_2, x_3) &= \frac{f_{1 3 | 2}(x_1, x_3 \, | \, x_2)}{f_{3 | 2}(x_3 \, | \, x_2)} \\
            &= \frac{c_{1 3; 2}(F(x_1 \, | \, x_2), F(x_3 \, | \, x_2); x_2) f_{1 | 2}(x_1 \, | \, x_2) f_{3 | 2}(x_3 \, | \, x_2)}{f_{3 | 2}(x_3 \, | \, x_2)} \\
            &= c_{1 3; 2}(F(x_1 \, | \, x_2), F(x_3 \, | \, x_2); x_2) f_{1 | 2}(x_1 \, | \, x_2) \\
            &= c_{1 3; 2}(F(x_1 \, | \, x_2), F(x_3 \, | \, x_2); x_2) c_{1 2} (F_1(x_1), F_2(x_2)) f_1(x_1) \; . \label{pcc:zwischen_ergebnis2}
        \end{aligned}
    \end{equation}
    
    Combining (\ref{pcc:zwischen_ergebnis1}) and (\ref{pcc:zwischen_ergebnis2}) we can decompose (\ref{pcc:start_term}) into a product of pair copulas and marginal distributions:
    
    \begin{equation}
        \begin{aligned}[b]
            f_{1 2 3}(x_1, x_2, x_3) = \, & c_{1 3; 2}(F(x_1 \, | \, x_2), F(x_3 \, | \, x_2); x_2) \\
            & c_{1 2} (F_1(x_1), F_2(x_2)) \, c_{2 3} (F_2(x_2), F_3(x_3)) \\
            & f_1(x_1) \, f_2(x_2) \, f_3(x_3) \; . \label{pcc:decomposition}
        \end{aligned}
    \end{equation}

\begin{remark}
    
    \begin{itemize}
        \item The decomposition with conditioning in (\ref{pcc:start_term}) is not unique. Neither is therefore (\ref{pcc:decomposition}). In general we could reorder $(X_1, X_2, X_3)$ in $3! = 6$ ways. However, in a pair copula there is no distinction made between the first and the second argument, i.e. $c_{i j}(u_i, u_j) = c_{j i}(u_j, u_i)$. That is why we end up with three distinct decompositions in the form of (\ref{pcc:decomposition}).
        
        \item We see, that $c_{1 3; 2}(\cdot, \cdot; x_2)$, the pair copula associated with the conditional distribution of $(X_1, X_3)$ given $X_2 = x_2$ depends on the value $x_2$ of $X_2$. We stick to the terminology in \cite{czado2019analyzing} and speak of a \textit{pair copula decomposition}, if the copulas associated with conditional distributions are allowed to depend on the value of the conditioning variable, i.e. here $X_2 = x_2$. If we ignore this dependence, which in our case would be equivalent to:
        $$ \forall x_2 \in \mathbb{R}: \quad c_{1 3; 2}(u_1, u_3; x_2) = c_{1 3; 2}(u_1, u_3), \quad u_1 \in [0,1], \; u_3 \in [0,1] \ , $$
        we make the \textit{simplifying assumption} in three dimensions. In general it assumes, that copulas associated with conditional distributions do not depend on the value(s) of the conditioning variable(s).
        If we assume the simplifying assumption, we can reverse the decomposition approach and view the simplified version of (\ref{pcc:decomposition}) as the construction of the three dimensional density $f_{123}$ from pair copula densities, conditional distributions and marginal densities. In this case we speak of \textit{pair copula construction}.

        \item Obviously the construction 3-dimensional example above can be generalized to higher dimensions. There we encounter conditional marginal densities, which can be expressed as:
        \begin{align}\label{pcc:cond_density}
            f(x \, | \, \bm{v}) = c_{x v_j ; \bm{v}_{-j}}(F(x \, | \, \bm{v}_{-j}), F(v_j \, | \ \bm{v}_{-j})) \cdot f(x \, | \, \bm{v}_{-j}) \; ,
        \end{align}
        with $\bm{v} \in \mathbb{R}^d$ and $\bm{v}_{-j}$ the sub-vector of $\bm{v}$ with the $j$th component left out. The second factor of (\ref{pcc:cond_density}) can again be factorized with (\ref{pcc:cond_density}). This illustrates the iterative nature of the construction. Finally, with the result of \cite{joe1996families}, that:
        \begin{equation}
            \begin{aligned}[b]\label{multivariate_conditional_dist}
                \forall j: \quad F(x \, | \, \bm{v}) &= \frac{\partial C_{x, v_j ; \bm{v}_{- j}} \big(F(x \, | \, \bm{v}_{- j}), u \big)}{\partial u} \Bigg\rvert_{u = F(v_j \, | \, \bm{v}_{- j})} \\
                &=: \frac{\partial C_{x, v_j ; \bm{v}_{- j}} \big(F(x \, | \, \bm{v}_{- j}), F(v_j \, | \, \bm{v}_{- j}) \big)}{\partial F(v_j \, | \, \bm{v}_{- j})} \; ,
            \end{aligned}
        \end{equation}
        the construction or decomposition respectively is completed. Here \textit{h-functions} help to simplify the notation of conditional distributions and copulas. 
        
        \begin{definition}\label{def:h-function}
            For a bivariate copula $C_{u v}$ the corresponding \textit{h-function} is defined for all $(u, v) \in [0, 1]^2$ as:
            \begin{align}
                h_{u \, | \, v}(u \, | \, v) := \frac{\partial}{\partial v} C_{u v} (u, v) \; .
            \end{align}
        \end{definition}
        
        Clearly (\ref{multivariate_conditional_dist}) holds for any continuous distribution $F$ and thus also for the bivariate copula distribution $C_{u v}$. With $C(u) = u$ for any $u \in [0,1]$ and the copula $C$ it follows that:
        \begin{align} \label{hfunc_cond_copula}
            C_{u \, | \, v}(u \, | \, v) \stackrel{(\ref{multivariate_conditional_dist})}{=} \frac{\partial}{\partial v} C_{u v} (u, v) \stackrel{\ref{def:h-function}}{=} h_{u \, | \, v}(u \, | \, v) \; .
        \end{align}
    \end{itemize}
\end{remark}

\subsection{Regular Vines}

In Section \ref{section:pcc} we saw that a $d$-dimensional probability distribution function can be constructed from or decomposed into bivariate building-blocks, pair copulas. For a specific $d$-dimensional probability distribution there exist several pair copula constructions, a subset of them satisfying the \textit{proximity condition} introduced in the following. \cite{bedford2001probability} and \cite{bedford2002vines} introduced \textit{regular vines (R-vines)} and the \textit{R-vine specification} to efficiently represent the pair copula constructions satisfying the proximity condition. The \textit{R-vine specification} captures the structure of the pair copula construction: each bivariate copula is associated with an edge in a sequence of nested trees, the \textit{R-vine}. The families, rotations and parameters of the bivariate copulas may be stored in matrices. This compact notation facilitates the estimation and sampling procedures on R-vines. \cite{bedford2001probability} and \cite{bedford2002vines} also show, that each R-vine specification stands for a unique $d$-dimensional distribution $F$.

\begin{definition}[Vine, regular vine, regular vine tree sequence] \label{def:rvine}
A set of trees $\mathcal{V} = (T_1, ..., T_{d-1})$ is a \textit{vine} on $d$ elements if:
\begin{enumerate}
    \item[(i)] $T_1$ is a tree with edge set $E_1$ and node set $V_1 = \{1, ..., d\}$.
    \item[(ii)] For $i \in \{2, ..., (d-1)\}$ it holds that $T_i$ is a tree with edge set $E_i$ and node set $V_i = E_{i-1}$.
\end{enumerate}
$\mathcal{V}$ is an \textit{regular vine} (\textit{R-vine}) or \textit{regular vine tree sequence} (\textit{R-vine tree sequence}) if additionally the so called \textit{proximity condition} holds:
\begin{enumerate}\label{proximity_condition}
    \item[(iii)] For $i \in \{2, ..., (d-1)\}$ and $\{a, b\} \in E_i$ with $a = \{a_1, a_2\}$ and $b = \{b_1, b_2\}$ we have that $|a \cap b| = 1$.
\end{enumerate}

\end{definition}

\begin{remark}
The \textit{proximity condition} makes sure that nodes $a$ and $b$ are only then joined by an edge in tree $T_i$ if they share a common node in tree $T_{i-1}$, where $a, b \in E_{i-1}$. 
\end{remark}

Among the R-vines there are (among others) the two sub-classes of C-vines and D-vine. They distinguish themselves through a special structure each tree in $\mathcal{V}$ takes on.

\begin{definition}[C-vine, D-vine]
An R-vine tree sequence $\mathcal{V}$ on $d$ elements is called:
\begin{itemize}
    \item[(i)] \textit{D-vine}, if for each node $v$ of each tree $T_i \in \mathcal{V} , \; i \in [d-1]$ it holds that $deg(v) \leq 2$,
    \item[(ii)] \textit{C-vine}, if in each tree $T_i \in \mathcal{V} , \; i \in [d-1]$ there is one unique node $v$ with $deg(v) = d - i$ which is called \textit{root node}.
\end{itemize}
\end{definition}

Below is the tree sequence of a C-vine on 5 elements without node and edge labels. The root node in each star-shaped tree is coloured.

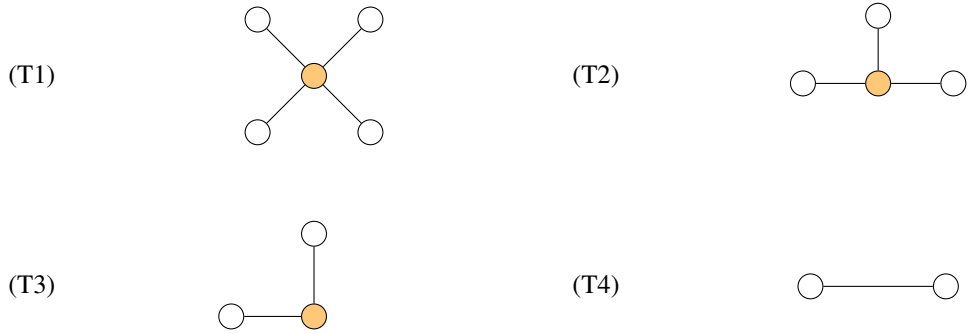
\begin{figure}[H]
    \centering
    \begin{tikzpicture}[node distance=5mm, main/.style = {draw, shape=circle}, baseline=+6mm]
        
        \node[draw=none,fill=none] at (0,0) {(T1)};
        \node[draw=none,fill=none] at (7.5, 0) {(T2)};
        \node[draw=none,fill=none] at (0, -2.8) {(T3)};
        \node[draw=none,fill=none] at (7.5, -2.8) {(T4)};
        \node[main, fill={rgb:orange,1;yellow,2;pink,5}] (1) at (3.75, 0) {};
        \node[main] (2) at (4.5, 0.75) {};
        \node[main] (3) at (3, 0.75) {};
        \node[main] (4) at (3, -0.75) {};
        \node[main] (5) at (4.5, -0.75) {};
        \draw (1) -- (2) node[midway, above] {};
        \draw (1) -- (3) node[midway, above] {};
        \draw (1) -- (4) node[midway, above] {};
        \draw (1) -- (5) node[midway, above] {};

        \node[main, fill={rgb:orange,1;yellow,2;pink,5}] (6) at (11.25, -0.1) {};
        \node[main] (7) at (12.25, -0.1) {};
        \node[main] (8) at (10.25, -0.1) {};
        \node[main] (9) at (11.25, 0.8) {};
        \draw (6) -- (7) node[midway, above] {};
        \draw (6) -- (8) node[midway, above] {};
        \draw (6) -- (9) node[midway, above] {};
        
        \node[main, fill={rgb:orange,1;yellow,2;pink,5}] (10) at (3.75, -3.2) {};
        \node[main] (11) at (3.75, -2.1) {};
        \node[main] (12) at (2.65, -3.2) {};
        \draw (10) -- (11) node[midway, above] {};
        \draw (10) -- (12) node[midway, above] {};
        
        \node[main] (13) at (10.35, -2.8) {};
        \node[main] (14) at (12.15, -2.8) {};
        \draw (13) -- (14) node[midway, above] {};
    \end{tikzpicture}
    \caption{A C-vine on 5 elements.}
\end{figure}

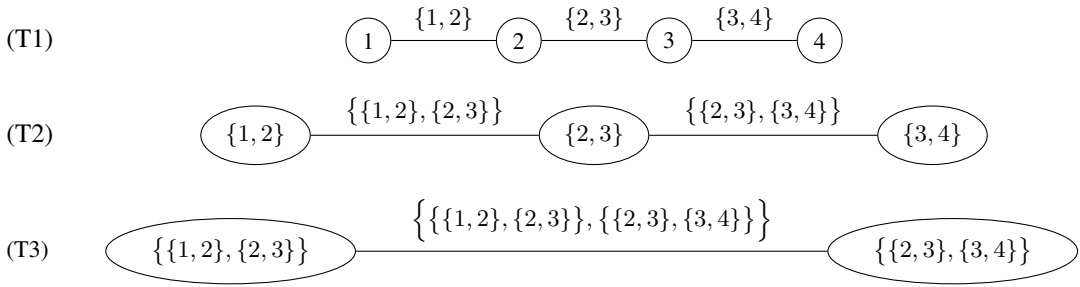
\begin{figure}[H]
    \begin{tikzpicture}[node distance=5mm, main/.style = {draw, shape=circle}, baseline=+6mm]
        
        \node[draw=none,fill=none] at (0,0) {(T1)};
        \node[main] (1) at (4.5, 0) {\footnotesize 1};
        \node[main] (2) at (6.5, 0) {\footnotesize 2};
        \node[main] (3) at (8.5, 0) {\footnotesize 3};
        \node[main] (4) at (10.5, 0) {\footnotesize 4};
        \draw (1) -- (2) node[midway, above] {\footnotesize $\{1, 2\}$};
        \draw (2) -- (3) node[midway, above] {\footnotesize $\{2, 3\}$};
        \draw (3) -- (4) node[midway, above] {\footnotesize $\{3, 4\}$};
    \end{tikzpicture}

\vspace{0.3cm}

    \begin{tikzpicture}[node distance=5mm, main/.style = {draw}, state/.style ={ellipse, draw, minimum width = 0.8 cm}, baseline=+6mm]
    
        \node[draw=none,fill=none] at (0,0) {(T2)};
        \node[state] (1) at (3, 0) {\footnotesize $\{1, 2\}$};
        \node[state] (2) at (7.5, 0) {\footnotesize $\{2, 3\}$};
        \node[state] (3) at (12, 0) {\footnotesize $\{3, 4\}$};
        \draw (1) -- (2) node[midway, above] {\footnotesize $\big\{\{1, 2\}, \{2, 3\}\big\}$};
        \draw (2) -- (3) node[midway, above] {\footnotesize $\big\{\{2, 3\}, \{3, 4\}\big\}$};
    \end{tikzpicture}
 
 \vspace{0.3cm}
 
    \begin{tikzpicture}[font = \footnotesize, node distance=5mm, main/.style = {draw}, state/.style ={ellipse, draw, minimum width = 0.8 cm}, baseline=+6mm]
    
        \node[draw=none,fill=none] at (0,0) {(T3)};
        \node[state] (1) at (2.7, 0) {\footnotesize $\big\{\{1, 2\}, \{2, 3\}\big\}$};
        \node[state] (2) at (12.3, 0) {\footnotesize $\big\{\{2, 3\}, \{3, 4\}\big\}$};
        \draw (1) -- (2) node[midway, above] {\footnotesize $\Big\{ \big\{\{1, 2\}, \{2, 3\}\big\}, \big\{\{2, 3\}, \{3, 4\}\big\} \Big\}$ };
    \end{tikzpicture}
    
    \caption{A D-vine on 4 elements.}
    \label{picture:dvine4}
\end{figure}

\vspace{0.5cm}

In Figure \ref{picture:dvine4} an R-vine tree sequence $\mathcal{V}$ on $d=4$ elements is displayed. Note that $\mathcal{V}$ is a D-vine. 

This notation for edges and nodes is hard to read and use. By \cite{bedford2002vines} and results of \cite{kurowicka2003parameterization} the edges of each tree can be uniquely identified by two \textit{conditioned nodes} and a set of \textit{conditioning nodes}.

\begin{definition}[Complete union, conditioning set, conditioned set] \label{def:complete_union}
    Let $\mathcal{V}$ be an R-vine tree sequence. The \textit{complete union} $U_e$ of the edge $e \in E_i$ is defined as:
    \begin{align}
        U_e := \{ j \in V_1 \ | \ \exists e_1 \in E_1, ..., e_{i-1} \in E_{i-1} \quad  \text{s.th.} \quad  j \in e_1 \in ... \in e_{i-1} \in e \} .
    \end{align}
    The set:
    \begin{align}
        D_e := U_a \cap U_b
    \end{align}
    is called \textit{conditioning set} $D_e$ of an edge $e = \{a, b\}$ and the \textit{conditioned sets} $\mathcal{C}_{e, a}$, $\mathcal{C}_{e, b}$ and $\mathcal{C}_e$ are given by:
    \begin{align}
        \mathcal{C}_{e, a} := U_a \setminus D_e \, , \quad  \mathcal{C}_{e, b} := U_b \setminus D_e \quad \text{and}  \quad \mathcal{C}_e  := \mathcal{C}_{e, a} \cup \mathcal{C}_{e, b} \ .
    \end{align}
\end{definition}

With the notation introduced above we obtain the following conditioning sets:
\begin{align*}
    T_1: & \qquad D_{\{1, 2\}} = \emptyset \; , \quad D_{\{2, 3\}} = \emptyset \; , \quad D_{\{3, 4\}} = \emptyset \; , \\
    T_2: & \qquad D_{\{\{1, 2\}, \{2, 3\}\}} =  \{2\}\; , \quad D_{\{\{2, 3\}, \{3, 4\}\}} =  \{3\} \; , \\
    T_3: & \qquad D_{\{\{\{1, 2\}, \{2, 3\}\}, \{\{2, 3\}, \{3, 4\}\} \}} = \{2, 3\} \; .
\end{align*}
and the following conditioned sets:
\begin{align*}
    T_1: & \qquad \mathcal{C}_{\{1, 2\}} = \{1, 2\} \; , \quad \mathcal{C}_{\{2, 3\}} = \{2, 3\} \; , \quad \mathcal{C}_{\{3, 4\}} = \{3, 4\} \; , \\
    T_2: & \qquad \mathcal{C}_{\{\{1, 2\}, \{2, 3\}\}} =  \{1, 3\}\; , \quad \mathcal{C}_{\{\{2, 3\}, \{3, 4\}\}} =  \{2, 4\} \; , \\
    T_3: & \qquad \mathcal{C}_{\{\{\{1, 2\}, \{2, 3\}\}, \{\{2, 3\}, \{3, 4\}\} \}} = \{1, 4\} \; .
\end{align*}
Figure \ref{picture:dvine4_notation} displays a tree sequence with the new notation which is more readable. It still describes the R-vine tree sequence uniquely up to permutation and order of the elements of the set $\mathcal{C}_e$.\footnote{As a convention we order the elements of $\mathcal{C}_e$ in ascending as done in Figure \ref{picture:dvine4_notation}.}

\vspace{0.2cm}
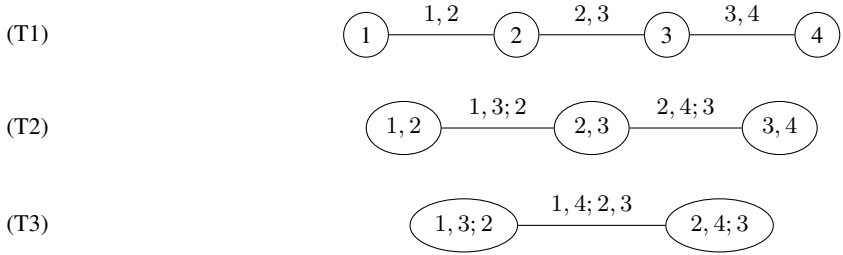
\begin{figure}[H]
    \begin{tikzpicture}[font = \footnotesize, node distance=5mm, main/.style = {draw, shape = circle}, baseline=+6mm]
        
        \node[draw=none,fill=none] at (0,0) {(T1)};
        \node[main] (1) at (4.5, 0) {1};
        \node[main] (2) at (6.5, 0) {2};
        \node[main] (3) at (8.5, 0) {3};
        \node[main] (4) at (10.5, 0) {4};
        \draw (1) -- (2) node[midway, above] {$1, 2$};
        \draw (2) -- (3) node[midway, above] {$2, 3$};
        \draw (3) -- (4) node[midway, above] {$3, 4$};
    \end{tikzpicture}

\vspace{0.3cm}

    \begin{tikzpicture}[font = \footnotesize, node distance=5mm, main/.style = {draw}, state/.style ={ellipse, draw, minimum width = 0.8 cm}, baseline=+6mm]
    
        \node[draw=none,fill=none] at (0,0) {(T2)};
        \node[state] (1) at (5, 0) {$1, 2$};
        \node[state] (2) at (7.5, 0) {$2, 3$};
        \node[state] (3) at (10, 0) {$3, 4$};
        \draw (1) -- (2) node[midway, above] {$1, 3; 2$};
        \draw (2) -- (3) node[midway, above] {$2, 4; 3$};
    \end{tikzpicture}
 
 \vspace{0.3cm}
 
    \begin{tikzpicture}[font = \footnotesize, node distance=5mm, main/.style = {draw}, state/.style ={ellipse, draw, minimum width = 0.8 cm}, baseline=+6mm]
    
        \node[draw=none,fill=none] at (0,0) {(T3)};
        \node[state] (1) at (5.8, 0) {$1, 3; 2$};
        \node[state] (2) at (9.2, 0) {$2, 4; 3$};
        \draw (1) -- (2) node[midway, above] { $1, 4; 2, 3$ };
    \end{tikzpicture}
    \caption{A D-vine on 4 elements with the notation of Definition \ref{def:complete_union}.}
    \label{picture:dvine4_notation}
\end{figure}

\begin{definition}[Constraint set] \label{def:constraint_set}
The \textit{constraint set} $\mathcal{C} \mathcal{V}$ for the R-vine tree sequence $\mathcal{V}$ is defined as:
\begin{align}
    \mathcal{C} \mathcal{V} := \big\{ ( \mathcal{C}_{e, a}, \mathcal{C}_{e, b} ; D_e ) \ | \ e = \{a, b\}, \ e \in E_i \quad \text{for} \quad i \in  [d-1] \big\} \ .
\end{align}
Here the edge $e = ( \mathcal{C}_{e, a}, \mathcal{C}_{e, b} ; D_e )$ of the R-vine tree sequence will often be abbreviated by $e = (e_a, e_b ; D_e)$.

\end{definition}

Here the constraint set is:
\begin{align*}
    \big\{ (1,2), (2, 3), (3, 4), (1, 3; 2), (2, 4; 3), (1, 4; 2, 3) \big\} \; .
\end{align*}
Note that the curly braces of the conditioned and conditioning sets are left out. This is not completely precise nor consistent to Definition \ref{def:constraint_set}, but facilitates notation. In Figure \ref{picture:dvine4_notation} the round braces are left out as well. \\

\citet{kurowicka2003parameterization} show that any edge of an R-vine tree sequence can be identified by its conditioning or conditioned sets.

\begin{definition}[R-vine specification]
The triple $(\bm{F}, \mathcal{V}, B)$ is called \textit{R-vine specification} if:

\begin{enumerate}
    \item[(i)] $\bm{F} = (F_1, ..., F_d)$ is a vector of continuous and invertible distribution functions,
    \item[(ii)] $\mathcal{V}$ is an R-vine tree sequence on $d$ elements and
    \item[(iii)] $B := \big\{C_e \ | \ e \in E_i \quad \text{for} \quad i \in [d-1] \big\}$ is the set of bivariate copulas $C_e$ with $E_i$ the edge set of tree $T_i$ of the R-vine tree sequence $\mathcal{V}$.
\end{enumerate}

\end{definition}

By this definition each edge $e \in E_i$ of a tree $T_i$ in $\mathcal{V}$ corresponds to a bivariate copula $C_e$.

\begin{definition}[Realizing an R-vine specification, Regular vine distribution]

A joint distribution $F$ of the random vector $\bm{X} = (X_1, ..., X_d)$ is said to \textit{realize an R-vine specification} $(\bm{F}, \mathcal{V}, B)$ or have a \textit{regular vine distribution} respectively, if $C_e$ is the bivariate copula of $X_{\mathcal{C}_{e, a}}$ and $X_{\mathcal{C}_{e, b}}$ given $\bm{X}_{D_e}$ for each edge $e = \{a, b\} \in E_i$ and the marginal distribution of $X_i$ is $F_i$ for $i \in [d]$.

\end{definition}

\begin{remark}[Simplifying assumption]
The assumption that for each edge $e$ of $\mathcal{V}$ the bivariate copula $C_e$ does not depend on the value $\bm{x}_{D_e}$ the conditioning random vector $\bm{X}_{D_e}$ takes on is called \textit{simplifying assumption}.
\end{remark}

\begin{theorem} \label{thm:bedford_cooke}

Let $(\bm{F}, \mathcal{V}, B)$ be an R-vine specification on $d$ elements where all pair copulas $C_e \in B$ satisfy the simplifying assumption and have densities $c_e$. There is a unique distribution $F$ that realizes this R-vine specification with density:
\begin{align}
f_{1, ...d}(x_1, ..., x_d) &= \prod_{i = 1}^d f_i(x_i) \; \cdot \\
& \quad \ \prod_{i = 1}^{d-1} \prod_{e \in E_i} c_{ \mathcal{C}_{e, a}, \mathcal{C}_{e, b} ; D_e } \big( F_{\mathcal{C}_{e, a} | D_e}(x_{\mathcal{C}_{e, a}} | \bm{x}_{D_e} ), F_{\mathcal{C}_{e, b} | D_e}(x_{\mathcal{C}_{e, b}} | \bm{x}_{D_e}) \big) \; , 
\end{align}
 
where $f_i$ denote the densities of $F_i$.
 
\end{theorem}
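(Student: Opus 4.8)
The plan is to proceed by induction on the dimension $d$, with Sklar's theorem (Theorem \ref{thm:Sklar}) as the base case and the recursive conditional factorization (\ref{pcc:cond_density}) as the workhorse of the inductive step. For $d=2$ the R-vine tree sequence is just $T_1$ with the single edge $\{1,2\}$; Sklar's theorem then yields a unique bivariate distribution with marginals $F_1,F_2$ and copula $C_{12}$, whose density $c_{12}(F_1(x_1),F_2(x_2))\,f_1(x_1)\,f_2(x_2)$ is exactly the claimed product formula. I would then assume the statement for all R-vine specifications on $d-1$ elements.

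For the inductive step I would separate the claim into (i) uniqueness together with the density formula, and (ii) existence. For (i), I would start from an arbitrary distribution $F$ realizing the specification and peel off a leaf node $\ell$ of the first tree $T_1$; removing $\ell$ yields a sub-R-vine on the remaining $d-1$ elements, to which the induction hypothesis applies and produces the $(d-1)$-dimensional factor. The chain rule writes $f_{1\ldots d}=f_{\ell\mid \bm{x}_{-\ell}}\cdot f_{-\ell}$, and I would expand the conditional density $f_{\ell\mid \bm{x}_{-\ell}}$ by iterating (\ref{pcc:cond_density}): each application strips one conditioning variable and introduces exactly one bivariate copula density. The simplifying assumption forces each such copula to be independent of the conditioning values, hence equal to the $c_e$ prescribed by $B$; tracking which variables are stripped shows these copulas are precisely the ones attached to $\ell$ across the trees. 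Combining the two factors reproduces the full product over edges, which simultaneously gives the density formula and, since every step was forced, uniqueness.

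For (ii), existence, I would run the construction in reverse: define $f$ by the product formula, which is nonnegative because copula and marginal densities are. It remains to check that $f$ integrates to $1$ and that the resulting distribution realizes the specification, i.e. has marginals $F_i$ and the prescribed conditional copulas. I would verify this by integrating out variables one at a time, using the h-function identity (\ref{hfunc_cond_copula}) together with (\ref{multivariate_conditional_dist}) so that each integration telescopes a top-tree pair copula against a conditional margin, leaving a valid lower-dimensional vine density to which the induction hypothesis applies.

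I expect the main obstacle to be the combinatorial bookkeeping in step (i): showing that the bivariate copulas produced by iterating (\ref{pcc:cond_density}) are in bijection with the edges of the R-vine, each appearing exactly once with the correct conditioned and conditioning sets. This is exactly where the proximity condition (Definition \ref{def:rvine}(iii)) is indispensable, as it guarantees that at each stage the pair being coupled shares the right conditioning set, so that the conditioned and conditioning sets generated by the recursion coincide with those defined through the complete unions in Definition \ref{def:complete_union}. Making this matching precise, and verifying that the peeling order of the recursion can be aligned with the nesting of the trees $T_1,\ldots,T_{d-1}$, is the technical heart of the argument; the analytic content beyond it is routine, being a higher-dimensional analogue of the three-variable computation leading to (\ref{pcc:decomposition}).
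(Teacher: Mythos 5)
First, a point of reference: the paper does not actually prove Theorem \ref{thm:bedford_cooke}; its ``proof'' is a pointer to \cite{bedford2001probability} and \cite{bedford2002vines}. Your inductive architecture --- Sklar's theorem as the base case, the chain rule plus repeated application of (\ref{pcc:cond_density}) for the step, existence by integrating the product formula with the h-function identity --- is the skeleton of that classical argument, so the outline is sound.

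There is, however, a concrete gap in your peeling step. You remove ``a leaf node $\ell$ of the first tree $T_1$'' and assert that this yields a sub-R-vine on the remaining $d-1$ elements to which the induction hypothesis applies. That is false for a general leaf of $T_1$. Take the C-vine on $\{1,2,3,4\}$ with root $1$ in $T_1$ and root $\{1,2\}$ in $T_2$, so the edges are $1,2$; $1,3$; $1,4$; then $2,3;1$ and $2,4;1$; then $3,4;1,2$. Node $2$ is a leaf of $T_1$, yet it lies in the conditioned sets of three edges --- one in $T_1$ but \emph{two} in $T_2$ --- so deleting it strips both edges of $T_2$ and leaves no valid R-vine on $\{1,3,4\}$; correspondingly, $f_{2|1,3,4} = f_2\, c_{1,2}\, c_{2,3;1}\, c_{2,4;1}\cdot c_{3,4;1,2}/c_{3,4;1}$ does not reduce to the product of the pair copulas of the specification attached to $2$, because $c_{3,4;1}$ is not part of the specification. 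The correct pivot is one of the two elements of the conditioned set $\mathcal{C}_e$ of the single edge $e \in E_{d-1}$ of the top tree (here $3$ or $4$): such an element appears in the conditioned set of exactly one edge per tree, is therefore automatically a leaf of $T_1$, its removal does yield an R-vine on $d-1$ elements, and the chain-rule expansion $f_{1\ldots d}=f_{\ell|\bm{x}_{-\ell}}\cdot f_{-\ell}$ picks up exactly the $d-1$ copulas attached to $\ell$, one per tree, with the stripping order in (\ref{pcc:cond_density}) dictated by the conditioning sets $D_e$ and legitimized by the proximity condition. With that substitution --- and the combinatorial lemma you already flag as the technical heart --- the rest of your plan goes through as in the cited references; as written, the induction pivot is wrong.
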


\begin{proof}
The proof of theorem can be found in \cite{bedford2001probability} and \cite{bedford2002vines}.
\end{proof}

\begin{definition}[Regular vine copula]
    A \textit{(regular) vine copula} is a regular vine distribution, where all margins are uniformly distributed on [0, 1].
\end{definition}

\subsubsection*{Estimating Vine Copulas}\label{subsec:estimating_Rvines}
 In order to estimate a vine copula, first the marginal distributions $F_j, \; j \in [d]$ are estimated. Then the vine tree structure $\mathcal{V}$ and the pair copulas $B$ are selected and estimated tree by tree using Di{\ss}mann's Algorithm \cite{dissmann2013selecting} presented in  Algorithm \ref{algo:Dissmann}, which is a maximum spanning tree algorithm to select $\mathcal{V}$ while maximizing the sum of the edge weights - the absolute Kendall's $\tau$ value of the two adjacent random variables.

\begin{algorithm}[tb]
   \caption{Dißmann's algorithm of \cite{dissmann2013selecting}}\label{algo:Dissmann}
    \begin{algorithmic}
        \STATE {\bfseries Input:} $n \in \mathbb{N}$ i.i.d. realizations of the random vector $(X_1, \dots, X_d)$, i.e. $(x_{i 1}, \dots, x_{i d})_{i \in [n]}$
        \STATE {\bfseries Output:} $\mathcal{V}$ and $B$ of R-vine copula specification
    
        \STATE Calculate the empirical Kendall's $\tau$ value $\hat{\tau}_{j, \, k}$ for all possible variable pairs $(j, k)$, $1 \leq j < k \leq d$. 
        \STATE Select the spanning tree that maximizes the sum of absolute empirical Kendalls's $\tau$ values, i.e.:
        $$ T_1 = \arg \max_{T = (V, E) \; \text{in spanning tree}} \ \sum_{e = (j, k) \in E} |\hat{\tau}_{j, \, k}| \; . $$ 
        \STATE For each edge $(j, k)$ in the selected spanning tree, select a copula ad estimate the corresponding parameter(s). Then generate pseudo-observations $\hat{u}_{i, j \, | \, k} := \hat{F}_{j \, | \, k}(x_{i j} \, | \, x_{i k})$ and $\hat{u}_{i, k \, | \, j} := \hat{F}_{k \, | \, j}(x_{i k} \, | \, x_{i j})$, $i \in [n]$ using Equation (\ref{multivariate_conditional_dist}) with the fitted copula $\hat{C}_{j k}$.
        \FOR{$l \in \{2, \dots, d-1\}$}
            \STATE For all conditional variable pairs $(j, \, k \, ; \, D)$ that can be part of tree $T_l$, i.e. all edges fulfilling the proximity condition (iii) of Definition \ref{def:rvine}: calculate the empirical Kendall's $\tau$ value $\hat{\tau}_{j, \, k \, ; \, D}$ 
            $\big(\hat{u}_{i, j \, | \, k \cup D}, \hat{u}_{i, k \, | \, j \cup D} \big)$. Denote these edges in the set $E_l^*$. 
            \STATE Among these edges, select the spanning tree that maximizes the sum of absolute empirical Kendall's $\tau$ values, i.e.:
            $$ T_l = \argmax_{T = (V, E) \; \text{in spanning tree} \; \text{with} \; E \subset E_l^*} \  \sum_{e = (j, \, k \, ; \, D)  \in E} |\hat{\tau}_{j, \, k \, ; \, D}| \; . $$ 
            \STATE  For each edge $(j, \, k \, ; \, D)$ in the selected spanning tree $T_l$, select a conditional copula and estimate the corresponding parameter(s). Then generate pseudo-observations $\hat{u}_{i, j \, | \, k \cup D} := \hat{F}_{j \, | \, k \cup D}(x_{i j} \, | \, x_{i k}, x_{i D})$ and $\hat{u}_{i, k \, | \, j \cup D} := \hat{F}_{k \, | \, j \cup D}(x_{i k} \, | \, x_{i j}, x_{i D})$, $i \in [n]$ using Equation (\ref{multivariate_conditional_dist}) with the fitted copula $\hat{C}_{j k ; D}$.
        \ENDFOR
    \end{algorithmic}
\end{algorithm}

\section{Computational Complexity of TVineSynth}\label{sec:scaling_vines}

\subsection{Estimating a C-Vine} \label{sec:compCost_EstVine}
It is assumed that the user pre-defines the order of the features, thus the full R-vine matrix of the assumed C-vine is given. It is also assumed that a set of $k$ candidate parametric pair copula families to choose from is specified. Pair copula parameters are estimated with MLE and pair copulas are selected using AIC \citep{akaike1998information}. In Dißmann's algorithm \citep{dissmann2013selecting} which is implemented by \citet{rvinecopulib} and most commonly used for R-vine model selection, the maximum spanning tree selection is omitted due to the pre-specified R-vine matrix. This leaves us with $\frac{d \cdot (d-1)}{2}$ edges in the un-truncated vine copula model and thus $\frac{d \cdot (d-1)}{2}$ pair copulas to select and estimate. For each edge all $k$ pair copula candidates are estimated with ML and their AIC is computed. Both involve $n$ terms in the log-likelihood evaluation, the MLE involves an optimization that depends on the number of parameters of the current pair copula family\footnote{The parametric pair copula families implemented by \citet{rvinecopulib} which have been used in this work, have up to 2 parameters.}. The MLE's computational cost also depends on the optimizer chosen and we can assume that it is constant w.r.t. $n$, $d$ and $k$. Then selecting a pair copula out of the $k$ candidates requires finding the minimal AIC among $k$ values which can be solved in $\mathcal{O}(k)$. In total this gives us a computational complexity of $\mathcal{O}(n d^2 k)$. As the number of candidate pair copula families usually is small (\citet{rvinecopulib} implement 10 parametric pair copula families and their rotations excluding the independence copula), $k$ can be considered a constant itself giving a complexity of $\mathcal{O}(n d^2)$.

\subsection{Sampling from a C-Vine} \label{sec:compCost_SamplingVine}
The computational complexity of sampling one observation from a C-vine on $d$ variables using Algorithm 6.4 in \citet{czado2019analyzing} taken from \citet{stoeber2017pair} is $\mathcal{O}(d^2)$. This gives $\mathcal{O}(n d^2)$ for sampling $n$ observations.

\subsection{Computational Complexity of TVineSynth}

Regarding the computational complexity of TVineSynth the following points need to be considered:
\begin{enumerate}
    \item \textbf{TVineSynth is estimated only \textit{once}:}  Let $T \subset [d]$ be the subset of truncation levels considered. Then in a full run of TVineSynth, the C-vine is not estimated $|T|$ times on the real data, but only \textit{once} at the maximal desired truncation level $t_{max} := max T$ (may it be $t_{max} = d$, so no truncation or $t_{max} < d$). For all subsequent $t \in T \setminus \{t_{max}\}$ the vine copula is not re-estimated, but obtained by simply setting all pair copulas in tree levels $t’ \in T \setminus [t]$ to independence. The computational cost comes from fitting a C-vine once for $t_{max}$ and sampling the estimated C-vine at levels $t \in T$. The computational complexity of estimating and sampling from the C-vine is $\mathcal{O}(n d^2)$ each, see Appendices \ref{sec:compCost_EstVine} and \ref{sec:compCost_SamplingVine}.
    \item \textbf{AIA privacy and utility evaluation are cheap, MIA is expensive:} The computational costs of utility evaluation and AIA are unproblematic, since both are based on simple model architectures (e.g. linear regression). It is mainly the MIA that drives the computational cost of the privacy evaluation. However, for sufficiently large real data, the vine copula estimation is robust to adding/removing a single observation in the model estimation, as performed under MIA (see the MIA results in Sections \ref{sec:MIA_support2_results} and Appendix \ref{sec:simreal_results_MIA}). If the MIA PG of a C-vine truncated at level $t_{max}$ is (close to) 1 with little variation, then it will also be so for lower truncation levels. Therefore, the MIA privacy evaluation can be reduced to one truncation level.
    \item \textbf{Limited number of truncation levels considered:} As illustrated in the real data example it is not at all necessary to perform a privacy and utility evaluation for all possible truncation levels $t \in [d]$. It is sufficient to evaluate every 5th or 10th truncation level. In addition, tree levels that model pairwise (conditional) dependencies with a sensitive feature and all other features should not be considered. This means that we can set $t_{max} := d + 1 - j$  where $j$ is the position of the sensitive feature that enters the C-vine first according to order $\mathcal{O}^*$. So for $d = 26$ and $j = 6$ then $t_{max} := 21$. In sum only a limited set of candidate truncation levels $T \subset [t_{max}]$ has to be considered.
    \item \textbf{$\bm{t_{max} << d}$ for high-dimensional real data:} If the real data are high-dimensional, e.g. $d >400$, we recommend truncating the vine copula model early, because of the statistical uncertainty in the model estimation, so for example $t_{max} := 50$.
    \item \textbf{All competitors require human-in-the-loop tuning:} Finally, in TVineSynth there is only the truncation level to tune while for the competitor models, specifically CTGAN and TVAE, several hyperparameters need to be tuned (no. epochs, batch size, dimension of the latent space, \dots).
\end{enumerate}

\section{Graphical Illustration of TVineSynth}\label{sec:TVineSynth_workflow}

Figure \ref{fig:workflow_tvinesynth} depicts a graphical workflow of TVineSynth.
\begin{figure}
    \centering
    \includegraphics[width=\linewidth]{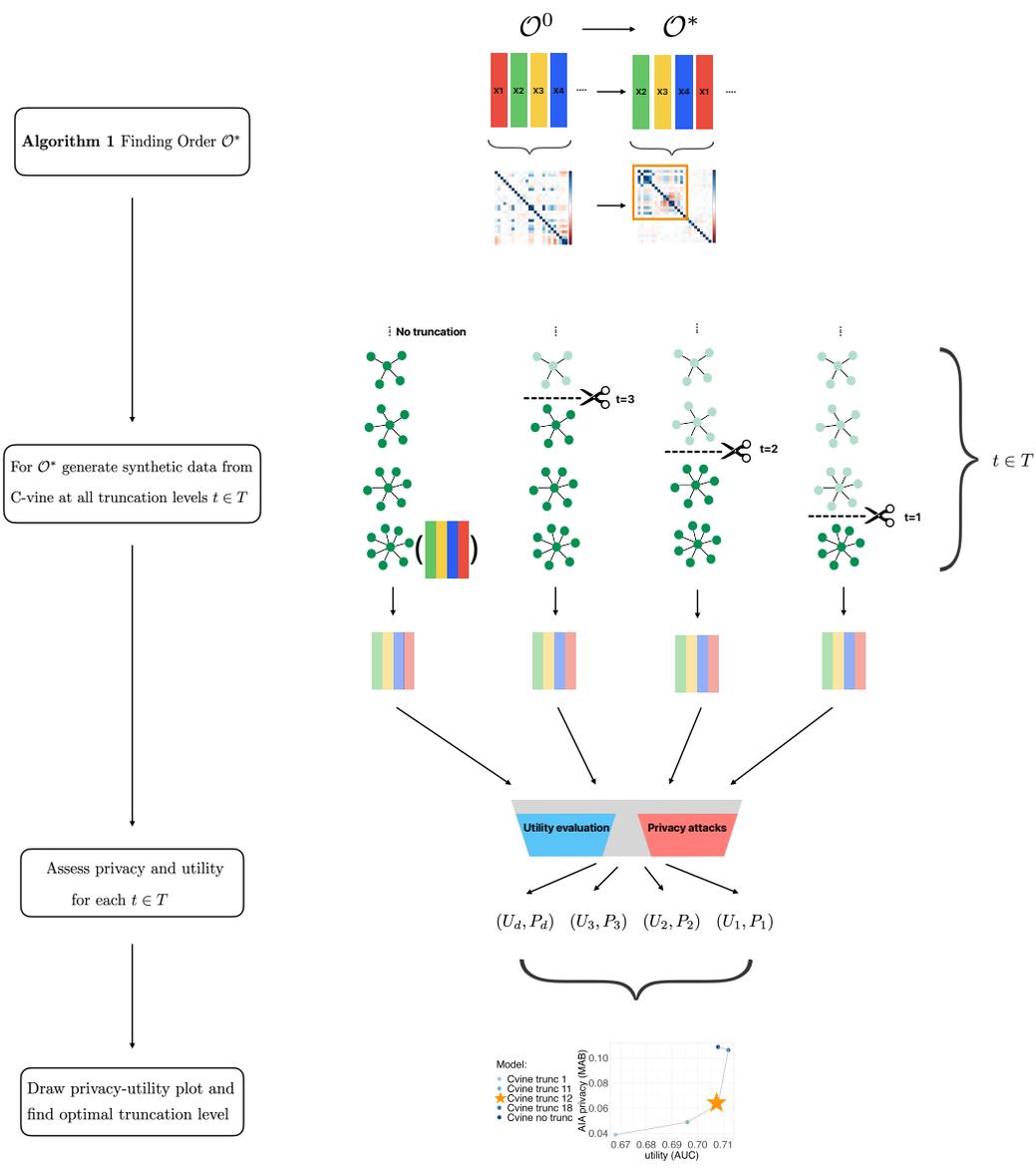}
    \caption{A graphical workflow of TVineSynth.}
    \label{fig:workflow_tvinesynth}
\end{figure}

\section{TVineSynth: Order of Covariates}\label{sec:orderRVs}

Let $X_1, ..., X_d$ be the covariates and $Y$ be the response in a prediction task. We propose Algorithm \ref{alg:find_order} to determine the order $\mathcal{O}^*$ in which the covariates enter the C-vine model, that balances the trade-off between protection against loss of privacy and utility. The order $\mathcal{O}^*$ together with the vine tree structure $\mathcal{V}$ determines the pair copulas and the tree levels they belong to in the C-vine. 

\begin{proposition}\label{prop:order_V}
    Let the order $\mathcal{O}^*$ of the covariates $X_k, \; k \in [d]$ and response $Y$ and let the vine tree structure $\mathcal{V}$ (e.g. in form of a R-vine matrix, see Appendix \ref{sec:RvinematrixCvineSimreal}) be given. Then $\mathcal{O}^*$ together with $\mathcal{V}$ determine which pairwise (conditional) dependencies of $X_k, \; k \in [d]$ and $Y$ are modeled in the C-vine copula.
\end{proposition}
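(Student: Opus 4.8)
The plan is to prove the statement structurally, by showing that fixing $\mathcal{O}^*$ and the C-vine tree sequence $\mathcal{V}$ pins down, tree by tree, the node sets, the edge sets, and hence the conditioned/conditioning decomposition of every edge; since by the notion of an R-vine specification together with Theorem \ref{thm:bedford_cooke} each edge carries exactly one bivariate (conditional) copula, this in turn determines the set of modeled pairwise dependencies. First I would read off from $\mathcal{O}^*$ the sequence of center (root) nodes of the C-vine: with $Y$ required to be the center of $T_1$ and $X_{(1)},\ldots,X_{(d)}$ the covariates in the order given by $\mathcal{O}^*$, the center of tree $T_k$ is $r_1=Y$ for $k=1$ and $r_k=X_{(d+2-k)}$ for $k\in\{2,\ldots,d\}$. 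This is the only place where $\mathcal{O}^*$ enters, and the sequence is forced by the requirement on $Y$ together with the convention that the covariates occupy the diagonal of the R-vine matrix in the order $\mathcal{O}^*$ (Appendix \ref{sec:RvinematrixCvineSimreal}).

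Next I would argue by induction on the tree level $k$ that $E_k$ is uniquely determined. For $k=1$, Definition \ref{def:rvine}(i) gives $V_1=\{1,\ldots,d+1\}$, and the star shape of a C-vine forces $E_1=\{\{r_1,j\}: j\in V_1\setminus\{r_1\}\}$ with $r_1=Y$. For the inductive step, Definition \ref{def:rvine}(ii) gives $V_k=E_{k-1}$, which is fixed by the induction hypothesis; the star shape with center $r_k$ then forces $E_k=\{\{r_k,b\}: b\in V_k\setminus\{r_k\}\}$. Since the nodes of $T_k$ are the edges of the star $T_{k-1}$, all of which contain its center $r_{k-1}$, any two of them intersect in $r_{k-1}$, so the proximity condition (Definition \ref{def:rvine}(iii)) holds automatically for every such edge. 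A direct evaluation of the complete unions in Definition \ref{def:complete_union} for each $e\in E_k$ then yields conditioning set $D_e=\{r_1,\ldots,r_{k-1}\}$ and conditioned set $\mathcal{C}_e=\{r_k,j\}$ for the unique remaining variable $j$, so that the triple $(\mathcal{C}_{e,a},\mathcal{C}_{e,b};D_e)$, and with it the pair copula $c_{\mathcal{C}_{e,a},\mathcal{C}_{e,b};D_e}$, is fixed.

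Finally I would collect these triples over all edges and all trees into the constraint set (Definition \ref{def:constraint_set}) and invoke Theorem \ref{thm:bedford_cooke}: each edge of $\mathcal{V}$ corresponds to precisely one bivariate (conditional) copula, so the modeled pairwise (conditional) dependencies of $X_1,\ldots,X_d$ and $Y$ are exactly the determined family $\{(\mathcal{C}_{e,a},\mathcal{C}_{e,b};D_e): e\in E_k,\ k\in[d]\}$, which matches the explicit description given in the main text and completes the argument. The main obstacle is the inductive bookkeeping in the second step: one must verify that the C-vine star shape forces a unique edge set at each level and that the complete-union sets collapse to the clean form $D_e=\{r_1,\ldots,r_{k-1}\}$, $\mathcal{C}_e=\{r_k,j\}$. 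The underlying combinatorics are routine but must be carried out carefully, the crux being to show that no freedom whatsoever remains once $\mathcal{O}^*$ and the C-vine shape are prescribed.
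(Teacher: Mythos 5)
Your proposal is correct and follows essentially the same route as the paper, whose entire proof is the one-line assertion that $\mathcal{O}^*$ and $\mathcal{V}$ make the C-vine unique; your induction on tree levels (star shape forces $E_k$ from $V_k=E_{k-1}$, proximity is automatic, and the complete unions collapse to $D_e=\{r_1,\ldots,r_{k-1}\}$, $\mathcal{C}_e=\{r_k,j\}$) simply substantiates that uniqueness claim in full detail. The bookkeeping you carry out is accurate and consistent with the explicit tree-by-tree description in Section \ref{sec:TVineSynth_construction}.
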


\begin{proof}
    Given the order $\mathcal{O}^*$ and the vine tree structure $\mathcal{V}$, the C-vine is unique.
\end{proof}

This means that $\mathcal{O}^*$ and $\mathcal{V}$ determine which pairwise (conditional) dependencies are cut off from the model when truncating the C-vine at level $t$. Thus, the definition of $\mathcal{O}^*$ should be such that privacy leaking dependencies are cut off early while those important for the prediction task are cut off when truncating at a very low tree level.

\paragraph{Algorithm \ref{alg:find_order}} In any considered order $\mathcal{O}$ the response $Y$ is in the center of the star-shaped tree at level 1, the response $Y$ is placed at position $(d+1)$. First, we compute a matrix of pairwise association measures, using for example Pearson correlation or pairwise Kendall's $\tau$. Let $S \subset [d]$ be the set of sensitive covariates.
For any sensitive features $X_{j^*}, \; j^* \in S$ in turn, we find the covariates $X_k, \; k \in [d] \setminus S$ that show an absolute pairwise association $|\rho_{j^*,k}|$ above a user defined threshold $\rho^* > 0$ and denote their indices in the set $K_{j^*} \subset [d] \setminus S$. Let $K := \bigcup_{j^* \in S} K_{j^*}$.

The set $K$ depends on the threshold $\rho^*$ on the association measure, for example Pearson correlation greater than $\rho^* := 0.6$ in absolute value for all sensitive covariates (though one can use different thresholds for each sensitive covariate). The more conservative, i.e. lower we set this threshold, the more protected the sensitive covariates will be during truncation. Initializing the algorithm with different values of $\rho^*$ results in different orders $\mathcal{O}^*$ that can be compared through pairwise association or privacy plots on the synthetic data resulting from $\mathcal{O}^*$ at different truncation levels. 
 
The covariates $X_k, \; k \in K$ are the ones leaking most private information on the sensitive features $X_{j^*}, \; j^* \in S$. Consequently, disregarding those pairwise dependencies has the highest positive impact on privacy protection. For this reason $X_{j^*}, \; j^* \in S$ and $X_k, \; k \in K$ should  enter the C-vine copula in a group in the final trees, which are the ones that are truncated first. This means placing them on low indices in the order: for the permutation $\sigma: [d] \rightarrow [d]$ giving order $\mathcal{O}^*$ we have $\sigma(k) << d$ for $k \in K$. By grouping the sensitive covariate with the covariates informing it most in the order $\mathcal{O}^*$, we introduce a block structure in the matrix of pairwise association measures. As a result the pairwise (conditional) dependence between $X_{j^*}, \; j^* \in S$ and $X_k, \; k \in K$ is modeled in higher tree levels of the C-vine. Specifically, if the positions of $X_{j^*}$ and $X_k$ are $\sigma(j^*)$ and $\sigma(k)$ in $\mathcal{O}^*$ and w.l.o.g. we assume $\sigma(j^*)< \sigma(k)$ in $\mathcal{O}$, then their pairwise dependence conditioned on $X_{(\sigma(k) + 1)}, ... X_{(d)}$ is modeled in the $d + 2 - \sigma(k)$th tree in the C-vine. It can be truncated away with truncation level $d + 1 - \sigma(k)$ which will be moderate for $\sigma(k) << d$. Thus the pairwise (conditional) dependence between $X_{j^*}, \; j^* \in S$ and $X_k, \; k \in K$ is cut away with a low cost on utility. Results in Sections \ref{subsec:results_simulated_realId20} and \ref{subsec:results_support2} suggest that it suffices to enforce \textit{conditional} independence between $X_{j^*}, \; j^* \in S$ and $X_k, \; k \in K$ to achieve privacy protection. Finally, the appropriateness of the chosen order $\mathcal{O}^*$ is confirmed by plotting the matrix of pairwise association of the C-vine generated synthetic data: the correlation structure is more and more reproduced with increasing truncation level, see Figures \ref{fig:Cvine_synth_data_corr_real_data_I_d20} and \ref{fig:Cvine_synth_data_corr_rrealsupport2}. We summarize our procedure in Algorithm \ref{alg:find_order}.

We illustrate our algorithm with the SUPPORT2 example.  
Figure \ref{fig:support2small_Cvine_order_counterexample} displays the matrices of pairwise Kendall's $\tau$ for covariates of synthetic data generated by a C-vine. Here we use an order $\mathcal{O}_{\text{feature importance}}$ such that a covariate enters the C-vine the earlier the higher its feature importance is. As feature importance measure, we used the mean decrease Gini of a random forest classifier estimated on the real data. We observe that the structure of pairwise association of the real data is almost fully reproduced in the synthetic data already at truncation level 5 of the C-vine. This is a much lower truncation level than compared to when the covariates are ordered following the privacy preserving ordering, as from the Algorithm  \ref{alg:find_order}, see Figure \ref{fig:Cvine_synth_data_corr_rrealsupport2}. AIA results w.r.t. the sensitive feature \textit{totcst} on data ordered according to $\mathcal{O}_{\text{feature importance}}$ in Figure \ref{fig:reordered_support2small_AIA_totcst} further confirm that $\mathcal{O}_{\text{feature importance}}$ is inferior to the approach suggested above. When we compare the MAB in Figure \ref{fig:reordered_support2small_AIA_totcst} obtained from $\mathcal{O}_{\text{feature importance}}$ to the AIA results in Figure \ref{fig:AIA_Cvine_competitors_support2_small_MAB} obtained from an order $\mathcal{O}^*$ as proposed above, we notice that the MAB drastically increases already 10 trees levels earlier (at truncation level 5 as opposed to 15).

\begin{figure}[tb]
    \centering
    \begin{tabular}{cccc}
        \includegraphics[width=0.18\textwidth]{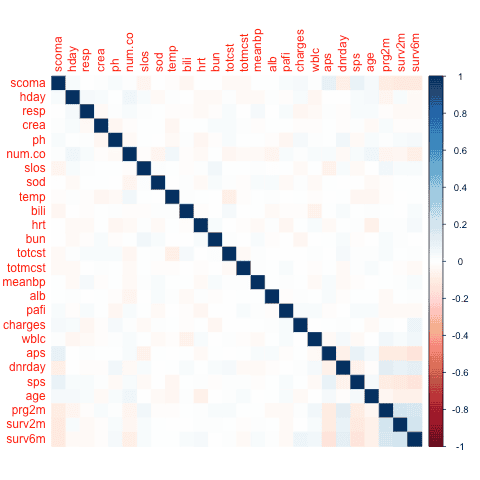} &  
        \includegraphics[width=0.18\textwidth]{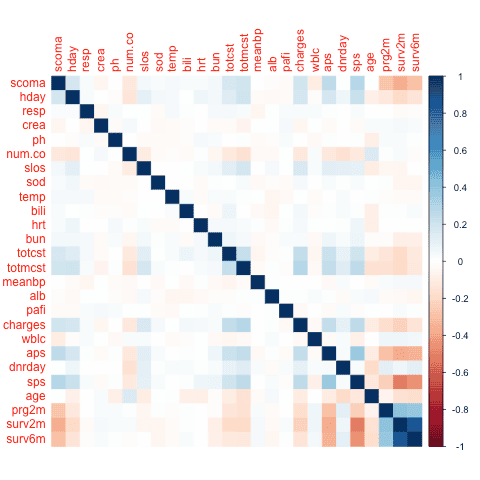} &
        \includegraphics[width=0.18\textwidth]{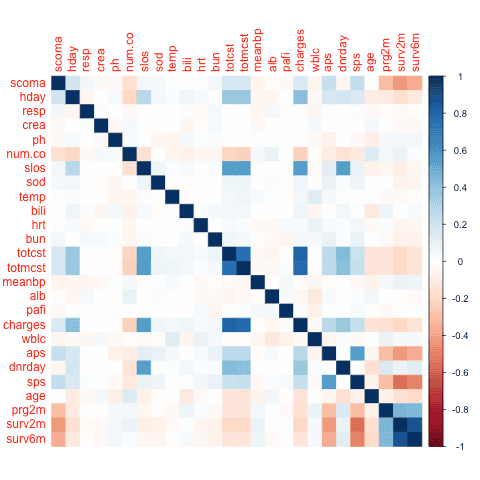} &
        \includegraphics[width=0.18\textwidth]{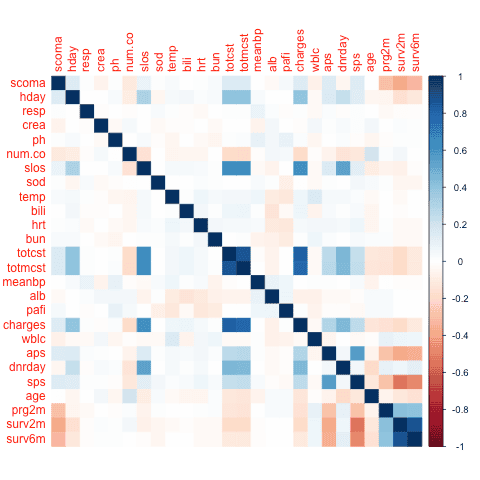}\\
        (a) Truncation at 1. & (b) Truncation at 5. & (c) Truncation at 10. & (d) Truncation at 15. \\
    \end{tabular}
    \begin{tabular}{ccc}
        \includegraphics[width=0.18\textwidth]{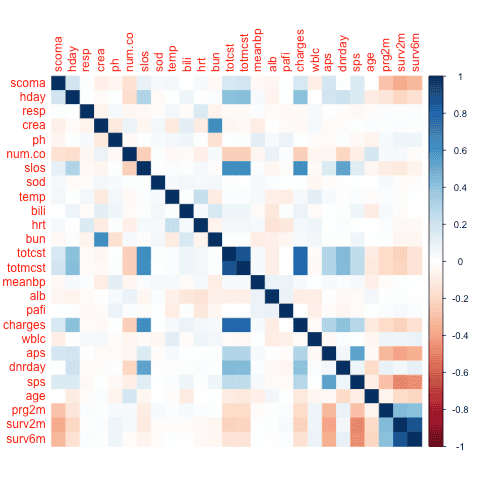} &
        \includegraphics[width=0.18\textwidth]{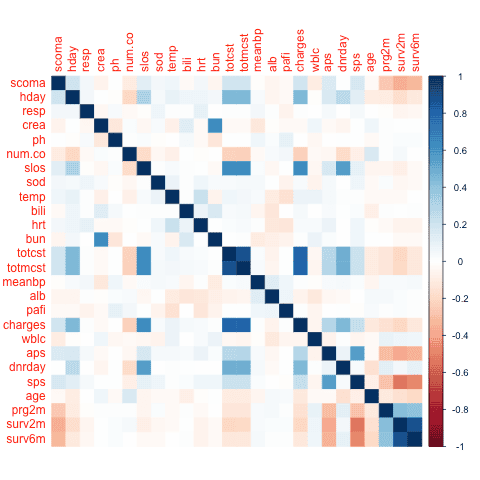} &
        \includegraphics[width=0.18\textwidth]{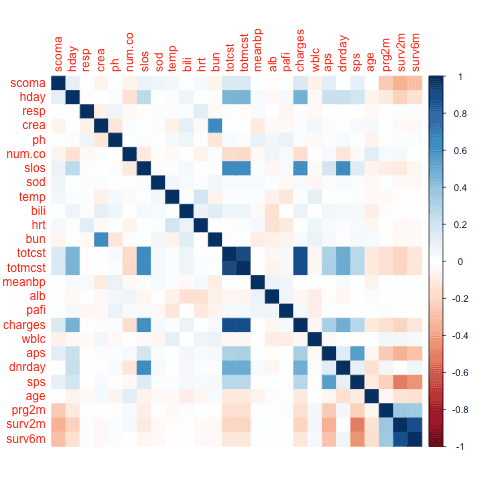} \\
        (e) Truncation at 20. & (f) No truncation. & (g) Real SUPPORT2 data.   \\
    \end{tabular}
    
    \caption{The matrices of pairwise Kendall's $\tau$ of continuous covariates in synthetic data generated with a C-vine for truncation at levels $t \in\{1,5,10,15,20\}$ and no truncation when the covariates in the real data are ordered according to $\mathcal{O}_{\text{feature importance}}$ based on feature importance (mean decrease Gini) in a random forest classifier trained on the real SUPPORT2 data. The structure of pairwise association of the real data is almost fully reproduced in the synthetic data already at truncation level 5 of the C-vine. Details on the estimation of the C-vine can be found in Section \ref{sec:TVineSynth_construction} and Appendix \ref{sec:model_and_attack_parameters}.}
    \label{fig:support2small_Cvine_order_counterexample}
\end{figure}

\begin{figure}[tb]
    \centering
     \includegraphics[width=0.7\columnwidth]{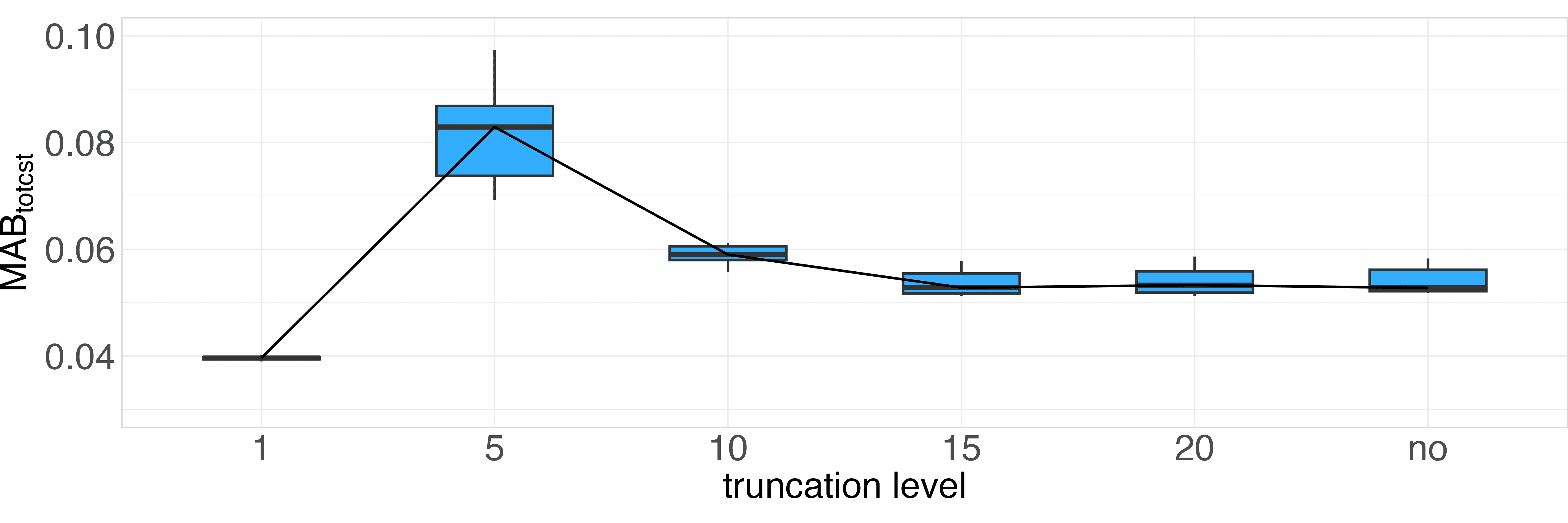}
    \caption{Results of an AIA w.r.t. an order $\mathcal{O}_{\text{feature importance}}$ of the covariates that is based on feature importance (mean decrease Gini) in a random forest classifier trained on the real SUPPORT2 data. The AIA is conducted w.r.t. sensitive covariate \textit{totcst} measured by $MAB_{totcst}$. Synthetic data are generated with a C-vine for different truncation levels. Results are reported as box plots over 10 AIA game iterations. Parameters of the generative models and privacy attacks can be found in Appendix \ref{sec:model_and_attack_parameters}.}\label{fig:reordered_support2small_AIA_totcst}
\end{figure}

\section{TVineSynth: Finding the Best Truncation Level As Optimization Task}\label{sec:TVineSynth_trunc_opt}

The selection of the truncation level $t \in T$ giving the best (in terms of the data holder's demands) privacy-utility balance can be optimized if the data holder has a way to place privacy and utility on the same scale. Let $P_t'$ be the privacy score (MIA or AIA) and $U_t'$ the utility score obtained from a C-vine with truncation level $t \in T$. Normalize $P_t'$ and $U_t'$ to obtain $P_t \in [0,1]$ and $U_t \in [0,1]$ where higher values for $P_t$ and $U_t$ correspond to better privacy and better utility respectively. Build a privacy-utility score $PU_t := \alpha P_t + (1 - \alpha) U_t$ where $\alpha \in [0,1]$ is user defined to trade-off between privacy and utility. Then finding the best truncation level is a discrete optimization problem in the truncation level $t$. This is not an easy optimization, as we lack convexity, but it is feasible.

\section{Limitations}\label{sec:limitations}
We summarize TVineSynth's main limitations:

\begin{itemize}
    \item \textbf{Privacy guarantees vs. empirical evaluation:} We provide an empirical privacy evaluation of TVineSynth and compare to competitor models. We explain how our algorithm aims to balance privacy and utility. However, TVineSynth does not offer theoretical privacy guarantees in the style of DP. The key idea of TVineSynth is to achieve privacy by introducing a targeted bias into the generative model instead of adding noise in a global fashion, which in many cases renders the synthetic data useless for downstream ML applications. For this reason we do not base the design of TVineSynth on DP. 
    \item \textbf{Choice of truncation level:} The choice of the level of truncation that provides the preferred balance between privacy and utility is deliberately left to the data owner's decision. This is to account for the fact that privacy requirements vary by context and application and need to be thoroughly weighed against utility demands by data holders, potential users and policy makers. Appendix \ref{sec:TVineSynth_trunc_opt} discusses how the choice of truncation level can be further automatized. 
    \item \textbf{Controlling privacy-utility trade-off:} It is hard to precisely control the privacy-utility trade-off of TVineSynth generated data with the truncation level of the C-vine. For making a well-informed choice the privacy and utility of the vine copula generated data should be evaluated at all truncation levels $t \in T$. Especially the MIA evaluation is costly. However, the candidate truncation levels $T$ can be chosen to minimize the computational cost: 
    \begin{itemize}
        \item Set $t_{max} := d + 1 - j$ or lower  where $j$ is the position of the sensitive feature that enters the C-vine first according to $\mathcal{O}^*$, as tree levels that model pairwise (conditional) dependencies with a sensitive feature and all other features should not be considered. For $d$ large, set $t_{max} << d$ due to uncertainty in the parameter estimation.
        \item It is not necessary to consider all $t < t_{max}$. Instead it suffices to for example set $T := \{1,5,10,15,25\}$ where $d = 30$. For chosen $T$ it is however necessary to evaluate the vine copula generated data at all $t \in T$.
    \end{itemize}
    \item \textbf{Computational complexity of TVineSynth and its limits for high-dimensional data:} Even though estimating and sampling from a C-vine has moderate computational complexity and TVineSynth is designed to find the optimal privacy-utility balance efficiently, see considerations in Appendix \ref{sec:scaling_vines}, evaluating the utility and especially (MIA) privacy of the synthetic data at truncation levels $t \in T$ is computationally demanding. The computational cost can be reduced by making $T$ smaller, but this gives a less nuanced picture of the privacy-utility trade-off. Additionally, estimating a vine copula on data with dimension $d > 500$ becomes computationally challenging. For such settings vine copulas have to be combined with dimension reduction techniques in TVineSynth.
\end{itemize}

\section{Competitor Models}\label{sec:competitors_appendix}

\subsection{Competitor Models}

We benchmark TVineSynth against the following competitor models:

\paragraph{Private Bayes (PrivBayes)}
\citet{zhang2017privbayes} propose a Bayesian network that satisfies DP guarantees. For a chosen $k$, they first construct a $k$-degree Bayesian network in an $\epsilon_1$-differentially private fashion by introducing a score function in the greedy Bayes algorithm. Then, they generate the conditional distributions corresponding to the Bayesian network by injecting Laplacian noise to obtain $\epsilon_2$-DP. The resulting Private Bayes model is $(\epsilon_1 + \epsilon_2)$-differentially private. We use the implementation provided in \citet{stadler2022synthetic}, which is patched to fulfill its differentially privacy guarantees. 

\paragraph{Private Particle Gradient Descent (PrivPGD)}
\citet{donhauser2024privacy} propose a differentially private marginal based generative model that utilizes particle gradient descent. After privately selecting which marginal distributions to estimate, the selected marginal distributions estimated on the real data are privatized with the Gaussian mechanism and transformed to a compact Euclidean space, the embedded space. In the embedded space particles are propagated such that their empirical marginal distribution minimizes the sliced Wasserstein distance, an optimal-transport based divergence, to the embedded privatized marginal distribution of the real data. Note that model estimation and data generation are done in one go and not in two separate steps. As a consequence, PrivPGD does not require model selection or parameter tuning and makes it robust to hyperparameter variation. At the same time it is not possible to sample additional data from PrivPGD once it was estimated, but the full model has to be run again. PrivPGD requires discrete input data and generates discrete synthetic data guaranteeing $(\epsilon, \delta)$-DP. This means that, if not discrete, the real data need to be discretized before inputting them into the model and the synthetic data need to be reverted to the original scale afterwards. In order to provide DP, either the real data need to be discrete themselves already or, if not the case, discretization and reversion need to be done in a DP manner. In their experiments on continuous real data \citet{donhauser2024privacy} solve the discretization by binning of the real data and the reversion by back-transforming the synthetic data to bin means of the real covariates. For this the covariate ranges are inferred from the real data and stored for the reversion process, through which PrivPGD loses its DP guarantees. 

In order to retain the DP guarantees also for continuous real data, we therefore infer covariate ranges from a source independent of the real data.

\paragraph{Conditional Tabular GAN (CTGAN) and Tabular Variational Autoencoder (TVAE)}
To tackle the numerous problems of tabular data when constructing generative adversarial networks (GANs), such as mixed data types, non-Gaussian and multimodal distributions and class imbalance for discrete covariates, \citet{xu2019modeling} propose the Conditional Tabular GAN (CTGAN). Starting from a GAN, the authors introduce a conditional generator with a modified loss function to account for imbalanced classes and mode-specific normalization to account for non-Gaussian and multimodal distributions.
The authors also introduce Tabular Variational Autoencoders (TVAE) by applying the same preprocessing and modified loss functions to a variational autoencoder. We use the wrapper provided in the \citetalias{sdv} around the implementation in the \citetalias{ctgan}.

\paragraph{R-Vine Copula}
In an R-vine copula \citep{joe1997multivariate, bedford2001probability, bedford2002vines, aas2009pair, joe2014dependence, czado2019analyzing} the vine tree structure is not pre-specified as in a C- or D-vine, but selected with the Dißmann's algorithm proposed by \citep{dissmann2013selecting}. An R-vine is therefore the most general and flexible class of vine copulas. \citet{meyer2021synthia} implement a python package for R-vine copula based synthetic data generation.

\subsection{Discussing the Choice of Competitor Models}
TVineSynth is compared to generative models that focus on preserving privacy of subjects in the real data by providing DP guarantees, and to generative models that focus on reproducing the underlying distribution of the real data closely without offering any formal privacy guarantees. We choose to compare TVineSynth with DP and non-DP competitor models in order to assess which generative model performs best in a context where privacy \textit{and} utility matter.

We compare TVineSynth with CTGAN and TVAE \citep{xu2019modeling}. They are well established and commonly used generative models for tabular data. CTGAN and TVAE do not offer formal privacy guarantees but focus on generating synthetic data that closely resemble the real data. 

PrivBayes is a DP generative model that belongs to the class of graphical probabilistic models as vine copulas do. For this reason we chose PrivBayes as a DP competitor to TVineSynth. Additionally, we chose PrvPGD DP as competitor model as it represents the state-of-the-art for private generative modeling.

Lastly, TVineSynth is compared to an R-vine copula, the most general and flexible vine copula. We do this to assess which impact setting the order of the covariates with Algorithm \ref{alg:find_order} and setting the vine tree structure to be a C-vine in TVineSynth has on the privacy and utility compared to when both are selected freely in an R-vine. 
TVineSynth is not compared to the copula-based approaches proposed by \citet{patki2016sdv}, \citet{kumi2023sleepsynth}, \citet{benali2021mtcopula}, \citet{kamthe2021copula} and \citet{chu2022statistical} as the latter belong to the same model class as R-vines, but are simpler, less flexible models. We do not compare TVineSynth with the models proposed by \cite{coblenz2023learning} and \citet{tagasovska2019copulas} as we are in a setting where dimension reduction using autoencoders is not necessary to enable modeling the data. For the model proposed by \citet{sun2019learning} there is no code available which prohibited a comparison with TVineSynth.

Future work could further compare TVineSynth to tabular denoising diffusion models proposed by \citet{kotelnikov2023tabddpm}.

\section{Measures of Privacy}\label{sec:privacy_measures}

\subsection{Privacy Gain (PG)}
As a measure of privacy preservation of the synthetic data \citet{stadler2022synthetic}
use the PG achieved when publishing a synthetic data set in place of the real given a
target observation. The PG is defined as the 'reduction in the attacker's advantage when given access
to the synthetic data instead of the real data': 

\begin{align}
    PG &:= Adv\big( (X, \bm{y}), (\bm{x}^T_t, y_t) \big) - Adv\big( (Z, \bm{w}), (\bm{x}^T_t, y_t) \big) \; ,
\end{align}
with target observation $(\bm{x}^T_t, y_t)$. For MIA the advantages $Adv^{MIA}(\cdot)$ from real and synthetic data are defined as:
\begin{align}
    Adv^{MIA}\big( (X, \bm{y}), (\bm{x}^T_t, y_t) \big) := P_R(\hat{s}_t = 1 | s_t = 1) - P_R(\hat{s}_t = 1 | s_t = 0) \; ,
\end{align}
and: 
\begin{align}
    Adv^{MIA}\big( (Z, \bm{w}), (\bm{x}^T_t, y_t) \big) := P_S(\hat{s}_t = 1 | s_t = 1) - P_S(\hat{s}_t = 1 | s_t = 0) \; ,
\end{align}
respectively, where:
\begin{align} \label{def:s_t}
    s_t := \begin{cases} 1 \, , & (\bm{x}^T_t, y_t) \; \text{is in} \;  (X, \bm{y}) \\
    0 \, & \text{else}
    \end{cases} \; .
\end{align}
The attacker's guess is $\hat{s}_t$ and $P_R$ ($P_S$) indicates that the attacker's guess is based on the real (synthetic) data. Obviously, $Adv^{MIA}\big( (X, \bm{y}), (\bm{x}^T_t, y_t) \big) = 1$ as the attacker can look up in the real data whether the target observation is present. Together with the theoretical bounds given in \citet{yeom2018privacy}: 
\begin{align}
    Adv^{MIA}\big( (Z, \bm{w}), (\bm{x}^T_t, y_t) \big) &\leq e^{\epsilon} - 1 \; ,
\end{align}
we get that for a differentially private generative model the center is bounded by:
\begin{align}
    PG^{MIA} &\geq 2 - e^{\epsilon} \; .
\end{align}
For AIA, \citet{stadler2022synthetic} define the advantages $Adv^{AIA}(\cdot)$ from real and synthetic data as:
\begin{align}\label{equ:AvdAIA_real}
    Adv^{AIA}\big( (X, \bm{y}), (\bm{x}^T_{t, -j^*}, y_t) \big) := P_R(\hat{x}_{t,j^*} = x_{t,j^*} | s_t = 1) - P_R(\hat{x}_{t,j^*} = x_{t,j^*} | s_t = 0) \; ,
\end{align}
and:
\begin{align}\label{equ:AvdAIA_synth}
    Adv^{AIA}\big( (Z, \bm{w}), (\bm{x}^T_{t, -j^*}, y_t) \big) := P_S(\hat{x}_{t,j^*} = x_{t,j^*} | s_t = 1) - P_S(\hat{x}_{t,j^*} = x_{t,j^*} | s_t = 0) \; ,
\end{align}
respectively, where $(\bm{x}^T_{t, -j^*}, y_t)$ is a sub-vector of $(\bm{x}^T_t, y_t)$ indicating that the sensitive feature value $x_{t,j^*}$ for some $j^* \in S \subset [d]$ of $(\bm{x}^T_t, y_t)$ is unknown to the attacker and $\hat{x}_{t,j^*}$ is the attacker's estimate of $x_{t,j^*}$.

\subsection{Mean Squared Error (MSE)}
The definition of $Adv^{AIA}(\cdot)$ by \citet{stadler2022synthetic} in Equations \ref{equ:AvdAIA_real} and \ref{equ:AvdAIA_synth} is correct for sensitive features $X_{j^*}$ taking on finitely many values. For the continuous case it is wrong, as $P(\hat{x}_{t,j^*} = x_{t,j^*} | s_t = s) = 0$ for any guess $\hat{x}_{t,j^*}$ and taking densities instead, as done in the implementation by \citet{stadler2022synthetic} provided on \citetalias{Stadlercode}, is also incorrect. \citet{olatunji2023does} instead suggest to compute the mean squared error (MSE) to asses the success of an AIA. It may be calculated by generating $K$ samples from the synthetic data generator $g$ and $K$ bootstrap samples of the real data, standardizing them by subtracting the mean and dividing by the standard deviation and then computing:
\begin{align}
    MSE_R(x_{t,j^*} | s_t = s) & := \frac{1}{K} \sum_{k = 1}^K \big(\hat{x}_{t,j^*}(R)^{(k)} - x_{t,j^*}\big)^2 \\
    MSE_S(x_{t,j^*} | s_t = s) & := \frac{1}{K} \sum_{k = 1}^K \big(\hat{x}_{t,j^*}(S)^{(k)} - x_{t,j^*}\big)^2
\end{align}

where $\hat{x}_{t,j^*}(S)^{(k)}$ is the attacker's guess based on the $k$th standardized synthetic data set sampled from the vine copula, $\hat{x}_{t,j^*}(R)^{(k)}$ is the attacker's guess based on the $k$th standardized bootstrap sample from the real data, $k \in [K]$ and $s_t$ is defined as in \eqref{def:s_t}.

However, the MSE gives an incomplete picture of a generative model's AIA privacy: A high MSE indicates that the attacker guesses a value $\hat{x}_{t,j^*}$ which is on average far from the actual sensitive feature value $x_{t,j^*}$ in squared error loss. Thus, privacy protection w.r.t. AIA is high. Concluding from a low MSE that the AIA privacy is low is however not generally correct. This is illustrated in the following example on simulated real data: Figure \ref{fig:AIA_Cvine_competitors_simreal_MSE} shows a low MSE for all sensitive covariates if the target observations are randomly sampled (in blue). This is because the randomly sampled target observations are closer to the center of the marginal distribution of the respective sensitive covariate $X_{j^*}$. We find that the attacker's guess is merely the mean of the corresponding sensitive covariate $X_{j^*}$. In terms of the attacker's regression model estimated on the synthetic data this means that the all the regression coefficients $X_k$ with $k \in [d] \setminus \{j^*\}$ are approximately 0. Hence, $X_k, \; k \in [d] \setminus \{j^*\}$ do not inform the sensitive covariate $X_{j^*}$ and the attacker learns no privacy leaking dependencies but only general statistics from the synthetic data.
This case therefore poses no privacy risk. See further details on the example in Appendix \ref{sec:appendix_simreal_AIA_additional_results_MSE}.

\subsection{Mean Absolute $\beta$-Coefficients (MAB)} \label{sec:def_MAB}

The previous example made clear that a low MSE does not necessarily indicate low AIA privacy. Instead we define the \textit{mean absolute $\beta$-coefficient (MAB)}. The definition of the MAB is based on the AIA game proposed by \citet{stadler2022synthetic}. There it is assumed that the attacker knows the generative model class used to generate synthetic data. In each of the $N$ game iterations the attacker gets access to a subsample of the real data of fixed size. On this subsample the attacker fits the generative model and generates $n_{synth}$ synthetic data sets. On each synthetic data set the attacker then estimates a regression model regressing the sensitive covariate on the non-sensitive covariates. Thus, in a whole AIA privacy evaluation for a fixed sensitive feature $X_{j^*}$ we obtain regression coefficients $\beta^{(j^*)}_{k, m, l}$ with $k \in [d]$ is the index of all other covariates/features (N.B.: The intercept term $\hat{\beta}^{(j^*)}_{0, m, l}$ is not included in the definition of the MAB.), $m \in [N]$ is the index of the game iteration, and $l \in [n_{synth}]$  runs over all generated synthetic data sets. We then define the MAB as in Definition \ref{def:MAB}:
\begin{align}\tag{3}
    MAB_{j^*} := \frac{1}{d N n_{synth}} \sum_{k \in [d]} \sum_{m \in [N]} \sum_{l \in [n_{synth}]}  | \hat{\beta}^{(j^*)}_{k, m, l} | \; .
\end{align}

Lower values for MAB indicate that covariates $X_k, \; k \neq j^*$ inform the sensitive covariate $X_{j^*}$ less. As opposed to the MSE, the definition in Equation \ref{def:MAB} is independent of a target observation and thus quantifies AIA privacy in terms of the generative model.

We discuss how the MAB might behave in the case of collinearity of the covariates $X_k, \; k \neq j^*$. Then we could encounter a scenario in which the MAB \textit{and} the MSE take on a high values. A high MAB value lets us conclude that the covariates $X_k, \; k \neq j^*$ inform the sensitive feature $X_{j^*}$ well hinting on privacy leakage, while a high MSE on the contrary indicates that the attacker's guess is on average far from the actual sensitive feature value in squared error loss. Speaking in hypothesis testing terms this case represents a type II, where the MAB indicates privacy leakage when in fact the MSE confirms that there is not.

\subsection{Worst-Case Absolute $\beta$-Coefficients (WCAB)} \label{sec:def_WCAB}

Exchanging the mean in the MAB with the maximum we obtain the worst-case absolute $\beta$-coefficients (WCAB):

\begin{align}\label{equ:def_WCAB}
    WCAB_{j^*} := \max \{ | \hat{\beta}^{(j^*)}_{k, m, l} |: \; k \in [d], \; m \in [N], \; l \in [n_{synth}] \} \; .
\end{align}

The WCAB gives a worst-case evaluation of the AIA privacy for all individuals following the idea of the worst-case guarantees provided by DP \citep{dwork2014algorithmic}.

\subsection{Mean $R^2$ (MR2)}
Finally, the degree of privacy required has to be decided by the data holder and varies from application to application. The MAB uses estimated $\beta$-coefficients of each feature in the relevant regression model and has therefore a scale which is difficult to interpret. Instead the $R^2$ can be used, which gives the percentage of variance explained by a regression model and is therefore more interpretable. Like for the MAB, an average over the $R^2$ values in all performed regressions can be computed and we call it MR2, Mean $R^2$. It can be shown that regression coefficients of features entering the C-vine late start to vanish with increasing truncation. Therefore the number of degrees of freedom in the regression model varies for different truncation levels of the C-vine. This requires adjusting the MR2 for different number of degrees of freedom according to the truncation level and makes the MR2 harder to compare accross generative models. For this reason we focus on the MAB instead of the MR2.

\section{Proofs of Theoretical Results Concerning the Utility and Privacy of TVineSynth}\label{sec:proofs_TVineSynth}

\begin{proof}[Proof of Theorem \ref{thm:utility}]
For the log-odds ratio, we have:
\begin{align}
    P(Y=1|\bm{X} &=\bm{x}) = \frac{\pi_{Y}f_{\bm{x}|y}(\bm{x}|Y=1;\bm{\theta})}{\pi_{Y}f_{\bm{x}|y}(\bm{x}|Y=1;\bm{\theta})+(1-\pi_{Y})f_{\bm{x}|y}(\bm{x}|Y=0;\bm{\theta})}     
\end{align}
and:

\begin{align}
    P(Y=0|\bm{X} &= \bm{x}) = \frac{(1-\pi_{Y})f_{\bm{x}|y}(\bm{x}|Y=0;\bm{\theta})}{\pi_{Y}f_{\bm{x}|y}(\bm{x}|Y=1;\bm{\theta})+(1-\pi_{Y})f_{\bm{x}|y}(\bm{x}|Y=0;\bm{\theta})} \; ,    
\end{align}

so that:

\begin{align}
    \psi(\pi_{Y},\bm{\theta}_{1},\ldots,\bm{\theta}_{d};\bm{x}) &= \log\frac{\pi_{Y}}{1-\pi_{Y}} + \log\frac{f_{\bm{x}|y}(\bm{x}|Y=1;\bm{\theta})}{f_{\bm{x}|y}(\bm{x}|Y=0;\bm{\theta})} \; .    
\end{align}

Further, we have:
\begin{align}
    f_{\bm{x}|y}(\bm{x}|y) &= f_{d|y}(x_{d}|y)\cdot f_{d-1|d,y}(x_{d-1}|x_{d},y)\cdot\ldots\cdot f_{1|2,\ldots,d,y}(x_{1}|x_{2},\ldots,x_{d},y)    
\end{align}

where, omitting arguments for simplicity:
\begin{align}
    f_{d-1|d,y} &= \frac{f_{d-1,d|y}}{f_{d|y}} = \frac{c_{d-1,d;y}f_{d-1|y}f_{d|y}}{f_{d|y}} = c_{d-1,d;y}f_{d-1|y} \\
    f_{d-2|d-1,d,y} &= \frac{f_{d-2,d-1|d,y}}{f_{d-1|d,y}} = \frac{c_{d-2,d-1;d,y}f_{d-2|d,y}f_{d-1|d,y}}{f_{d-1|d,y}} = c_{d-2,d-1;d,y}f_{d-2|d,y},
\end{align}

where, correspondingly to $f_{d-1|d,y}$, we obtain $f_{d-2|d,y}=c_{d-2,d;y}f_{d-2|y}$, so that:

\begin{align}
    f_{d-2|d-1,d,y} &= c_{d-2,d-1; d,y} c_{d-2,d;y}f_{d-2|y} \; .
\end{align}

Further: 

\begin{align}
    f_{d-3|d-2,d-1,d,y} &= \frac{f_{d-3,d-2|d-1,d,y}}{f_{d-2|d-1,d,y}} = \frac{c_{d-3,d-2;d-1,d,y} f_{d-3|d-1,d,y} f_{d-2|d-1,d,y}}{f_{d-2|d-1,d,y}} \\
    &= c_{d-3,d-2;d-1,d,y}f_{d-3|d-1,d,y} \; ,
\end{align}

where, correspondingly to $f_{d-2|d-1,d,y}$, we obtain 
$f_{d-3|d-1,d,y}=c_{d-3,d-1;d,y}c_{d-3,d;y}f_{d-3|y}$, so
that:

\begin{align}
    f_{d-3|d-2,d-1,d,y} &= c_{d-3,d-2;d-1,d,y}c_{d-3,d-1; d,y}c_{d-3,d;y}f_{d-3|y} \; .   
\end{align}

Continuing this we obtain:

\begin{align}
    f_{1|2,\ldots,d,y} &= c_{1,2;3,\ldots,d,y}\cdot\ldots\cdot c_{1,d;y}f_{1|y} \; .
\end{align}

Hence,

\begin{align}
    f_{\bm{x}|y}(\bm{x}|y) &= \prod_{j=1}^{d}f_{j|y} \prod_{t=2}^{d}\prod_{j=1}^{d+1-t}c_{j,d+2-t;d+3 - t, \ldots ,d,y} \; ,    
\end{align}

which is 
so that:

\begin{align}
    \psi &= \log\frac{\pi_{Y}}{1-\pi_{Y}} + \sum_{j=1}^{d}\log\frac{f_{j|y}(x_{j}|1)}{f_{j|y}(x_{j}|0)} + \sum_{t=2}^{d} \sum_{j=1}^{d + 1 -t}\log\frac{c_{j,d+2-t; d+3-t , \ldots,  d,y}^{1}}{c_{j,d+2-t; d+3-t , \ldots,  d,y}^{0}} = \sum_{t=1}^{d} \psi_{t} \; ,   
\end{align}

where $c_{j,d+2-t;d+3-t, \ldots,  d,y}^{k}$ is evaluated at $(\bm{x},y)=(\bm{x},k)$, with:

\begin{align}
    \psi_{1} &= \log\frac{\pi_{Y}}{1-\pi_{Y}}+\sum_{j=1}^{d}\log\frac{f_{j|y}(x_{j}|1)}{f_{j|y}(x_{j}|0)}
\end{align}

and:

\begin{align}
    \psi_{t} &= \sum_{j=1}^{d+1-t}\log\frac{c_{j,d+2-t; d+3-t , \ldots,  d,y}^{1}}{c_{j,d+2-t; d+3-t , \ldots,  d,y}^{0}} \; , \quad  t \in \{2,\ldots,d \} \; .
\end{align}

When truncating the C-vine at level $\tau$, then:

\begin{align}
    f_{\bm{x}|y}(\bm{x}|y) &= \prod_{j=1}^{d}f_{j|y} \prod_{t=2}^{\tau} \prod_{j=1}^{d+1-t} c_{j,d+2-t;d+3-t, \ldots ,d,y} \; ,    
\end{align}

so that the corresponding log-odds ratio is given by $\psi^{\tau}=\sum_{t=1}^{\tau}\psi_{t}$.

Further:
\begin{align}
    f_{j|y}(x_{j}|y) &= \frac{\partial}{\partial x_{j}}F_{j|y}(x_{j}|y) = \frac{\partial}{\partial x_{j}}\frac{P(X_{j}\leq x_{j}, Y=y)}{P(Y=y)} \\
    &= \frac{\partial}{\partial x_{j}} \frac{P(X_{j}\leq x_{j},Y \leq y) - P(X_{j}\leq x_{j}, Y \leq y-1)}{P(Y=y)} \\
    &= \frac{\partial}{\partial x_{j}}\frac{C_{j,y}\big(F_{j}(x_{j}),F_{Y}(y)\big) - C_{j,y}\big(F_{j}(x_{j}), F_{Y}(y-1)\big)}{P(Y=y)} \\
    &= \frac{1}{P(Y=y)} \Big( h_{y|j} \big( F_{Y}(y)|F_{j}(x_{j}) \big) - h_{y|j} \big( F_{Y}(y-1)|F_{j}(x_{j}) \big) \Big) f_{j}(x_{j} ) \; ,
\end{align}

where $h_{y|j} = \frac{\partial C_{j,y}}{\partial F_{j}(x_{j})}$, and since $X_{j} \sim U(0,1)$, for $j \in [d]$:

\begin{align}
    f_{j|y}(x_{j}|y) &= \begin{cases}
    \frac{1}{1-\pi_{Y}} h_{y|j}\big(1-\pi_{Y}|F_{j}(x_{j})\big) \; , & y=0\\
    \frac{1}{\pi_{Y}} \Big(1-h_{y|j} \big(1-\pi_{Y}|F_{j}(x_{j} \big) \Big) \; , & y=1 \; ,
    \end{cases} 
\end{align}

and the arguments of the pair copulas are given by \citep{joe1997multivariate}:

\begin{align}
    &F_{k|d+2-t,\ldots,d,y}(x_{k}|x_{d+2-t},\ldots, x_d,y) = \\
    &\quad  \frac{\partial C_{k,d+2-t; d+3 -t,\ldots,d,y} \big(u_{k|d+3-t,\ldots, d,y}, F_{d+2-t|d+3-t,\ldots,d,y}(x_{d+2-t}|x_{d+3-t}, \ldots, x_d,y) \big)}{\partial F_{d+2-t|d+3-t, \ldots, d,y}(x_{d+2-t}|x_{d+3-t}, \ldots, x_d,y)} \; ,
\end{align}

for $k \in [d+1 - t]$ and $t \in [d]$ with $u_{k|d+3-t,\ldots, d,y} := F_{k|d+3-t,\ldots, d,y}(x_{k}|x_{d+3-t},\ldots, x_d,y)$.

We see that distributions $f_{j|y}$ depend on $\pi_{Y}$, but also on the parameter of the copula $C_{j,y}$ of the first tree of the C-vine. This means that the first term $\psi_{1}$ of the log-odds ratio depends on $\pi_{Y}$ and the copula parameters $\bm{\theta}_{1}$ of the first tree. The remaining terms $\psi_{t}$, $t \in \{2,\ldots, d \}$ are functions of the pair copulas in trees $2$ to $d$, that have conditional distributions as arguments, which are computed recursively, as shown above. This means that $\psi_{t}$ depends on the parameters $\bm{\theta}_{t}$ of the pair copulas of tree number $t$, but also on the parameters $\bm{\theta}_{1},\ldots,\bm{\theta}_{t-1}$ from the previous trees, as well as $\pi_{Y}$, though the recursion.

Under the usual regularity assumptions, consult for instance \cite{lehmann1999}, we have for large $n$ that the maximum likelihood  estimator is:

\begin{align}
    \begin{pmatrix}
    \hat{\pi}_{Y}\\ \hat{\bm{\theta}}
    \end{pmatrix} = \begin{pmatrix}
    \pi_{Y}\\ \bm{\theta}
    \end{pmatrix}+\bm{J}^{-1}\bar{\bm{U}}_{n} + \smallO_{P}\left(\frac{1}{\sqrt{n}}\right) \; ,
\end{align}

where:

\begin{align}
    \bar{\bm{U}}_{n} &= \frac{1}{n} \sum_{i=1}^{n} \frac{\partial}{\partial(\pi_{Y},\bm{\theta}_{1},\ldots,\bm{\theta}_{d}) } \log f(\bm{X}_{i},Y_{i})
\end{align}

and:

\begin{align}
    \sqrt{n}\bar{\bm{U}}_{n} &\overset{d}{\rightarrow} \mathcal{N}_{|\bm{\theta}|+1}(\bm{0},\bm{J})    
\end{align}

and:

\begin{align}
    \bm{J} &= - E \Big[\frac{\partial^{2}}{\partial(\pi_{Y},\bm{\theta}_{1},\ldots,\bm{\theta}_{d})\partial(\pi_{Y},\bm{\theta}_{1},\ldots,\bm{\theta}_{d})^{T}}\log f(\bm{X},Y) \Big] \; ,
\end{align}

and the delta method gives:

\begin{align}
    \hat{\psi} &= \psi(\hat{\pi}_{Y},\hat{\bm{\theta}}_{1},\ldots,\hat{\bm{\theta}}_{d};\bm{x}) \\
    &= \psi(\pi_{Y},\bm{\theta}_{1},\ldots,\bm{\theta}_{d};\bm{x}) + \frac{1}{\sqrt{n}}\bm{v}^{T}\bm{J}^{-1}\sqrt{n}\bm{\bar{U}}_{n} + \smallO_{P}\left(\frac{1}{\sqrt{n}}\right),
\end{align}

where $\bm{v}=\frac{\partial \psi}{\partial(\pi_{Y},\bm{\theta}_{1},\ldots,\bm{\theta}_{d})}$. 

Hence, for large $n$:

\begin{align}
    MSE(\hat{\psi}) &= E \big[ (\hat{\psi}-\psi)^{2} \big] = \frac{1}{n}\cdot \bm{v}^{T}\bm{J}^{-1}\bm{v} + \smallO\left(\frac{1}{n}\right) \; .  
\end{align}

Further, when we truncate the C-vine at level $\tau \leq d-1$, we simply set all pair copulas from level $\tau+1$ to $d$ to independence, but the models parameters $(\hat{\pi}_{Y},\hat{\bm{\theta}}_{1},\ldots,\hat{\bm{\theta}}_{\tau})$ of the truncated model are not re-estimated. Thus:

\begin{align}
    \tilde{\psi}^{\tau} &= \sum_{t=1}^{\tau}\psi_{t}(\hat{\pi}_{Y},\hat{\bm{\theta}}_{1},\ldots,\hat{\bm{\theta}}_{t};\bm{x}) \\
    &= \sum_{t=1}^{\tau}\psi_{t}(\pi_{Y},\bm{\theta}_{1},\ldots,\bm{\theta}_{t};\bm{x}) + \frac{1}{\sqrt{n}}\left(\frac{\partial \sum_{t=1}^{\tau}\psi_{t}}{\partial (\pi_{Y},\bm{\theta}_{1},\ldots,\bm{\theta}_{d})}\right)^{T}\bm{J}^{-1}\sqrt{n}\bm{U}_{n} + \smallO_{P}\left(\frac{1}{\sqrt{n}}\right) \\
    &= \psi + \left(\sum_{t=1}^{\tau}\psi_{t}(\pi_{Y},\bm{\theta}_{1},\ldots,\bm{\theta}_{t};\bm{x})-\psi\right) \\
    & + \frac{1}{\sqrt{n}}\begin{pmatrix}
    \bm{v}^{1\ldots \tau} \\
    \bm{0}
    \end{pmatrix}^{T} \begin{pmatrix}
    \bm{J}^{1\ldots \tau,1\ldots \tau} & \bm{J}^{1\ldots \tau, \tau+1 \ldots d} \\
    \bm{J}^{\tau+1 \ldots d,1\ldots \tau} & \bm{J}^{\tau + 1 \ldots d,\tau+1\ldots d}
    \end{pmatrix}\sqrt{n}\bm{U}_{n} + \smallO_{P}\left(\frac{1}{\sqrt{n}}\right) \; ,
\end{align}

where:

\begin{align}
    \bm{v}^{1\ldots \tau} &= \frac{\partial}{\partial (\pi_{Y},\bm{\theta}_{1},\ldots,\bm{\theta}_{\tau})} \sum_{t=1}^{\tau}\psi_{t}
\end{align}

and the diagonal blocks $\bm{J}^{1\ldots \tau,1\ldots \tau}$ and $\bm{J}^{\tau+1\ldots d,\tau+1\ldots d}$ of $\bm{J}^{-1}$ correspond to the double derivatives with  respect to $(\pi_{Y},\bm{\theta}_{1},\ldots,\bm{\theta}_{\tau})$ and $(\bm{\theta}_{\tau+1},\ldots,\bm{\theta}_{d})$, respectively, and the off-diagonal blocks $\bm{J}^{1\ldots \tau,\tau+1\ldots d}$ and $\bm{J}^{\tau+1\ldots d,1\ldots \tau}$ to the derivative with respect to $(\pi_{Y},\bm{\theta}_{1},\ldots,\bm{\theta}_{\tau})$ and $(\bm{\theta}_{\tau+1},\ldots,\bm{\theta}_{d})$. This means that for large $n$:

\begin{align}
    MSE(\tilde{\psi}^{\tau}) &= \left(\sum_{t=1}^{\tau} \psi_{t}(\pi_{Y},\bm{\theta}_{1},\ldots,\bm{\theta}_{i};\bm{x})-\psi\right)^{2}+\frac{1}{n}\cdot \left(\bm{v}^{1\ldots \tau}\right)^{T}\bm{J}^{1\ldots \tau,1\ldots \tau}\bm{v}^{1\ldots \tau} + \smallO\left(\frac{1}{n}\right) \\
    &= \left(\sum_{t=\tau+1}^{d} \psi_{t}(\pi_{Y},\bm{\theta}_{1},\ldots,\bm{\theta}_{i};\bm{x})\right)^{2}+\frac{1}{n}\cdot \left(\bm{v}^{1\ldots \tau}\right)^{T}\bm{J}^{1\ldots \tau,1\ldots \tau}\bm{v}^{1\ldots \tau} + \smallO\left(\frac{1}{n}\right) \; .
\end{align}

\end{proof}

\begin{proof}[Proof of Theorem \ref{thm:mab1}]
Since the rows of $\bm{V}$ are independent and follow a standard $(d+1)$-variate 
normal distribution with correlation matrix $\bm{\rho}$, we know that for each
row $i$:

\begin{align}
    &V_{ij^*}|\bm{V}_{i,[d+1]\setminus\{j^*\}}=\bm{v}_{i,[d+1]\setminus\{j^*\}}\\ 
    &\quad \sim \mathcal{N}(\bm{\rho}_{[d+1] \setminus \{j^*\},j^*}^{T}\bm{\rho}_{[d+1]\setminus \{j^*\},[d+1]\setminus \{j^*\}}^{-1}\bm{v}_{i,[d+1]\setminus\{j^*\}},\\
    &\qquad \quad \quad 1-\bm{\rho}_{[d+1]\setminus \{j^*\},j^*}^{T}\bm{\rho}_{[d+1]\setminus \{j^*\},[d+1]\setminus \{j^*\}}^{-1}\bm{\rho}_{[d+1]\setminus \{j^*\},j^*}) \; ,
\end{align}

so that we may write:

\begin{align}
    V_{ij^*} &= \bm{\rho}_{[d+1] \setminus \{j^*\},j^*}^{T}\bm{\rho}_{[d+1]\setminus \{j^*\},[d+1]\setminus \{j^*\}}^{-1}\bm{v}_{i,[d+1]\setminus\{j^*\}} + \varepsilon_{i} = (\bm{\beta}^{(j^*)})^{T}\bm{v}_{i,[d+1]\setminus\{j^*\}} + \varepsilon_{i},    
\end{align}

with $\varepsilon_{i}\overset{iid}{\sim} \mathcal{N}(0,(\sigma^{(j^*)})^{2})$ and:

\begin{align} \label{eq:sigma2}
    (\sigma^{(j^*)})^{2} &= 1-\bm{\rho}_{[d+1]\setminus \{j^*\},j^*}^{T}\bm{\rho}_{[d+1]\setminus \{j^*\},[d+1]\setminus \{j^*\}}^{-1}\bm{\rho}_{[d+1]\setminus \{j^*\},j^*} \; .    
\end{align}

Then,
it follows from the properties of the ordinary least squares estimator that $\hat{\bm{\beta}}^{(j^*)}$ follows a $d$-variate normal distribution with mean:

\begin{align}
    \bm{\beta}^{(j^*)} &= \bm{\rho}_{[d+1]\setminus \{j^*\},[d+1]\setminus \{j^*\}}^{-1}\bm{\rho}_{[d+1] \setminus \{j^*\},j^*} \; , \label{eq:beta_calc}
\end{align}

and covariance matrix:

\begin{align}
    (\sigma^{(j^*)})^{2}(\bm{V}_{[d+1]\setminus \{j^*\}}^{T}\bm{V}_{[d+1]\setminus \{j^*\}})^{-1} \; .
\end{align}

Now, assume first that the C-vine is truncated at level $\tau = d+1-j^*$. This means that all pair copulas in tree levels $t \in \{d+2-j^*,\ldots,d\}$ are set to independence, and since they are all Gaussian, this is the same as setting the corresponding partial correlations to $0$, i.e.:

\begin{align}
    \rho_{12\cdot 3\ldots d+1}=\rho_{13\cdot 4\ldots d+1}=\rho_{23\cdot 4 \ldots d+1} = \ldots = \rho_{1 j^* \cdot j^* + 1 \ldots d+1} = \ldots = \rho_{j^*-1,j^*\cdot j^*+1 \ldots d+1}=0 \; .
\end{align}

These partial correlations may be expressed in terms of the partial variance-covariance matrix (consult for instance \cite{baba2004partial}). For this, let $k \in [j-1]$ for some $j \in \{2, ..., j^*\}$. Then partial variance-covariance matrix is given by:

\begin{align}
    &\bm{\rho}_{kj\cdot j+1\ldots d+1}\\ 
    &= \begin{pmatrix}
    a_{11} & a_{12}\\ a_{12} & a_{22}
    \end{pmatrix}\\  
    &= \begin{pmatrix}
    1 & \rho_{kj}\\ \rho_{kj} & 1
    \end{pmatrix} - \begin{pmatrix}
    \bm{\rho}_{j+1\ldots d+1,k}^{T}\\ \bm{\rho}_{j+1\ldots d+1,j}^{T}
    \end{pmatrix} \bm{\rho}_{j+1\ldots d+1,j+1\ldots d+1}^{-1} \begin{pmatrix}
    \bm{\rho}_{j+1\ldots d+1,k} & \bm{\rho}_{j+1\ldots d+1,j}\end{pmatrix} \\
    &= {\begin{pmatrix}\scriptscriptstyle
        1- \bm{\rho}_{j+1\ldots d+1,k}^{T}\bm{\rho}_{j+1\ldots d+1,j+1\ldots d+1}^{-1}\bm{\rho}_{j+1\ldots d+1,k}& \scriptscriptstyle\rho_{kj}-\bm{\rho}_{j+1\ldots d+1,k}^{T}\bm{\rho}_{j+1\ldots d+1,j+1\ldots d+1}^{-1}\bm{\rho}_{j+1\ldots d+1,j}\\ \scriptscriptstyle\rho_{kj}-\bm{\rho}_{j+1\ldots d+1,j}^{T}\bm{\rho}_{j+1\ldots d+1,j+1\ldots d+1}^{-1}\bm{\rho}_{j+1\ldots d+1,k} & \scriptscriptstyle1-\bm{\rho}_{j+1\ldots d+1,j}^{T}\bm{\rho}_{j+1\ldots d+1,j+1\ldots d+1}^{-1}\bm{\rho}_{j+1\ldots d+1,j}
    \end{pmatrix}} \; .
\end{align}

The partial correlation is then:

\begin{align}\label{eq:equality0}
    &\rho_{kj\cdot j+1\ldots d+1} = \frac{a_{12}}{\sqrt{a_{11}a_{22}}}\\
    &= \scriptscriptstyle\frac{\rho_{kj}-\bm{\rho}_{j+1\ldots d+1,k}^{T}\bm{\rho}_{j+1\ldots d+1,j+1\ldots d+1}^{-1}\bm{\rho}_{j+1\ldots d+1,j}}{\sqrt{(1- \bm{\rho}_{j+1\ldots d+1,k}^{T}\bm{\rho}_{j+1\ldots d+1,j+1\ldots d+1}^{-1}\bm{\rho}_{j+1\ldots d+1,k})(1-\bm{\rho}_{j+1\ldots d+1,j}^{T}\bm{\rho}_{j+1\ldots d+1,j+1\ldots d+1}^{-1}\bm{\rho}_{j+1\ldots d+1,j})}}\\
    &= 0 \; ,
\end{align}

which is equivalent to the numerator being 0, i.e.:

\begin{align}\label{eq:equality}
    \rho_{kj} &= \bm{\rho}_{j+1\ldots d+1,k}^{T}\bm{\rho}_{j+1\ldots d+1,j+1\ldots d+1}^{-1}\bm{\rho}_{j+1\ldots d+1,j}  \; . 
\end{align} 

This holds specifically for $j = j^*$ and any $k \in [j^* - 1]$, hence:

\begin{align}\label{eq:equals0}
    \bm{\rho}_{1\ldots j^*-1,j^*} -\bm{\rho}_{j^*+1\ldots d+1,1\ldots j^*-1}^{T}\bm{\rho}_{j^*+1\ldots d+1,j^*+1\ldots d+1}^{-1}\bm{\rho}_{j^*+1\ldots d+1,j^*} = \bm{0}  \; .
\end{align}
Further, if we express:

\begin{align}
    \bm{\rho}_{[d+1]\setminus \{j^*\},[d+1]\setminus \{j^*\}}
    &= \begin{pmatrix}
        \bm{\rho}_{1\ldots j^*-1,1\ldots j^*-1} & \bm{\rho}_{j^*+1\ldots d+1,1\ldots j^*-1}^{T}\\
        \bm{\rho}_{j^*+1\ldots d+1,1\ldots j^*-1} & \bm{\rho}_{j^*+1\ldots d+1,j^*+1\ldots d+1}
    \end{pmatrix}
    = \begin{pmatrix}
    \bm{B} & \bm{C}^{T}\\
    \bm{C} & \bm{D}
    \end{pmatrix} \; ,
\end{align}

then we have:

\begin{align}\label{eq:reexpress_rho-1}
    \bm{\rho}_{[d+1]\setminus \{j^*\},[d+1]\setminus \{j^*\}}^{-1}	= &\begin{pmatrix}
        \bm{M}^{-1} & -\bm{M}^{-1}\bm{C}^{T}\bm{D}^{-1}\\
        -\bm{D}^{-1}\bm{C}\bm{M}^{-1} & \bm{D}^{-1}+\bm{D}^{-1}\bm{C}\bm{M}^{-1}\bm{C}^{T}\bm{D}^{-1} 
    \end{pmatrix} \; ,
\end{align}

with $\bm{M}=\bm{B}-\bm{C}^{T}\bm{D}^{-1}\bm{C}$. We use this to obtain that the regression coefficient of $V_{j^*}$ when the C-vine is truncated at level $\tau$:

\begin{align}
    &\bm{\beta}^{(j^*)}_{(\tau)} = \begin{pmatrix}
        \bm{\beta}_{(\tau) \; 1\ldots d - \tau}^{(j^*)}\\
        \bm{\beta}_{(\tau) \; d+1 - \tau \ldots d}^{(j^*)}
    \end{pmatrix}  \stackrel{\eqref{eq:beta_calc}}{=} \bm{\rho}_{[d+1]\setminus \{j^*\},[d+1]\setminus \{j^*\}}^{-1}\bm{\rho}_{[d+1]\setminus \{j^*\},j^*}\\
    &\stackrel{\eqref{eq:reexpress_rho-1}}{=} \begin{pmatrix}
    \bm{M}^{-1} & -\bm{M}^{-1}\bm{C}^{T}\bm{D}^{-1}\\
    -\bm{D}^{-1}\bm{C}\bm{M}^{-1} & \bm{D}^{-1}+\bm{D}^{-1}\bm{C}\bm{M}^{-1}\bm{C}^{T}\bm{D}^{-1}
    \end{pmatrix}
    \begin{pmatrix}
        \bm{\rho}_{1\ldots j^*-1,j^*} \\
        \bm{\rho}_{j^*+1\ldots d+1,j^*}
    \end{pmatrix} \\
    &= \begin{pmatrix} \scriptscriptstyle
        \bm{M}^{-1}(\bm{\rho}_{1\ldots j^*-1,j^*} -\bm{\rho}_{j^*+1\ldots d+1,1\ldots j^*-1}^{T}\bm{\rho}_{j^*+1\ldots d+1,j^*+1\ldots d+1}^{-1}\bm{\rho}_{j^*+1\ldots d+1,j^*}) \\
        \scriptscriptstyle\bm{D}^{-1}\bm{\rho}_{j^*+1\ldots d+1,j^*}-\bm{D}^{-1}\bm{C}\bm{M}^{-1}(\bm{\rho}_{1\ldots j^*-1,j^*} -\bm{\rho}_{j^*+1\ldots d+1,1\ldots j^*-1}^{T}\bm{\rho}_{j^*+1\ldots d+1,j^*+1\ldots d+1}^{-1}\bm{\rho}_{j^*+1\ldots d+1,j^*})
    \end{pmatrix} \\
    &\stackrel{\eqref{eq:equals0}}{=} \begin{pmatrix}
        \bm{0}\\
        \bm{\rho}_{j^*+1\ldots d+1,j^*+1\ldots d+1}^{-1}\bm{\rho}_{j^*+1\ldots d+1,j^*}
    \end{pmatrix} \; , \label{eq:beta_subvector_0}
\end{align}

where the subscript $(\tau)$ indicates the specific truncation level of the C-vine. Note that the regression coefficients $\bm{\beta}^{(j^*)}_{(\tau)} \in \mathbb{R}^d$ are indexed from 1 to $d$. This means that for any $k > j^*$ the coefficient for $v_{k}$ is $\beta^{(j^*)}_{(\tau) \; k-1}$, i.e.:

\begin{align}
    V_{j^*} &= \sum_{k=1}^{j^* - 1} \beta^{(j^*)}_{(\tau) \; k} v_{k} + \sum_{k= j^* + 1}^d \beta^{(j^*)}_{(\tau) \; k-1} v_{k} + \varepsilon \; .    
\end{align}

Finally, we have:

\begin{align}
    (\sigma^{(j^*)}_{(\tau)})^{2} &\stackrel{\eqref{eq:sigma2}}{=}  1-\bm{\rho}_{[d+1]\setminus \{j^*\},j^*}^{T}\bm{\rho}_{[d+1]\setminus \{j^*\},[d+1]\setminus \{j^*\}}^{-1}\bm{\rho}_{[d+1]\setminus \{j^*\},j^*}  \\
    &\stackrel{\eqref{eq:beta_subvector_0}}{=} 1-\begin{pmatrix}
        \bm{\rho}_{1\ldots j^*-1,j^*}^{T} & \bm{\rho}_{j^*+1\ldots d+1,j^*}^{T}
    \end{pmatrix} \begin{pmatrix}
        \bm{0} \\
        \bm{\rho}_{j^*+1\ldots d+1,j^*+1\ldots d+1}^{-1}\bm{\rho}_{j^*+1\ldots d+1,j^*}
    \end{pmatrix} \\
    &= 1-\bm{\rho}_{j^*+1\ldots d+1,j^*}^{T}\bm{\rho}_{j^*+1\ldots d+1,j^*+1\ldots d+1}^{-1}\bm{\rho}_{j^*+1\ldots d+1,j^*} \; . \label{eq:sigma_calc}
\end{align}

If we assume that the C-vine is truncated at level $\tau < d+1-j^*$, we can reformulate the results of \eqref{eq:beta_subvector_0} and \eqref{eq:sigma_calc} in terms of a general truncation level $\tau$ and obtain:

\begin{align} \label{eq:beta_sub_0}
    \bm{\beta}_{(\tau) \; 1\ldots d-\tau}^{(j^*)} &= \bm{0} \; ,\\ \label{eq:beta_sub_general}
    \bm{\beta}_{(\tau) \; d+1-\tau\ldots d}^{(j^*)} &= \bm{\rho}_{d-\tau+2\ldots d+1,d-\tau+2\ldots d+1}^{-1}\bm{\rho}_{d-\tau+2\ldots d+1,j^*} \; , \\
    (\sigma^{(j^*)}_{(\tau)})^{2} &= 1-\bm{\rho}_{d-\tau+2\ldots d+1,j^*}^{T}\bm{\rho}_{d-\tau+2\ldots d+1,d-\tau+2\ldots d+1}^{-1}\bm{\rho}_{d-\tau+2\ldots d+1,j^*} \; . \label{eq:sigma2_general}
\end{align}

Let us now assume that we truncate away one more tree, i.e. truncate the C-vine at level $\tau-1$. Then among others the partial correlation:

\begin{align}
    \rho_{j^* d+2-\tau \cdot d+3-\tau \ldots d+1} &= 0 \; .
\end{align}

Proceeding in the same way as for $\rho_{kj\cdot j+1\ldots d+1}$ in Equations \eqref{eq:equality0} to \eqref{eq:equality}, it is easily shown that:

\begin{align}
    \rho_{j^*,d+2-\tau} &= \bm{\rho}_{d+3-\tau\ldots d+1,j^*}^{T}\bm{\rho}_{d+3-\tau\ldots d+1,d+3-\tau\ldots d+1}^{-1}\bm{\rho}_{d+3-\tau\ldots d+1,d+2-\tau} \; . \label{eq:rho_new_0}
\end{align}

Further, we have:
\begin{align}
    \bm{\rho}_{d-\tau+2\ldots d+1,d-\tau+2\ldots d+1}
    &= \begin{pmatrix}
        1 & \bm{\rho}_{d+3 - \tau\ldots d+1,d+2-\tau}^{T}\\
        \bm{\rho}_{d+3-\tau\ldots d+1,d+2-\tau} & \bm{\rho}_{d+3-\tau\ldots d+1,d+3-\tau\ldots d+1}
    \end{pmatrix}
    = \begin{pmatrix}
        1 & \bm{E}^{T}\\
        \bm{E} & \bm{F}
    \end{pmatrix} \; ,
\end{align}

so that:

\begin{align}
    \bm{\rho}_{d-\tau+2\ldots d+1,d-\tau+2\ldots d+1}^{-1} &= \begin{pmatrix}
        \frac{1}{m} & -\frac{1}{m}\bm{E}^{T}\bm{E}^{-1}\\
        -\frac{1}{m}\bm{F}^{-1}\bm{E} & \bm{F}^{-1}+\frac{1}{m}\bm{F}^{-1}\bm{E}\bm{E}^{T}\bm{F}^{-1}
    \end{pmatrix} \; , \label{eq:new_block_inv}
\end{align}

with $m=1-\bm{E}^{T}\bm{F}^{-1}\bm{E}$. Now we know from \eqref{eq:beta_sub_0} that is has to be:

\begin{align}
    \bm{\beta}_{(\tau-1)}^{(j^*)} &= \begin{pmatrix}
        \bm{\beta}_{(\tau -1) \; 1 \ldots d-\tau}^{(j^*)} \\
        \bm{\beta}_{(\tau-1) \; d+1 - \tau \ldots d}^{(j^*)}
    \end{pmatrix} = \begin{pmatrix}
        \bm{0} \\
        \bm{\beta}_{(\tau-1) \; d+1 - \tau \ldots d}^{(j^*)}
    \end{pmatrix}  \; ,
\end{align}

and we know that then the remaining sub-vector $\bm{\beta}_{(\tau-1) \; d+1 - \tau \ldots d}^{(j^*)}$ is:

\begin{align}
    &\bm{\beta}_{(\tau-1) \; d+1 - \tau \ldots d}^{(j^*)} = \begin{pmatrix}
        \beta_{(\tau-1) \; d+1-\tau}^{(j^*)} \\
        \bm{\beta}_{(\tau-1) \; d+2 - \tau\ldots d}^{(j^*)}
    \end{pmatrix} \stackrel{\eqref{eq:beta_sub_general}}{=} \bm{\rho}_{d-\tau+2\ldots d+1,d-\tau+2\ldots d+1}^{-1}\bm{\rho}_{d-\tau+2\ldots d+1,j^*} \\
    &\stackrel{\eqref{eq:new_block_inv}}{=} \begin{pmatrix}
        \frac{1}{m} & -\frac{1}{m}\bm{E}^{T}\bm{E}^{-1}\\
        -\frac{1}{m}\bm{F}^{-1}\bm{E} & \bm{F}^{-1}+\frac{1}{m}\bm{F}^{-1}\bm{E}\bm{E}^{T}\bm{F}^{-1}
    \end{pmatrix}
    \begin{pmatrix}
        \rho_{j^*,d+2-\tau}\\
        \bm{\rho}_{d+3-\tau\ldots d+1,j^*}
        \end{pmatrix}\\
    &= \begin{pmatrix}
        \scriptscriptstyle\frac{1}{m}(\rho_{j^*,d+2-\tau}-\bm{\rho}_{d+3-\tau\ldots d+1,j^*}^{T}\bm{\rho}_{d+3-\tau\ldots d+1,d+3-\tau\ldots d+1}^{-1}\bm{\rho}_{d+3-\tau\ldots d+1,d+2-\tau})\\
        \scriptscriptstyle\bm{F}^{-1}\bm{\rho}_{d+3-\tau\ldots d+1,j^*}+\frac{1}{m}\bm{F}^{-1}\bm{E}(\rho_{j^*,d+2-\tau}-\bm{\rho}_{d+3-\tau\ldots d+1,j^*}^{T}\bm{\rho}_{d+3-\tau\ldots d+1,d+3-\tau\ldots d+1}^{-1}\bm{\rho}_{d+3-\tau\ldots d+1,d+2-\tau})
    \end{pmatrix}\\
    &\stackrel{\eqref{eq:rho_new_0}}{=} \begin{pmatrix}
        0\\
        \bm{\rho}_{d+3-\tau\ldots d+1,d+3-\tau\ldots d+1}^{-1}\bm{\rho}_{d+3-\tau\ldots d+1,j^*}
    \end{pmatrix} \; , \label{eq:beta_tau-1_0}
\end{align}

where due to symmetry of the covariance matrix $\rho_{j^*, d+2-\tau} = \rho_{d+2-\tau, j^*}$. Finally, we have:

\begin{align}
    (\sigma^{(j^*)}_{(\tau-1)})^{2}
        &\stackrel{\eqref{eq:sigma2_general}}{=} 1-\bm{\rho}_{d-\tau+2\ldots d+1,j^*}^{T}\bm{\rho}_{d-\tau+2\ldots d+1,d-\tau+2\ldots d+1}^{-1}\bm{\rho}_{d-\tau+2\ldots d+1,j^*} \\
        &\stackrel{\eqref{eq:beta_tau-1_0}}{=} 1-\begin{pmatrix}
        \rho_{d+2-\tau,j^*} & \bm{\rho}_{d+3-\tau\ldots d+1,j^*}^{T}
    \end{pmatrix} \begin{pmatrix}
        0\\
        \bm{\rho}_{d+3-\tau\ldots d+1,d+3-\tau\ldots d+1}^{-1}\bm{\rho}_{d+3-\tau\ldots d+1,j^*}
    \end{pmatrix}\\
    &= 1-\bm{\rho}_{d+3-\tau\ldots d+1,j^*}^{T}\bm{\rho}_{d+3-\tau\ldots d+1,d+3-\tau\ldots d+1}^{-1}\bm{\rho}_{d+3-\tau\ldots d+1,j^*} \; .
\end{align}

\end{proof}  

\begin{proof}[Proof of Theorem \ref{thm:mab2}]
We know from Theorem \ref{thm:mab1} that when the C-vine is truncated at level:

\begin{align}
    \tau \leq d+1-|K|-|S| \leq d+1-j^* \; ,
\end{align}

we have:

\begin{align}
    \bm{\beta}_{(\tau) \; 1\ldots d-\tau}^{(j^*)} &\stackrel{\eqref{eq:beta_sub_0}}{=} \bm{0} \; , \\
    \bm{\beta}_{(\tau) \; d+1-\tau\ldots d}^{(j^*)} &\stackrel{\eqref{eq:beta_sub_general}}{=} \bm{\rho}_{d-\tau+2\ldots d,d-\tau+2\ldots d}^{-1}\bm{\rho}_{d-\tau+2\ldots d+1,j^*} \; .    
\end{align}
 
Further, since $\rho_{kl}=0$, $\forall (k,l)$ with $k \in (K \cup S)$ and $l \in [d+1]\setminus (K \cup S)$, then $\bm{\rho}_{d-\tau+2\ldots d+1,j^*} = \bm{0}$, and it follows directly that $\bm{\beta}^{(j^*)}_{(\tau)}= \bm{0}$.
\end{proof}

\section{Statistical Discrepancy}\label{sec:statistical_fidelity}

\citet{alaa2022faithful} introduce $\alpha$-precision, $\beta$-recall and authenticity $(P_{\alpha}, R_{\beta}, A)$, a three-dimensional, domain- and model-agnostic measure to evaluate fidelity, diversity and generalization of generative models on the sample level. 
Precision and recall for comparing two distributions were introduced in \citet{sajjadi2018assessing}, and measure the degree of overlap of the supports of two distributions. On the contrary, $\alpha$-precision and $\beta$-recall only give high scores if typical regions of the support of the two distributions (in our case: real and synthetic one), holding a certain probability mass, overlap. By this, $\alpha$-precision and $\beta$-recall are able to diagnose different types of failures of the generative distribution, such as mode invention, mode drop or density shift. Hence, they give a more nuanced picture of the performance of a generative model. 

For some $\alpha \in [0,1]$ the $\alpha$-support of the distribution $P$ is defined as the minimum volume subset of $A \subset supp(P)$ that supports a probability mass of $\alpha$ \citep{alaa2022faithful}, i.e.:
\begin{align}
    \mathcal{S}^{\alpha} := {\arg\min}_{A \subset supp(P)} V(A) \quad s.th. \quad P(A) = \alpha \; ,
\end{align}
where $V(A)$ is the volume (Lebesgue measure) of $A$. Thus, the $\alpha$-precision and $\beta$-recall are given by:
\begin{align}
    P_{\alpha} &:= P(Z \in \mathcal{S}_R^{\alpha} ) \; ,
\end{align}
and:
\begin{align}
    R_{\beta} &:= P(X \in \mathcal{S}_S^{\beta} ) \; ,
\end{align}
respectively, with $\mathcal{S}_R^{\alpha}$ the $\alpha$-support of the real distribution $P_R$ and $\mathcal{S}_S^{\beta}$ the $\beta$-support of the generative distribution $P_S$ and $\alpha, \beta \in [0,1]$. 
For finding $\mathcal{S}_R^{\alpha}$ and $\mathcal{S}_S^{\beta}$ and evaluating $P_{\alpha}$ and $R_{\beta}$ on data, \cite{alaa2022faithful} embed $X$ and $Z$ with an evaluation embedding. Letting $\alpha$ and $\beta$ go from 0 to 1 we obtain curves for $P_{\alpha}$ and $R_{\beta}$.
\cite{alaa2022faithful} show that $P_{\alpha} / \alpha = R_{\beta} / \beta = 1$ for all $\alpha, \beta \in [0,1]$ if and only if $P_S = P_R$. Therefore it makes sense to define the integrated $\alpha$-precision and integrated $\beta$-recall:
\begin{align}
    IP_{\alpha} &:= 1 - 2 \cdot \int_0^1 | P_{\alpha} - \alpha| d\alpha \; ,\\
    IR_{\beta} &:= 1 - 2 \cdot \int_0^1 | R_{\beta} - \beta| d\beta \; , \\
\end{align}
both in $[0,1]$, where values closer to 1 indicate a better generative model.

The authenticity score $A$ measures to which percentage the generative model invents genuinely new samples rather than just copying real samples with some noise added. Consequently:
\begin{align}
    P_S = A \cdot P_S' + (1 - A) \cdot \delta_{S, \epsilon} \; ,
\end{align}
where $P_S'$ is the generative distribution conditioned on the synthetic samples not being copied. In the second summand $\delta_{S, \epsilon} = \delta_S * \mathcal{N}(0, \epsilon^2)$ is a convolution of the discrete distribution $\delta_S$ placing an unknown probability mass on each real sample in $X$ and the noise distribution $\mathcal{N}(0, \epsilon^2)$ with arbitrarily small noise variance $\epsilon$.

\cite{alaa2022faithful} estimate $\alpha$-precision ($\beta$-recall) of a single synthetic (real) sample to be 1 if it resides within the estimate of $\mathcal{S}_R^{\alpha}$ ($\mathcal{S}_S^{\beta}$) and 0 otherwise. The mean of all sample-wise $P_{\alpha}$ ($R_{\beta}$) scores gives the $\alpha$-precision ($\beta$-recall) of the synthetic (real) data set. The authenticity score of a synthetic sample is estimated through a likelihood ratio test and averaged to obtain the authenticity of the whole synthetic data set.
In our analysis we estimate $(P_{\alpha}, R_{\beta}, A)$ in terms of the unlabeled real data $X$ and  unlabeled synthetic data $Z$.

\section{Model and Attack Parameters}\label{sec:model_and_attack_parameters}

Parameters of the AIA and MIA are given in Table \ref{tab:parameters_privacyAttacks_genModels}, parameters of the generative models are given in the following. 

\begin{table}[t]
    \caption{Parameters of AIAs and MIAs.}
    \label{tab:parameters_privacyAttacks_genModels}
    \vskip 0.15in
    \begin{center}
        \begin{small}
            \begin{sc}
            \scriptsize
                \begin{tabular}{lcr}
                    \toprule
                    Attack & Parameter & Value \\
                    \midrule
                    AIA & no. game iterations $N$ ($\mathtt{nIter}$): & 10 \\
                     AIA & size of reference data $\mathtt{sizeRawT}$: & 500 \\
                     AIA & size of synthetic data $\mathtt{sizeSynT}$: & 500 \\
                     AIA & no. bootstraped/synthetic data sets $n_{synth}$ ($\mathtt{nSynT}$): & 50 \\
                     AIA & size of bootstrap samples $\mathtt{bootstrapSize}$: & 500 \\
                     \midrule
                     MIA & no. game iterations $N$ ($\mathtt{nIter}$): & 10 \\ 
                     MIA & size of real reference data set for attacker's training  $\mathtt{sizeRawA}$: & 500 \\
                     MIA & no. of shadow models during attacker's training $\mathtt{nShadows}$: & 10 \\ 
                     MIA & no. synthetic data sets sampled during attacker's training $\mathtt{nSynA}$: & 10 \\
                    MIA & size of real reference data set for attacker's evaluation $\mathtt{sizeRawT}$: & 400 \\ 
                     MIA & size of the synthetic data set generated during attacker's evaluation  $\mathtt{sizeSynT}$: & 400 \\
                     MIA & no. synthetic data sets evaluated $\mathtt{nSynT}$: & 50 \\ 
                    \bottomrule
                \end{tabular}
            \end{sc}
        \end{small}
    \end{center}
    \vskip -0.1in
\end{table}

\paragraph{Parameters of the generative models:}\label{sec:parameters_generative_models}
\begin{itemize}
    \item \textbf{Vine Copula:} Parametric pair copula families and their rotations are estimated with maximum likelihood and selected with AIC as selection criterion.
    \item \textbf{PrivBayes:} Histogram bins 25 and degree 1. Privacy parameter $\epsilon \in \{0.1 , 1, 5\}$.
    \item \textbf{CTGAN:} Number of epochs and batch size were tuned with random search to 1000 and 150 respectively for results on simulated real data in Section \ref{subsec:results_simulated_realId20} and Appendix \ref{sec:simulated_real_data_d20_appendix}.
    The remaining parameters are set to default values as provided in the \citetalias{ctgan} implementing \cite{xu2019modeling}. For results on SUPPORT2 data in Section \ref{subsec:results_support2} the random search resulted in 400 epochs and a batch size of 100.
    \item \textbf{TVAE:} Number of epochs, batch size and the dimension of the latent space were tuned with random search to 1500, 400 and 2 respectively for results on simulated real data in Section \ref{subsec:results_simulated_realId20} and Appendix \ref{sec:simulated_real_data_d20_appendix}.
    The remaining parameters are set to default values as provided in the \citetalias{ctgan} implementing \cite{xu2019modeling}. For results on SUPPORT2 data in Section \ref{subsec:results_support2} the random search resulted in 800 epochs, a batch size of 100 and latent space dimension of 4.
    \item \textbf{PrivPGD:} Parameters are kept to their default values, as \citet{donhauser2024privacy} state that PrivPGD does not require specific parameter tuning due to the data being represented as particles. We choose the authors' proposed DP parameter default values, i.e. $\epsilon=2.5$ and $\delta=10^{-5}$.
\end{itemize}

\section{Compute Resources}\label{sec:compute_resources}

All experiments on the SUPPORT2 data with results in Section \ref{subsec:results_support2} and Appendix \ref{sec:support2_data} and utility and statistical fidelity results on simulated real data of Section \ref{subsec:results_simulated_realId20} and Appendix \ref{sec:simulated_real_data_d20_appendix} were conducted on an Apple Macbook Pro with macOS Sonoma 14.4.1, Apple M2 Pro chip and 16 GB RAM using 10 cores. AIA and MIA experiments on simulated real data of Section \ref{subsec:results_simulated_realId20} and Appendix \ref{sec:simulated_real_data_d20_appendix} were conducted on an hpc cluster with the following specs: 
\begin{itemize}
    \item CPU: 256 threads (2 × AMD EPYC 7713 Milan: 64 cores, 128 threads per CPU)
    \item RAM: 4 TB	(32 × 128 GB DDR4)
    \item OS: Linux (Red Hat Enterprise Linux 7)
\end{itemize}
Experiments were conducted in parallel on 20 cores.

Software used for experiments on Apple Macbook Pro:
\begin{itemize}
    \item Python 3.10.13
    \item R version 4.3.1 (2023-06-16) -- "Beagle Scouts"
    \item tmux 3.3a
    \item conda 23.10.0
\end{itemize}

Execution time measured with \textit{time} command of AIA on SUPPORT2 data for C-vine per truncation level:
\begin{itemize}
    \item truncation at level 1: 489.20s user 14.44s system 100\% cpu 8:21.01 total
    \item truncation at level 5: 1168.23s user 14.72s system 99\% cpu 19:44.55 total
    \item truncation at level 10:  1836.25s user 14.78s system 99\% cpu 30:56.40 total
    \item truncation at level 15: 2427.97s user 15.03s system 99\% cpu 40:49.45 total
    \item truncation at level 20: 2877.82s user 15.27s system 99\% cpu 48:22.73 total
    \item no truncation: 3059.69s user 15.21s system 99\% cpu 51:26.67 total
\end{itemize}

Execution time measured with \textit{time} command of MIA on SUPPORT2 data for C-vine per truncation level:
\begin{itemize}
    \item truncation at level 1: 1540.75s user 84.16s system 63\% cpu 42:28.99 total
    \item truncation at level 5: 3180.88s user 89.65s system 103\% cpu 52:51.39 total
    \item truncation at level 10:  4783.33s user 92.23s system 128\% cpu 1:03:04.40 total
    \item truncation at level 15: 6077.78s user 93.04s system 143\% cpu 1:11:34.51 total
    \item truncation at level 20: 6994.86s user 91.37s system 151\% cpu 1:18:11.24 total
    \item no truncation: 7385.50s user 90.99s system 154\% cpu 1:20:47.09 total
\end{itemize}

\section{Simulated Real Data}\label{sec:simulated_real_data_d20_appendix}
We simulate real data with $n = 1000$ and $n_{test} = 250$ realizations of the random vector $(X_1, X_2, \dots , X_{20}, Y) \in \mathbb{R}^{20} \times \{0,1\}$ following a distribution $F$. The joint distribution $F$ of $(\bm{X}^T, Y) := (X_1, X_2, \dots, X_{20}, Y)$ is composed the following way: $Y \sim Bernoulli (0.5)$, $\bm{X} | Y = 0 \sim \mathcal{N}\big( \bm{\mu}_0, \Sigma_0 \big)$ and $\bm{X} | Y = 1 \sim \mathcal{N}\big( \bm{\mu}_1, \Sigma_1 \big)$ with $\bm{\mu}_0, \bm{\mu}_1, \Sigma_0$ and $\Sigma_1$ defined in Equations \eqref{equ:mu0I_d20_I}, \eqref{equ:mu1I_d20_I}, \eqref{equ:Sigma0I_d20_I} and \eqref{equ:Sigma1I_d20_I}.
As can be observed from parameters of $F$, the dependence structure of $F$ is of block form where the three blocks $(X_1, \dots, X_5)$, $(X_6, \dots, X_{10})$  and $(X_{11}, \dots, X_{20}, Y)$ are independent. The distribution $F$ was chosen deliberately such that, if we simulate data, we obtain three approximately uncorrelated blocks in the correlation matrix of the real data to investigate the effect of truncation according to Theorem \ref{thm:mab2}. For this reason we do not apply Algorithm \ref{alg:find_order} additionally.

The estimated correlation matrix of the real data shown Figure \ref{fig:corr_I_d20_append} exhibits this block structure to imitate a scenario where some covariates (third block: $X_{11}, ..., X_{20}$) are important for classifying $Y$ while others (first block: $X_1, ..., X_5$ and second block: $X_6, ..., X_{10}$) are less so. Let us assume that covariates $X_1, X_6$ and $X_{11}$ are sensitive. In this experiment we know that the dependencies in the first and second block do not contribute to the classification of $Y$ but provide information on the sensitive covariates $X_1$ and $X_6$ which may result in impaired privacy. Hence, in TVineSynth we choose the structure and truncation level of the vine copula such that these dependencies are not reflected in the synthetic data. Specifically, we use an ordering $\mathcal{O}^*$ of the covariates that corresponds to their indices, i.e. $X_j = X_{(j)}$, and $Y$ as the center of $T_1$, see Appendix \ref{sec:RvinematrixCvineSimreal}. 
In Figure \ref{fig:Cvine_synth_data_corr_real_data_I_d20} we can observe how the correlation structure of the real data in Figure \ref{fig:corr_I_d20_append} is more and more reproduced in the synthetic data generated by a C-vine with increasing truncation level.
Specifically, we note that dependencies in the first block containing sensitive covariate $X_1$ start to be represented in synthetic data from truncation level 17 and more closely from truncation level 19 onward. For $X_6$ in the second block this is the case from truncation level 12 onward. In the third block containing sensitive covariate $X_{11}$ this happens already from truncation level 1 onward. This indicates the effect of truncation combined with the C-vine structure.

\begin{figure}
    \centering
    \rotatebox{90}{
    \begin{minipage}{\textheight}
        {\tiny
        \begin{align} \label{equ:mu0I_d20_I}
            \bm{\mu}_0^{(I)} &:= (-2.42,  5.84, 20.10, 12.66,  0.35, 12.64, 12.29, 21.29, 1.11, 24.69, 25.27, -3.53, 6.10, -4.52, 3.37, 19.73,  5.78, 12.80, -3.19, 14.76)^T \; , \\ \label{equ:mu1I_d20_I}
            \bm{\mu}_1^{(I)} &:= (-2.42,  5.84, 20.10, 12.66,  0.35, 12.64, 12.29, 21.29,  1.11, 24.69, 24.44, -4.78,  6.51, -4.73,  2.08, 20.63,  5.26, 13.57, -2.94, 15.39)^T \; , %
        \end{align}
        }%
        \setcounter{MaxMatrixCols}{20}
        {\tiny
        \begin{align} \label{equ:Sigma0I_d20_I}
            \Sigma_0^{(I)} &= 
            \begin{pmatrix}
                2.57 & -2.14 & 1.33 & 0.04 & -0.76 & 0 & 0 & 0 & 0 & 0 & 0 & 0 & 0 & 0 & 0 & 0 & 0 & 0 & 0 & 0 \\ 
                -2.14 & 6.12 & -2.99 & -0.36 & 1.25 & 0 & 0 & 0 & 0 & 0 & 0 & 0 & 0 & 0 & 0 & 0 & 0 & 0 & 0 & 0 \\ 
                1.33 & -2.99 & 6.36 & -1.85 & 2.05 & 0 & 0 & 0 & 0 & 0 & 0 & 0 & 0 & 0 & 0 & 0 & 0 & 0 & 0 & 0 \\ 
                0.04 & -0.36 & -1.85 & 3.29 & -0.97 & 0 & 0 & 0 & 0 & 0 & 0 & 0 & 0 & 0 & 0 & 0 & 0 & 0 & 0 & 0 \\ 
                -0.76 & 1.25 & 2.05 & -0.97 & 6.07 & 0 & 0 & 0 & 0 & 0 & 0 & 0 & 0 & 0 & 0 & 0 & 0 & 0 & 0 & 0 \\ 
                0 & 0 & 0 & 0 & 0 & 3.80 & -1.97 & 1.69 & -0.29 & -1.01 & 0 & 0 & 0 & 0 & 0 & 0 & 0 & 0 & 0 & 0 \\ 
                0 & 0 & 0 & 0 & 0 & -1.97 & 7.77 & -1.69 & -2.07 & 2.00 & 0 & 0 & 0 & 0 & 0 & 0 & 0 & 0 & 0 & 0 \\ 
                0 & 0 & 0 & 0 & 0 & 1.69 & -1.69 & 3.86 & 1.67 & -1.46 & 0 & 0 & 0 & 0 & 0 & 0 & 0 & 0 & 0 & 0 \\ 
                0 & 0 & 0 & 0 & 0 & -0.29 & -2.07 & 1.67 & 4.12 & -1.47 & 0 & 0 & 0 & 0 & 0 & 0 & 0 & 0 & 0 & 0 \\ 
                0 & 0 & 0 & 0 & 0 & -1.01 & 2.00 & -1.46 & -1.47 & 1.92 & 0 & 0 & 0 & 0 & 0 & 0 & 0 & 0 & 0 & 0 \\ 
                0 & 0 & 0 & 0 & 0 & 0 & 0 & 0 & 0 & 0 & 5.82 & -1.29 & -2.52 & 1.80 & -1.93 & -2.13 & -2.74 & -1.84 & -0.09 & -2.72 \\ 
                0 & 0 & 0 & 0 & 0 & 0 & 0 & 0 & 0 & 0 & -1.29 & 8.60 & -0.98 & -3.48 & -1.80 & -1.33 & 2.56 & -2.21 & -1.09 & -0.62 \\ 
                0 & 0 & 0 & 0 & 0 & 0 & 0 & 0 & 0 & 0 & -2.52 & -0.98 & 5.44 & 0.48 & -1.02 & 0.63 & -1.18 & 1.90 & -1.13 & 2.84 \\ 
                0 & 0 & 0 & 0 & 0 & 0 & 0 & 0 & 0 & 0 & 1.80 & -3.48 & 0.48 & 5.67 & -1.13 & -1.80 & -2.98 & 0.89 & 0.28 & -0.37 \\ 
                0 & 0 & 0 & 0 & 0 & 0 & 0 & 0 & 0 & 0 & -1.93 & -1.80 & -1.02 & -1.13 & 7.80 & 2.92 & 3.83 & 3.01 & 1.11 & 2.81 \\ 
                0 & 0 & 0 & 0 & 0 & 0 & 0 & 0 & 0 & 0 & -2.13 & -1.33 & 0.63 & -1.80 & 2.92 & 4.44 & 3.05 & 2.73 & -0.03 & 1.93 \\ 
                0 & 0 & 0 & 0 & 0 & 0 & 0 & 0 & 0 & 0 & -2.74 & 2.56 & -1.18 & -2.98 & 3.83 & 3.05 & 7.16 & 2.72 & 2.21 & 2.05 \\ 
                0 & 0 & 0 & 0 & 0 & 0 & 0 & 0 & 0 & 0 & -1.84 & -2.21 & 1.90 & 0.89 & 3.01 & 2.73 & 2.72 & 5.21 & 1.16 & 3.65 \\ 
                0 & 0 & 0 & 0 & 0 & 0 & 0 & 0 & 0 & 0 & -0.09 & -1.09 & -1.13 & 0.28 & 1.11 & -0.03 & 2.21 & 1.16 & 4.99 & 0.23 \\ 
              0 & 0 & 0 & 0 & 0 & 0 & 0 & 0 & 0 & 0 & -2.72 & -0.62 & 2.84 & -0.37 & 2.81 & 1.93 & 2.05 & 3.65 & 0.23 & 5.88
            \end{pmatrix} , \\[1em] \label{equ:Sigma1I_d20_I}
            \Sigma_1^{(I)} &= 
            \begin{pmatrix}
                2.57 & -2.14 & 1.33 & 0.04 & -0.76 & 0 & 0 & 0 & 0 & 0 & 0 & 0 & 0 & 0 & 0 & 0 & 0 & 0 & 0 & 0 \\ 
                -2.14 & 6.12 & -2.99 & -0.36 & 1.25 & 0 & 0 & 0 & 0 & 0 & 0 & 0 & 0 & 0 & 0 & 0 & 0 & 0 & 0 & 0 \\ 
                1.33 & -2.99 & 6.36 & -1.85 & 2.05 & 0 & 0 & 0 & 0 & 0 & 0 & 0 & 0 & 0 & 0 & 0 & 0 & 0 & 0 & 0 \\ 
                0.04 & -0.36 & -1.85 & 3.29 & -0.97 & 0 & 0 & 0 & 0 & 0 & 0 & 0 & 0 & 0 & 0 & 0 & 0 & 0 & 0 & 0 \\ 
                -0.76 & 1.25 & 2.05 & -0.97 & 6.07 & 0 & 0 & 0 & 0 & 0 & 0 & 0 & 0 & 0 & 0 & 0 & 0 & 0 & 0 & 0 \\ 
                0 & 0 & 0 & 0 & 0 & 3.80 & -1.97 & 1.69 & -0.29 & -1.01 & 0 & 0 & 0 & 0 & 0 & 0 & 0 & 0 & 0 & 0 \\ 
                0 & 0 & 0 & 0 & 0 & -1.97 & 7.77 & -1.69 & -2.07 & 2.00 & 0 & 0 & 0 & 0 & 0 & 0 & 0 & 0 & 0 & 0 \\ 
                0 & 0 & 0 & 0 & 0 & 1.69 & -1.69 & 3.86 & 1.67 & -1.46 & 0 & 0 & 0 & 0 & 0 & 0 & 0 & 0 & 0 & 0 \\ 
                0 & 0 & 0 & 0 & 0 & -0.29 & -2.07 & 1.67 & 4.12 & -1.47 & 0 & 0 & 0 & 0 & 0 & 0 & 0 & 0 & 0 & 0 \\ 
                0 & 0 & 0 & 0 & 0 & -1.01 & 2.00 & -1.46 & -1.47 & 1.92 & 0 & 0 & 0 & 0 & 0 & 0 & 0 & 0 & 0 & 0 \\ 
                0 & 0 & 0 & 0 & 0 & 0 & 0 & 0 & 0 & 0 & 6.01 & -3.24 & 2.67 & -0.55 & 3.89 & 1.34 & 1.22 & -1.22 & 2.02 & -2.53 \\ 
                0 & 0 & 0 & 0 & 0 & 0 & 0 & 0 & 0 & 0 & -3.24 & 7.78 & -0.19 & 2.07 & -3.66 & 0.89 & -0.01 & 0.03 & -0.35 & -0.28 \\ 
                0 & 0 & 0 & 0 & 0 & 0 & 0 & 0 & 0 & 0 & 2.67 & -0.19 & 6.12 & 0.90 & 3.53 & 1.65 & -0 & -1.67 & 3.01 & -1.95 \\ 
                0 & 0 & 0 & 0 & 0 & 0 & 0 & 0 & 0 & 0 & -0.55 & 2.07 & 0.90 & 5.60 & 1.52 & 2.00 & 1.41 & 1.80 & 1.06 & -1.91 \\ 
                0 & 0 & 0 & 0 & 0 & 0 & 0 & 0 & 0 & 0 & 3.89 & -3.66 & 3.53 & 1.52 & 8.20 & 1.75 & 1.01 & -0.11 & 2.97 & -1.88 \\ 
                0 & 0 & 0 & 0 & 0 & 0 & 0 & 0 & 0 & 0 & 1.34 & 0.89 & 1.65 & 2.00 & 1.75 & 3.06 & 1.48 & -0.59 & 1.15 & -0.50 \\ 
                0 & 0 & 0 & 0 & 0 & 0 & 0 & 0 & 0 & 0 & 1.22 & -0.01 & -0 & 1.41 & 1.01 & 1.48 & 5.03 & -1.08 & 1.93 & -1.31 \\ 
                0 & 0 & 0 & 0 & 0 & 0 & 0 & 0 & 0 & 0 & -1.22 & 0.03 & -1.67 & 1.80 & -0.11 & -0.59 & -1.08 & 4.02 & -1.14 & 0.08 \\ 
                0 & 0 & 0 & 0 & 0 & 0 & 0 & 0 & 0 & 0 & 2.02 & -0.35 & 3.01 & 1.06 & 2.97 & 1.15 & 1.93 & -1.14 & 5.13 & -1.57 \\ 
                0 & 0 & 0 & 0 & 0 & 0 & 0 & 0 & 0 & 0 & -2.53 & -0.28 & -1.95 & -1.91 & -1.88 & -0.50 & -1.31 & 0.08 & -1.57 & 5.40 
            \end{pmatrix} .
        \end{align} %
        } %
    \end{minipage}
   }
\end{figure}

\begin{figure}[ht]
    \centering
    \includegraphics[width=0.2\columnwidth]{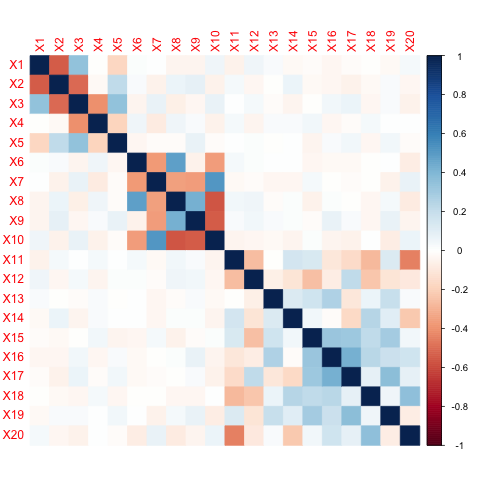}
    \caption{Pearson correlation matrix estimated on simulated real data. It exhibits a block structure to imitate a scenario where some covariates, i.e. the ones in the third block $X_{11}, ..., X_{20}$ are important for classification, while others in the first block, $X_1, ..., X_5$ and in the second block $X_6, ..., X_{10}$ are not.}
    \label{fig:corr_I_d20_append}
\end{figure}

\begin{figure}[ht]
    \centering
    \begin{tabular}{cccc}
        \includegraphics[width=0.18\textwidth]{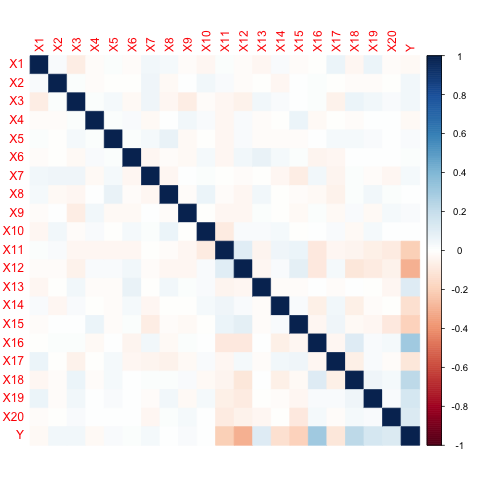} &  
        \includegraphics[width=0.18\textwidth]{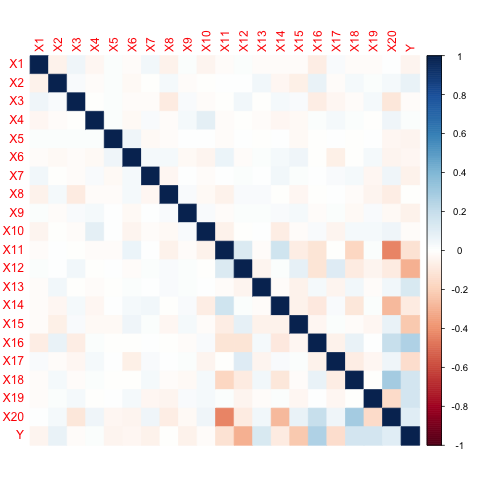} &
        \includegraphics[width=0.18\textwidth]{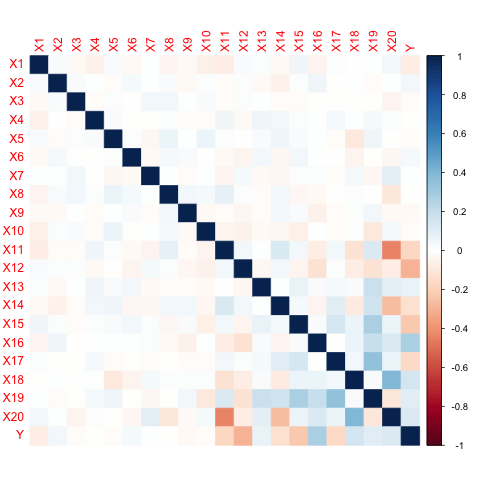} &
        \includegraphics[width=0.18\textwidth]{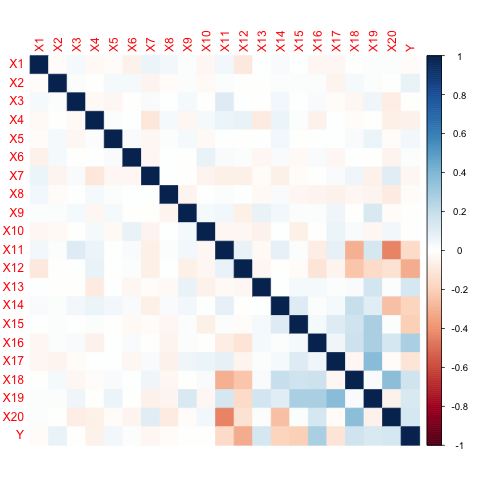} \\
        (a) Truncation at 1. & (b) Truncation at 2. & (c) Truncation at 3. & (d) Truncation at 4. \\
        \includegraphics[width=0.18\textwidth]{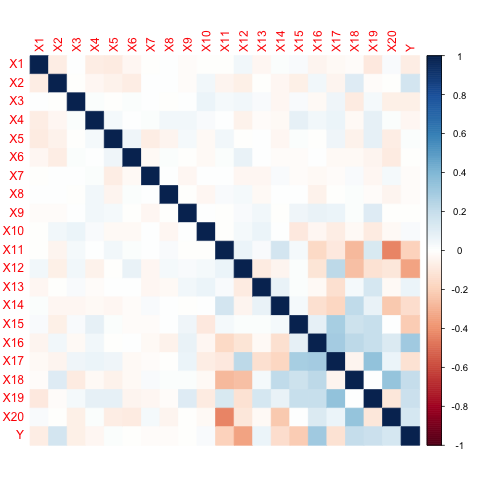} &
        \includegraphics[width=0.18\textwidth]{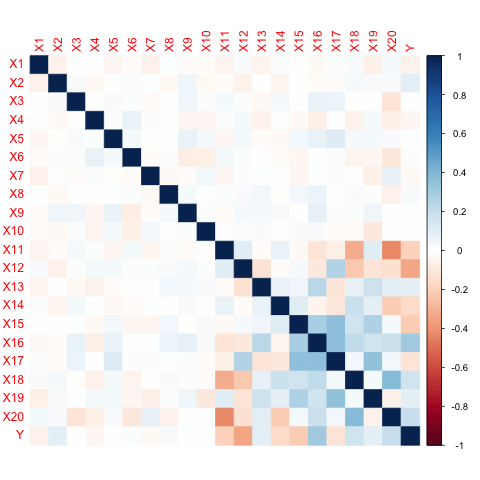} &
        \includegraphics[width=0.18\textwidth]{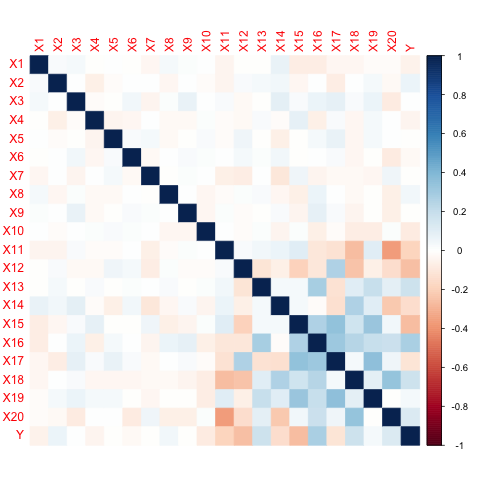} &
        \includegraphics[width=0.18\textwidth]{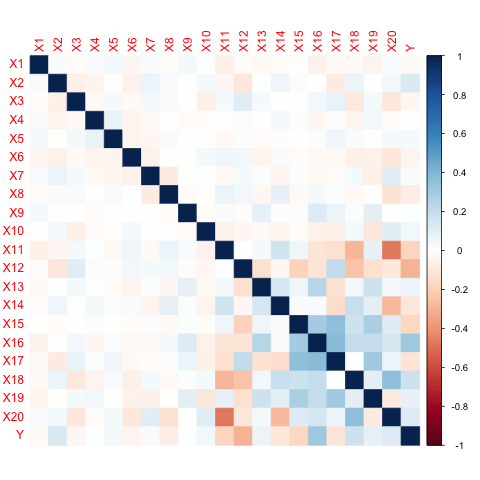} \\
        (e) Truncation at 5. & (f) Truncation at 6. & (g) Truncation at 7. & (h) Truncation at 8. \\
        \includegraphics[width=0.18\textwidth]{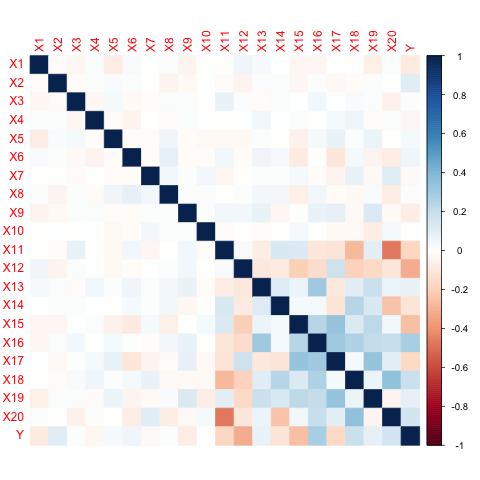} &
        \includegraphics[width=0.18\textwidth]{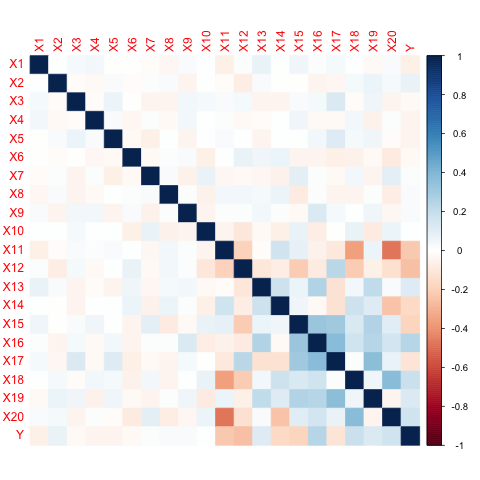} &
         \includegraphics[width=0.18\textwidth]{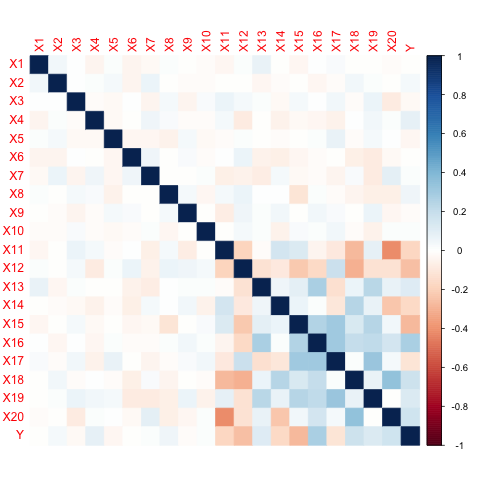} &
        \includegraphics[width=0.18\textwidth]{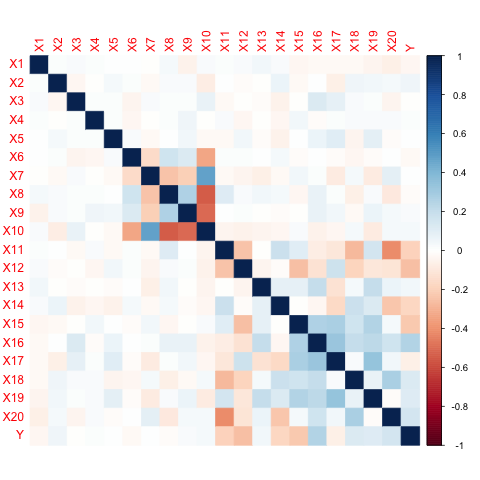} \\
        (i) Truncation at 9. & (j) Truncation at 10. & (k) Truncation at 11. & (l) Truncation at 12. \\
        \includegraphics[width=0.18\textwidth]{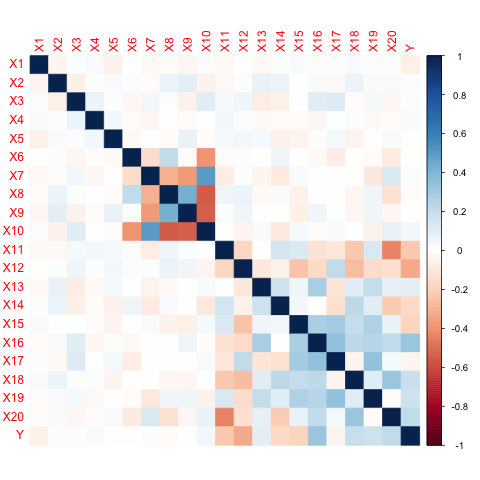} &
        \includegraphics[width=0.18\textwidth]{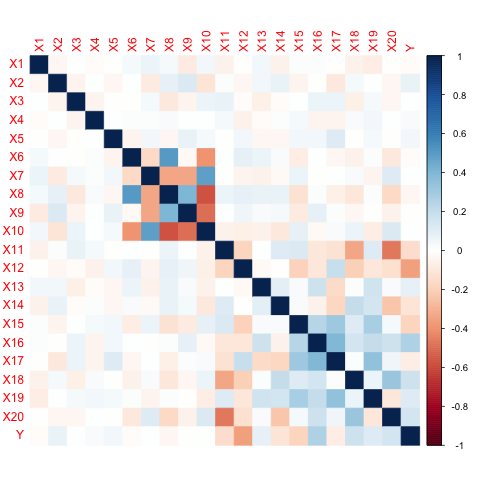} &
        \includegraphics[width=0.18\textwidth]{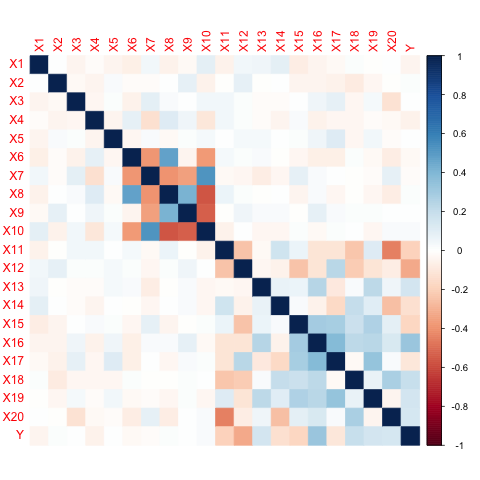} &
        \includegraphics[width=0.18\textwidth]{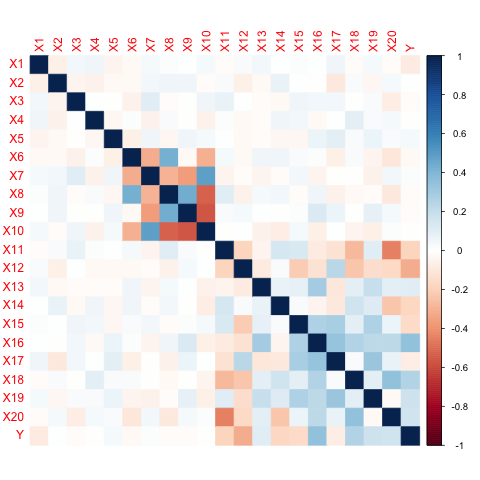} \\
        (m) Truncation at 13. & (n) Truncation at 14. & (o) Truncation at 15. & (p) Truncation at 16. \\
        \includegraphics[width=0.18\textwidth]{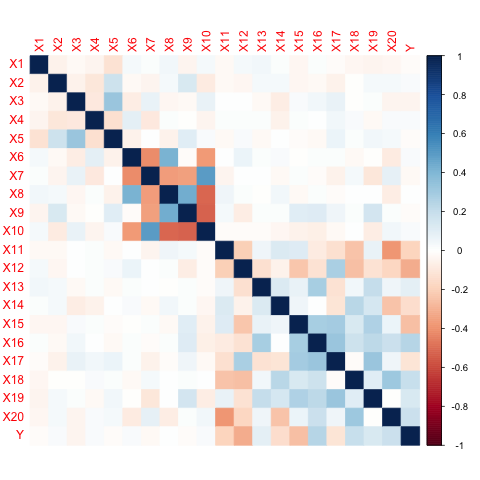} &
        \includegraphics[width=0.18\textwidth]{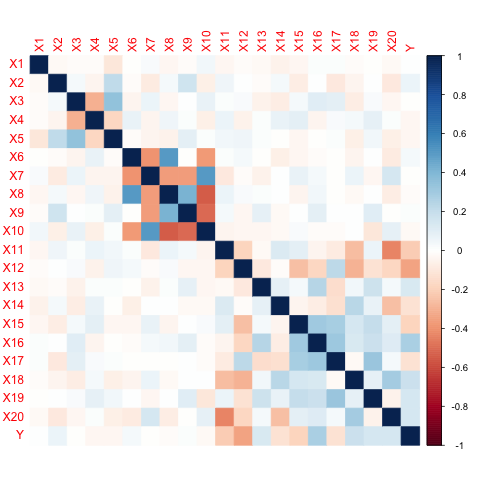} &
        \includegraphics[width=0.18\textwidth]{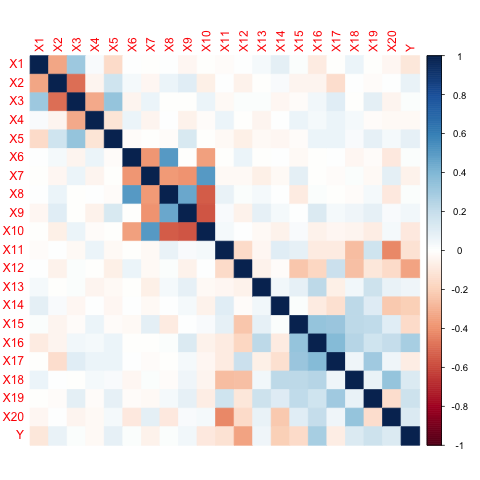} &
        \includegraphics[width=0.18\textwidth]{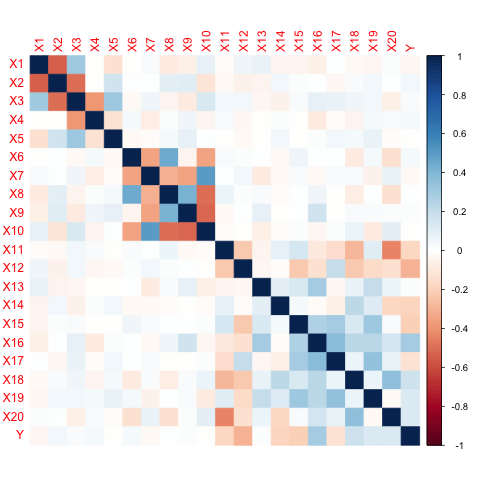} \\
         (q) Truncation at 17. & (r) Truncation at 18. & (s) Truncation at 19. & (t) No truncation. \\
    \end{tabular}
    
    \caption{The (Pearson) correlation matrices of the synthetic data generated with a C-vine for truncation levels from 1 to 19 and no truncation illustrate how the correlation structure of the real data is more and more reproduced with increasing truncation level. Note that correlations in the first and second block are only reflected in the synthetic data from truncation level 12 and 17 respectively. Details on the estimation of the C-vine can be found in Section \ref{sec:TVineSynth_construction} and Appendix \ref{sec:model_and_attack_parameters}.}
    \label{fig:Cvine_synth_data_corr_real_data_I_d20}
\end{figure}

\FloatBarrier

\subsection{R-Vine Matrix of the C-Vine Used as a Generative Model on Simulated Real Data}\label{sec:RvinematrixCvineSimreal}
R-vine matrices are a compact way to represent the vine tree structure $\mathcal{V}$. They indicate which pairwise conditional dependencies between covariates are modeled through an edge in a tree in $\mathcal{V}$. For a thorough introduction, the reader can consult for example \citet{czado2019analyzing}. The R-vine matrix of the C-vine used as a generative model on simulated real data is as follows with $Y$ on index 21 and index $j \in [20]$ corresponding to covariate $X_j$:
\begin{align}
    \begin{pmatrix}
        21 & 21 & 21 & \cdots & 21 & 21 \\
        & 20 & 20 & \cdots & 20 & 20 \\
        & & 19 & \cdots & 19 & 19 \\
        & & & \ddots & \vdots & \vdots \\
        & & & & 2 & 2 \\
        & & & & &  1 
    \end{pmatrix} \; .
\end{align}

\subsection{Choice of Target Observations for Privacy Evaluation on Simulated Real Data} \label{sec:append_simreald20_choice_of_targets}

We conduct an AIA and MIA on simulated real data described in \ref{sec:simulated_real_data_d20_appendix}. The parameter setup of the privacy attacks can be found in Table \ref{tab:parameters_privacyAttacks_genModels}. For the attacks for each sensitive covariate four target observations are handpicked outside the 95\%-quantile of the regarding sensitive covariate, see Table \ref{tab:handpicked_targets095_simreald20}.

\begin{table}[t]
    \caption{Target observations of the simulated real data set, Section \ref{sec:simulated_real_data_d20_appendix} that are handpicked to lie outside the 95\%-quantile of the respective sensitive covariate $X_1$, $X_6$ and $X_{11}$.}
    \label{tab:handpicked_targets095_simreald20}
    \vskip 0.15in
    \begin{center}
        \begin{small}
            \begin{sc}
                \begin{tabular}{llccr}
        \toprule
         & quantiles & & & \\
         sensitive covariate & 0.01 & 0.025 & 0.975 & 0.99 \\ 
         \midrule
         $X_1$ & ID202 & ID164 & ID179 & ID843 \\
         $X_6$ & ID127 & ID4 & ID353 & ID326 \\
         $X_{11}$ & ID970 & ID949 & ID392 & ID862 \\ \bottomrule
    \end{tabular}
            \end{sc}
        \end{small}
    \end{center}
    \vskip -0.1in
\end{table}

Additionally, five target observations, namely ID123, ID507, ID589, ID740 and ID922\footnote{NB: ID$k$ corresponds to the $(k+1)$th observation in the real data set with $k \in \{0, ..., (n-1)\}$.} are randomly sampled from the real data set. They correspond to the quantiles w.r.t. the respective sensitive covariate given in Table \ref{tab:quantiles_randomly_sampled_targets_AIA_realI_d20}.

\begin{table}[t]
    \caption{Randomly sampled target observations from the simulated real data set of Section \ref{sec:simulated_real_data_d20_appendix} and their corresponding quantiles w.r.t. covariates $X_1, X_6$ and $X_{11}$.}
    \label{tab:quantiles_randomly_sampled_targets_AIA_realI_d20}
    \vskip 0.15in
    \begin{center}
        \begin{small}
            \begin{sc}
                \begin{tabular}{lccccr}
        \toprule
        & \multicolumn{3}{l}{target IDs} & & \\
         sensitive covariate & ID123 & ID507 & ID589 & ID740 & ID922 \\ 
         \midrule
         $X_1$ & 0.475 & 0.858 & 0.284 & 0.469 & 0.302 \\
         $X_6$ & 0.517 & 0.473 & 0.945 & 0.512 & 0.309 \\
         $X_{11}$ & 0.628 & 0.592 & 0.838 & 0.204 & 0.549 \\ \bottomrule
    \end{tabular}
            \end{sc}
        \end{small}
    \end{center}
    \vskip -0.1in
\end{table}

\FloatBarrier

\subsection{Simulated Real Data: Results}\label{app:sim_real_results_appendix}

Parameters of the privacy attacks and of the generative models can be found in Appendix \ref{sec:model_and_attack_parameters}. As outlined in Section \ref{sec:choice_of_targets}, we pick four target observations outside the 95\% quantile for each sensitive covariate $X_1, X_6$ and $X_{11}$ and randomly sample five more target observations from the real data for the privacy analysis, see Appendix \ref{sec:append_simreald20_choice_of_targets}.

\subsubsection{Privacy: Attribute Inference Attack}\label{sec:AIA_realI_d20_results}

The top row of Figure \ref{fig:AIA_MAB_Cvine_competitors_simreal_X1_X6_X11} corresponds to the case where the sensitive covariate $X_1$ is less important for classifying $Y$ correctly. In this situation the star shaped C-vine combined with truncation at level 18 or lower is able to cut away sensitive dependencies that harm privacy but do not contribute to utility. This is in accordance with our observations from Figures \ref{fig:corr_I_d20_append} and \ref{fig:Cvine_synth_data_corr_real_data_I_d20}. If the sensitive covariate is $X_6$, again playing a less important role for classifying $Y$ correctly, we observe in the second row of Figure
\ref{fig:AIA_MAB_Cvine_competitors_simreal_X1_X6_X11} that truncating a C-vine at level 11 or lower offers a high level of privacy, which again complies with our observations from Figures \ref{fig:corr_I_d20_append} and \ref{fig:Cvine_synth_data_corr_real_data_I_d20}.
Thus, the C-vine offers a high level privacy w.r.t. AIA, which is comparable to the one of the DP PrivBayes model at a very strict privacy budget of $\epsilon = 0.1$, and outperforms CTGAN, TVAE and PrivPGD in terms of AIA privacy with some margin for low truncation levels. Simultaneously, the C-vine achieves high utility for all truncation levels, outperforming PrivBayes by far, see Figure \ref{fig:utility_Cvine_competitors_simreal}. Sensitive covariate $X_{11}$ on the other hand shows pairwise association with $Y$, see Figure \ref{fig:corr_I_d20_append}. In this case it is necessary to truncate the C-vine at level 1 to provide privacy w.r.t. AIA, see bottom row of Figure \ref{fig:AIA_MAB_Cvine_competitors_simreal_X1_X6_X11}. The AIA results in terms of WCAB in Appendix \ref{sec:appendix_simreal_AIA_additional_results_WCAB} and in terms of the MSE in Appendix \ref{sec:appendix_simreal_AIA_additional_results_MSE} confirm these findings.

As a proof of concept for why we base TVineSynth on a C-vine we generate synthetic data with an R-vine where the vine tree structure is not pre-specified, but selected as described in \citep{dissmann2013selecting}, and an R-vine star1 model and compare it to C-vine generated synthetic data. An R-vine star1 model is equal to an R-vine except that we exchange its first tree with $T_1$ of the C-vine. Even for truncation at a very low level, an R-vine struggles to offer effective protection against AIAs. The same holds for an R-vine star1 for truncation level 2 and higher. If it consists only of its star-shaped first tree, an R-vine star1 is equivalent to a C-vine and thus grants the same high level of privacy, see Figure \ref{fig:AIA_MAB_Rvine_Rvinestar1_simreal}.

\begin{figure}[ht]
    \centering
    \includegraphics[width=0.7\columnwidth]{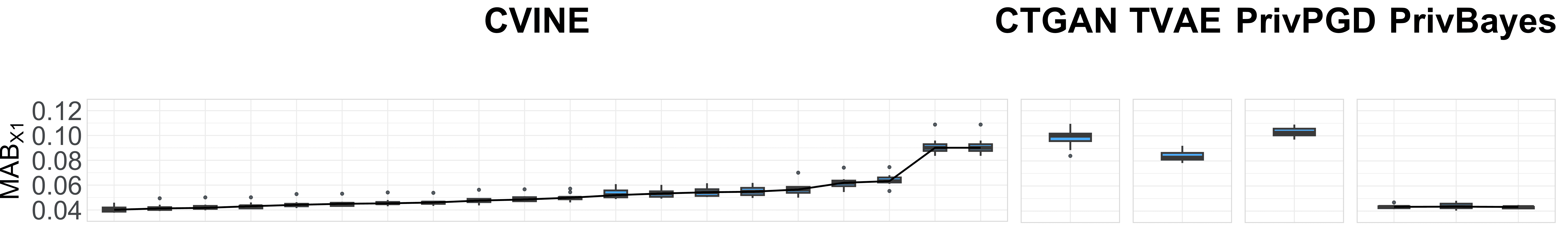}
    \includegraphics[width=0.7\columnwidth]{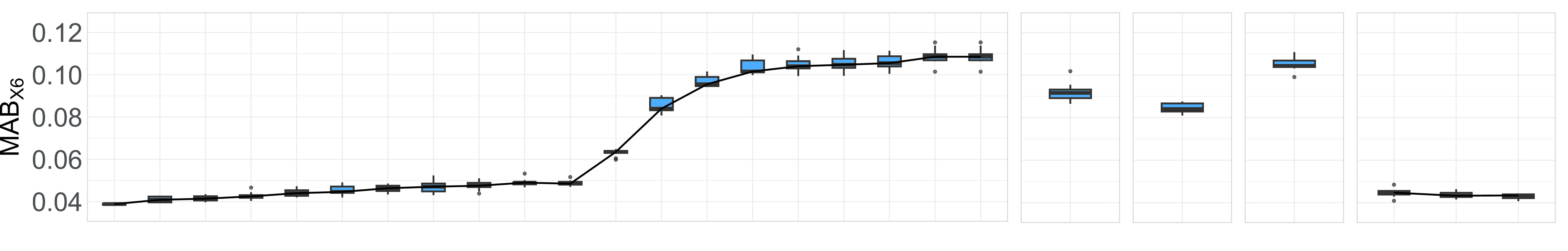}
     \includegraphics[width=0.7\columnwidth]{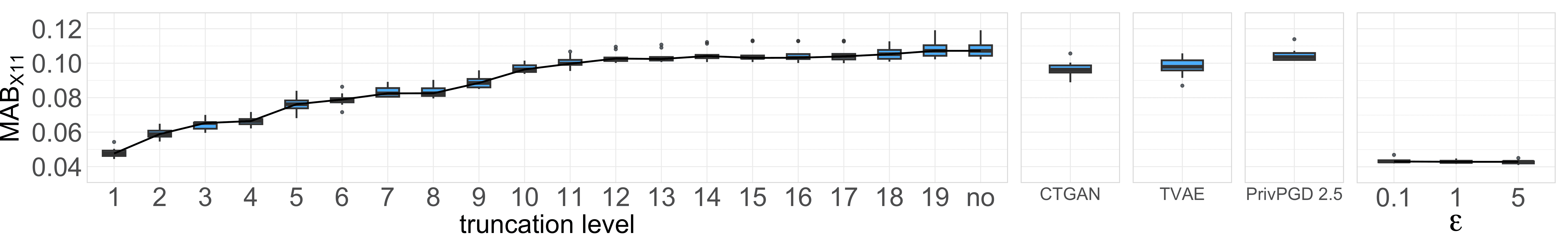}
    \caption{Simulated real data: Results of an AIA w.r.t. sensitive covariate $X_1$ (top row), $X_6$ (middle row) and $X_{11}$ (bottom row) measured by $MAB_j$. Synthetic data are generated with a C-vine for different truncation levels (left), CTGAN (2nd), TVAE (3rd), PrivPGD  with $\epsilon = 2.5$ and $ \delta= 10^{-5}$ (4th) and PrivBayes (right) for privacy parameter $\epsilon \in \{0.1, 1, 5\}$. Results are reported as box plots over 10 AIA game iterations. Parameters of the generative models and privacy attacks can be found in Appendix \ref{sec:model_and_attack_parameters}.
   }
    \label{fig:AIA_MAB_Cvine_competitors_simreal_X1_X6_X11}
\end{figure}

\begin{figure}[ht]
    \centering
    \includegraphics[width=0.7\columnwidth]{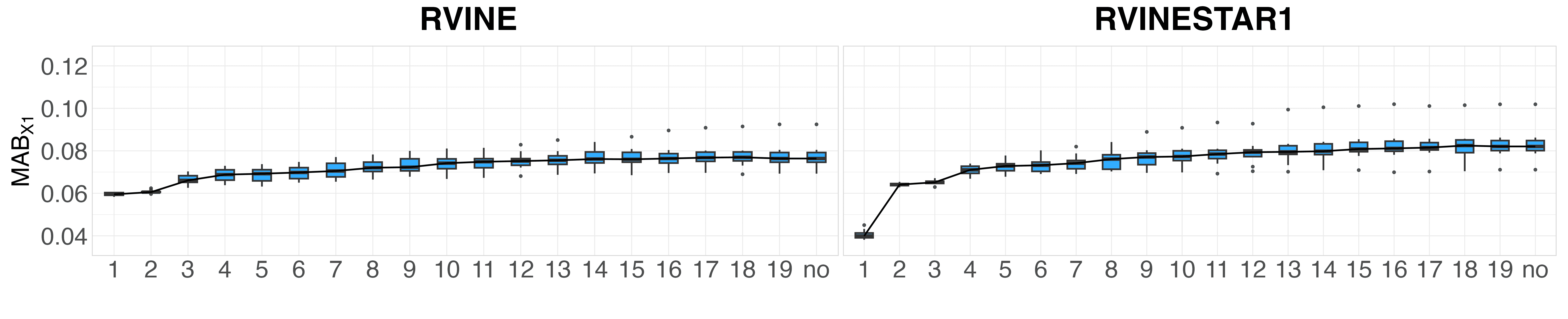}
    \includegraphics[width=0.7\columnwidth]{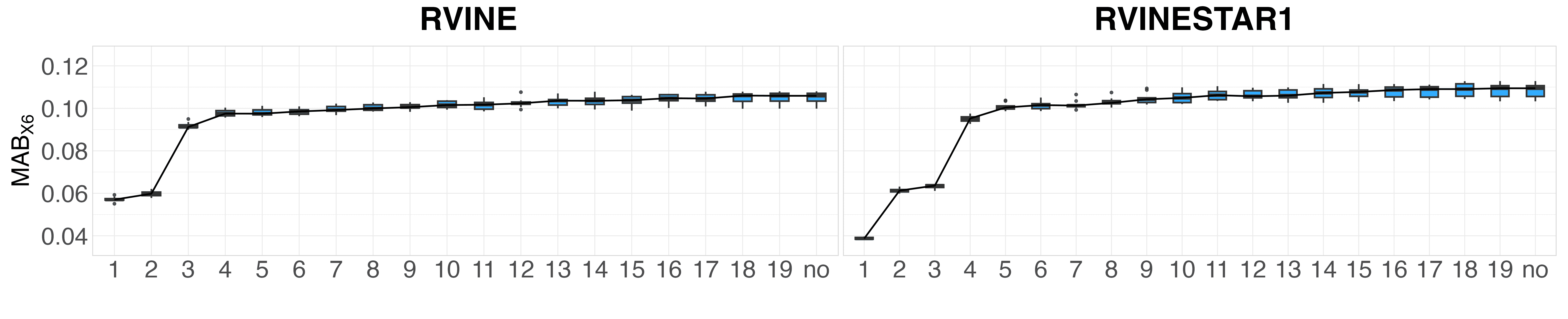}
    \includegraphics[width=0.7\columnwidth]{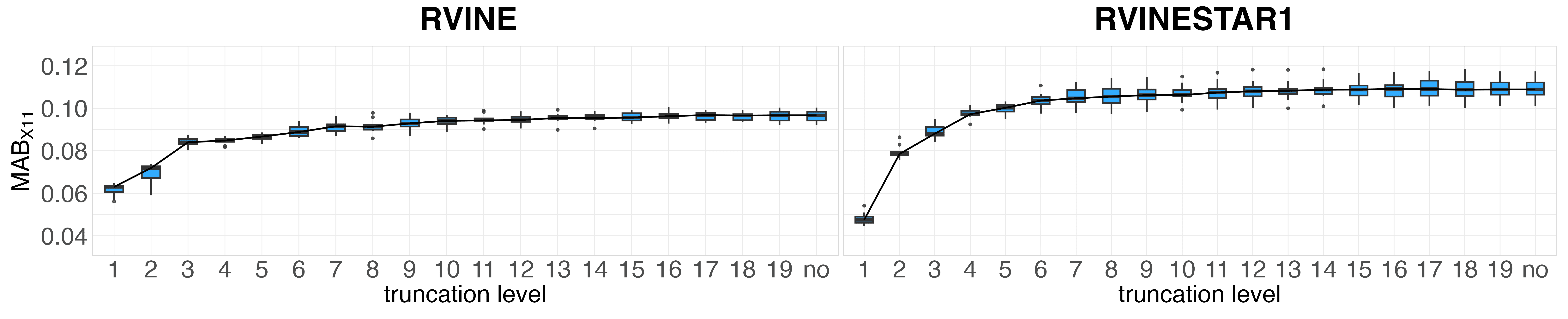}
    \caption{The lower the $MAB_j$ of AIA w.r.t. sensitive covariate $X_1$, $X_6$ and $X_{11}$, the more private the synthetic data generated by a R-vine and R-vine star1 for different truncation levels. Results are reported as box plots over 10 AIA game iterations. Parameters of the generative models and privacy attacks can be found in Appendix \ref{sec:model_and_attack_parameters}.}
    \label{fig:AIA_MAB_Rvine_Rvinestar1_simreal}
\end{figure}

\subsubsection{Privacy: Membership Inference Attack}\label{sec:simreal_results_MIA}

In Figure \ref{fig:MIA_Cvine_competitors_simreal} we observe that the PG of C-vine generated synthetic data is around 1 with low variation for \textit{all} truncation levels, indicating optimal privacy w.r.t MIA, independent of whether the target observation is randomly sampled (in blue) or an outlier (in orange). 

The PG of the C-vine is seemingly independent of truncation level because the estimation of the un-truncated C-vine done with Maximum Likelihood is robust w.r.t. adding/removing a single observation to the real data. The robustness of Maximum Likelihood estimation (MLE) depends on the sample size of the (real) data. Thus, for lower sample sizes than the ones we use here, we would expect to see a MIA PG that varies more with truncation level. 
As a consequence also a C-vine truncated at level $t < d$ shows the same robustness, because it results from the un-truncated C-vine by setting pair copulas in tree levels $t + 1$ and higher to independence. 

The results of the C-vine are similar to PrivBayes model for privacy parameter $\epsilon \in \{0.1, 1, 5\}$. CTGAN also gives average PG of about 1, but exhibits a high variation in the PG over different observations and repetitions of the MIA. Synthetic data generated with a TVAE provide very low protection against MIAs with a PG of around 0, hinting on that they include too much details of the real data which are harmful for privacy. The PrivPGD model performs poorly in terms of PG. Even though the covariate ranges are not directly inferred from the sensitive real data, this might have an impact on MIA privacy. 

Similar to the C-vine, the R-vine and R-vine star1 score a PG of 1 at median with little variation over different observations and repetitions of the attack, see Figure \ref{fig:MIA_Rvine_Rvinestar1_simreald20}.

\begin{figure}[ht]
    \centering
    \includegraphics[width=0.7\columnwidth]{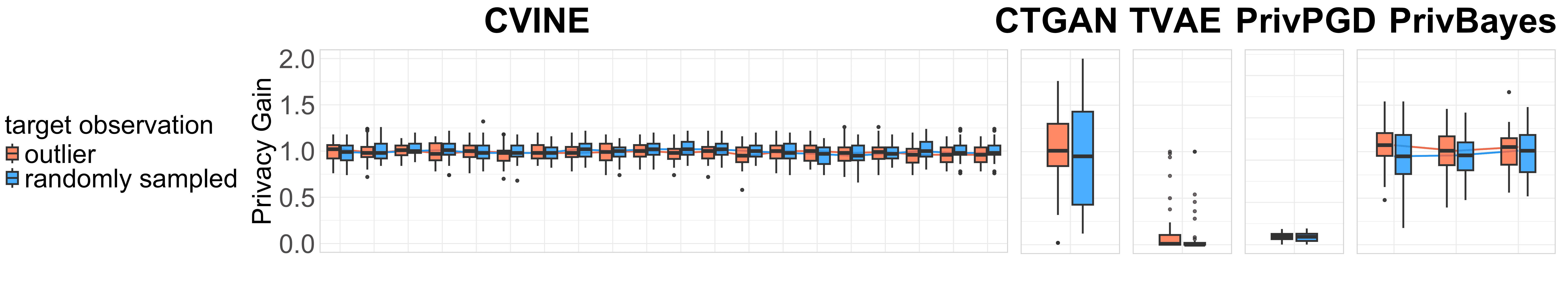}
    \includegraphics[width=0.7\columnwidth]{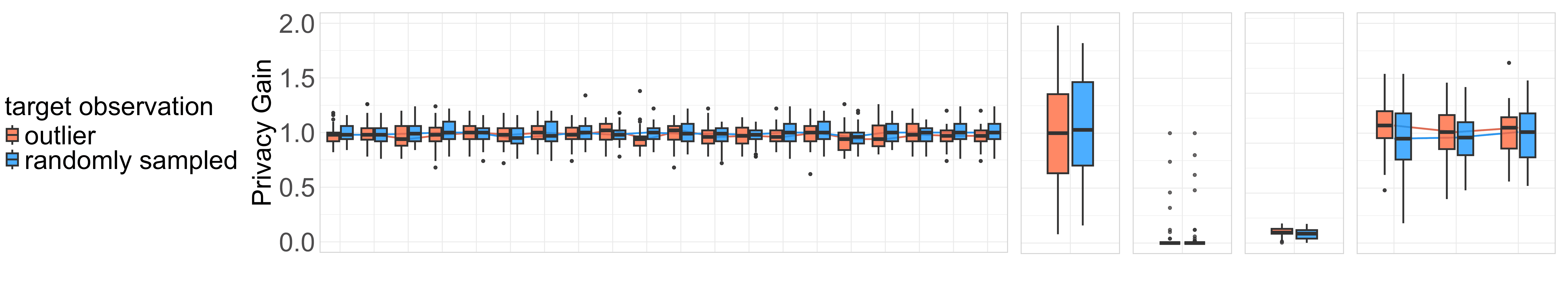}
     \includegraphics[width=0.7\columnwidth]{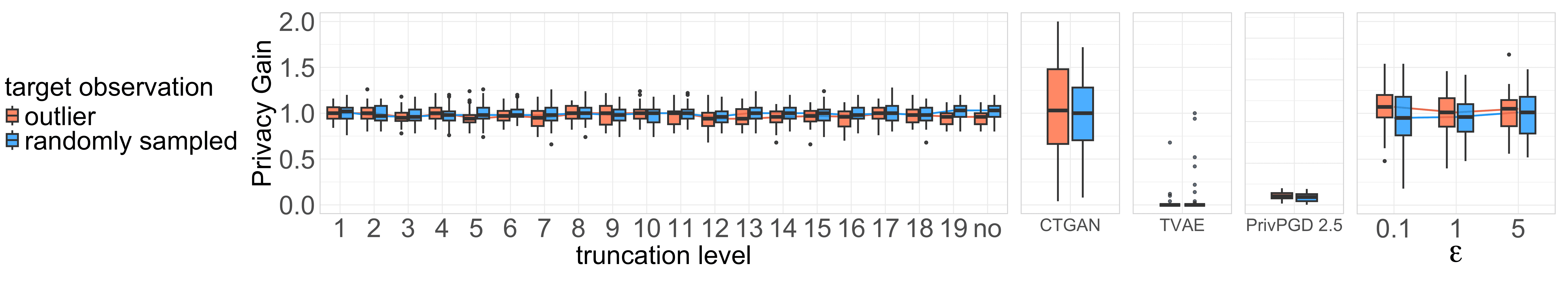}
    \caption{Simulated real data: PG under a MIA w.r.t randomly sampled target observations (in blue) and targets that are outliers  (in orange) w.r.t. $X_1$ (top row), $X_6$ (middle row) and $X_{11}$ (bottom row) of synthetic data are generated with a C-vine for different truncation levels (left), CTGAN (2nd), TVAE (3rd), PrivPGD with $\epsilon=2.5$ and $\delta=10^{-5}$ (4th) and PrivBayes for privacy parameter $\epsilon \in \{0.1, 1, 5\}$ (right). Results are reported as box plots over 10 MIA game iterations and 4 outlying (orange) and 5 randomly sampled (blue) target observations respectively. Parameters of the generative models and privacy attacks can be found in Appendix \ref{sec:model_and_attack_parameters}.
    }\label{fig:MIA_Cvine_competitors_simreal}
\end{figure}

\begin{figure}[ht]
    \centering
    \includegraphics[width=0.7\columnwidth]{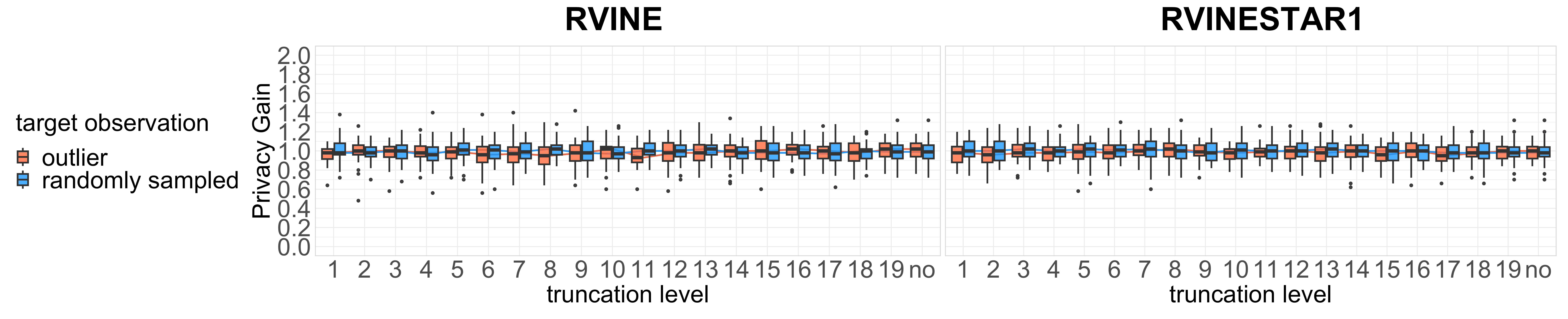}
    \includegraphics[width=0.7\columnwidth]{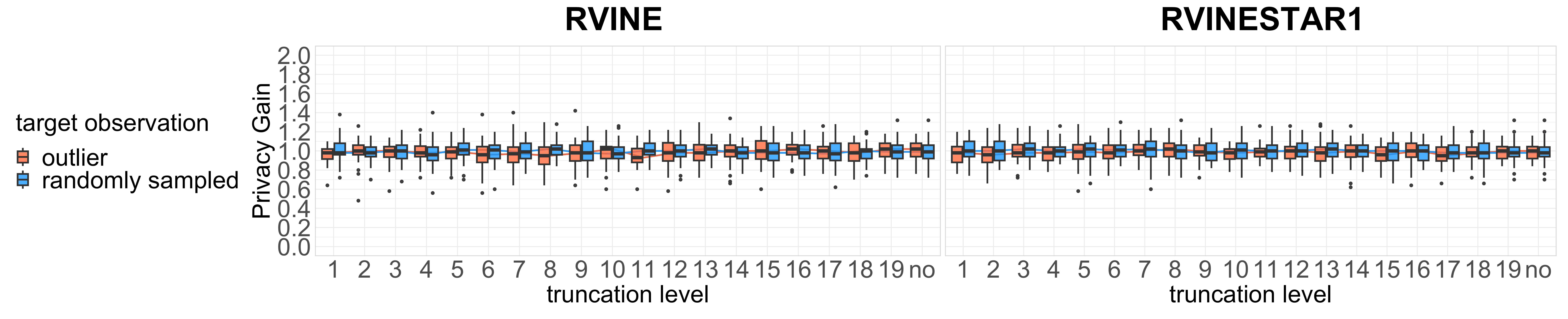}
    \includegraphics[width=0.7\columnwidth]{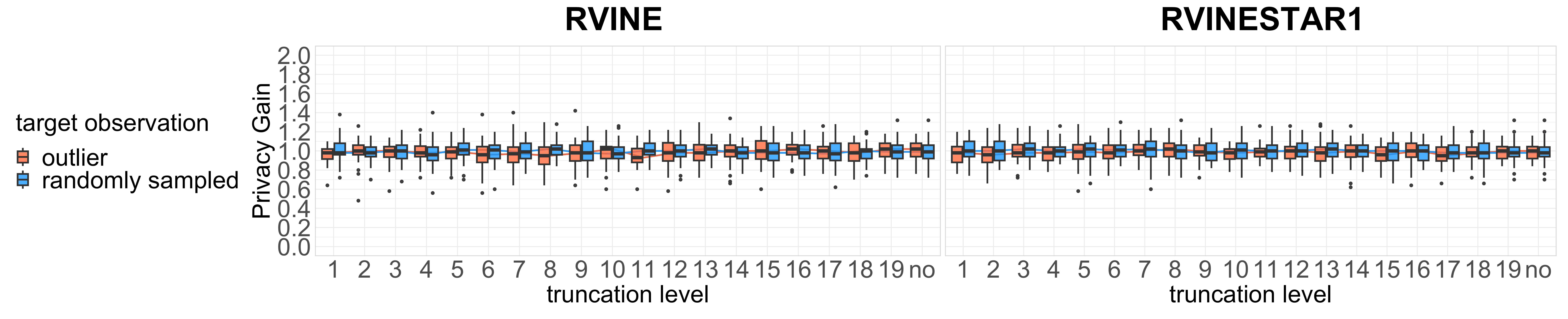}
    \caption{Simulated real data: Results of a MIA w.r.t randomly sampled target observations (in blue) and targets that are outliers  (in orange) w.r.t. $X_1$ (top row), $X_6$ (middle row) and $X_{11}$ (bottom row), measured by the $PG$. Synthetic data are generated with an R-vine (left) and an R-vine star1 (right) for different truncation levels. Results are reported as box plots over 10 MIA game iterations and 4 outlying (orange) and 5 randomly sampled (blue) target observations respectively Parameters of the generative models and privacy attacks can be found in Appendix \ref{sec:model_and_attack_parameters}. Parameters of the generative models and privacy attacks can be found in Appendix \ref{sec:model_and_attack_parameters}.}
    \label{fig:MIA_Rvine_Rvinestar1_simreald20}
\end{figure}

\subsubsection{Utility}\label{sec:simreal_results_utility}
For evaluating utility, 50 synthetic data sets of the same size as the simulated real data ($n=1000$) are generated from each model, a random forest classifier is trained on each of them and tested on a hold-out test data set of size $n_{test} = 250$.
From Figure \ref{fig:utility_Cvine_competitors_simreal} we observe that the C-vine generated synthetic data consistently outperform synthetic data generated from a CTGAN, PrivPGD and a PrivBayes model for all truncation levels. Only the TVAE scores a higher $AUC(\bm{y}^*, \hat{\bm{w}}^*)$ and comes closest to the performance of the classifier trained on real data of $AUC(\bm{y}^*, \hat{\bm{y}}^*) = 0.908$. Considering its high MAB under an AIA in Figure \ref{fig:AIA_MAB_Cvine_competitors_simreal_X1_X6_X11} and low PG under a MIA in Figure \ref{fig:MIA_Cvine_competitors_simreal}, the TVAE seems to model the real data too closely, thus violating privacy.

R-vine and R-vine star 1 generated synthetic data are as useful as C-vine generated synthetic data, see Figure \ref{fig:utility_RvineRvinestar1_simreal}. As they perform worse in terms of AIA privacy, we see ourselves confirmed in our choice of a C-vine as the core of TVineSynth.

\begin{figure}[ht]
    \centering
    \includegraphics[width=0.7\columnwidth]{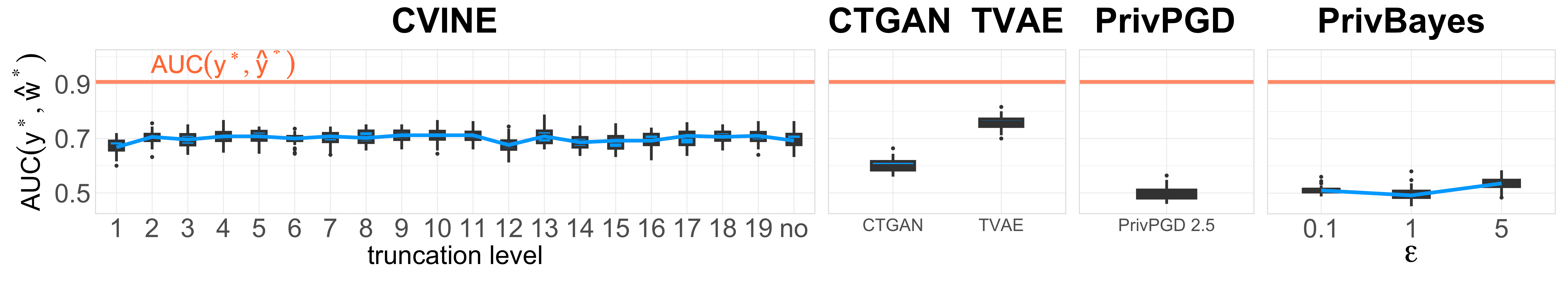}
    \caption{
    Simulated real data: Utility of synthetic data generated with a C-vine for different truncation levels (left), CTGAN (2nd), TVAE (3rd), PrivPGD with $\epsilon=2.5$ and $\delta=10^{-5}$ (4th) and PrivBayes for privacy parameter $\epsilon \in \{0.1, 1, 5\}$ (right) measured with $AUC(\bm{y}^*, \hat{\bm{w}}^*)$ (blue) w.r.t.  a random forest classifier and compared to $AUC(\bm{y}^*, \hat{\bm{y}}^*)$ (orange). Results are reported as box plots over 50 AUC values obtained from 50 synthetic data sets per generative model. Parameters of the generative models can be found in Appendix \ref{sec:model_and_attack_parameters}.}
    \label{fig:utility_Cvine_competitors_simreal}
\end{figure}

\begin{figure}[ht]
    \centering
    \includegraphics[width=0.7\columnwidth]{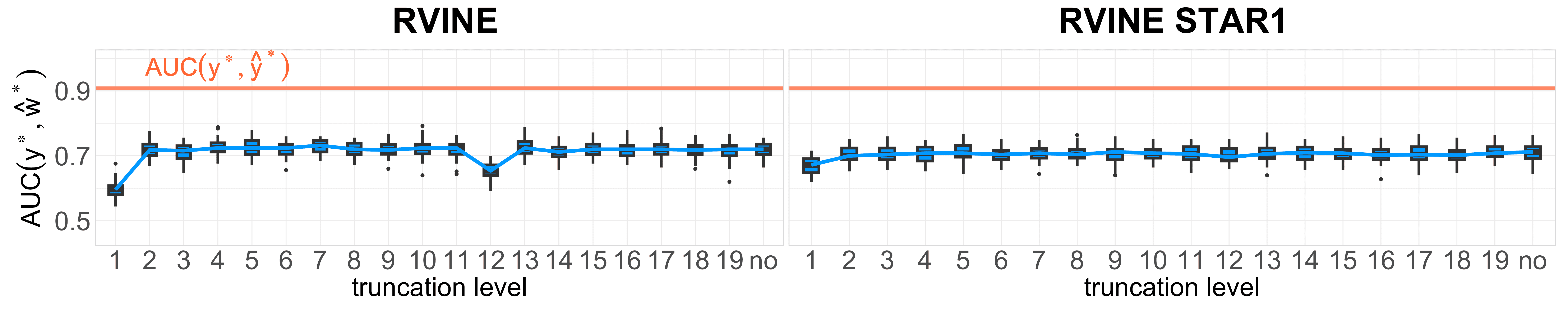}
    \caption{Simulated real data: Enforcing the first vine tree to be a star with $Y$ as the center as for R-vine star1 yields higher utility at truncation level 1 than compared to R-vine. Utility is consistently high for all truncation levels. It is measured with $AUC(\bm{y}^*, \hat{\bm{w}}^*)$ w.r.t.  a random forest classifier. Results are reported as box plots over 50 AUC values obtained from 50 synthetic data sets per generative model. Parameters of the generative models can be found in Appendix \ref{sec:model_and_attack_parameters}.}
    \label{fig:utility_RvineRvinestar1_simreal}
\end{figure}

\subsubsection{Privacy-Utility Plots}\label{sec:2dPrivUt_AIAX1_simreal}

Figures \ref{fig:2dPrivUt_simreal_X6AIA_MAB_Cvine}, \ref{fig:2d_priv-ut_simreal_AIA_MAB} and \ref{fig:2d_priv-ut_simreal_X1X6X11_MIA} illustrate the privacy-utility balance per model.
In the case of sensitive feature $X_1$ in Figure \ref{fig:2d_priv-ut_simreal_AIA_MAB}, the C-vine offers a well balanced privacy-utility trade-off for truncation between levels 11 and 18.
If we truncate at level 11 for sensitive covariate $X_6$ and at level 1 for sensitive covariate $X_{11}$, the TVineSynth generated synthetic data offer a privacy-utility balance superior to the one of the competitor models, see Figure \ref{fig:2d_priv-ut_simreal_AIA_MAB}.
Figure \ref{fig:2d_priv-ut_simreal_X1X6X11_MIA} displays the privacy-utility plots w.r.t. a MIA and and sensitive features $X_1$, $X_6$ and $X_{11}$. As already observed in Sections \ref{sec:simreal_results_MIA} and \ref{sec:simreal_results_utility}, all models except for TVAE and PrivPGD score a PG of around 1. Compared to the competitor models that are able to protect sensitive covariates against a MIA, the C-vine scores the highest utility.

\begin{figure}[ht]
    \centering
    \includegraphics[width=0.367\columnwidth]{images/2d_PrivUt_X1_AIA_MAB_simreal_PrivPGD_DP.png}
    \includegraphics[width=0.2685\columnwidth]{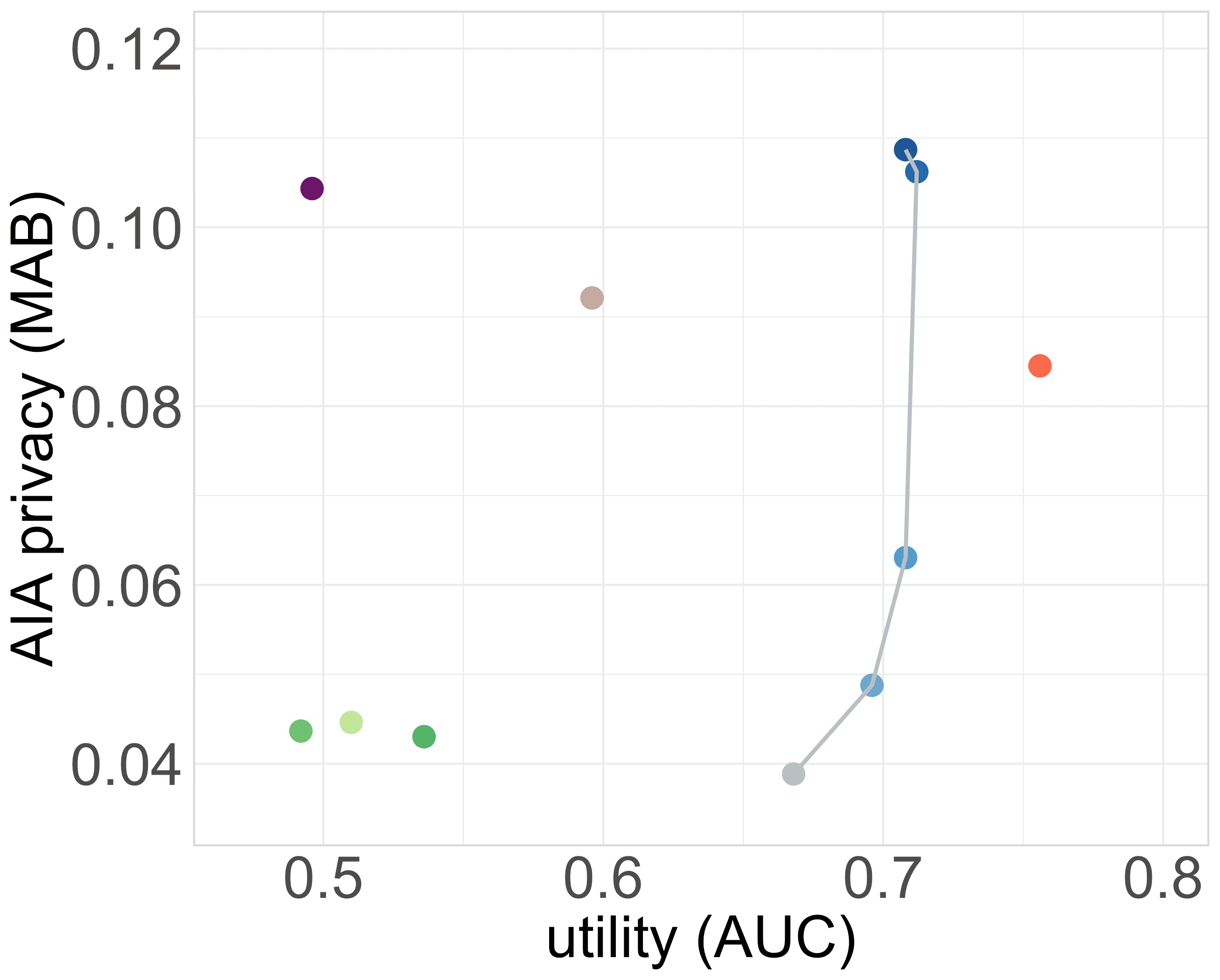}
    \includegraphics[width=0.2685\columnwidth]{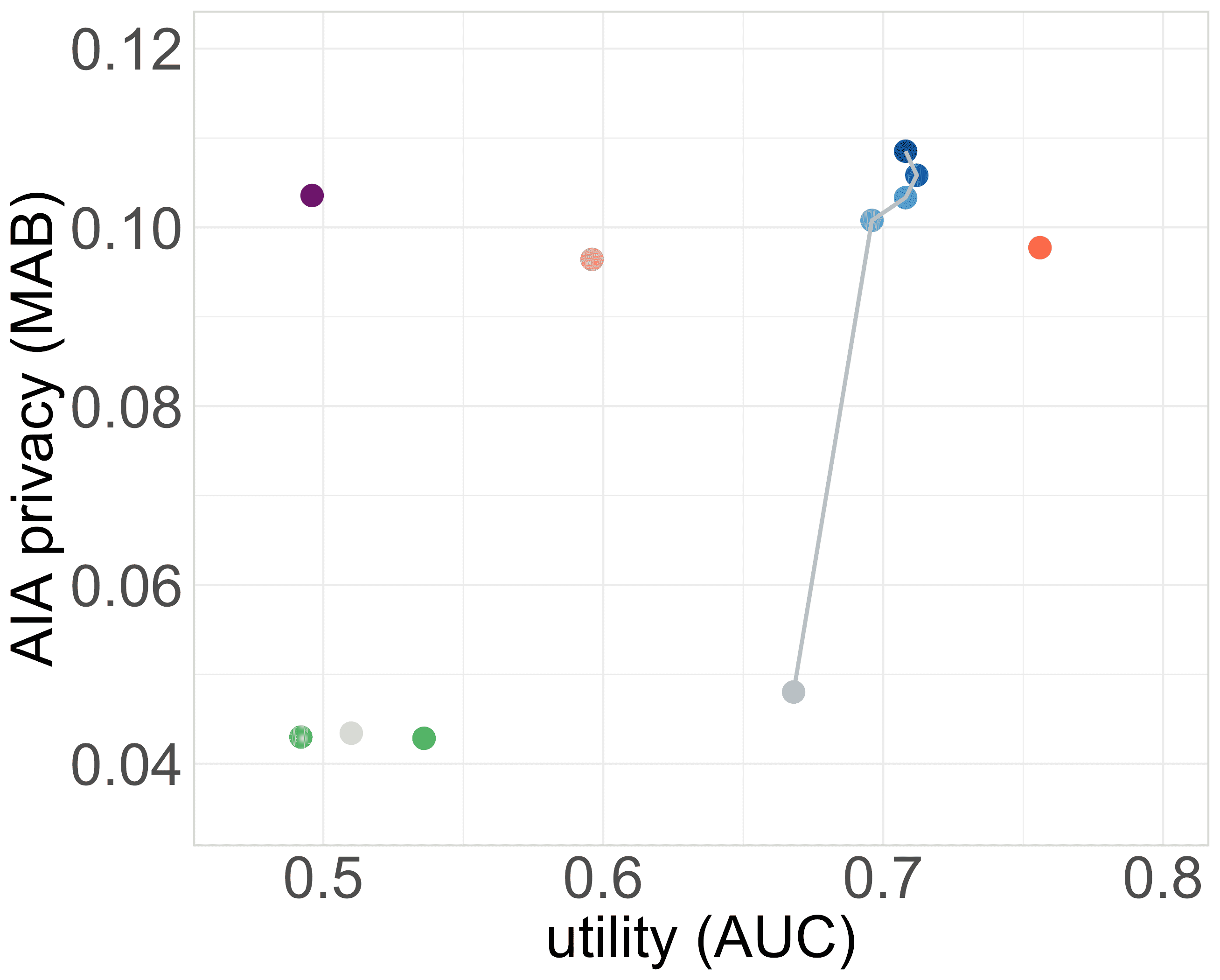}
    \caption{Simulated real data: Privacy-utility plot w.r.t. AIA and sensitive features $X_1$ (left), $X_6$ (middle) and $X_{11}$ (right) of a C-vine with truncation levels $t \in \{1, 11, 12, 18\}$ and no truncation, PrivPGD  with $\epsilon = 2.5$ and $ \delta= 10^{-5}$, PrivBayes model with $\epsilon \in \{0.1,1,5\}$, CTGAN and TVAE on simulated real data of Section \ref{subsec:results_simulated_realId20}. For AIA privacy the $MAB_j$ is reported, for utility the median over 50 synthetic data sets is reported. Parameters of the generative models and privacy attacks can be found in Appendix \ref{sec:model_and_attack_parameters}.}
    \label{fig:2d_priv-ut_simreal_AIA_MAB}
\end{figure}

\begin{figure}[ht]
    \centering
    \includegraphics[width=0.367\columnwidth]{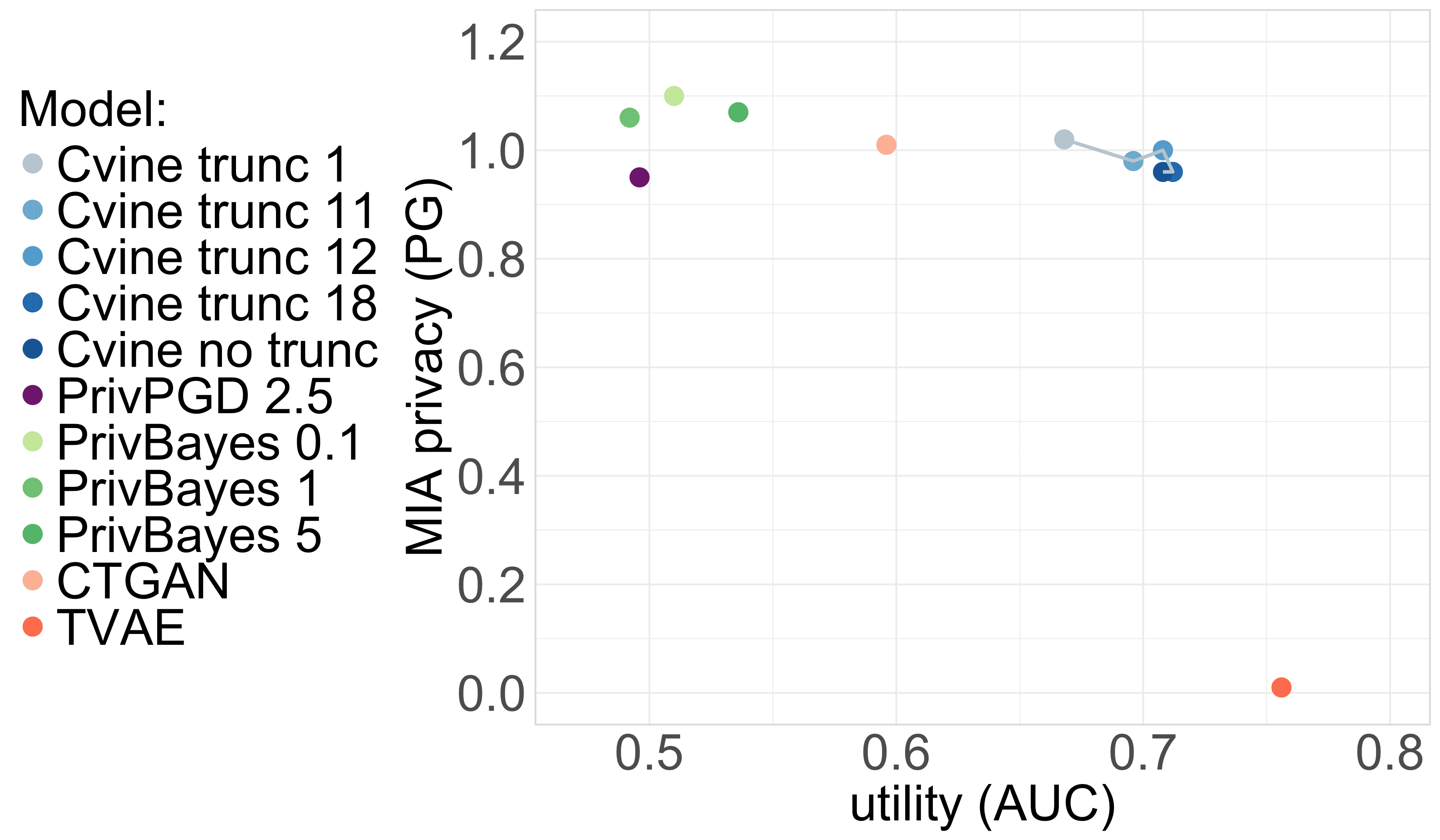}
    \includegraphics[width=0.2685\columnwidth]{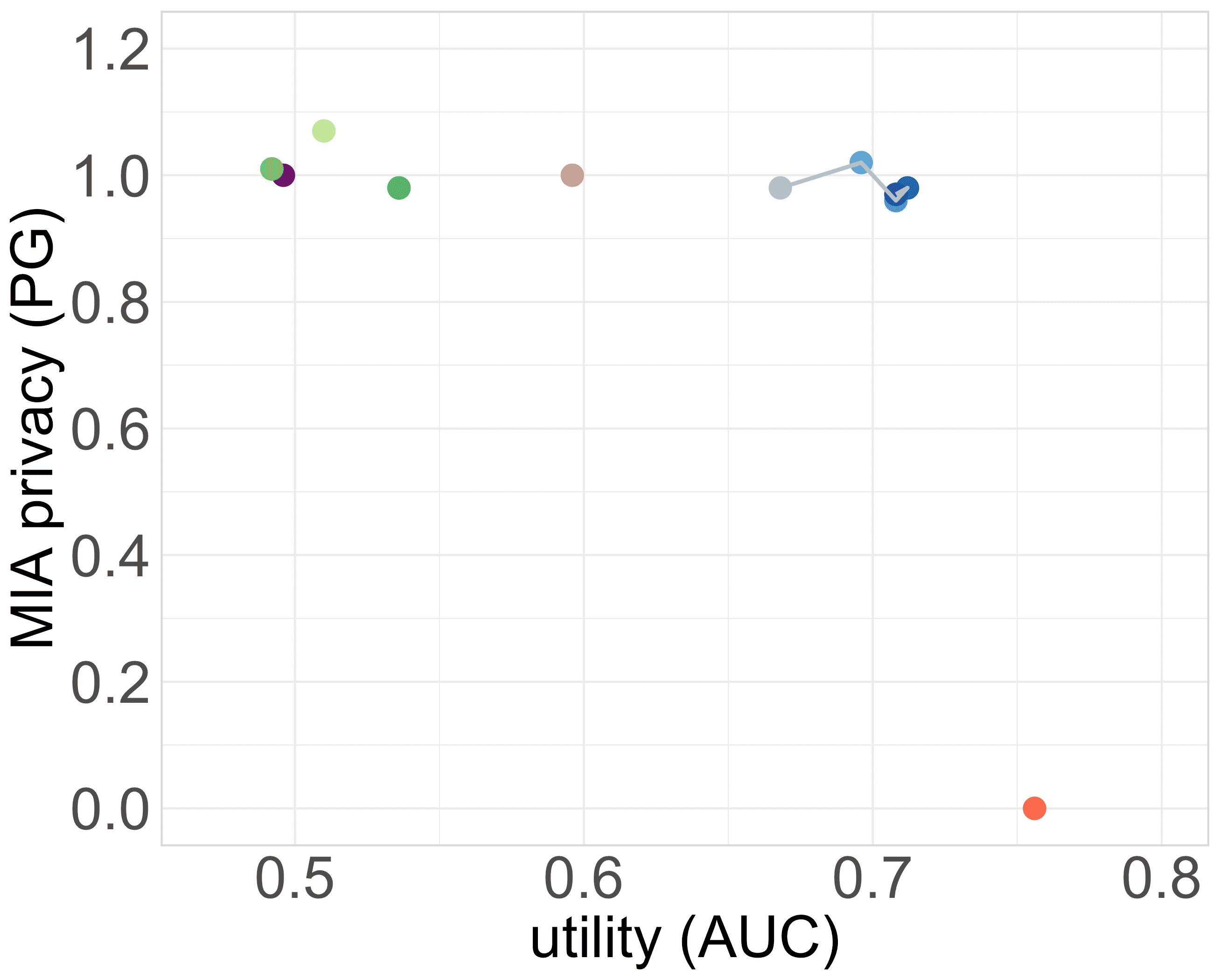}
    \includegraphics[width=0.2685\columnwidth]{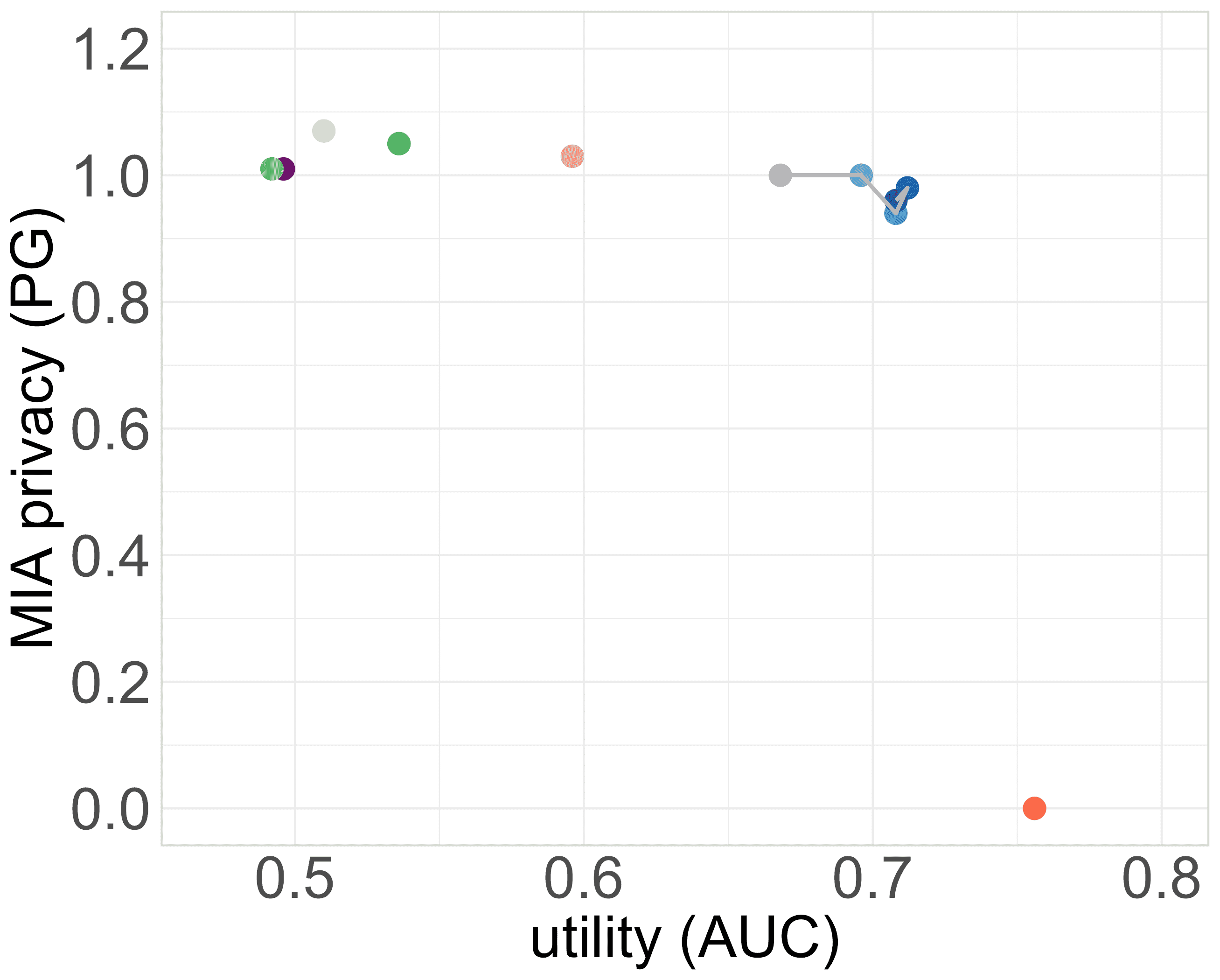}
    \caption{Simulated real data: Privacy-utility plot w.r.t. MIA and sensitive features $X_1$ (left), $X_6$ (middle) and $X_{11}$ (right) of a C-vine with truncation levels $t \in \{1,11,12,18\}$ and no truncation, PrivBayes model with $\epsilon \in \{0.1,1,5\}$, CTGAN and TVAE on simulated real data of Section \ref{sec:simulated_real_data_d20_appendix}. The median MIA PG over all game iterations and utility for 50 synthetic data sets are reported. Parameters of the generative models and privacy attacks can be found in Appendix \ref{sec:model_and_attack_parameters}.} 
    \label{fig:2d_priv-ut_simreal_X1X6X11_MIA}
\end{figure}

\subsubsection{Statistical Discrepancy}\label{sec:statFidelity_simreal}
We measure the statistical discrepancy between joint real and synthetic distribution with $\alpha$-precision, $\beta$-recall and authenticity $(P_{\alpha}, R_{\beta}, A)$ introduced by \citet{alaa2022faithful}. Their definition can be found in Appendix \ref{sec:statistical_fidelity}.

From Figure \ref{fig:statfidelity_allmodels_simreal} it can be observed that increasing truncation level of the C-vine improves fidelity and diversity of the synthetic data while it decreases their generalization. While PrivBayes generated synthetic data score very poorly in diversity (around 0) and moderately in fidelity (0.63 - 0.67), they achieve very high authenticity of around 1, indicating that the synthetic data do not reflect the real data sufficiently well. PrivPGD's diversity and authenticity (0.7 and 0.63) compare to the one of the C-vine truncated at level 1 (0.71 and 0.66), but achieves a fidelity of 0.76 that is considerably lower than the one of the C-vine truncated at level 1 (0.94). The TVAE performs very comparably to the C-vine truncated at level 10 in terms of statistical fidelity. The rather high generalization and rather low diversity of CTGAN generated data appear plausible w.r.t. the the model's results of Section \ref{sec:simreal_results_utility}.

\begin{figure}[ht]
    \centering
    \includegraphics[width=\columnwidth]{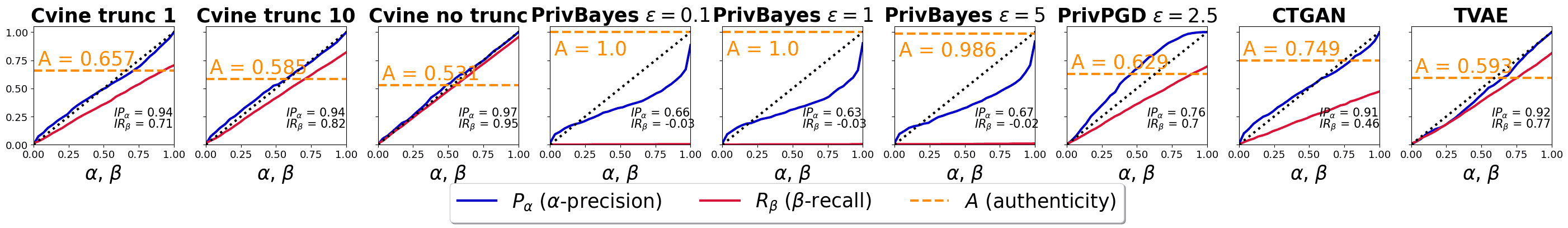}
    \caption{Simulated real data: Fidelity ($\alpha$-precision), diversity ($\beta$-recall) and generalization (authenticity) of synthetic data generated in the order C-vine for truncation at levels 1 and 10 and no truncation, PrivBayes for $\epsilon \in \{0.1, 1, 5\}$, PrivPGD with $\epsilon=2.5$, $\delta=10^{-5}$, CTGAN and TVAE. Parameters of the generative models can be found in Appendix \ref{sec:model_and_attack_parameters}.
    } \label{fig:statfidelity_allmodels_simreal}
\end{figure}

\subsubsection{AIA Results in Terms of WCAB}\label{sec:appendix_simreal_AIA_additional_results_WCAB}

Figure \ref{fig:AIA_WCAB_Cvine_competitors_simreal_X1_X6_X11} displays the AIA results in terms of WCAB that give a worst-case assessment of how much information covariates in the synthetic data leak on the sensitive feature. They support and further strengthen the observations on the MAB of Figure \ref{fig:AIA_MAB_Cvine_competitors_simreal_X1_X6_X11}. For sensitive feature $X_1$ truncating the C-vine at level 18 or lower providing a worst-case privacy superior to the differentially private PrivBayes. The same holds if the sensitive covariate is $X_6$ and we truncate the C-vine at level 11 or lower. Even for the sensitive feature $X_{11}$, which informs $Y$, the WCAB of the C-vine is comparable or lower than that of the competitors.

\begin{figure}[ht]
    \centering
    \includegraphics[width=0.9\columnwidth]{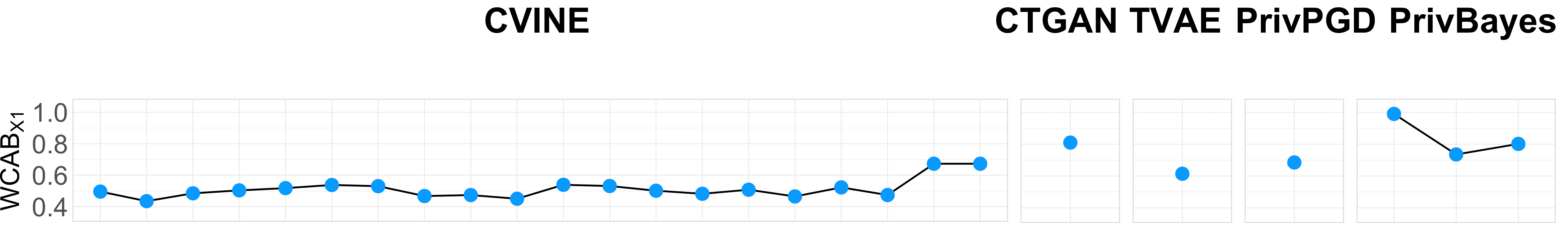}
    \includegraphics[width=0.9\columnwidth]{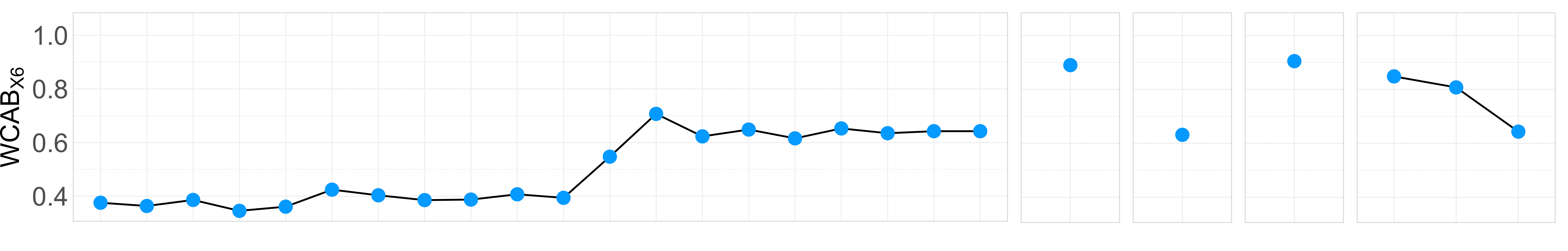}
     \includegraphics[width=0.9\columnwidth]{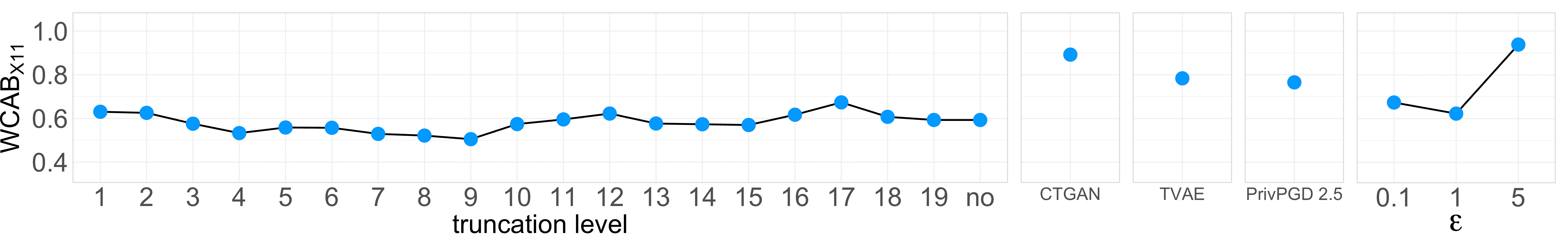}
    \caption{Simulated real data: Results of an AIA w.r.t. sensitive covariate $X_1$ (top row), $X_6$ (middle row) and $X_{11}$ (bottom row) measured by $WCAB_{j^*}$. Synthetic data are generated with a C-vine for different truncation levels (left), CTGAN (2nd), TVAE (3rd), PrivPGD  with $\epsilon = 2.5$ and $ \delta= 10^{-5}$ (4th) and PrivBayes (right) for privacy parameter $\epsilon \in \{0.1, 1, 5\}$. Results are reported over 10 AIA game iterations. Parameters of the generative models and privacy attacks can be found in Appendix \ref{sec:model_and_attack_parameters}.
   }
    \label{fig:AIA_WCAB_Cvine_competitors_simreal_X1_X6_X11}
\end{figure}

\begin{figure}[ht]
    \centering
    \includegraphics[width=1\columnwidth]{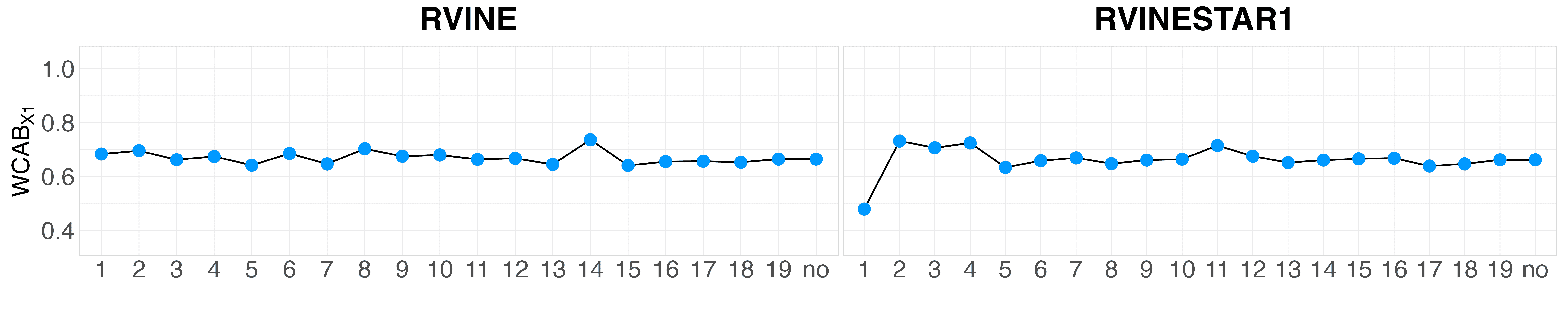}
    \includegraphics[width=1\columnwidth]{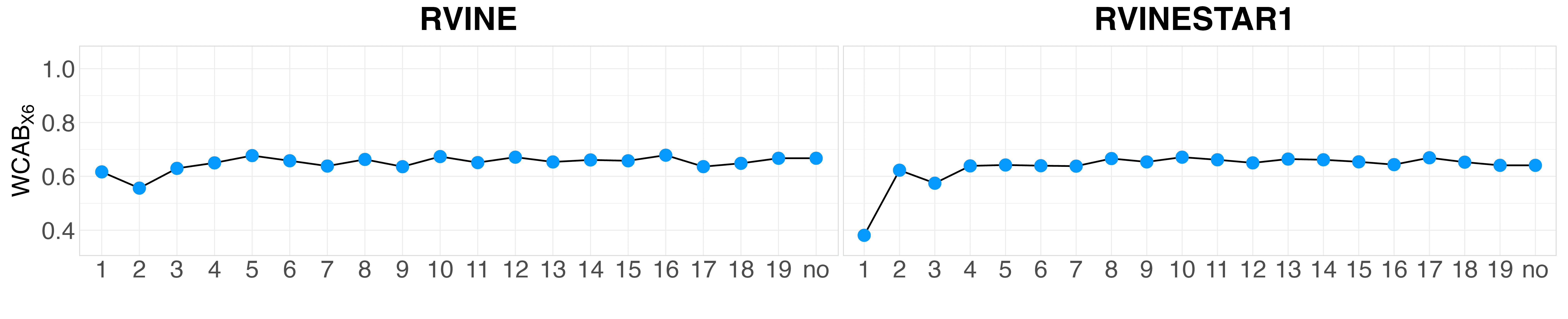}
    \includegraphics[width=1\columnwidth]{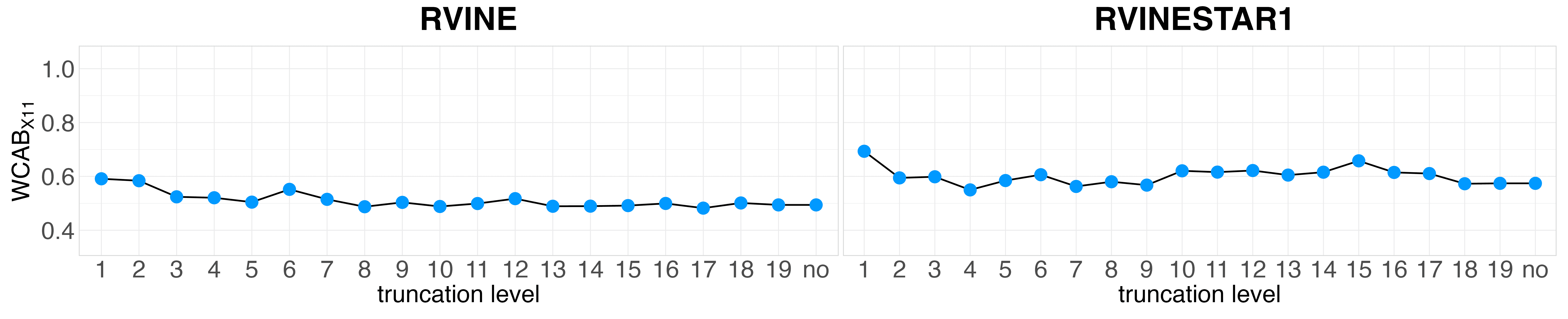}
    \caption{Simulated real data: The lower the $WCAB_{j^*}$ of AIA w.r.t. sensitive covariate $X_1$, $X_6$ and $X_{11}$, the more private the synthetic data generated by a R-vine and R-vine star1 for different truncation levels. Results are reported over 10 AIA game iterations. Parameters of the generative models and privacy attacks can be found in Appendix \ref{sec:model_and_attack_parameters}.}
    \label{fig:AIA_WCAB_Rvine_Rvinestar1_simreal}
\end{figure}

\subsubsection{AIA Results in Terms of MSE}\label{sec:appendix_simreal_AIA_additional_results_MSE}

The top row of Figure \ref{fig:AIA_Cvine_competitors_simreal_MSE} corresponds to the case where the sensitive covariate $X_1$ is less important for classifying $Y$ correctly. In this situation the star shaped C-vine combined with truncation at level 18 or lower is able to cut away sensitive dependencies that harm privacy but do not contribute to utility. If the sensitive covariate is $X_6$, again playing a less important role for classifying $Y$ correctly, we observe in the second row of Figure
\ref{fig:AIA_Cvine_competitors_simreal_MSE} that truncating a C-vine at level 11 or lower offers a high level of privacy for outliers (in orange). These findings are consistent with our observations from the results in terms of the MAB in Figure \ref{fig:AIA_MAB_Cvine_competitors_simreal_X1_X6_X11} and the correlation structure in Figures \ref{fig:corr_I_d20_append} and \ref{fig:Cvine_synth_data_corr_real_data_I_d20}.
Thus, the C-vine offers a high level of privacy for outliers (in orange) w.r.t. AIA, which is comparable to the one of the DP PrivBayes model at a very strict privacy budget of $\epsilon = 0.1$, and better than the one of DP PrivPGD. Simultaneously, the C-vine achieves high utility for all truncation levels, outperforming PrivBayes and PrivPGD by far, see Figure \ref{fig:utility_Cvine_competitors_simreal}. Sensitive covariate $X_{11}$ on the other hand shows pairwise association with $Y$, see Figure \ref{fig:corr_I_d20_append}. In this case it is necessary to truncate the C-vine at level 1 to provide privacy w.r.t. AIA, see bottom row of Figure \ref{fig:AIA_Cvine_competitors_simreal_MSE}.

\begin{figure}[ht]
    \centering
    \includegraphics[width=0.9\columnwidth]{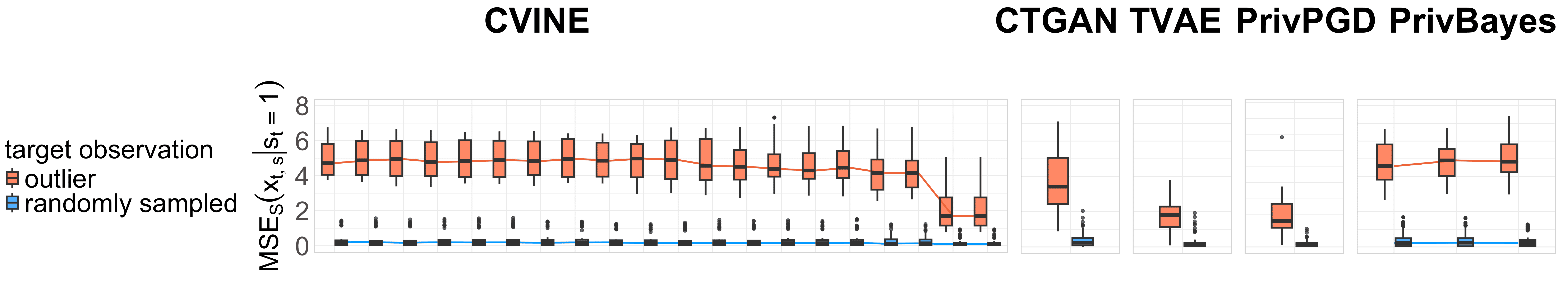}
    \includegraphics[width=0.9\columnwidth]{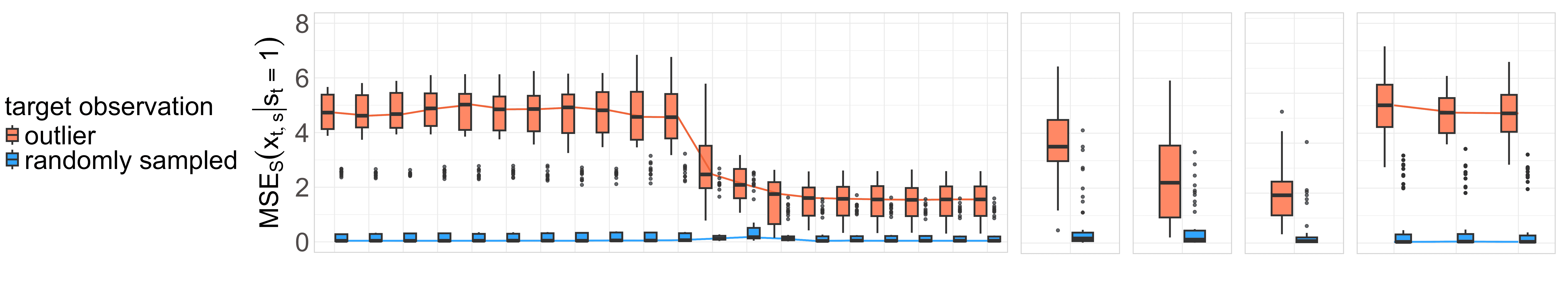}
     \includegraphics[width=0.9\columnwidth]{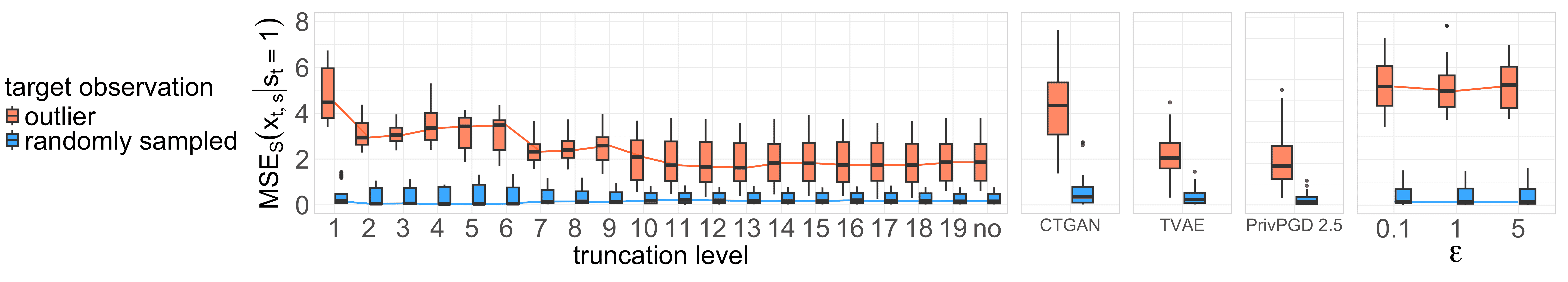}
    \caption{Simulated real data: Results of an AIA w.r.t. sensitive covariate $X_1$ (top row), $X_6$ (middle row) and $X_{11}$ (bottom row) and randomly sampled (blue) and handpicked, outlying target observations (orange) measured by $MSE_S(x_{t,s} | s_t = 1)$. Synthetic data are generated with a C-vine for different truncation levels (left), CTGAN (2nd), TVAE (3rd), PrivPGD  with $\epsilon = 2.5$ and $ \delta= 10^{-5}$ (4th) and PrivBayes (right) for privacy parameter $\epsilon \in \{0.1, 1, 5\}$. Results are reported as box plots over 10 AIA game iterations and 4 outlying (orange) and 5 randomly sampled (blue) target observations respectively. Parameters of the generative models and privacy attacks can be found in Appendix \ref{sec:model_and_attack_parameters}.
   }
    \label{fig:AIA_Cvine_competitors_simreal_MSE}
\end{figure}

From Figure \ref{fig:AIA_Cvine_competitors_simreal_MSE} we observe that randomly sampled target observations (in blue) that are close to the median of the sensitive covariate show very low $MSE_S(x_{t,s} | s_t = 1)$.
This raises the question of whether this actually presents a privacy breach. It does \textit{not}, if it suffices for the attacker to merely guess the mean of the respective sensitive covariate without regarding the other non-sensitive covariates! In other words, if in the attacker's regression model, the coefficients of the respective non-sensitive covariates are (close to) 0, the synthetic data does not offer more information on the sensitive covariate value than what we really wish to learn from the synthetic data, i.e. aggregate information such as the mean of a covariate. For this reason, we assess the regression coefficients in the AIA model for C-vine generated synthetic data and sensitive covariates $X_1$, see Figure \ref{fig:AIA_Cvine_X1_realI_d20_regCoeff}. 
There we indeed find that the results of Figure \ref{fig:AIA_Cvine_competitors_simreal_MSE} do not present an impairment of privacy. For sensitive covariate $X_1$, the regression coefficients of non-sensitive covariates displayed in Figure \ref{fig:AIA_Cvine_X1_realI_d20_regCoeff} are at median 0 for all target observations up to truncation level 16, as we would expect from Figure \ref{fig:Cvine_synth_data_corr_real_data_I_d20}. This means that even though $MSE_S(x_{t,s} | s_t = 1)$ is low in those cases, the attacker's guess is merely based on the mean of the respective sensitive covariate and guessing a covariate's mean correctly does not leak private information but confirms the synthetic data still allow to learn aggregate information about the real data as it is our goal.

These considerations build the basis for Definition \ref{def:MAB} of the MAB.

\begin{figure}[ht]
    \centering
    \begin{tabular}{cccc}
        \includegraphics[width=0.18\textwidth]{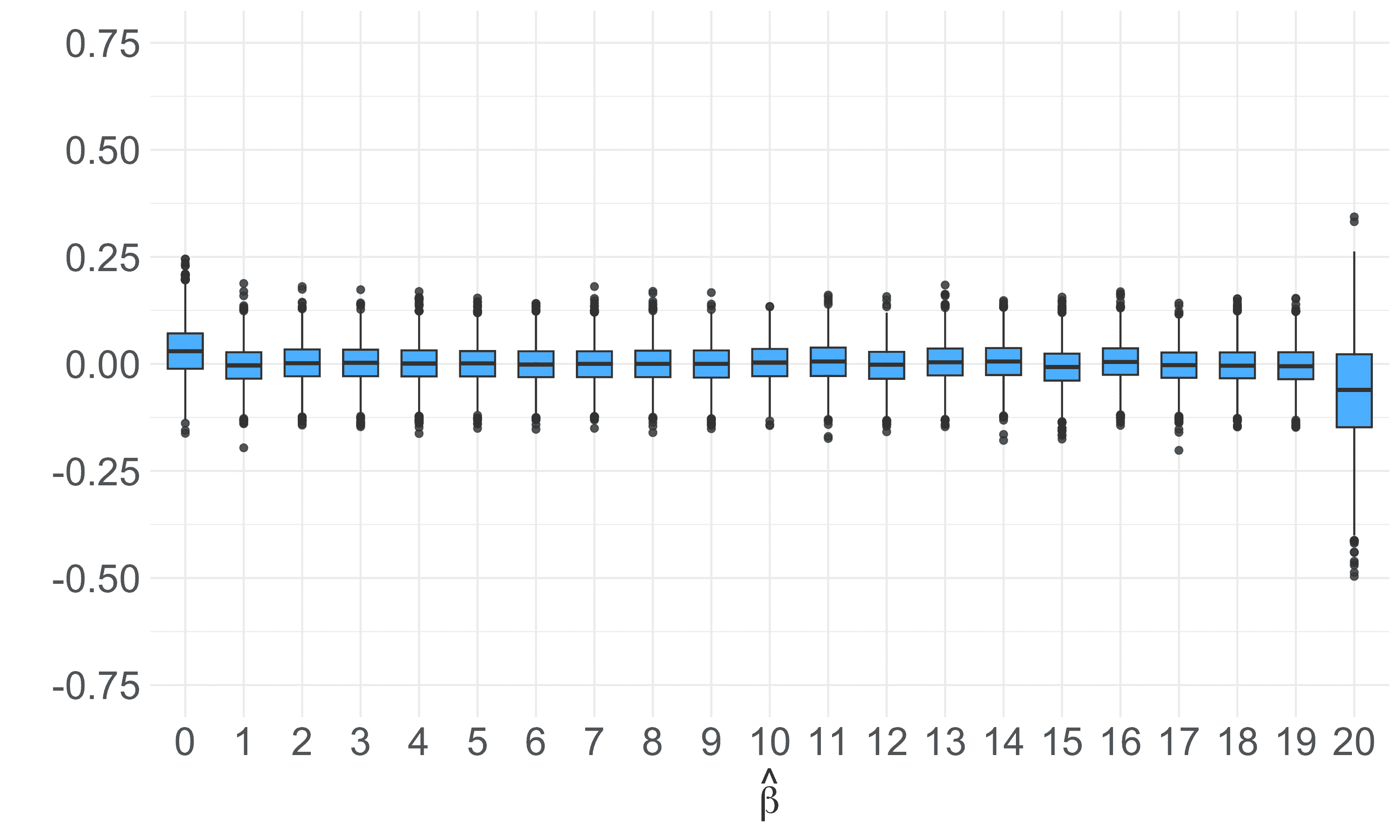} &  
        \includegraphics[width=0.18\textwidth]{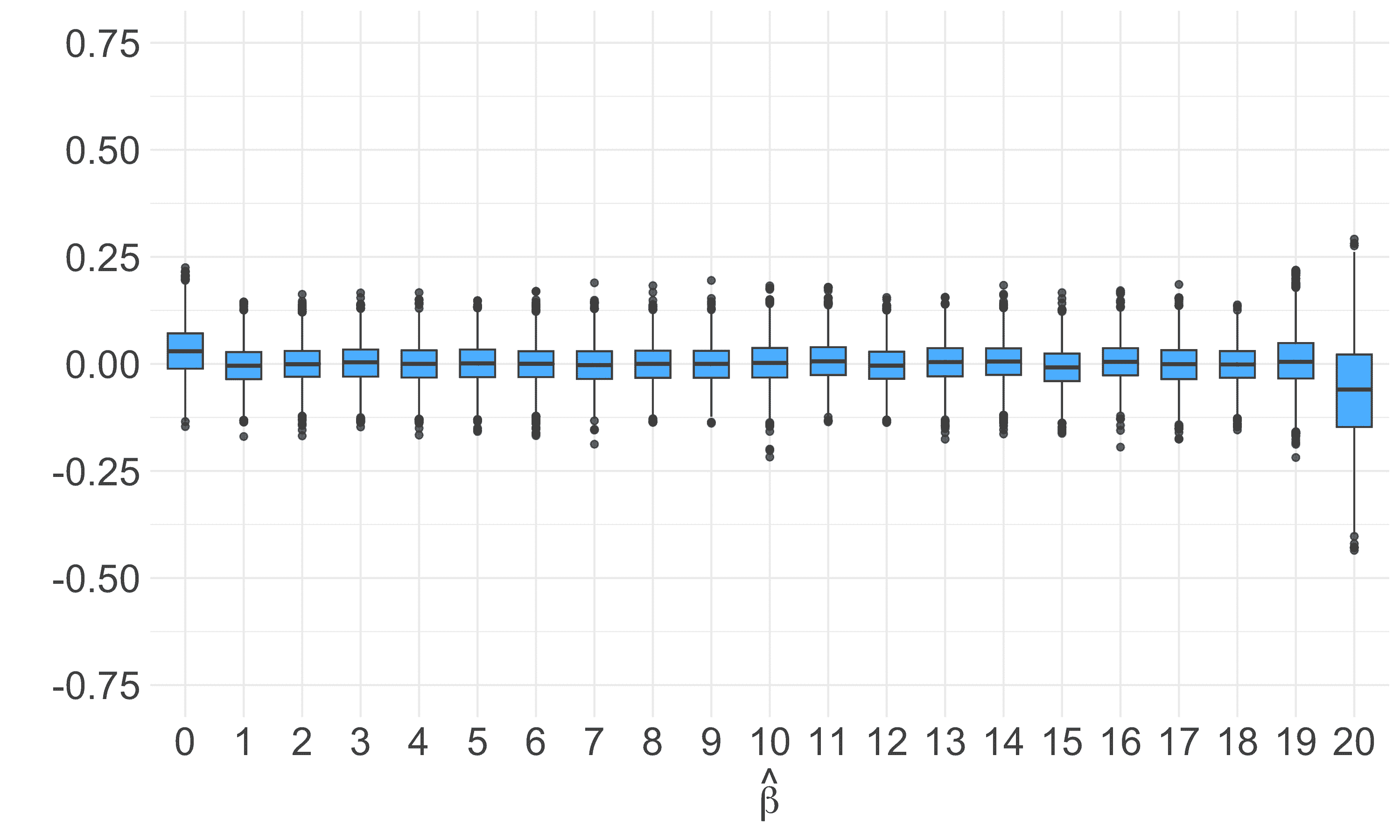} &
        \includegraphics[width=0.18\textwidth]{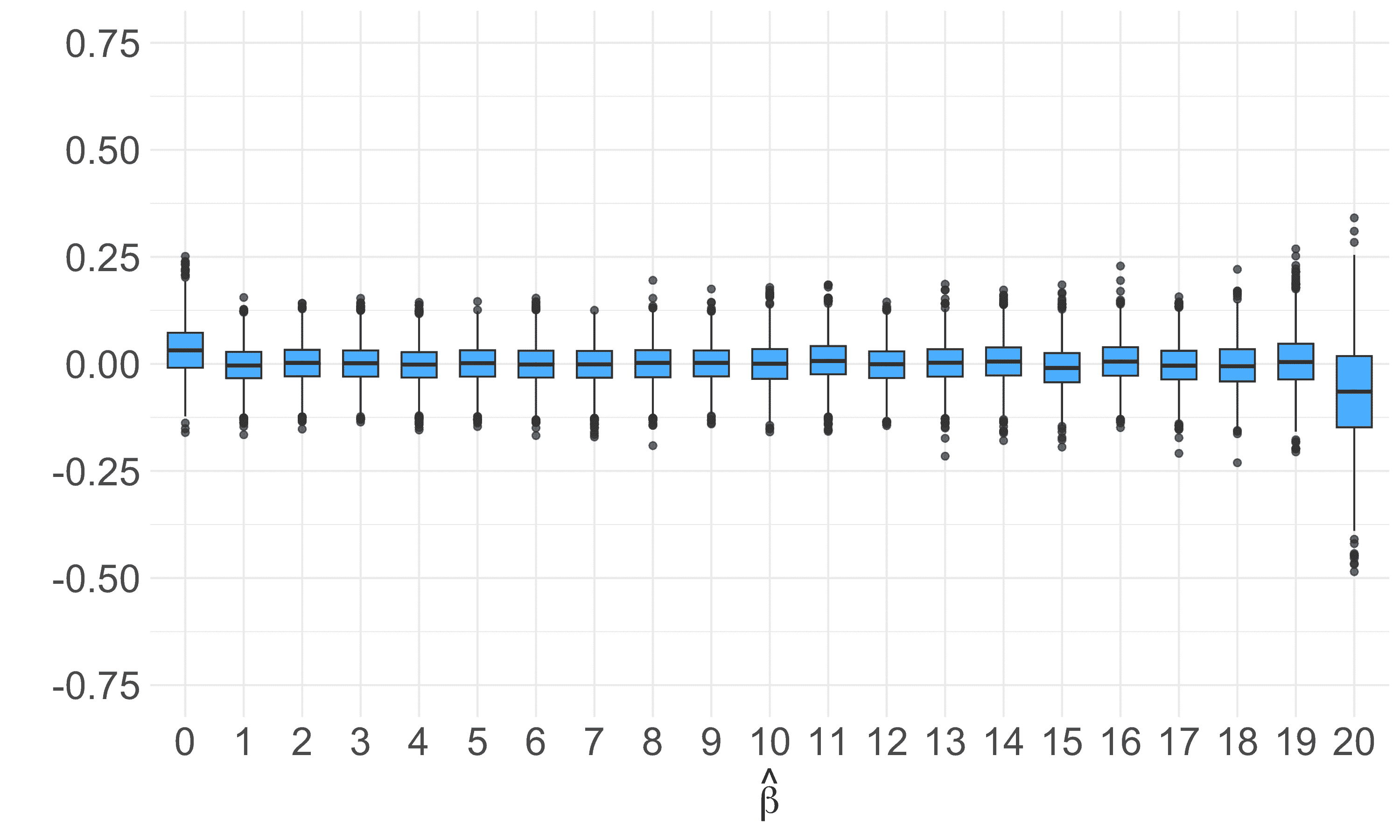} &
        \includegraphics[width=0.18\textwidth]{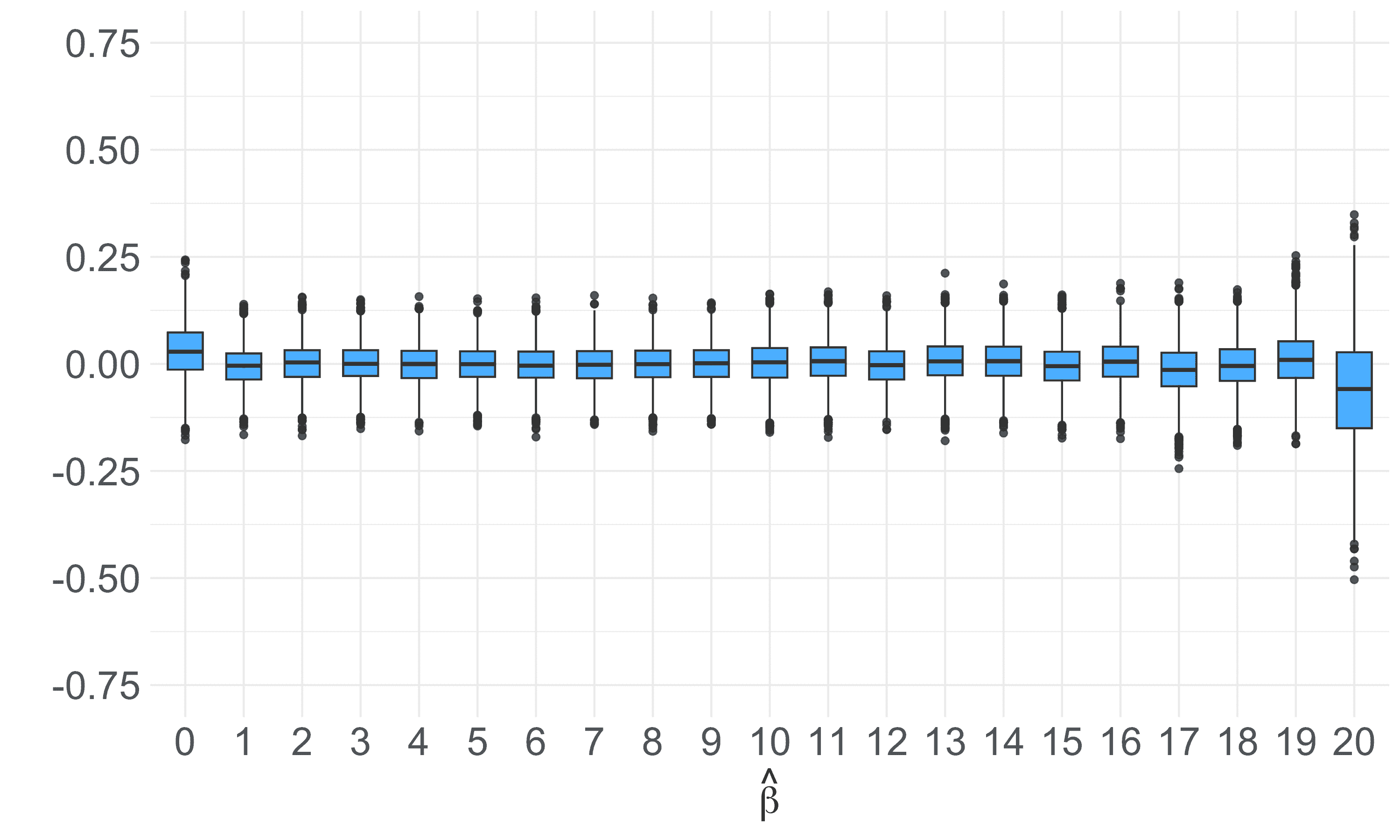} \\
        (a) Truncation at 1. & (b) Truncation at 2. & (c) Truncation at 3. & (d) Truncation at 4. \\
        \includegraphics[width=0.18\textwidth]{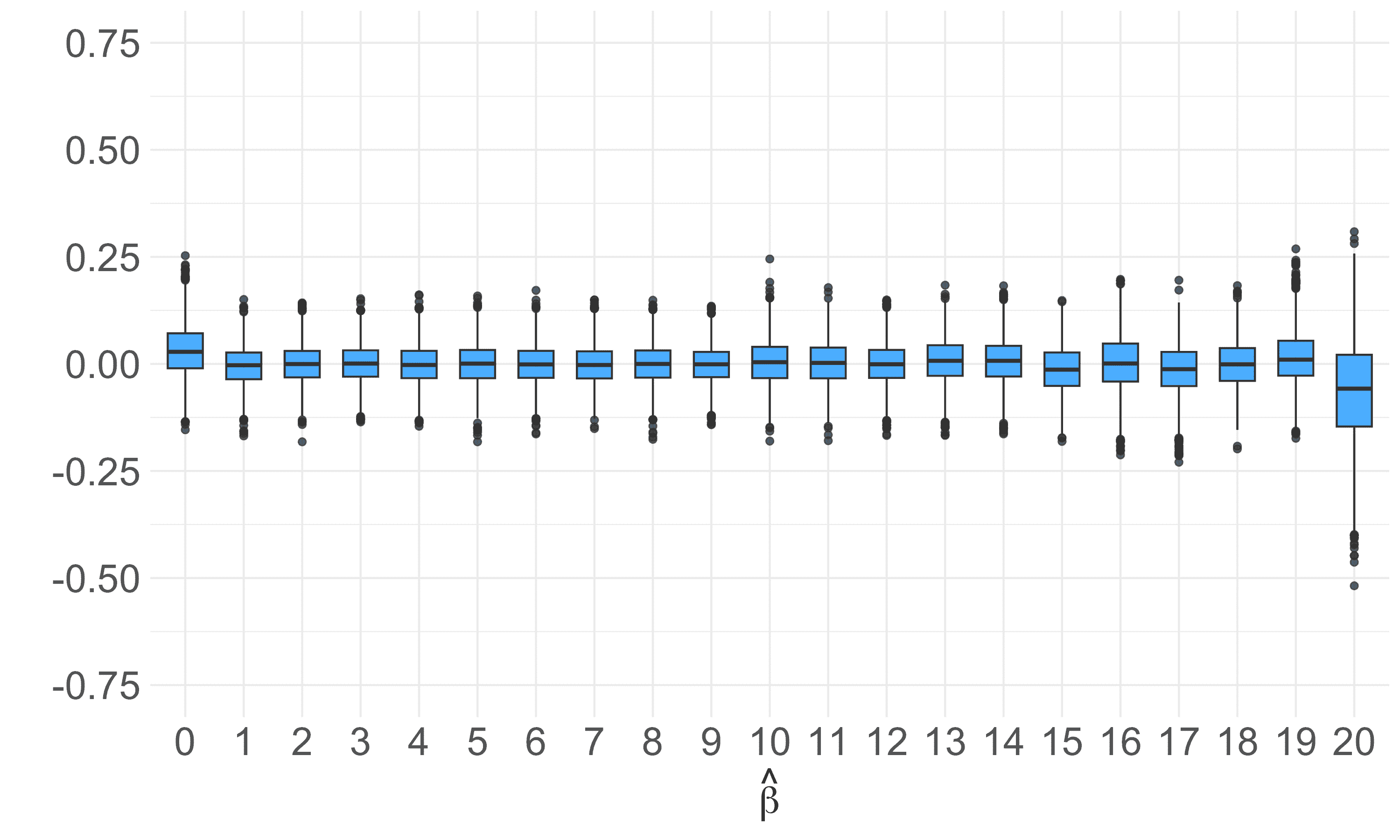} &
        \includegraphics[width=0.18\textwidth]{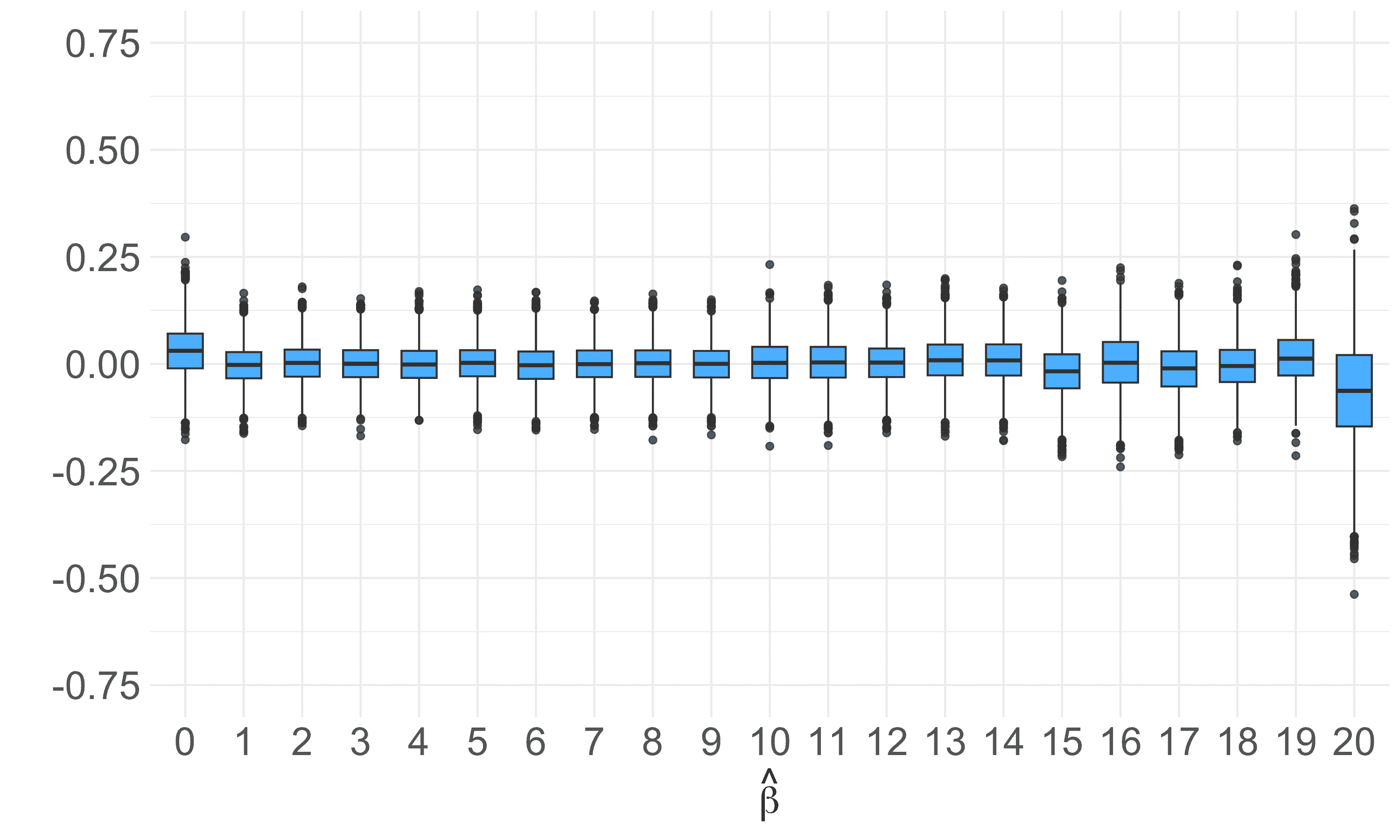} &
        \includegraphics[width=0.18\textwidth]{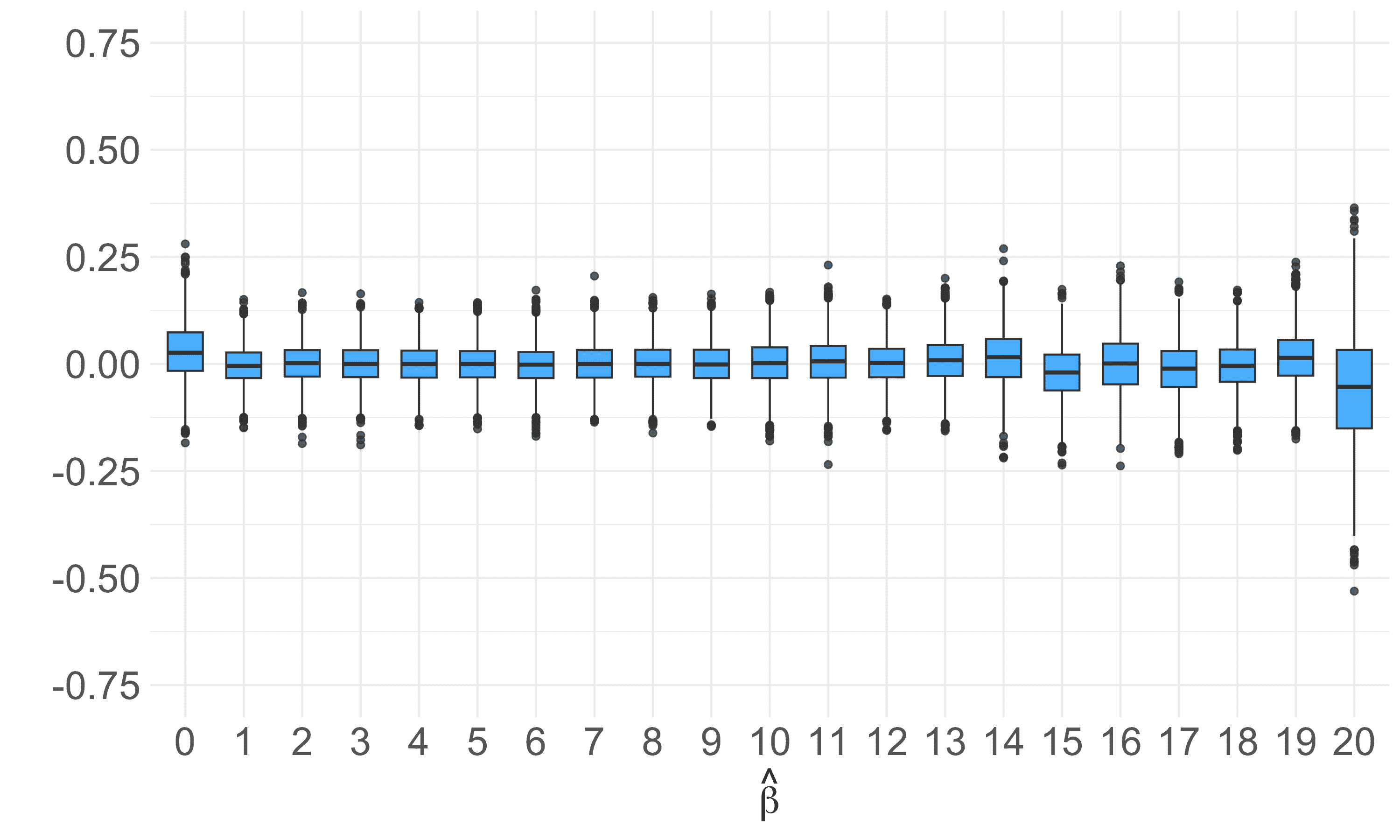} &
        \includegraphics[width=0.18\textwidth]{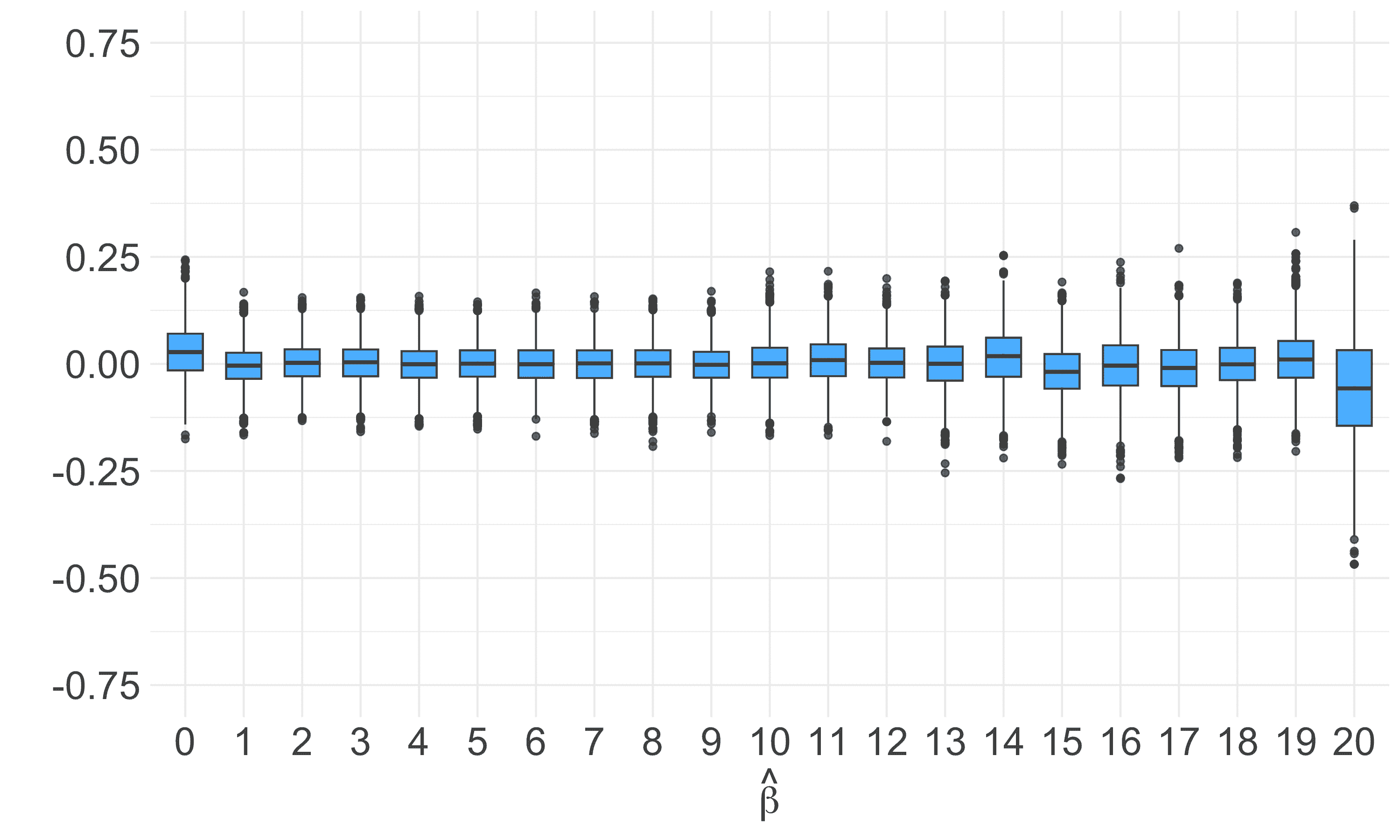} \\
        (e) Truncation at 5. & (f) Truncation at 6. & (g) Truncation at 7. & (h) Truncation at 8. \\
        \includegraphics[width=0.18\textwidth]{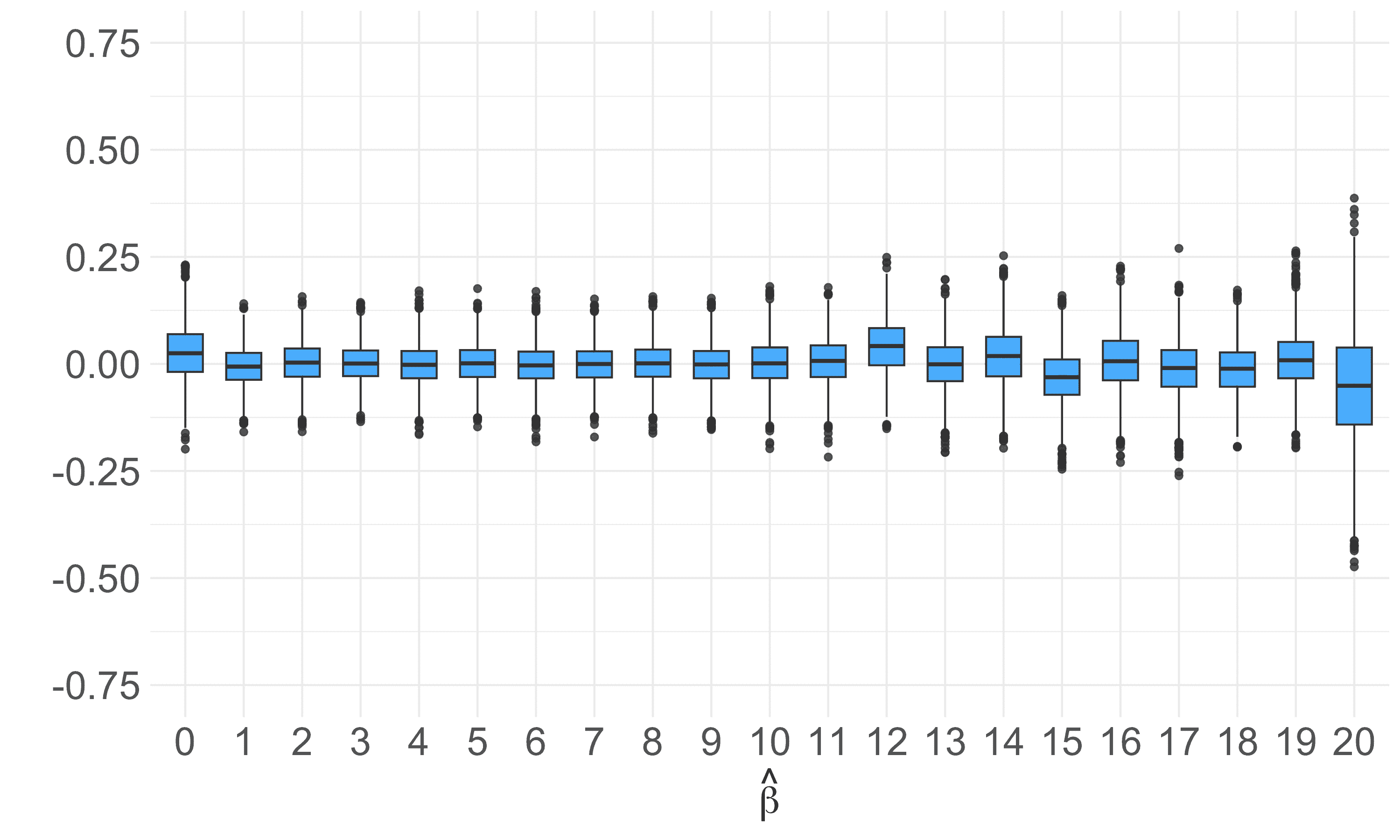} &
        \includegraphics[width=0.18\textwidth]{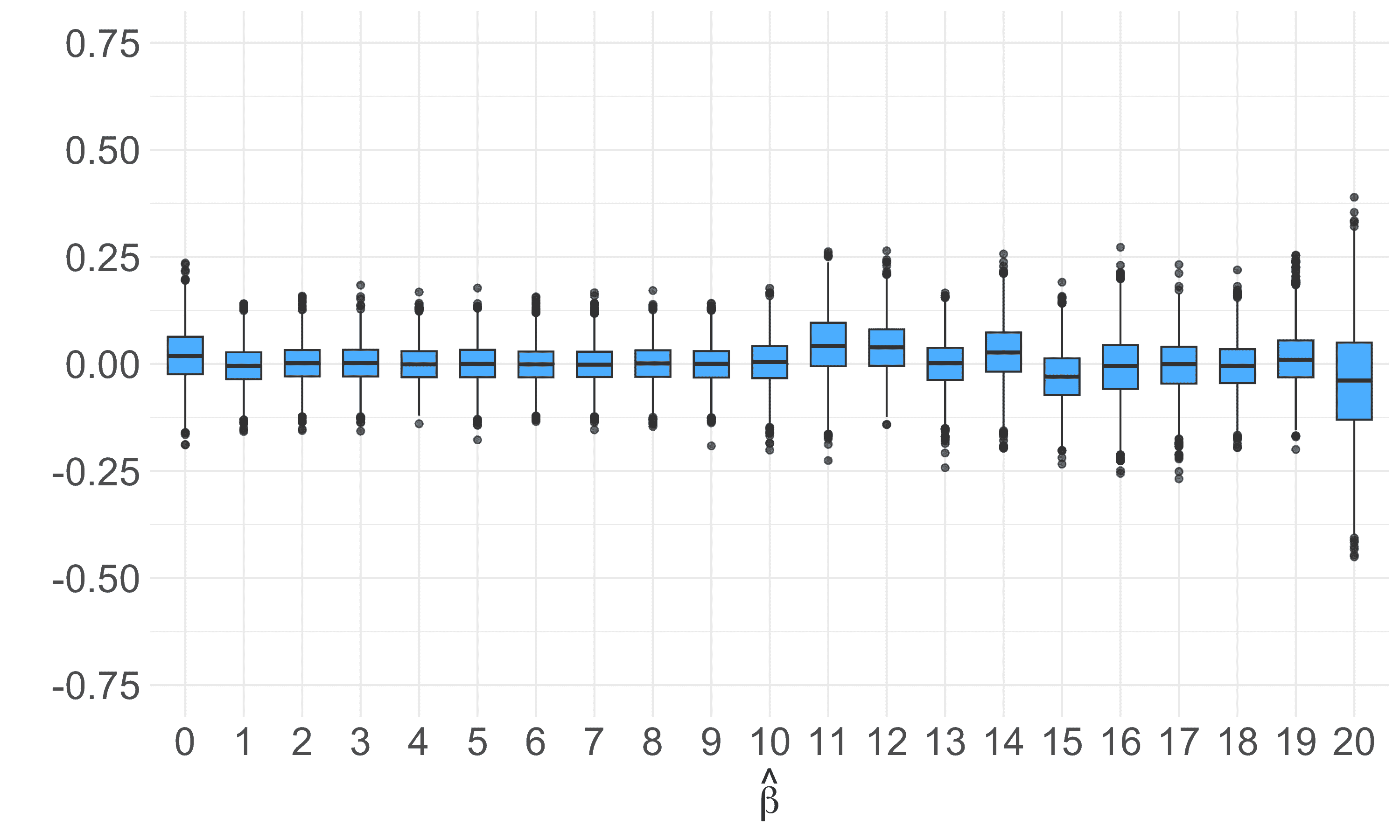} &
         \includegraphics[width=0.18\textwidth]{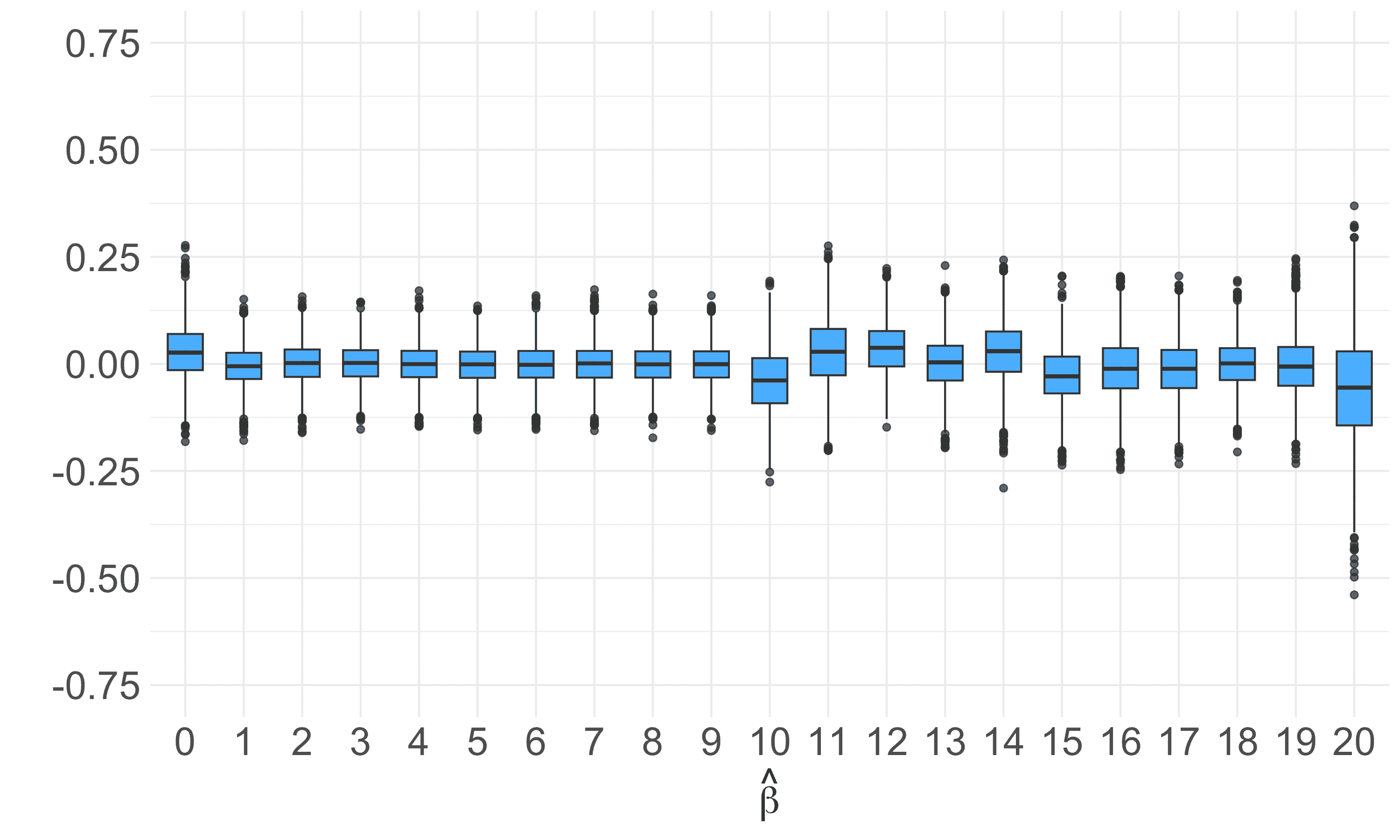} &
        \includegraphics[width=0.18\textwidth]{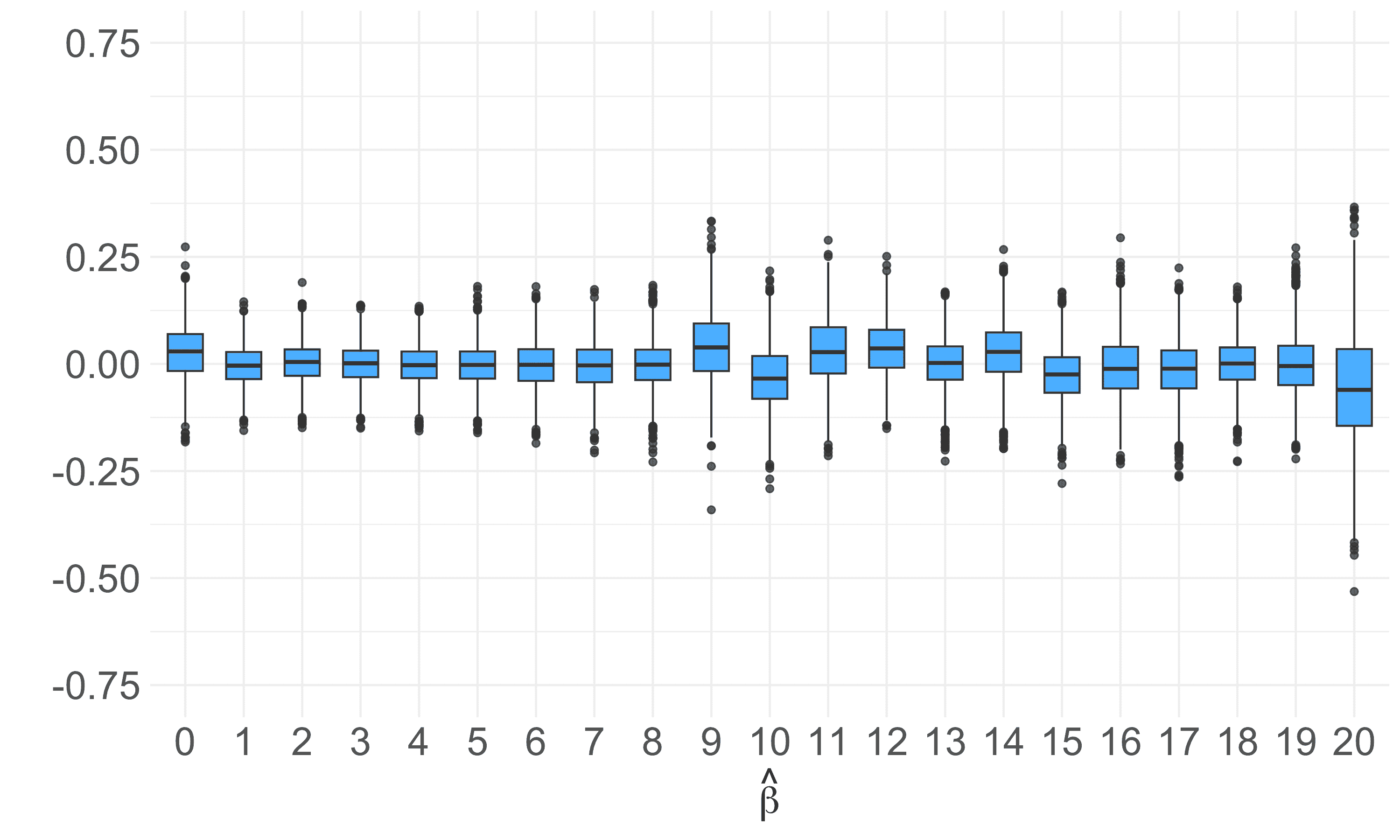} \\
        (i) Truncation at 9. & (j) Truncation at 10. & (k) Truncation at 11. & (l) Truncation at 12. \\
        \includegraphics[width=0.18\textwidth]{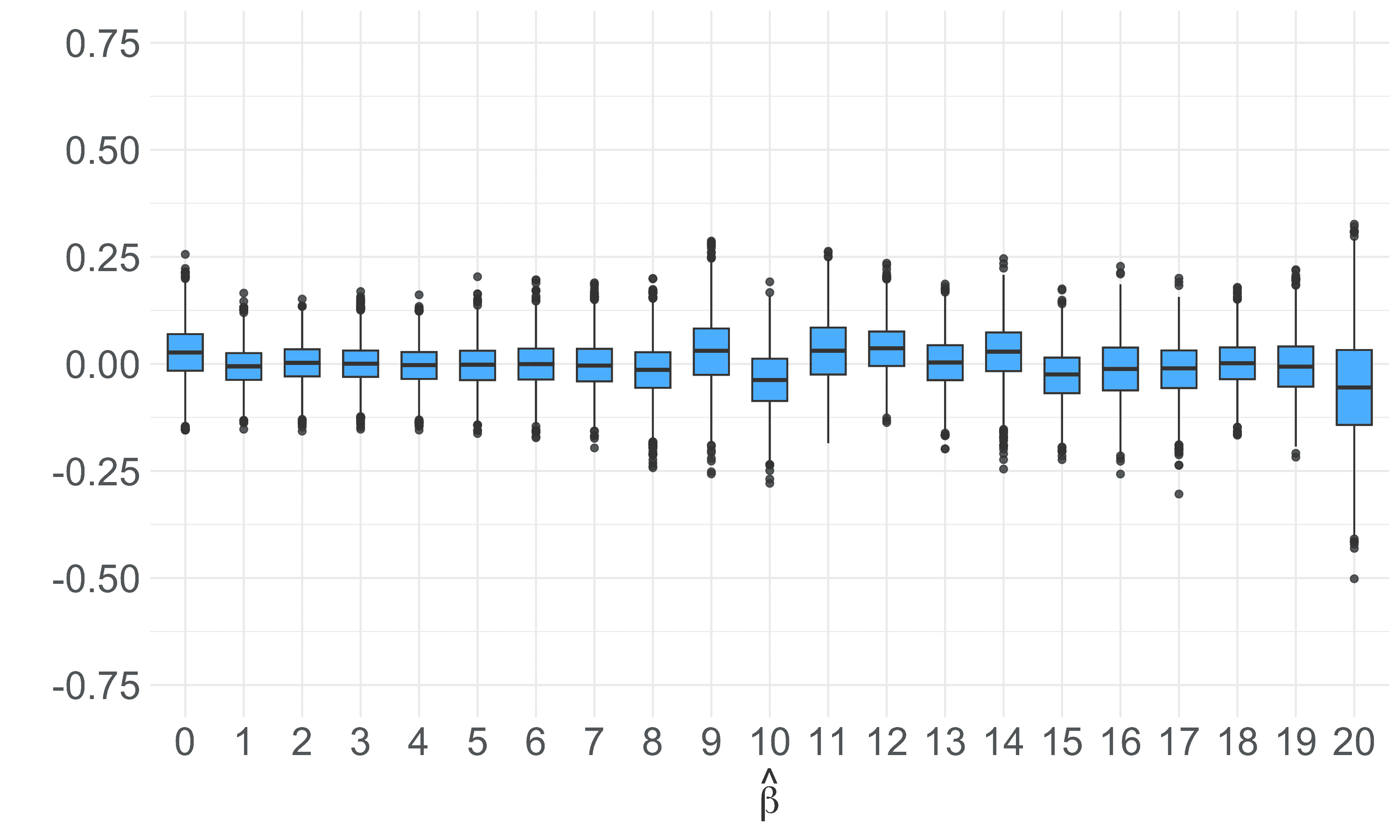} &
        \includegraphics[width=0.18\textwidth]{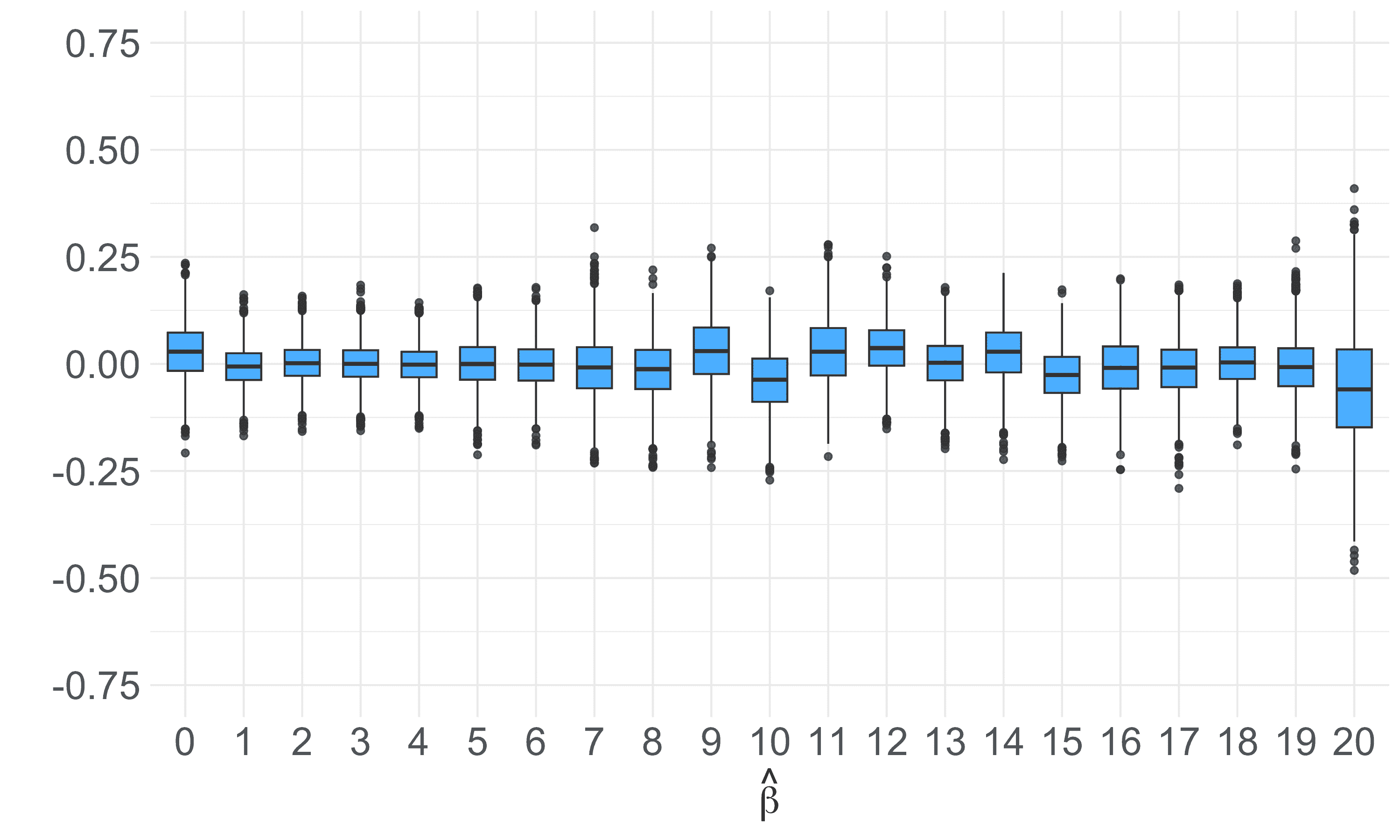} &
        \includegraphics[width=0.18\textwidth]{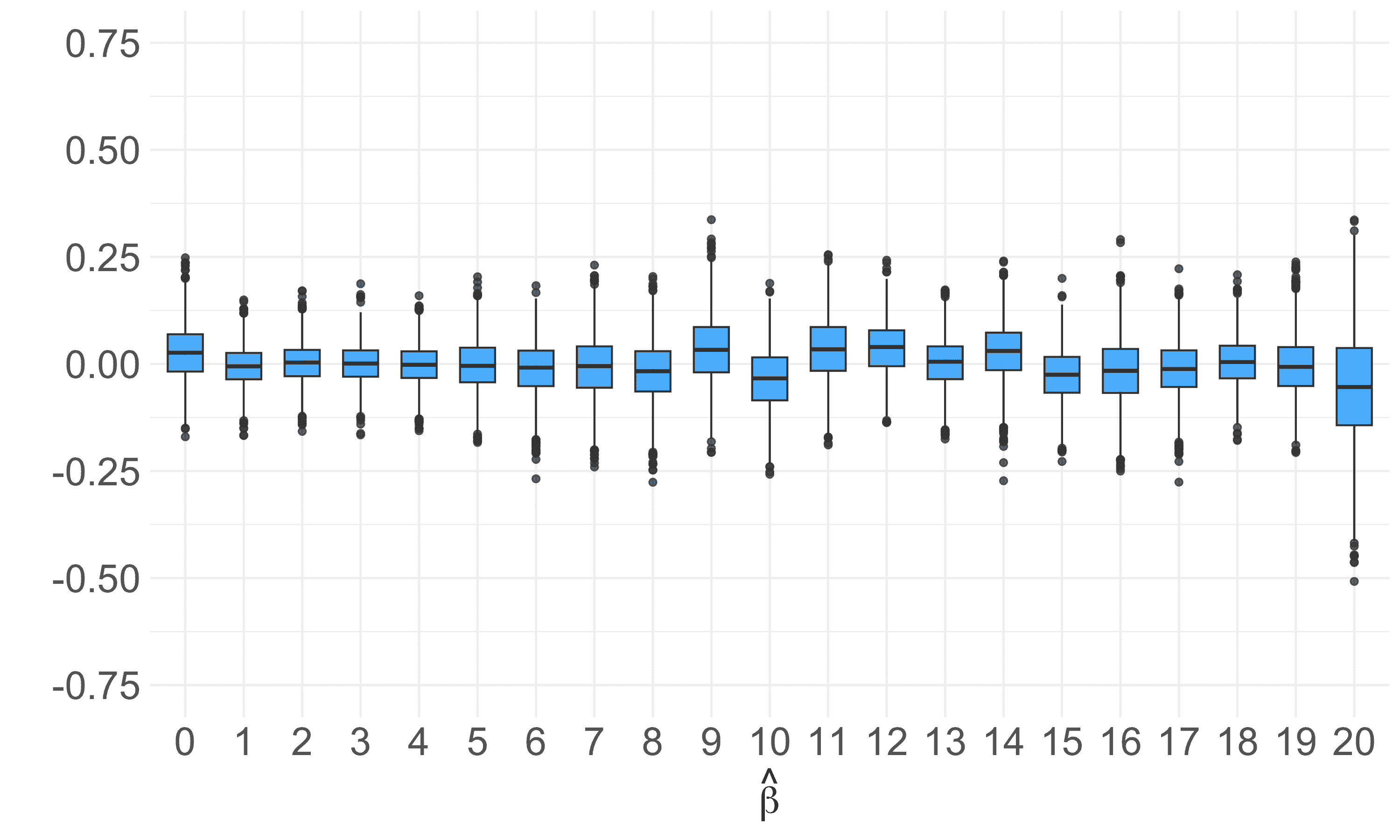} &
        \includegraphics[width=0.18\textwidth]{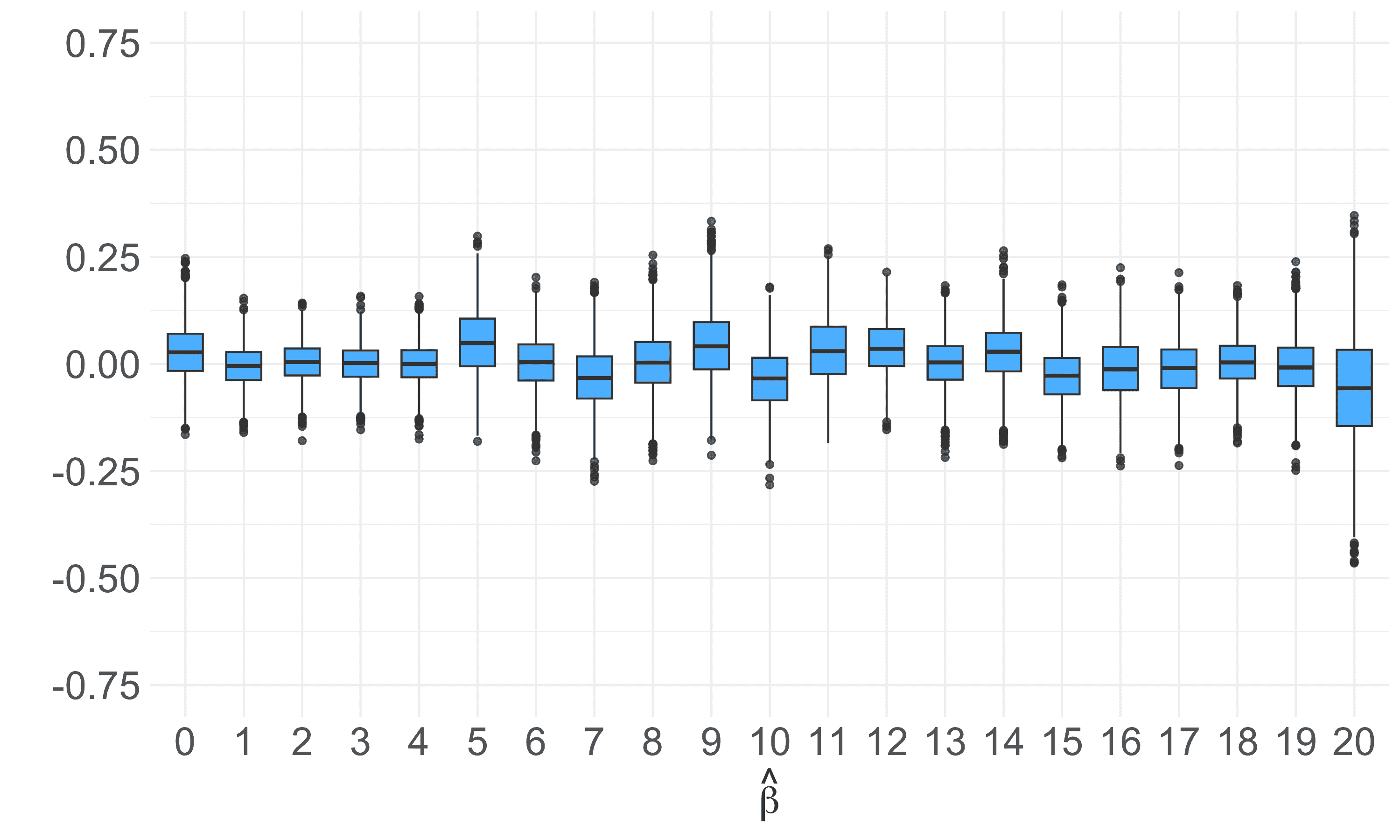} \\
        (m) Truncation at 13. & (n) Truncation at 14. & (o) Truncation at 15. & (p) Truncation at 16. \\
        \includegraphics[width=0.18\textwidth]{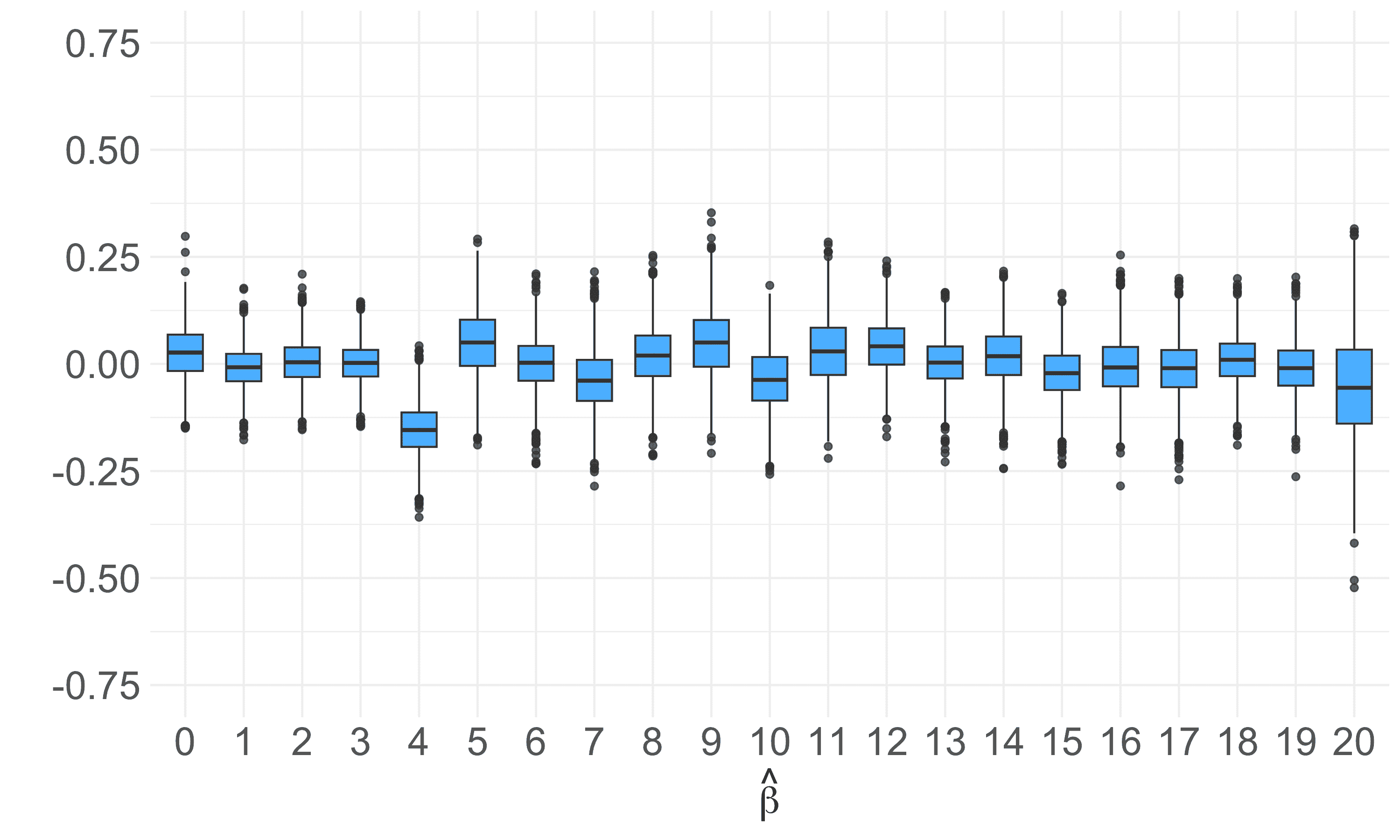} &
        \includegraphics[width=0.18\textwidth]{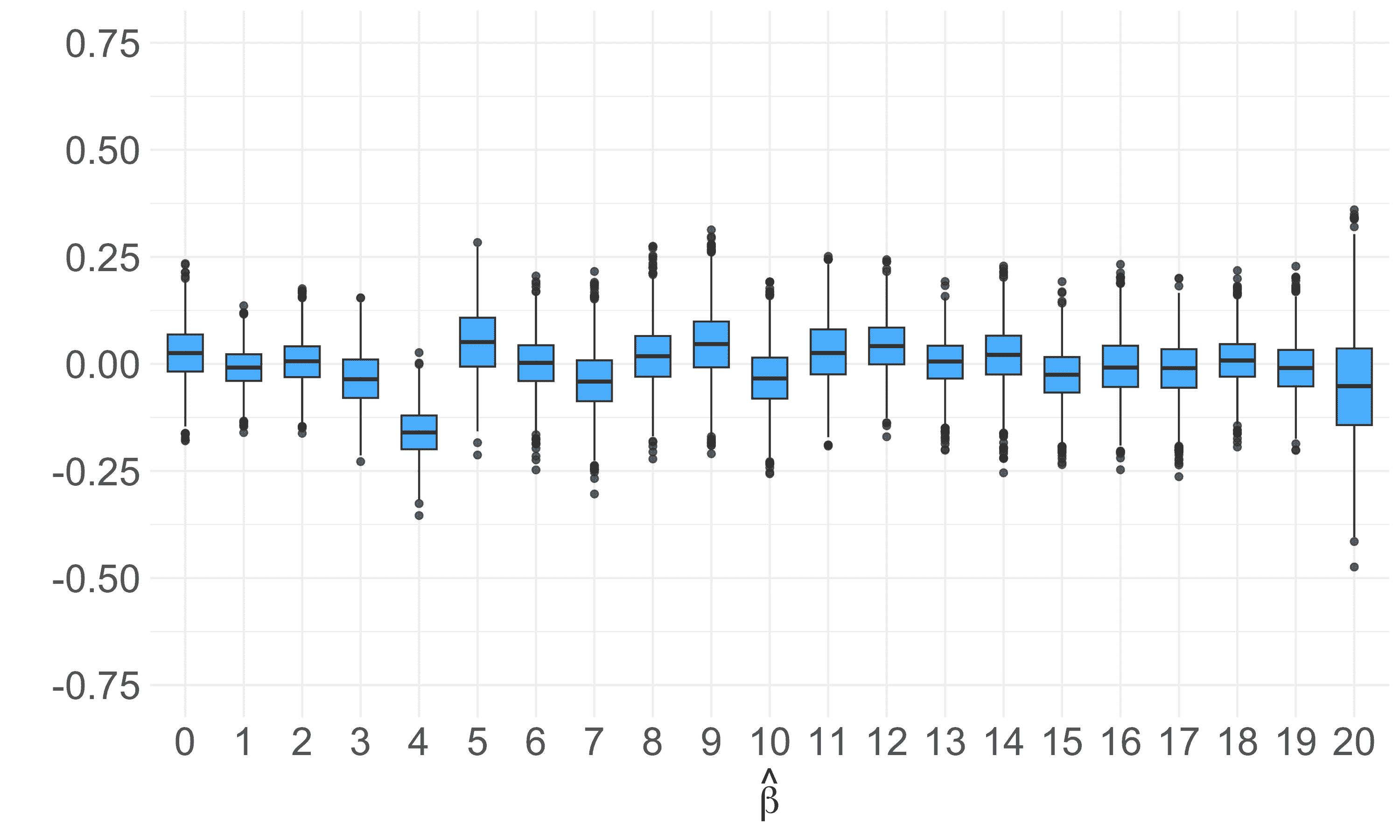} &
        \includegraphics[width=0.18\textwidth]{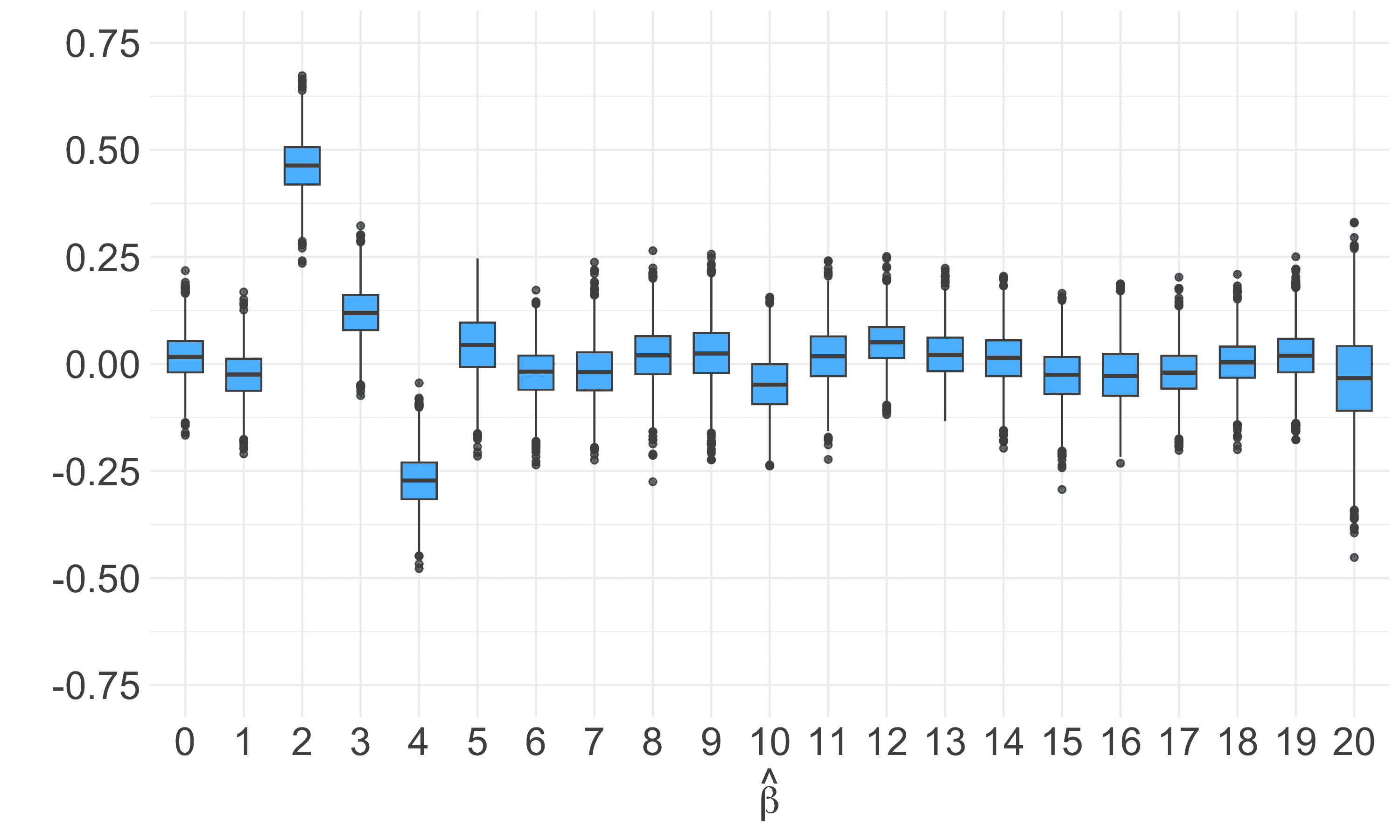} &
        \includegraphics[width=0.18\textwidth]{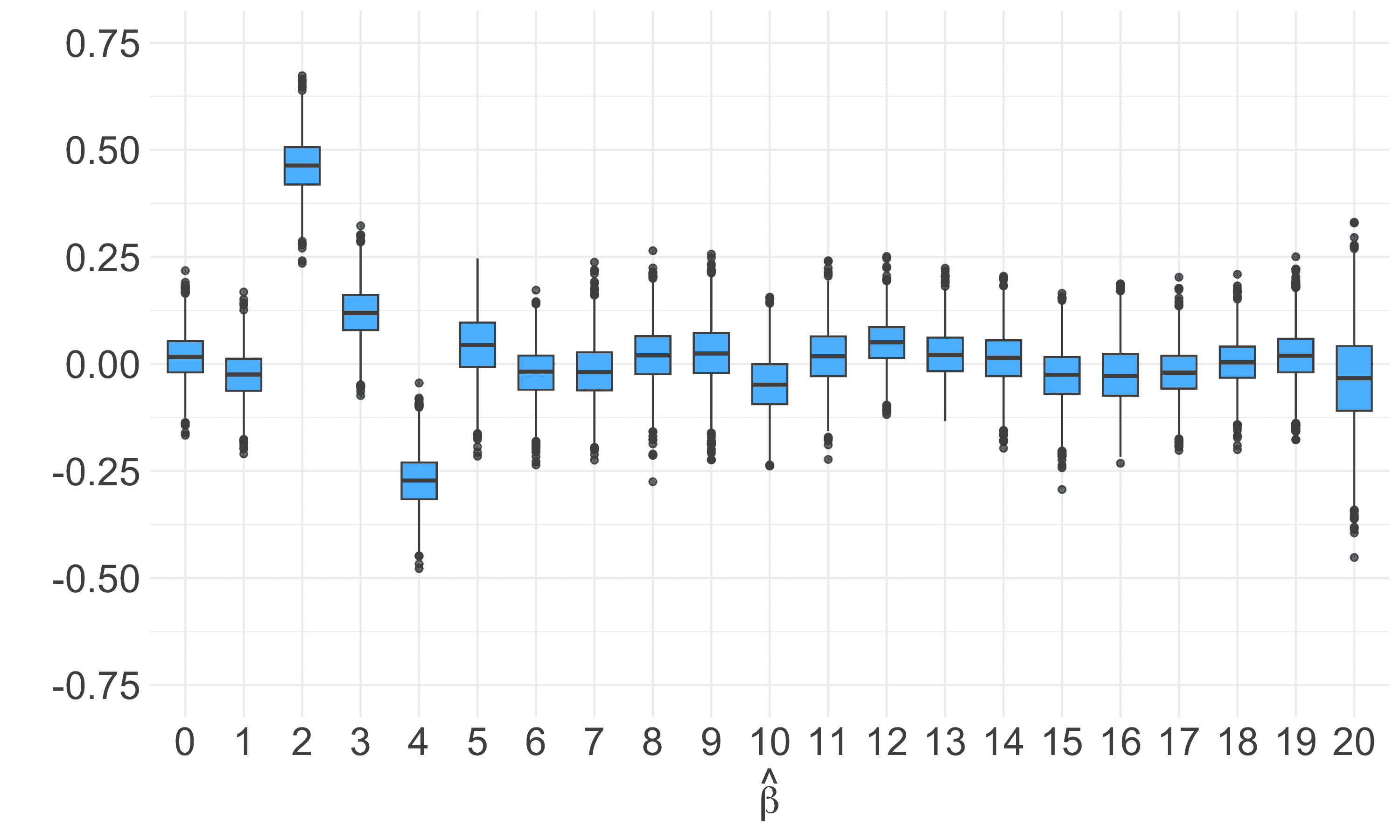} \\
        (q) Truncation at 17. & (r) Truncation at 18. & (s) Truncation at 19. & (t) No truncation. \\
    \end{tabular}
    
    \caption{Regression coefficients of an AIA on simulated real data from \ref{sec:simulated_real_data_d20_appendix} with randomly sampled target observations w.r.t. sensitive covariate $X_1$.}
    \label{fig:AIA_Cvine_X1_realI_d20_regCoeff}
\end{figure}

\FloatBarrier

\section{Real-World Data: SUPPORT2}\label{sec:support2_data}

The SUPPORT2 data set used in Section \ref{subsec:results_support2} is processed version of the raw SUPPORT2 data set by \citet{support2data}. The raw SUPPORT2 data 'comprises 9105 individual critically ill patients across 5 United States medical centers, accessioned throughout 1989-1991 and 1992-1994. Each row concerns hospitalized patient records who met the inclusion and exclusion criteria for nine disease categories: acute respiratory failure, chronic obstructive pulmonary disease, congestive heart failure, liver disease, coma, colon cancer, lung cancer, multiple organ system failure with malignancy, and multiple organ system failure with sepsis',  \citep{UCIsupport2data}.

In the processing step covariates \textit{age}, \textit{slos}, \textit{num.co}, \textit{scoma}, \textit{charges}, \textit{totcst}, \textit{totmcst}, \textit{sps}, \textit{aps}, \textit{surv2m}, \textit{surv6m}, \textit{hday}, \textit{prg2m}, \textit{dnrday}, \textit{meanbp}, \textit{wblc}, \textit{hrt}, \textit{resp}, \textit{temp}, \textit{pafi}, \textit{alb}, \textit{bili}, \textit{crea}, \textit{sod}, \textit{ph}, \textit{bun} and \textit{death} from the raw data are kept where the bivariate covariate \textit{death} is considered as response variable and renamed to $Y$. Additionally, all rows containing missing data are left out. The resulting SUPPORT2 data set contains $n=1104$ observations and $d=27$ covariates including response $Y$. Of these data, 220 randomly selected observations (equalling 20\%) are stored away and only accessed later for assessing the utility of the synthetic data. For parameter tuning of the generative models, synthetic data generation, as well as privacy attacks, the remaining 884 observations are used.

For the PrivPGD model to hold its DP guarantees, the ranges of all covariates need to be inferred from a source that is independent of the actual SUPPORT2 data. This is well possible for covariates describing features that inherently have limits outside which they lose their meaning (for example age or respiratory rate cannot be negative) and more difficult for other covariates. The following ranges have been inferred together with an MD:
\begin{itemize}
    \item \textit{totcst}: Total ratio of costs to charges (RCC) cost. Range $[0,500 000]$.
    \item \textit{crea}: Serum creatinine levels measured at day 3. Range $[0,13]$, assuming measurements in milligrams per liter \citep{finney2000adult}.
    \item \textit{totmcst}: Total micro cost. Range $[0,500 000]$.
    \item \textit{charges}: Hospital charges (in \$). Range $[0,500 000]$.
    \item \textit{slos}: Days from Study Entry to Discharge. Range $[0,180]$ \citep{knaus1995support}.
    \item \textit{bun}: Blood urea nitrogen levels measured at day 3. Range $[0,120]$ \citep{BUN}.
    \item \textit{age}: Age of the patients in years. Range $[18,115]$ \citep{knaus1995support}.
    \item \textit{num.co}: The number of simultaneous diseases (or comorbidities) exhibited by the patient. Range $[0,9]$ \cite{UCIsupport2data}.  
    \item \textit{scoma}: SUPPORT day 3 Coma Score based on Glasgow scale. Range $[0, 15]$
     \citep{teasdale1974assessment}.
    \item \textit{sps}: SUPPORT physiology score on day 3. Range $[0,163]$ \citep{le1993new, sapsII}.
    \item \textit{aps}: APACHE III day 3 physiology score. Range: $[0,299]$ \citep{knaus1991apache}.
    \item \textit{surv2m}: SUPPORT model 2-month survival estimate at day 3. Range $[0,1]$ \citep{knaus1995support}.
    \item \textit{surv6m}: SUPPORT model 6-month survival estimate at day 3. Range $[0,1]$ \citep{knaus1995support}.
    \item \textit{hday}: Day in hospital at which patient entered study. Range $[0,180]$ \citep{knaus1995support}.
    \item \textit{prg2m}: Physician’s 2-month survival estimate for patient. Range $[0,1]$.
    \item \textit{dnrday}: Day of DNR (Do Not Resuscitate) order (<0 if before study). Range $[-30,180]$ \citep{knaus1995support}.
    \item \textit{meanbp}: Mean arterial blood pressure of the patient, measured at day 3. Range $[0,180]$ \citep{mcevoy20242024}.
    \item \textit{wblc}: Counts of white blood cells (in thousands) measured at day 3. Range $[0, 100]$ \citep{riley2015evaluation}.
    \item \textit{hrt}: Heart rate of the patient measured at day 3. Range $[0,200]$ \citep{ACC}.
    \item \textit{resp}: Respiration rate of the patient measured at day 3. Range $[0,40]$ \citep{ATS}.
    \item \textit{temp}: Temperature in Celsius degrees measured at day 3. Range $[27,42]$.
    \item \textit{pafi}: $PaO_2/FiO_2$ ratio measured at day 3. The ratio of arterial oxygen partial pressure (PaO2 in mmHg) to fractional inspired oxygen (FiO2 expressed as a fraction). Range $[0,500]$.
    \item \textit{alb}: Serum albumin levels measured at day 3. Range $[0,5]$ \citep{RefInt}.
    \item \textit{bili}: Bilirubin levels measured at day 3. Range $[0,21]$ \citep{RefInt}.
    \item \textit{sod}: Serum sodium concentration measured at day 3. Range $[120, 160]$ \citep{RefInt}. 
    \item \textit{ph}: Arterial blood pH. Range $[6.9, 7.8]$ \citep{ignatavicius2017medical}.
\end{itemize}

\subsection{Estimated Correlation Matrix of C-Vine Generated Synthetic Data Generated Per Truncation Level on Real-World SUPPORT2 Data}\label{sec:CvineCorrTruncSupport2}

In Figure \ref{fig:Cvine_synth_data_corr_rrealsupport2} we observe how the correlation structure of the real SUPPORT2 data presented is more and more reproduced in the synthetic data generated by a C-vine with increasing truncation level.
Specifically, we note that dependencies in the first block containing sensitive covariates \textit{crea} and \textit{totcst} start to be represented in synthetic data generated by a C-vine from truncation level 15.

\begin{figure}[ht]
    \centering
    \begin{tabular}{cccc}
        \includegraphics[width=0.18\textwidth]{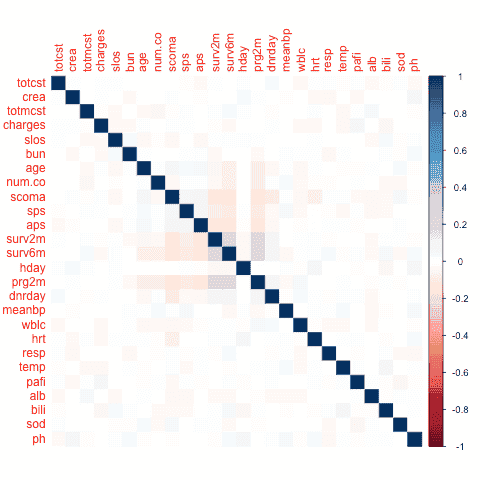} &  
        \includegraphics[width=0.18\textwidth]{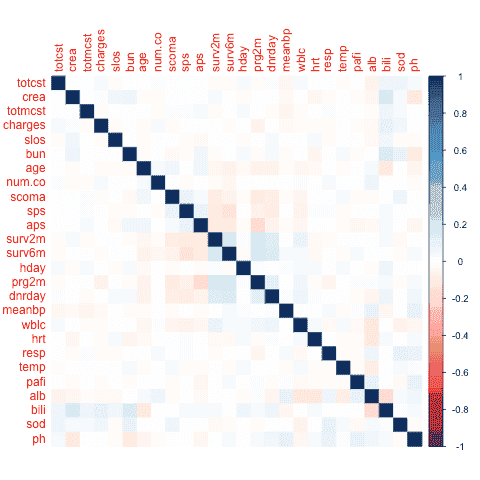} &
        \includegraphics[width=0.18\textwidth]{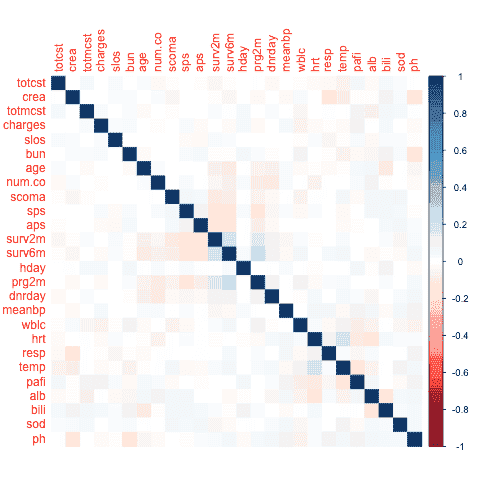} &
        \includegraphics[width=0.18\textwidth]{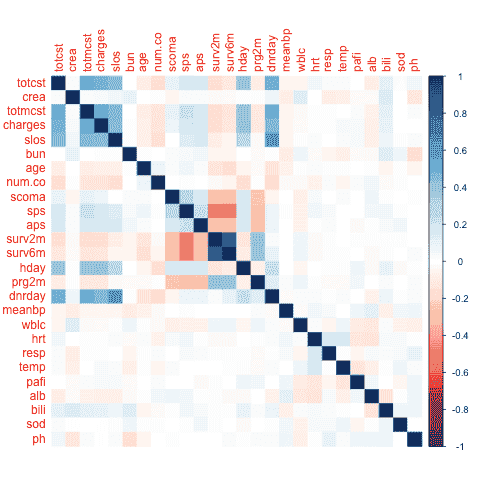}\\
        (a) Truncation at 1. & (b) Truncation at 5. & (c) Truncation at 10. & (d) Truncation at 15. \\
    \end{tabular}
    \begin{tabular}{ccc}
        \includegraphics[width=0.18\textwidth]{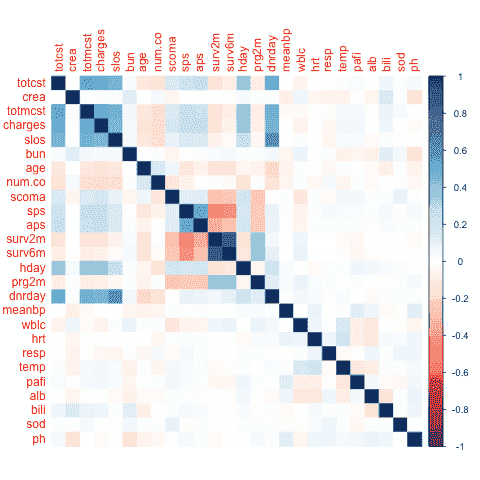} &
        \includegraphics[width=0.18\textwidth]{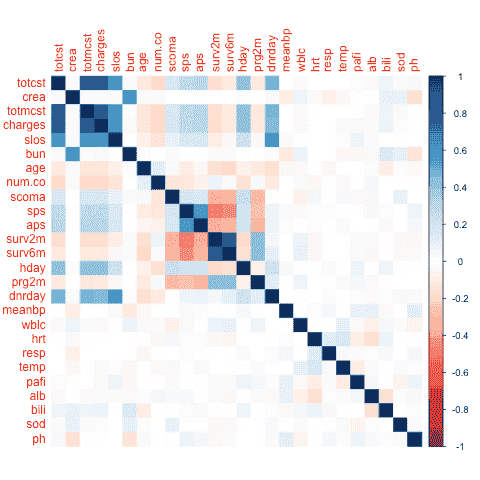} &
        \includegraphics[width=0.18\textwidth]{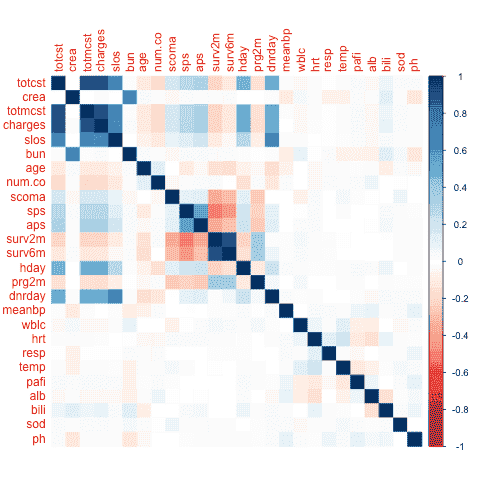} \\
        (e) Truncation at 20. & (f) No truncation. & (g) Real SUPPORT2 data.   \\
    \end{tabular}
    
    \caption{SUPPORT2 data: The matrices of pairwise Kendall's $\tau$ of continuous covariates in synthetic data generated with a C-vine for truncation at levels $t \in\{1,5,10,15,20\}$ and no truncation illustrate how the rank correlation structure of the SUPPORT2 data in (g) is more and more reproduced in the synthetic data with increasing truncation level.}
    \label{fig:Cvine_synth_data_corr_rrealsupport2}
\end{figure}

\subsection{R-Vine Matrix of the C-Vine Used as a Generative Model on Real-World SUPPORT2 Data}\label{sec:RvinematrixCvineSUPPORT2}
The R-vine matrix of the C-vine used as a generative model on real-world SUPPORT2 data is as follows with $Y$ on index 27 and index $j \in [26]$ corresponding to covariate $X_j$:
\begin{align}
    \begin{pmatrix}
        27 & 27 & 27 & \cdots & 27 & 27 \\
        & 26 & 26 & \cdots & 26 & 26 \\
        & & 25 & \cdots & 25 & 25 \\
        & & & \ddots & \vdots & \vdots \\
        & & & & 2 & 2 \\
        & & & & &  1 
    \end{pmatrix} \; .
\end{align}
In the first vine tree the response $Y$ is in the center. By this it is enforced that pairwise dependencies between $Y$ and the covariates are modeled.

\FloatBarrier

\subsection{Choice of Target Observations for Privacy Evaluation on Real-World SUPPORT2 Data} \label{sec:append_support2_choice_of_targets}

We conduct an AIA and MIA on SUPPORT2 data described in \ref{sec:support2_data}. The parameter setup of the privacy attacks can be found in Table \ref{tab:parameters_privacyAttacks_genModels}. For the attacks, four target observations are handpicked outside the 95\%-quantile of the regarding sensitive covariate for each sensitive covariate, see Table \ref{tab:handpicked_targets095_support2}.

\begin{table}[t]
    \caption{Target observations of the SUPPORT2 data set, Section \ref{sec:support2_data} that are handpicked to lie outside the 95\%-quantile of the respective sensitive covariate \textit{crea} and \textit{totcst}.}
    \label{tab:handpicked_targets095_support2}
    \vskip 0.15in
    \begin{center}
        \begin{small}
            \begin{sc}
                \begin{tabular}{llccr}
        \toprule
         & quantiles & & & \\
         sensitive covariate & 0.01 & 0.025 & 0.975 & 0.99 \\ 
         \midrule
         \textit{crea} & ID820 & ID45 & ID403 & ID447 \\
         \textit{totcst} & ID806 & ID823 & ID31 & ID41 \\ \bottomrule
    \end{tabular}
            \end{sc}
        \end{small}
    \end{center}
    \vskip -0.1in
\end{table}

Additionally, four target observations, namely ID123, ID507, ID589 and ID740\footnote{NB: ID$k$ corresponds to the $(k+1)$th observation in the real data set with $k \in \{0, ..., (n-1)\}$.}, are randomly sampled from the real data set. They correspond to the quantiles w.r.t. the respective sensitive covariate given in Table \ref{tab:quantiles_randomly_sampled_targets_AIA_support2}.

\begin{table}[t]
    \caption{Randomly sampled target observations from the SUPPORT2 data set of Section \ref{sec:support2_data} and their corresponding quantiles w.r.t. covariates \textit{crea} and \textit{totcst}.}
    \label{tab:quantiles_randomly_sampled_targets_AIA_support2}
    \vskip 0.15in
    \begin{center}
        \begin{small}
            \begin{sc}
                \begin{tabular}{lcccr}
        \toprule
         & \multicolumn{3}{l}{target IDs} & \\
         sensitive covariate & ID123 & ID507 & ID589 & ID740 \\ 
         \midrule
         \textit{crea} & 0.966 & 0.984 & 0.506 & 0.957 \\
         \textit{totcst} & 0.374 & 0.924 & 0.683 & 0.615 \\ \bottomrule
    \end{tabular}
            \end{sc}
        \end{small}
    \end{center}
    \vskip -0.1in
\end{table}

\subsection{Attribute Inference Attack: Results in Terms of MSE}\label{sec:realsupport2_small_AIA_MSE}

Figure \ref{fig:MSE_Cvine_competitors_support2_small_totcst} displays the MSE under an AIA w.r.t randomly sampled and outlying targets  w.r.t. sensitive covariate \textit{totcst}. 

\begin{figure}[t]{}
    \centering
    \includegraphics[width=\columnwidth]{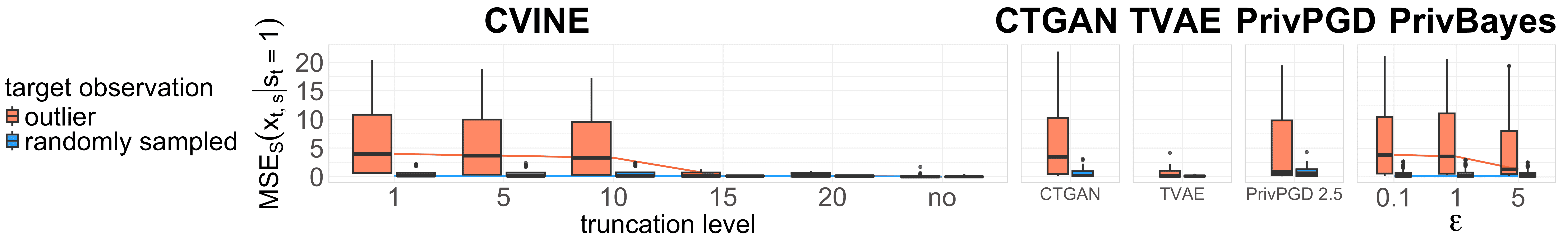}
    \caption{SUPPORT2 data: MSE under an AIA w.r.t randomly sampled (blue) and outlying targets (orange) w.r.t. sensitive covariate \textit{totcst}.}\label{fig:MSE_Cvine_competitors_support2_small_totcst}
\end{figure}%

\subsection{Attribute Inference Attack: Results in Terms of WCAB}\label{sec:realsupport2_small_AIA_WCAB}

Figure \ref{fig:AIA_Cvine_competitors_support2_small_WCAB} displays the AIA results in terms of WCAB. We observe that the WCAB approximately replicates the trend of the MAB for the C-vine, CTGAN, TVAE and PrivPGD, see Figure \ref{fig:AIA_Cvine_competitors_support2_small_MAB}. The WCAB of PrivBayes for $\epsilon \in \{0.1, 5\}$ on the other hand lies above the one of the C-vine truncated at level 10 or lower and the one of CTGAN for sensitive attributes \textit{crea} and \textit{totcst}). This indicates that even though the PrivBayes provides formal guarantees on privacy leakage on a single individual that translate to theoretical bounds on the PG in an MIA, the PrivBayes might in the worst case leak dependencies that inform the sensitive covariate in an AIA from the real into the synthetic data, even for low $\epsilon$.

\begin{figure}[ht]
    \centering
    \includegraphics[width=0.8\columnwidth]{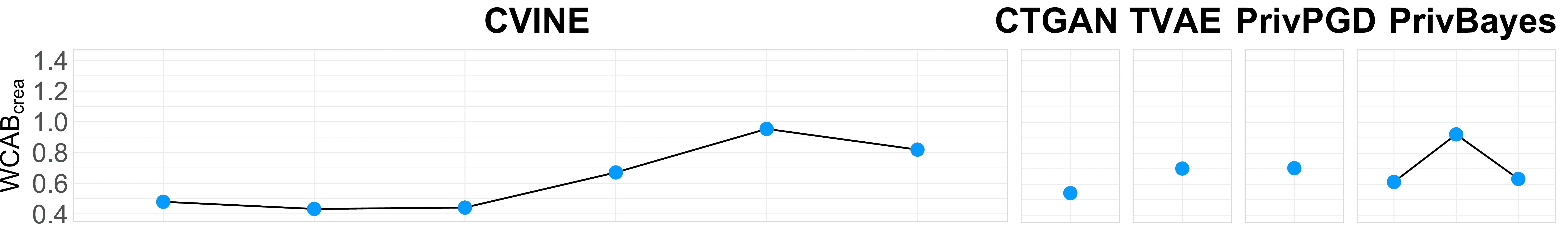}
     \includegraphics[width=0.8\columnwidth]{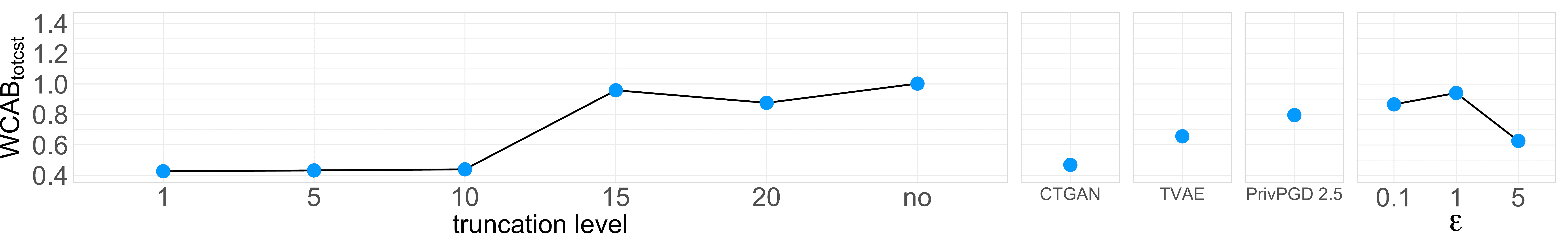}
    \caption{SUPPORT2 data: Results of an AIA w.r.t. sensitive covariate \textit{crea} (top row) and \textit{totcst} (bottom row) measured by $WCAB_{j^*}$ . Synthetic data are generated with a C-vine for different truncation levels (left), CTGAN (2nd), TVAE (3rd), PrivPGD  with $\epsilon = 2.5$ and $ \delta= 10^{-5}$ (4th) and PrivBayes (right) for privacy parameter $\epsilon \in \{0.1, 1, 5\}$. Results are reported over 10 AIA game iterations.}
    \label{fig:AIA_Cvine_competitors_support2_small_WCAB}
\end{figure}

\subsection{Privacy-Utility Plots on Real-Wolrd SUPPORT2 Data: Additional Plots}\label{sec:2d_priv-ut_support2_additional}

Figure \ref{fig:2d_priv-ut_support2_totcst_crea_MIA} displays the privacy-utility plots w.r.t. a MIA and and sensitive features \textit{totcst} and \textit{crea} based on the results of Sections \ref{sec:MIA_support2_results} and \ref{sec:utility_support2}. As already observed in Sections \ref{sec:MIA_support2_results} and \ref{sec:utility_support2}, all models except for TVAE score a PG of around 1. Compared to the competitor models that are able to protect sensitive covariates against a MIA, the C-vine scores the highest utility.

\begin{figure}[ht]
    \centering
    \includegraphics[width=0.3\columnwidth]{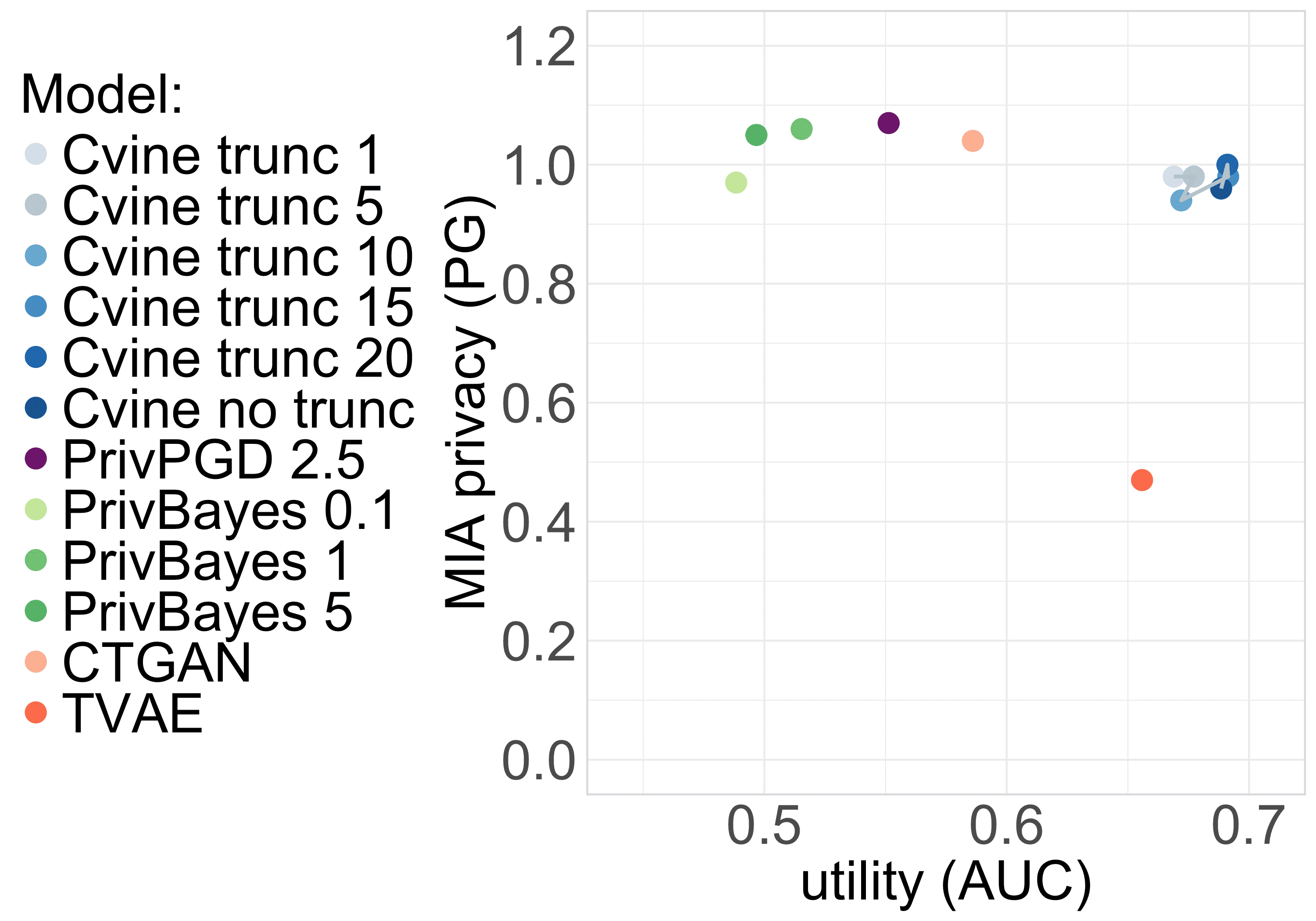}
    \includegraphics[width=0.198\columnwidth]{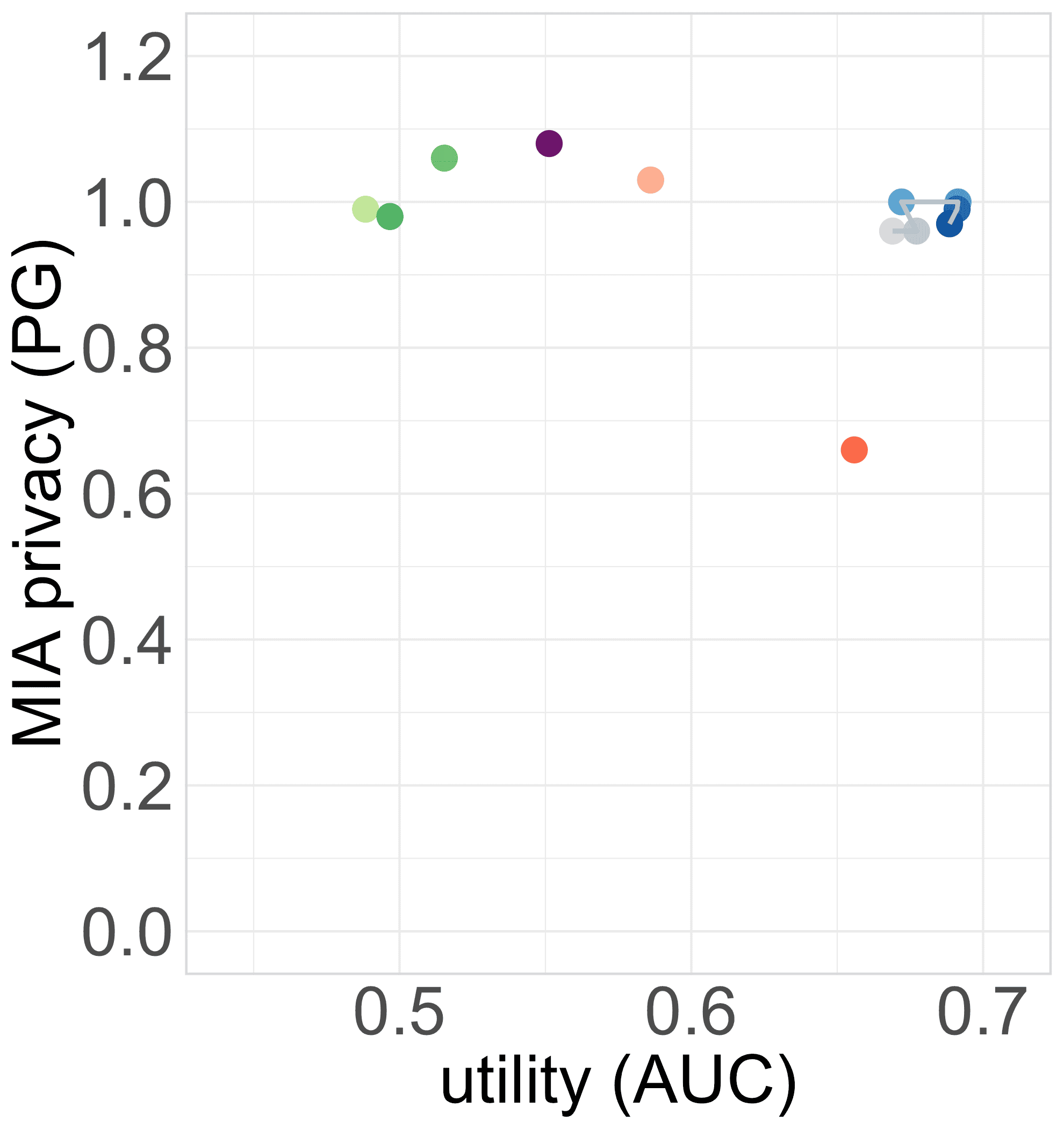}
    \caption{SUPPORT2 data: Privacy-utility plot w.r.t. MIA and sensitive features \textit{totcst} (left) and \textit{crea} (right) of a C-vine with truncation levels $t \in \{1,5,10,15, 20\}$ and no truncation, PrivPGD  with $\epsilon = 2.5$ and $ \delta= 10^{-5}$, PrivBayes model with $\epsilon \in \{0.1,1,5\}$, CTGAN and TVAE on SUPPORT2 data of Section \ref{sec:support2_data}. The median MIA PG over all game iterations and utility for 50 synthetic data sets are reported.}
    \label{fig:2d_priv-ut_support2_totcst_crea_MIA}
\end{figure}

\subsection{Statistical Fidelity}\label{app:stat_discrepancy}

\subsubsection{Statistical Discrepancy}\label{sec:statFidelity_support2}

We measure the statistical discrepancy between joint real and synthetic distribution with $\alpha$-precision, $\beta$-recall and authenticity $(P_{\alpha}, R_{\beta}, A)$ introduced by \citet{alaa2022faithful}. Their definition can be found in Appendix \ref{sec:statistical_fidelity}.

As for the simulated real data, an increasing truncation level of the C-vine leads to an increase in fidelity and diversity while it decreases the generalization, see Figure \ref{fig:statfidelity_allmodels_support2}. Highest fidelity and diversity of an un-truncated C-vine results in lowest generalization compared to competitor models. The PrivBayes model reaches a generalization of up to 0.99 but its generated samples fail to resemble and cover the real data. The CTGAN generated synthetic data achieve a high generalization but struggle to be diverse enough to cover the real data. The TVAE generates synthetic data with very high fidelity, high generalization and moderate diversity.

Additionally, we evaluate the generative models by comparing empirical marginal histograms on real and synthetic data in Appendix \ref{sec:support2_margHistOverlap}.

\begin{figure}[ht]
    \centering
    \includegraphics[width=1\columnwidth]{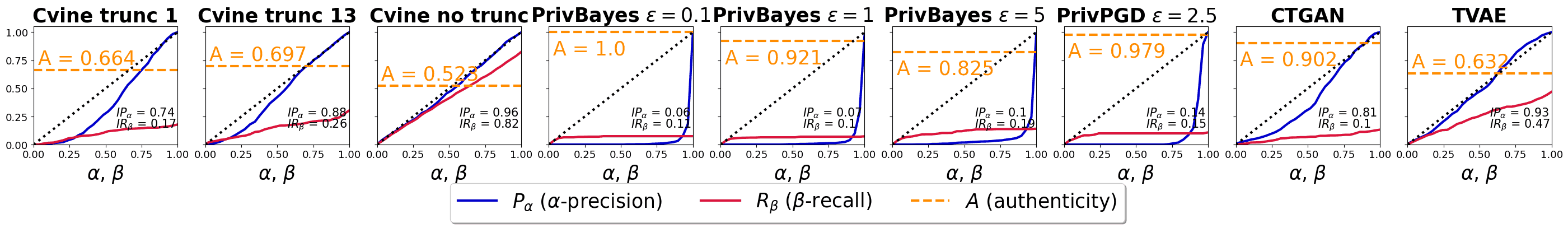}
    \caption{SUPPORT2 data: Fidelity ($\alpha$-precision), diversity ($\beta$-recall) and generalization (authenticity) of synthetic data generated in the order C-vine for truncation at levels 1 and 10 and no truncation, PrivBayes for $\epsilon \in \{0.1, 1, 5\}$, PrivPGD with $\epsilon=2.5$, $\delta=10^{-5}$, CTGAN and TVAE from SUPPORT2 data. 
    } \label{fig:statfidelity_allmodels_support2}
\end{figure}

\subsubsection{Comparing Marginal Histograms} \label{sec:support2_margHistOverlap}

On the SUPPORT2 data we generate synthetic data with the C-vine for different truncation levels and with the competitor models (CTGAN, TVAE, PrivBayes with $\epsilon \in \{0.1, 1, 5\}$). We select the 6 covariates \textit{age}, \textit{aps}, \textit{surv2m}, \textit{resp}, \textit{alb}, and \textit{ph} for which we want to show to empirical marginal histograms for real data (in blue) and for synthetic data (in red) superimposed, an overlap of the empirical marginal histograms results in a purple color. The 6 covariates are selected to represent the most challenging marginal distributions according to visual analysis of the empirical marginal histogram plots. In Figure \ref{fig:margHist_support2small_Cvine} we compare the empirical marginal histograms of synthetic data generated by a C-vine with truncation levels 1 and 10 and with no truncation with the real data. It shows almost perfect overlap independent of the truncation level. The reason for this extraordinary good marginal overlap is the fact the vine copulas allow to model marginal distribution separately from the joint dependence structure. Thus, independent of how close the joint dependence structure present in the real data is modeled through the vine copula (e.g. if we truncate at an early tree level this will be worse than for the un-truncated vine), the marginal distributions are always captured well (if there is enough data to model them).

In Figures \ref{fig:margHist_support2small_CTGAN_TVAE} and \ref{fig:margHist_support2small_PrivBayes_PrivPGD} the empirical marginal histograms of the competitor models are compared. Particularly, we observe that the PrivBayes model (for various $\epsilon$s) considerably struggles to reproduce marginal distributions of the real data confirming its low values of $P_{\alpha}$ and $R_{\beta}$ and high authenticity in Figure \ref{fig:statfidelity_allmodels_support2} in Section \ref{sec:statFidelity_support2}.

\begin{figure}[ht]
    \centering
    \begin{tabular}{cccccc}
            \includegraphics[width=0.14\textwidth]{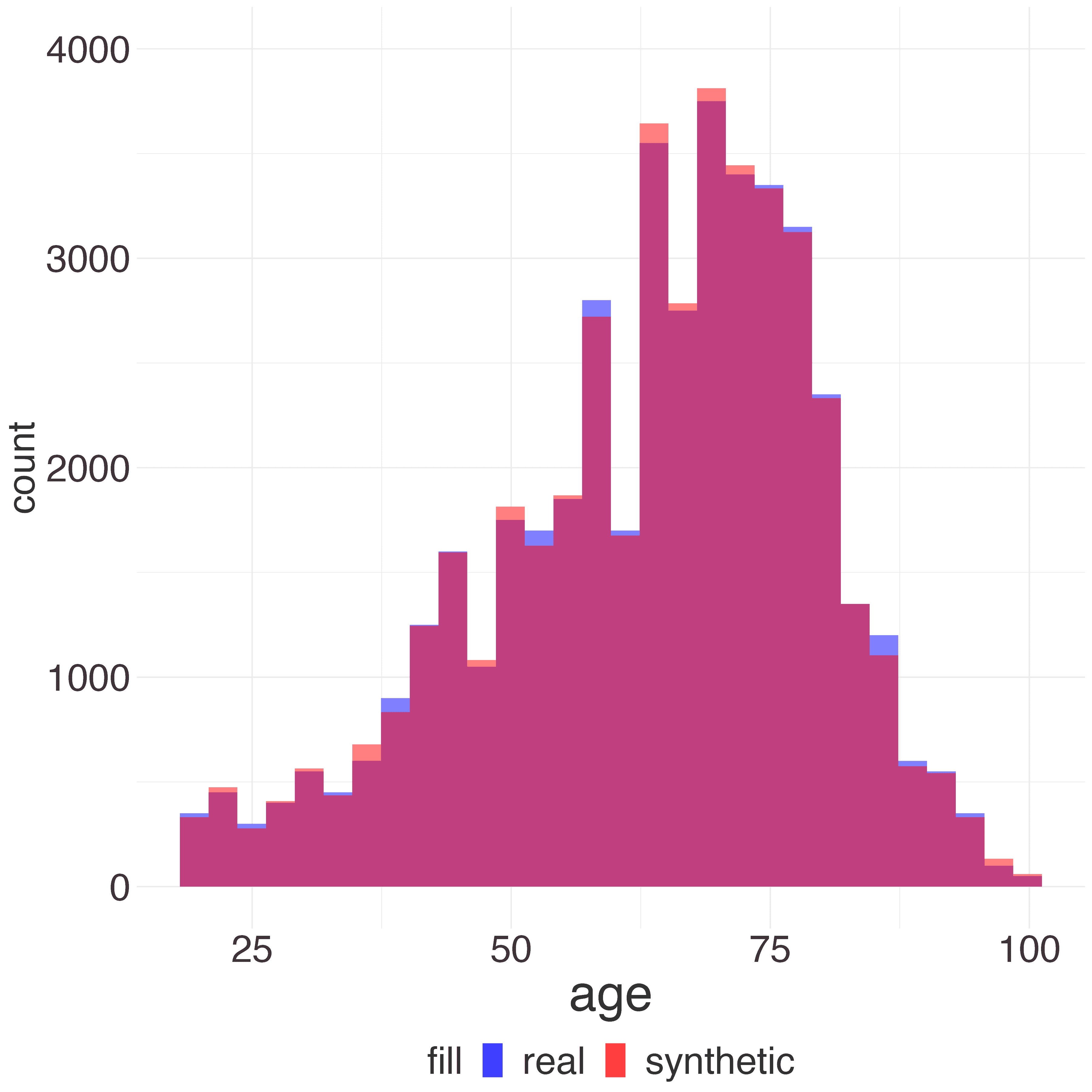} &  
            \includegraphics[width=0.14\textwidth]{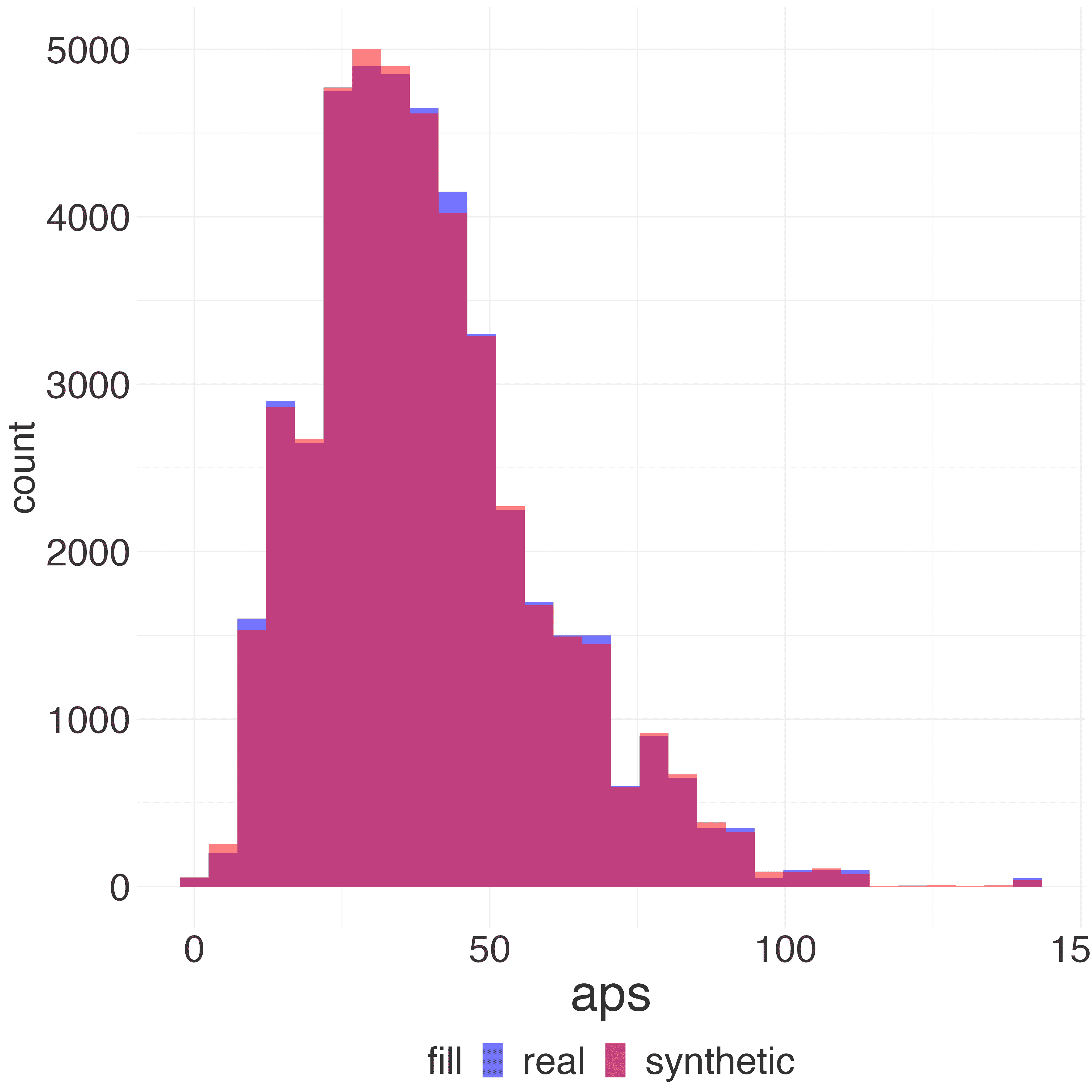} &
            \includegraphics[width=0.14\textwidth]{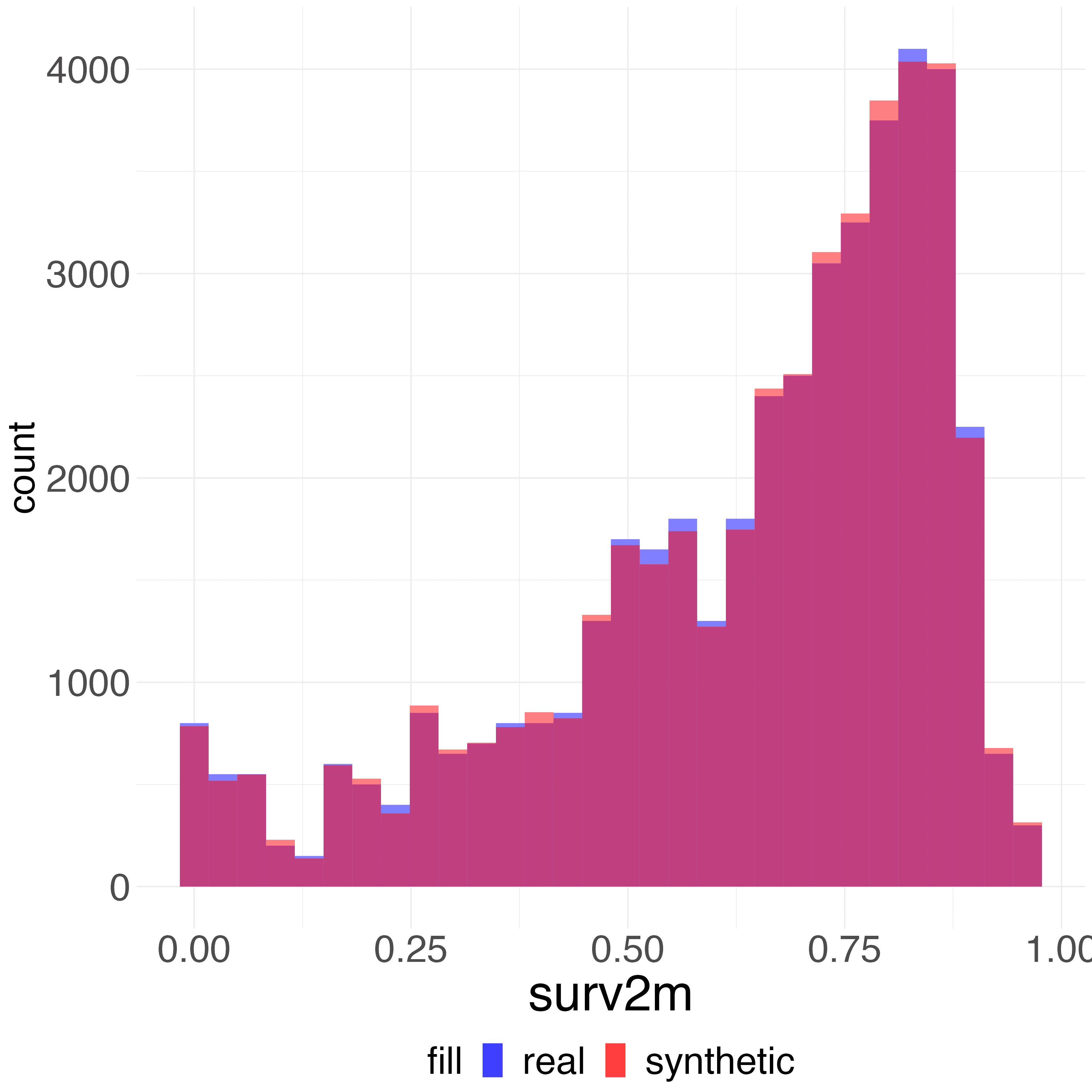} &
            \includegraphics[width=0.14\textwidth]{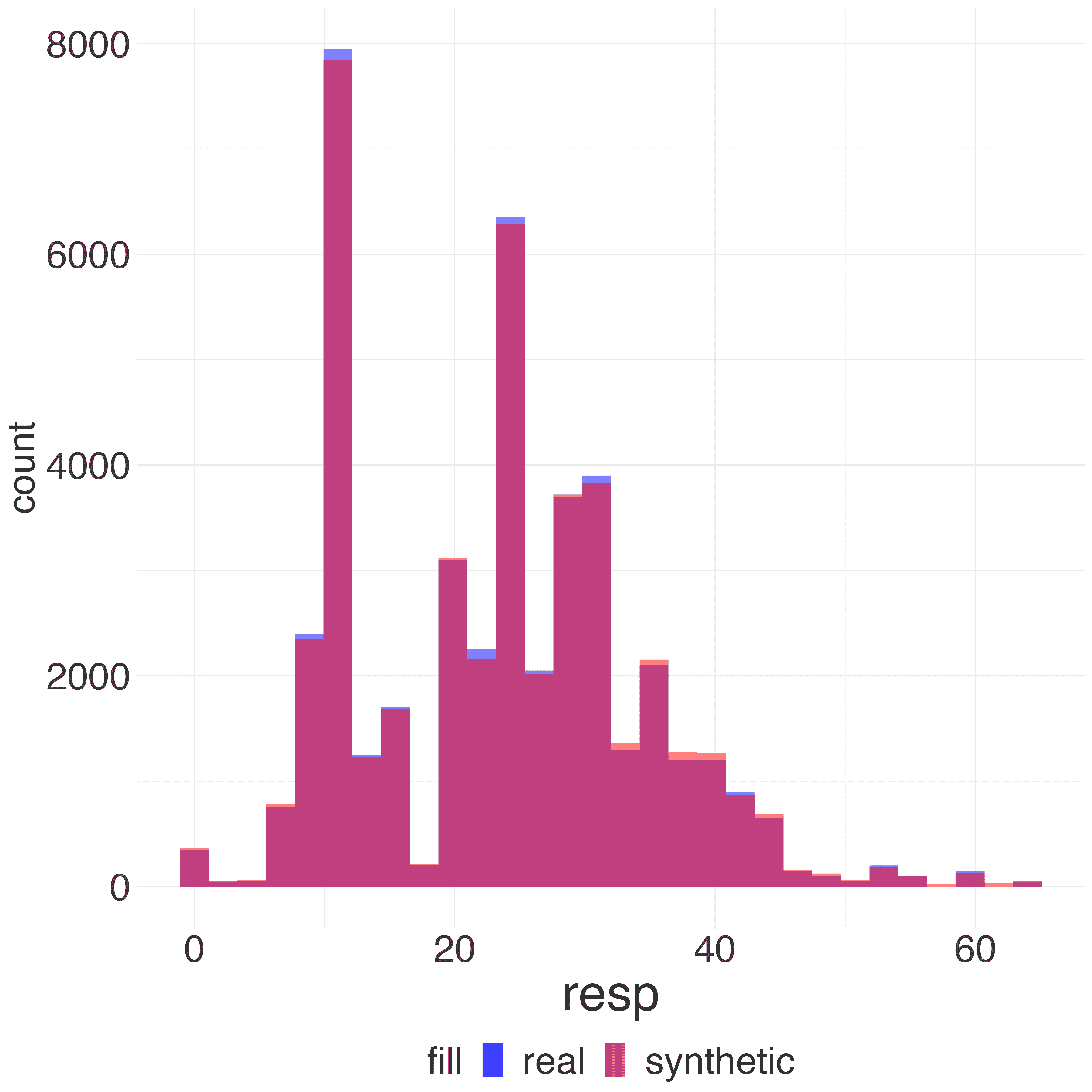} &
            \includegraphics[width=0.14\textwidth]{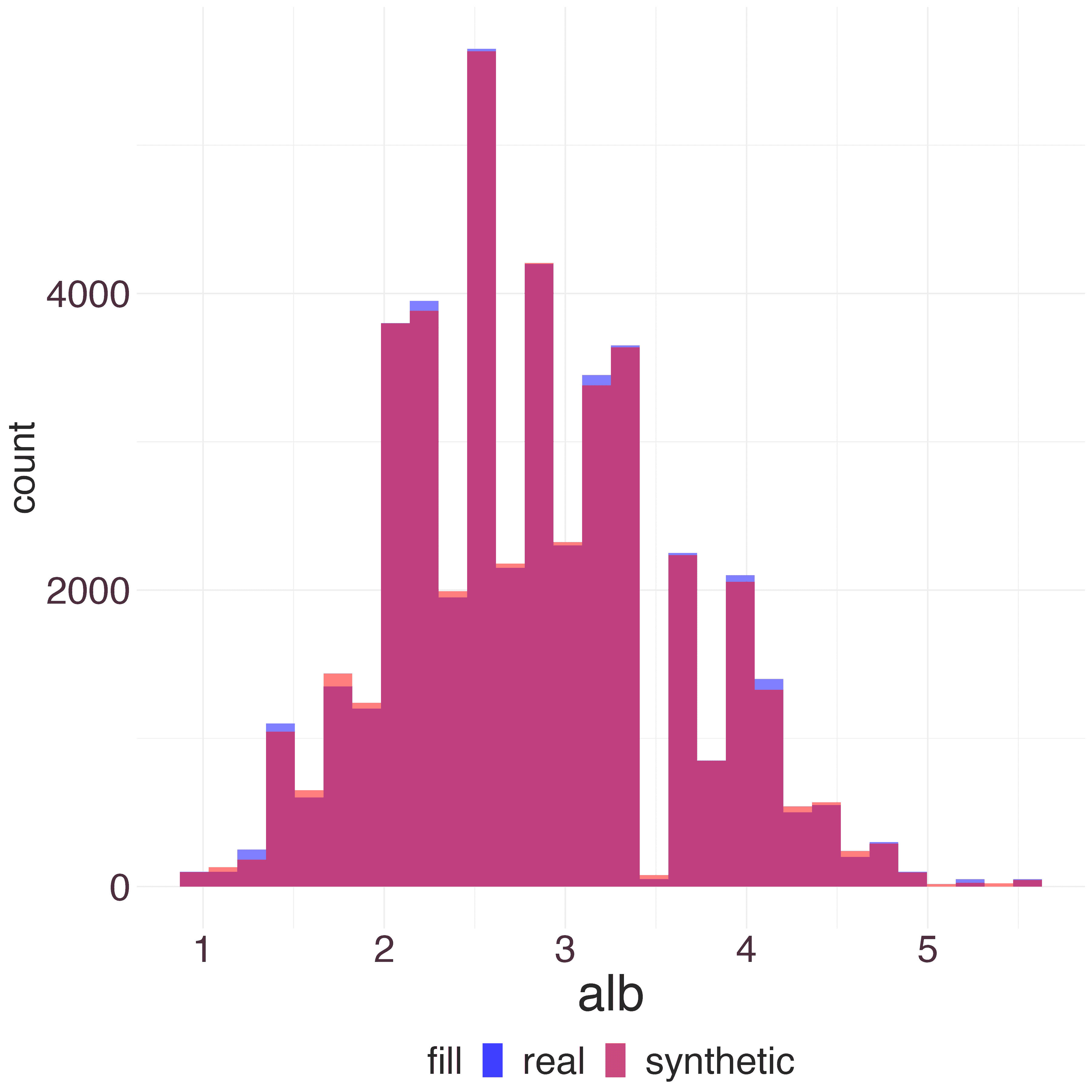} &
            \includegraphics[width=0.14\textwidth]{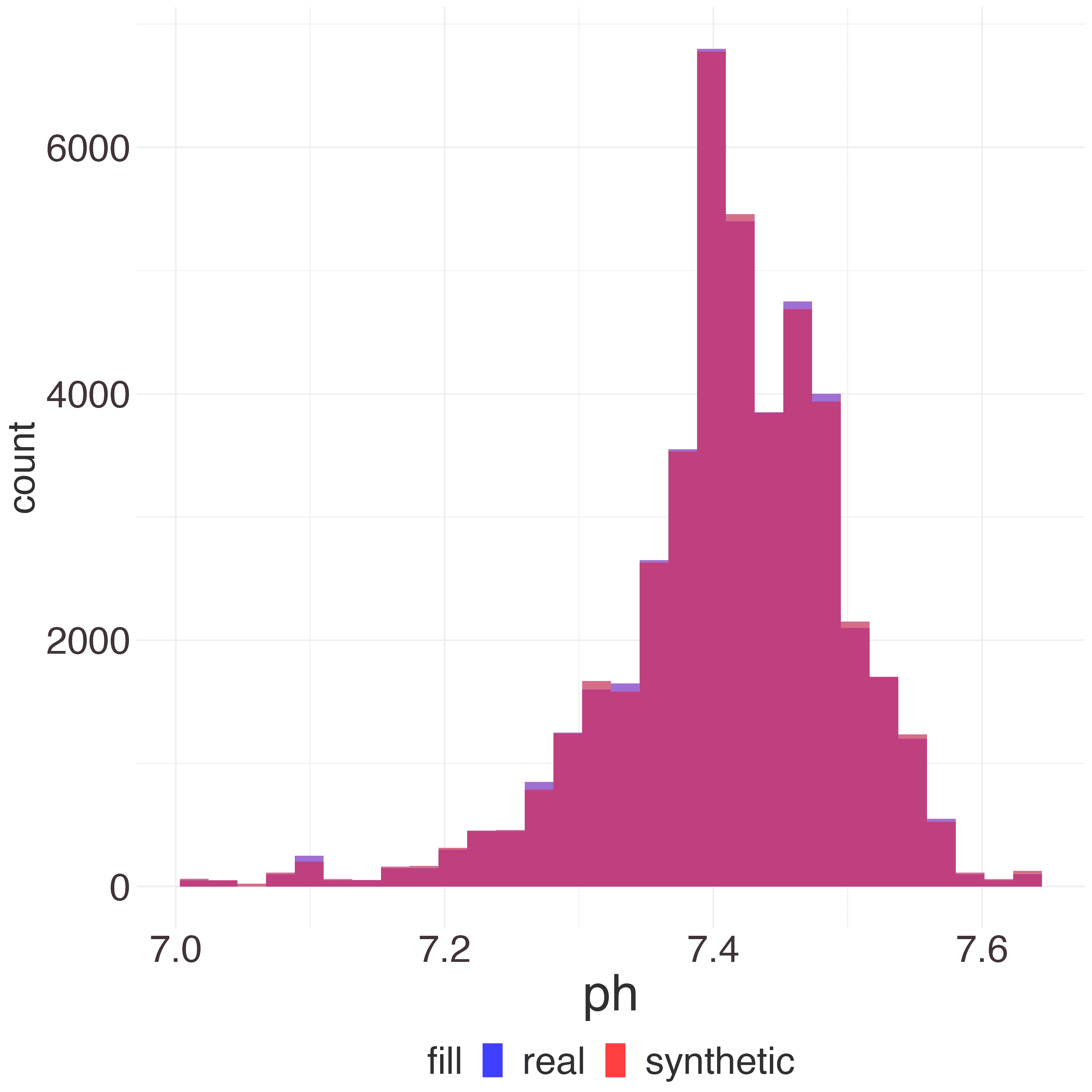}  \\
            (a) age & (b) aps & (c) surv2m & (d) resp & (e) alb & (f) ph \\
            \multicolumn{6}{c}{(1) C-vine, truncation at 1.} \\
             &&&&& \\
            \includegraphics[width=0.14\textwidth]{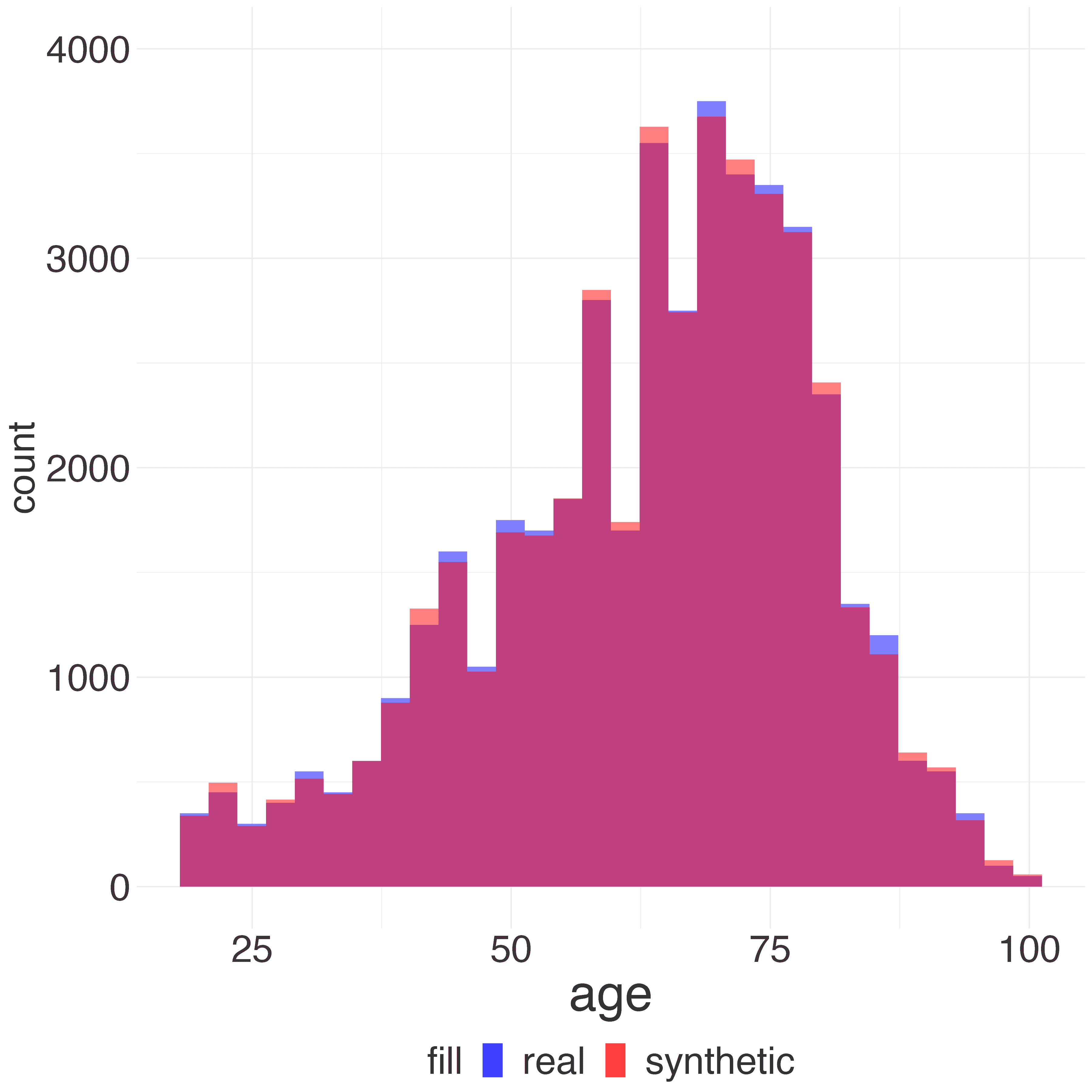} &  
            \includegraphics[width=0.14\textwidth]{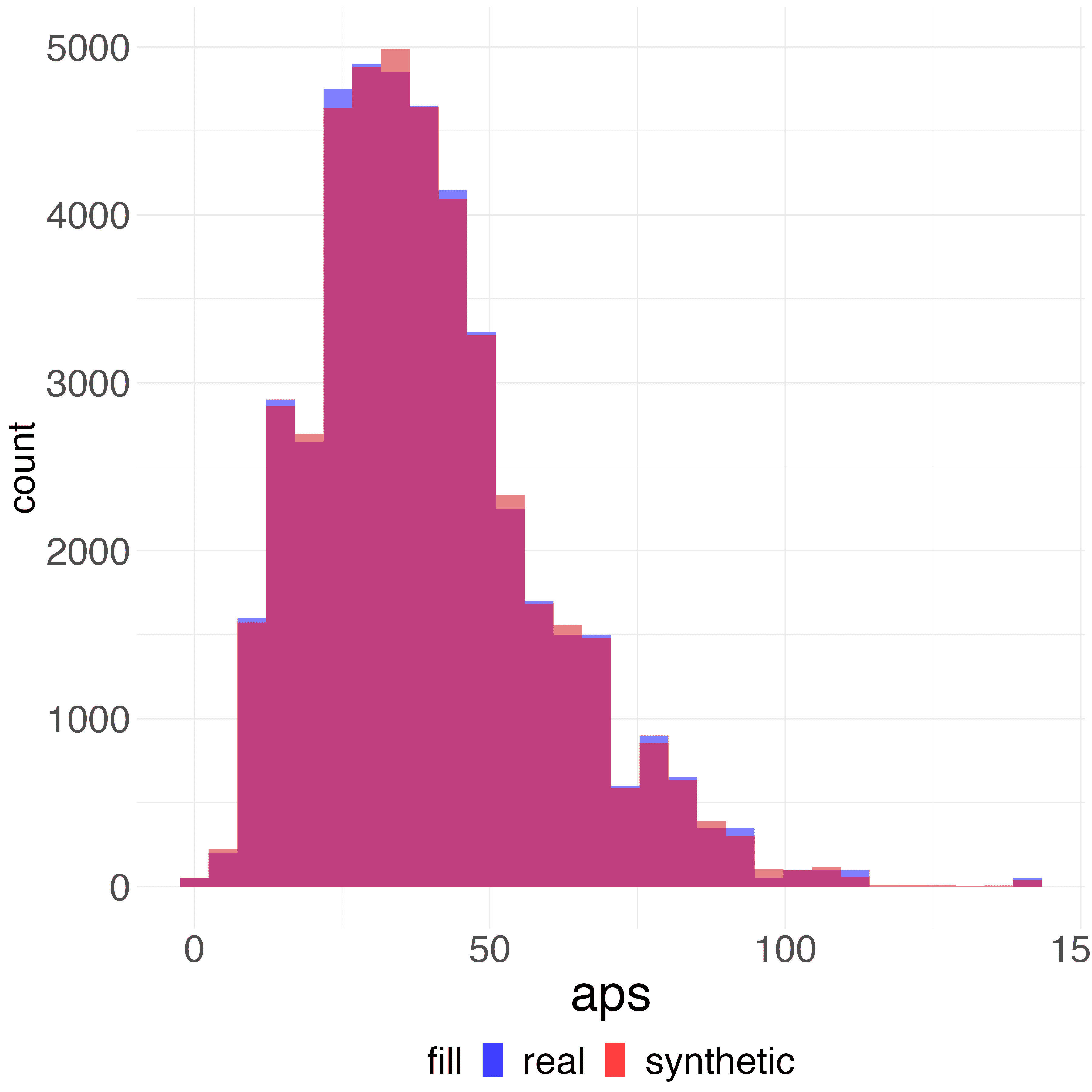} &
            \includegraphics[width=0.14\textwidth]{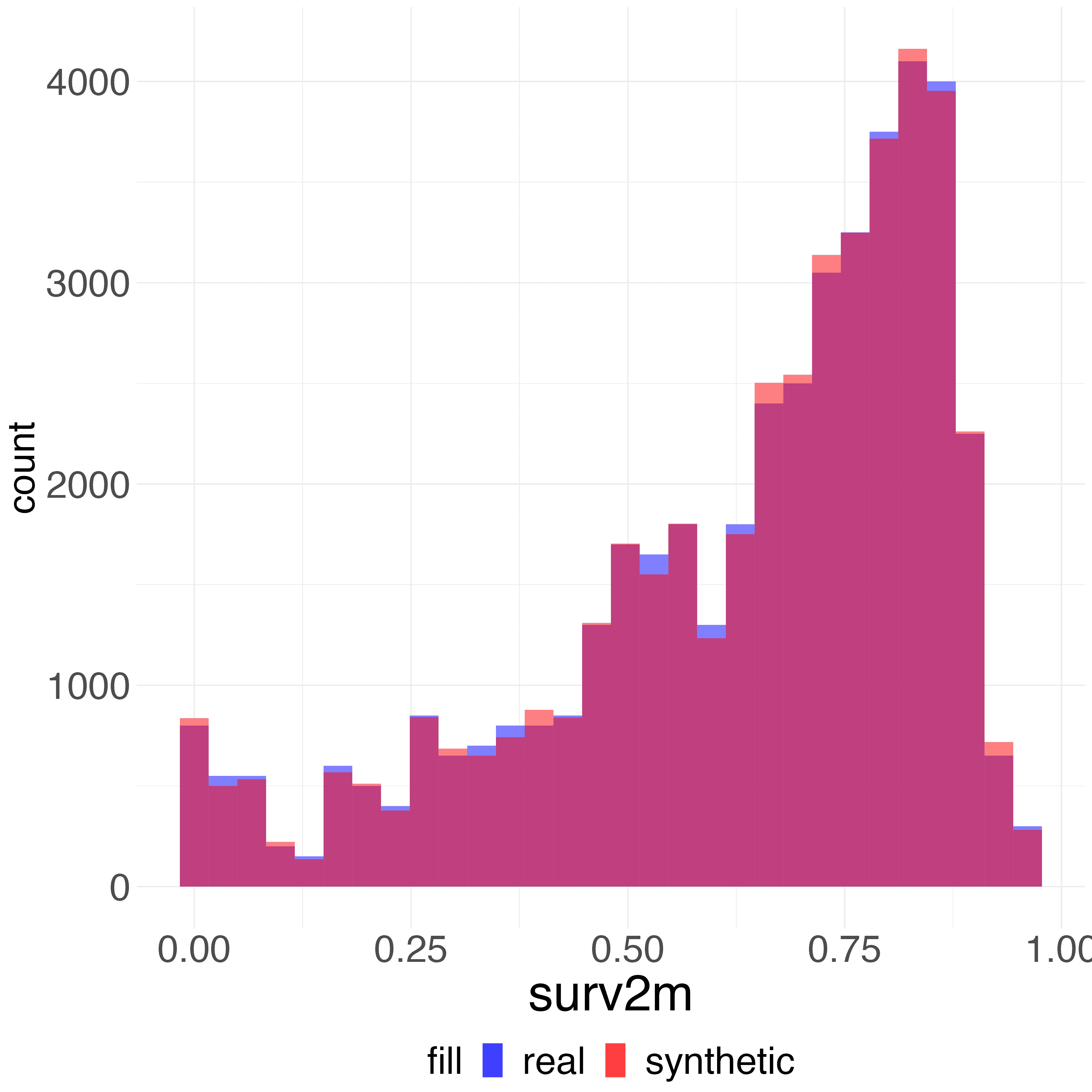} &
            \includegraphics[width=0.14\textwidth]{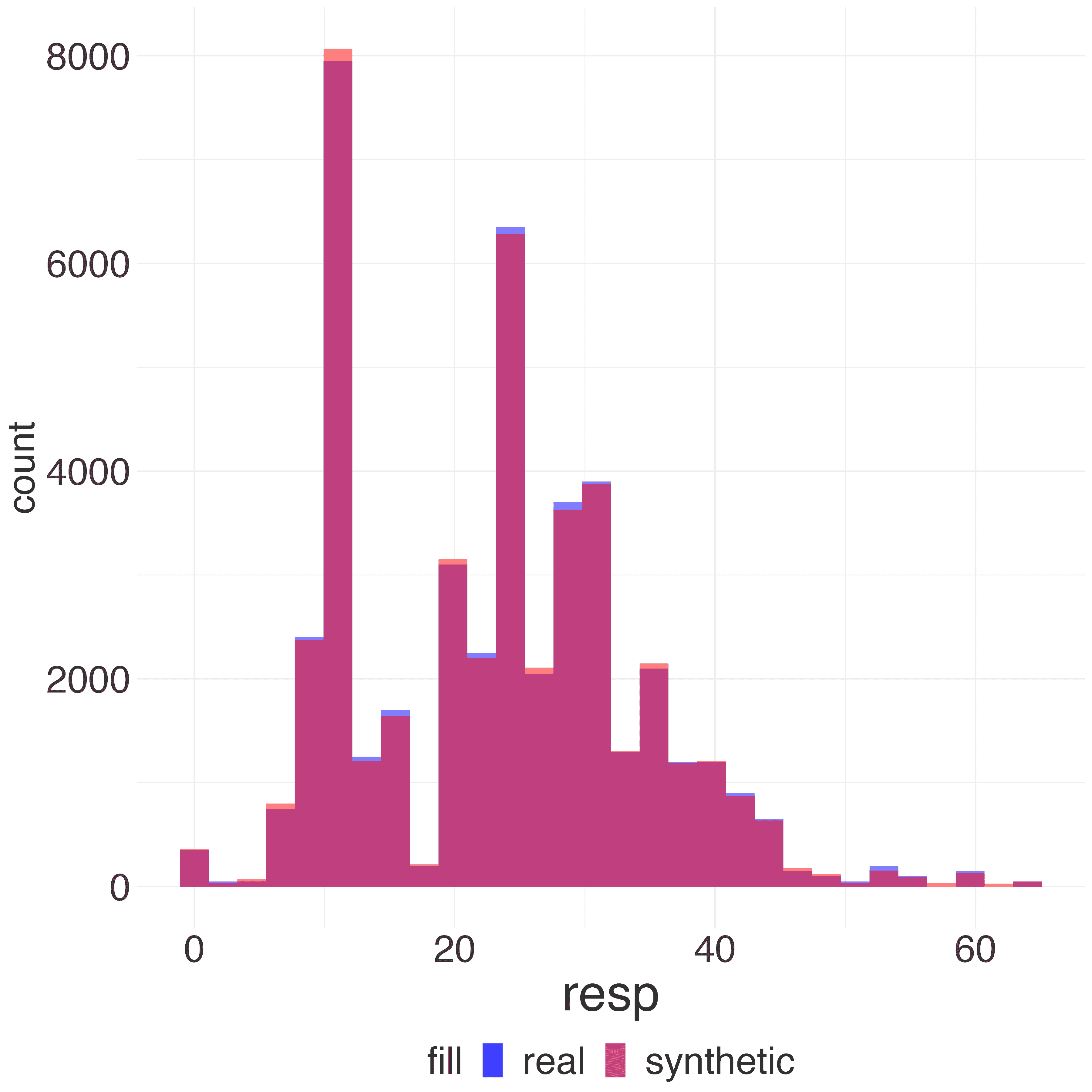} &
            \includegraphics[width=0.14\textwidth]{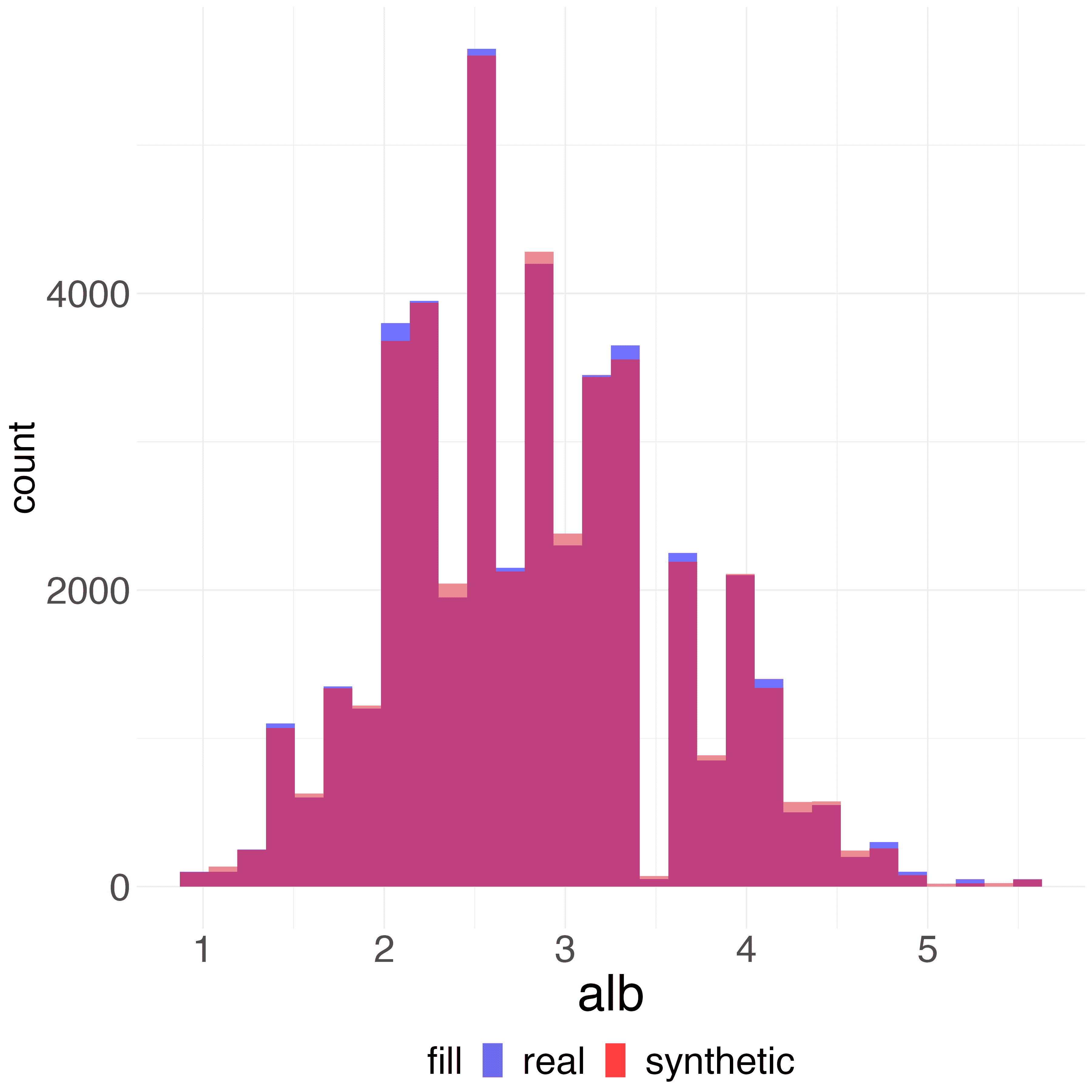} &
            \includegraphics[width=0.14\textwidth]{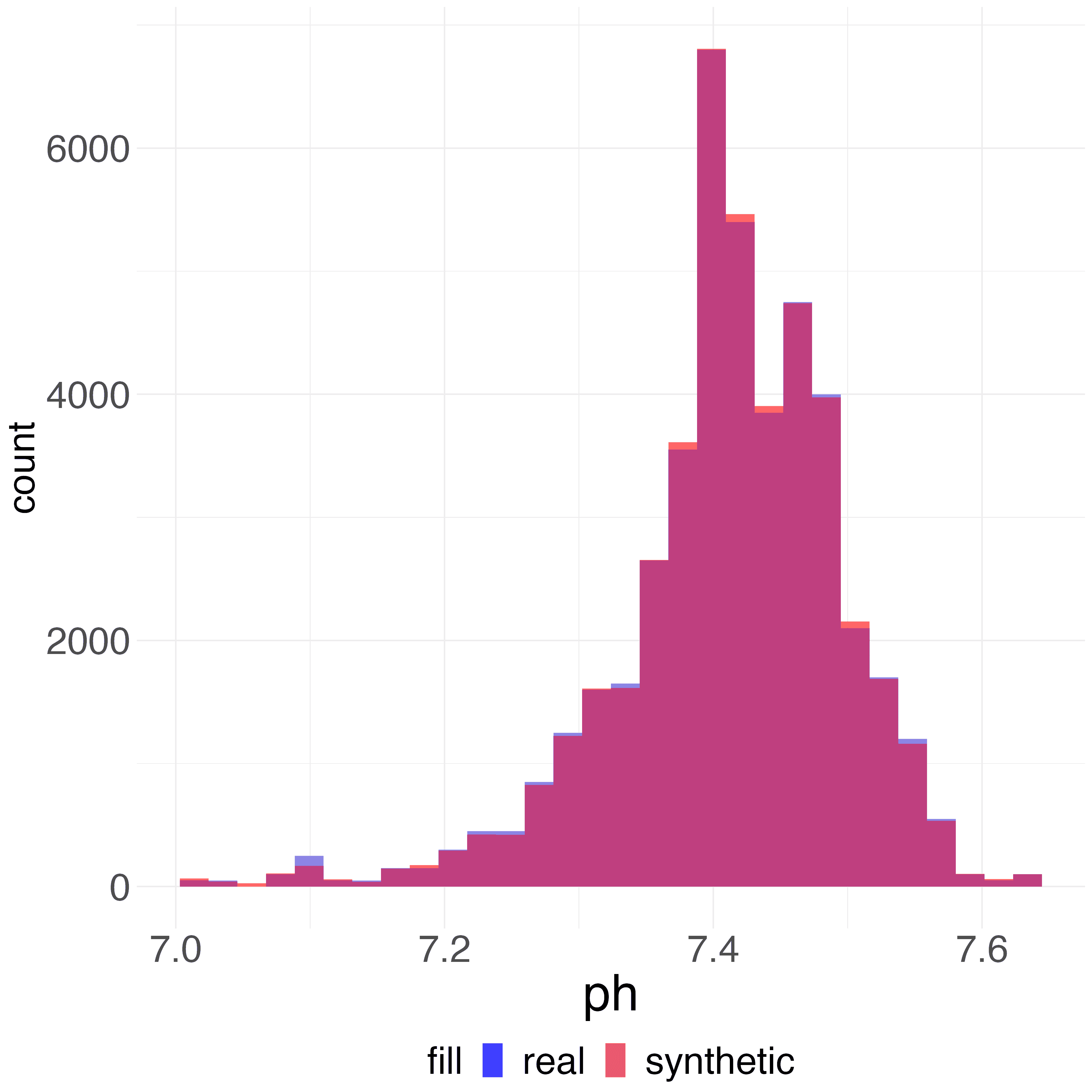}  \\
            (a) age & (b) aps & (c) surv2m & (d) resp & (e) alb & (f) ph \\
            \multicolumn{6}{c}{(2) C-vine, truncation at 10.} \\
             &&&&& \\             \includegraphics[width=0.14\textwidth]{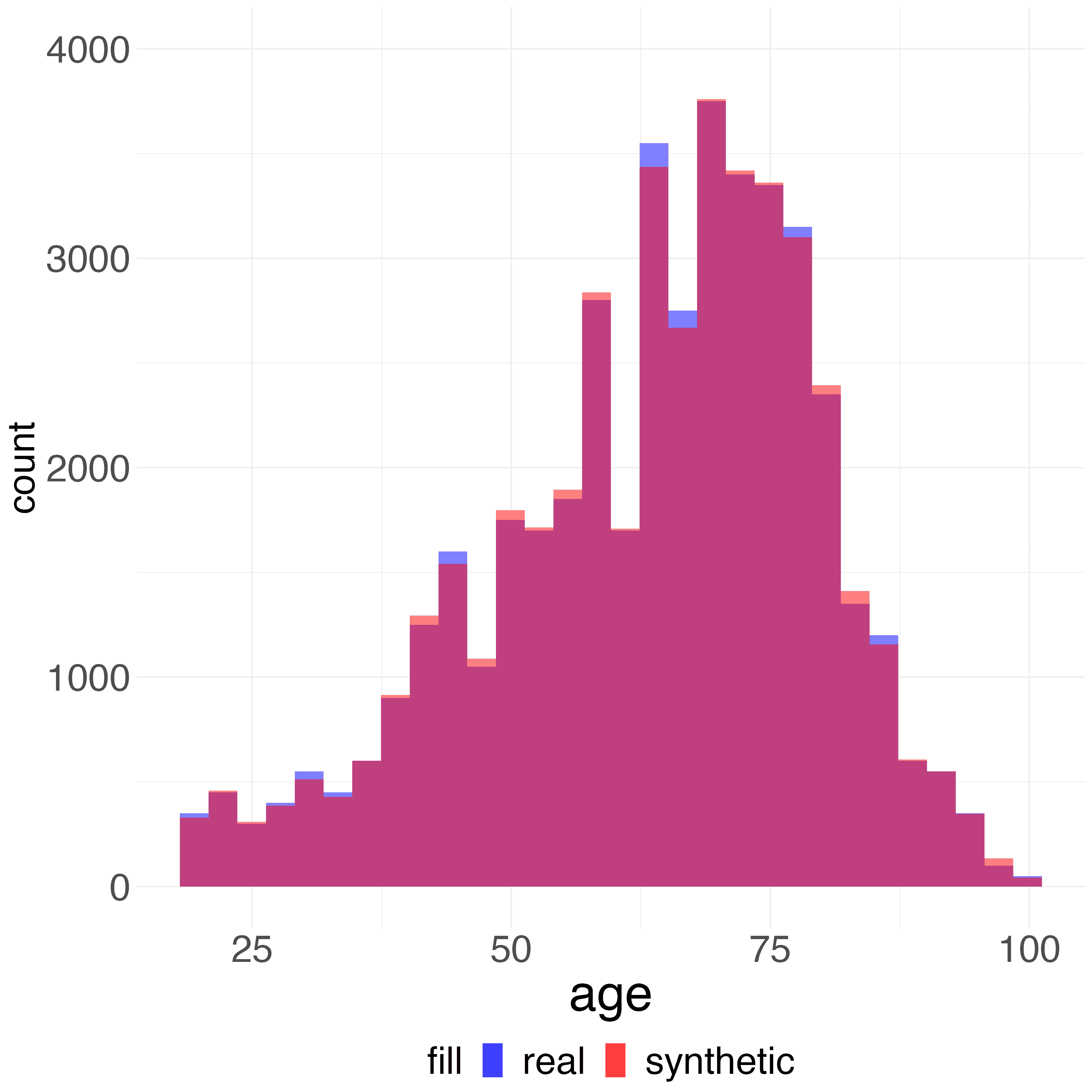} &  
            \includegraphics[width=0.14\textwidth]{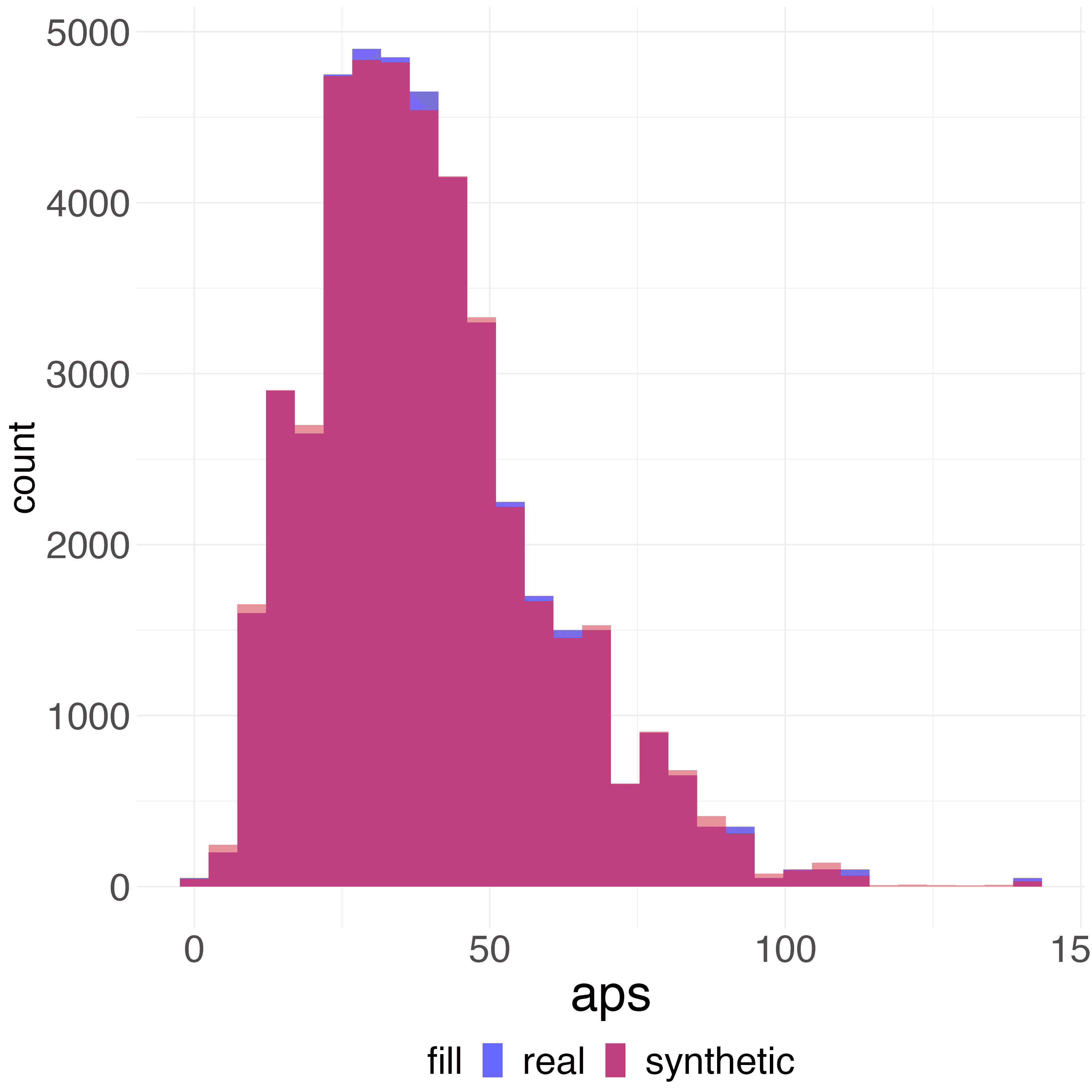} &
            \includegraphics[width=0.14\textwidth]{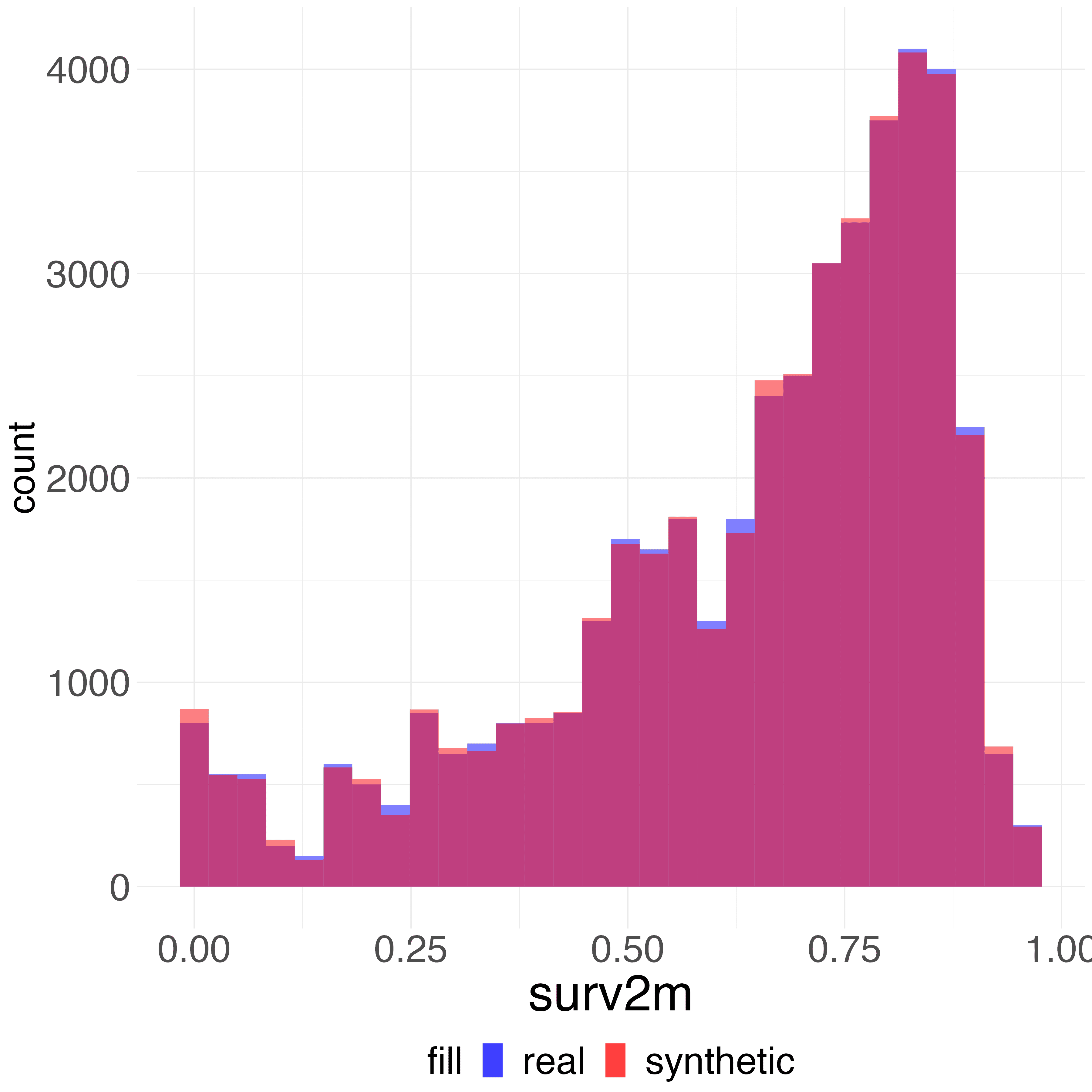} &
            \includegraphics[width=0.14\textwidth]{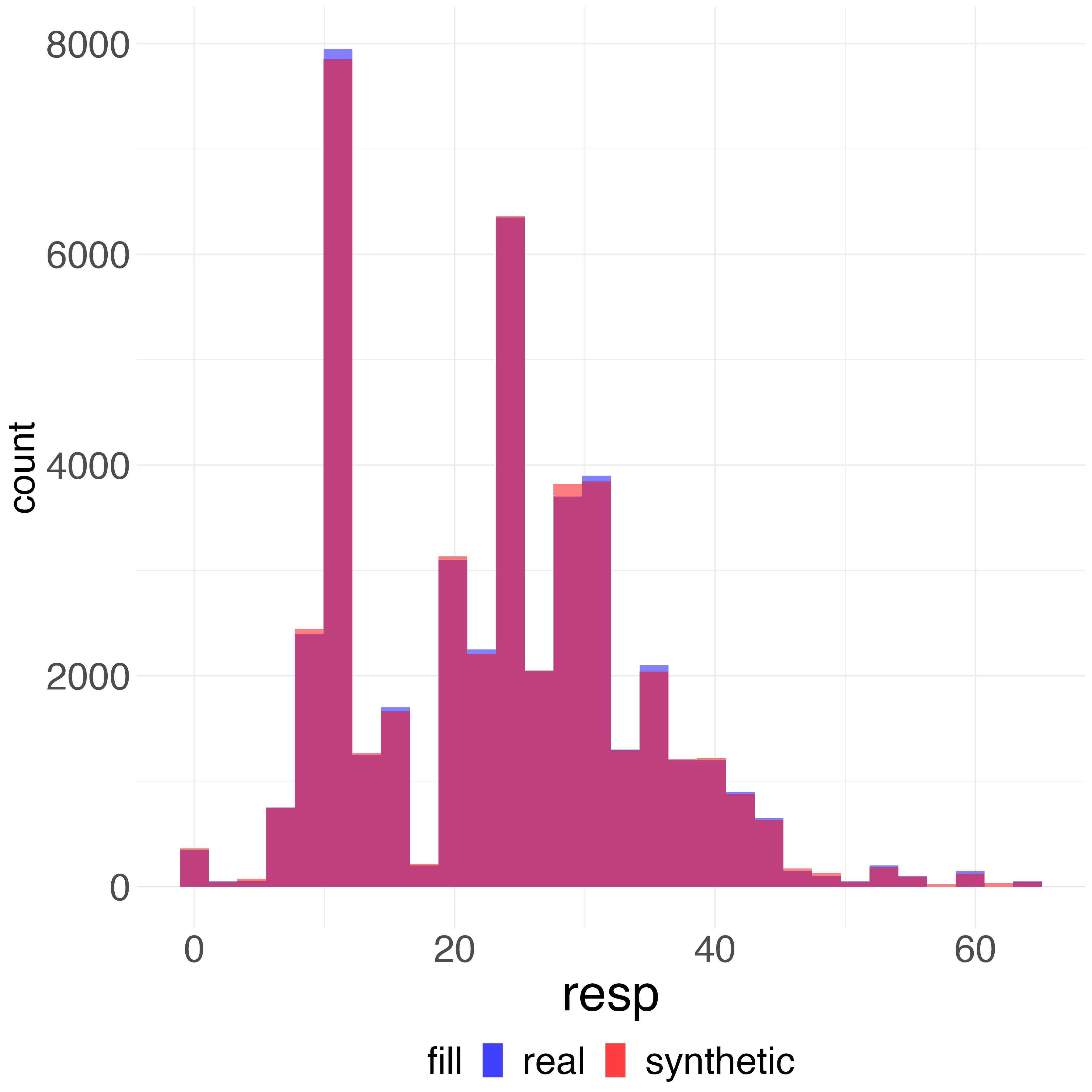} &
            \includegraphics[width=0.14\textwidth]{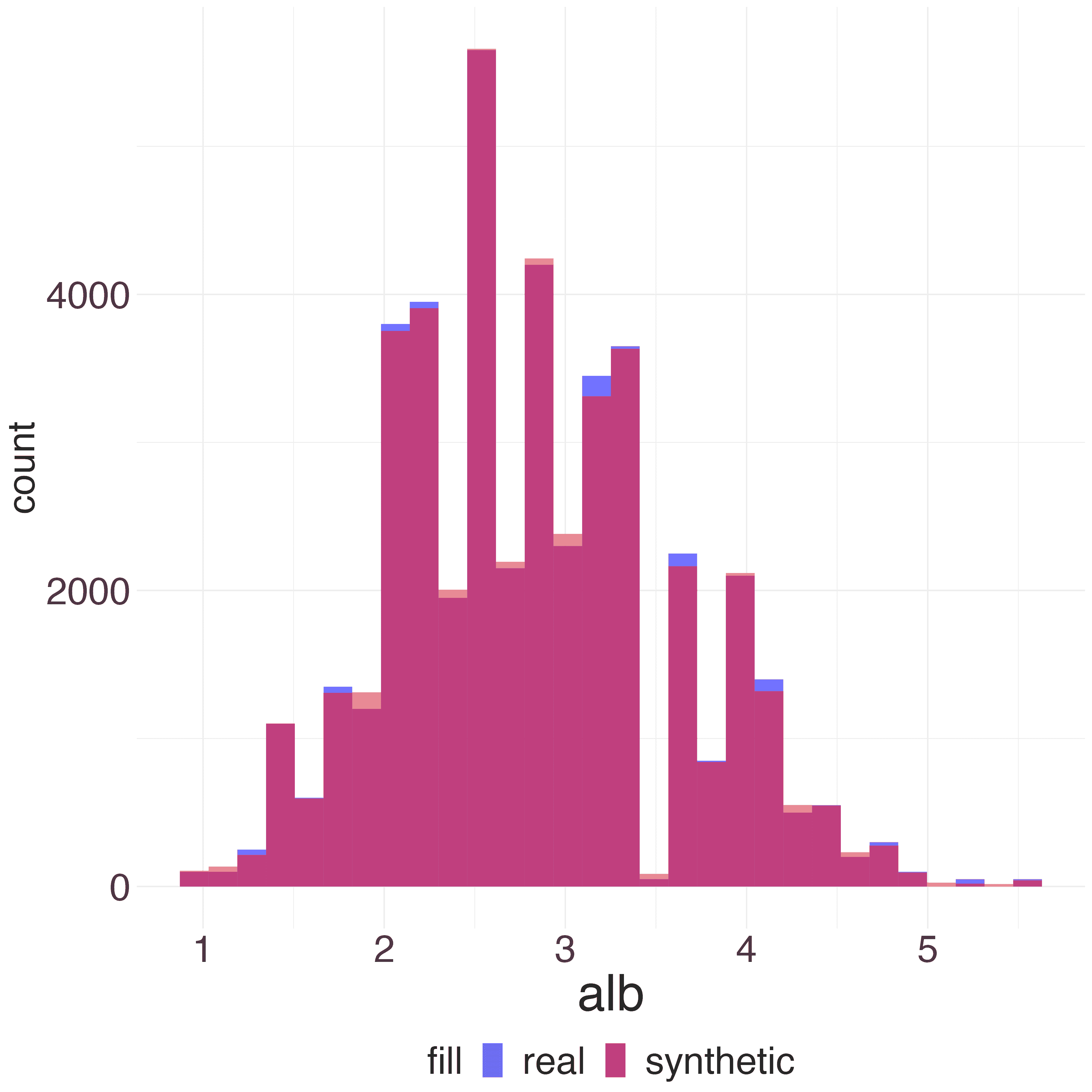} &
            \includegraphics[width=0.14\textwidth]{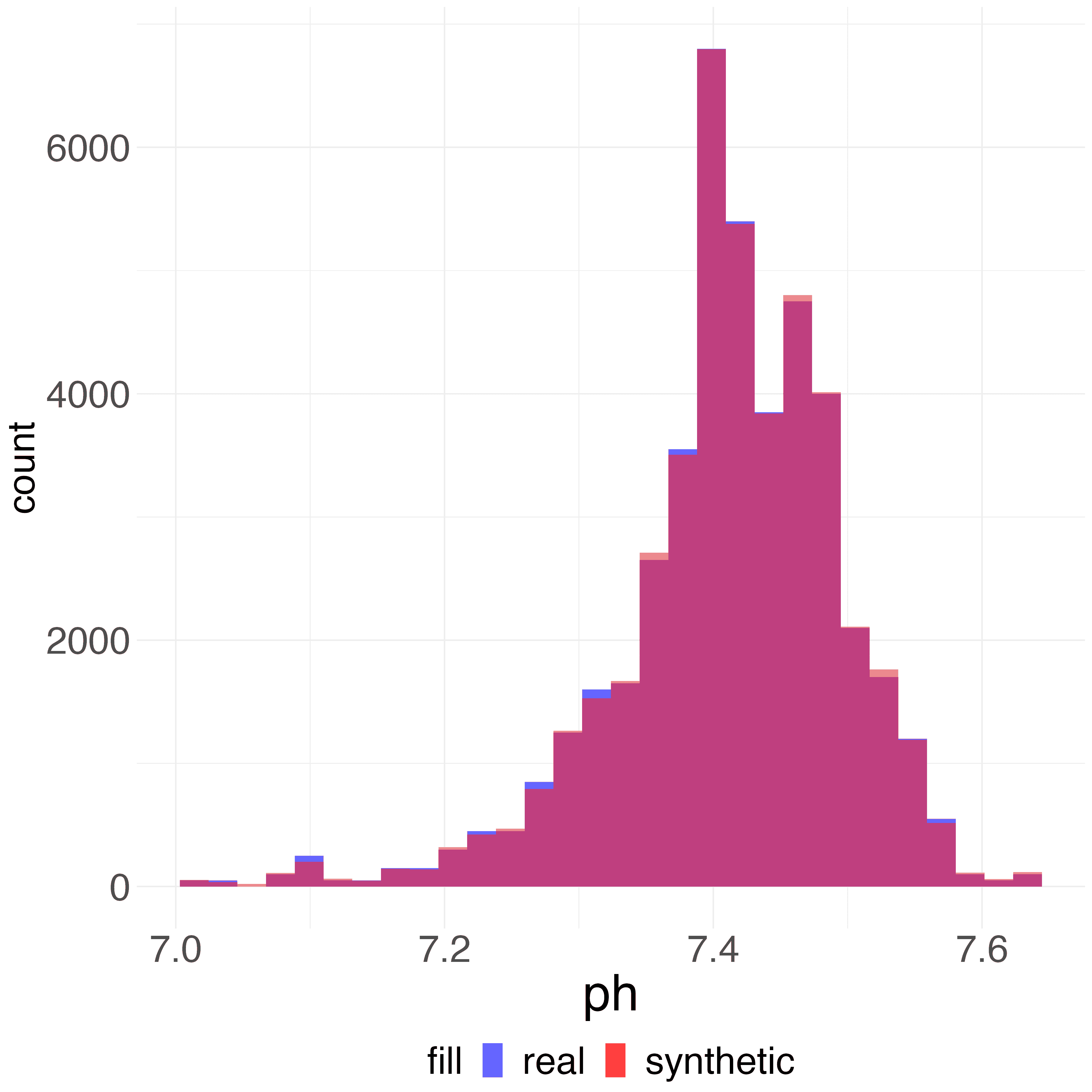}  \\
            (a) age & (b) aps & (c) surv2m & (d) resp & (e) alb & (f) ph \\
            \multicolumn{6}{c}{(3) C-vine, no truncation.} 
    \end{tabular}
    \caption{SUPPORT2 data: Overlapping empirical marginal histograms of covariates \textit{age, aps, surv2m, resp, alb} and \textit{ph} estimate on the real data (blue) and synthetic data (red) generated by a C-vine with truncation level 1 and 10 and no truncation.}
    \label{fig:margHist_support2small_Cvine}
\end{figure}

\begin{figure}[ht]
    \centering
    \begin{tabular}{cccccc}
            \includegraphics[width=0.14\textwidth]{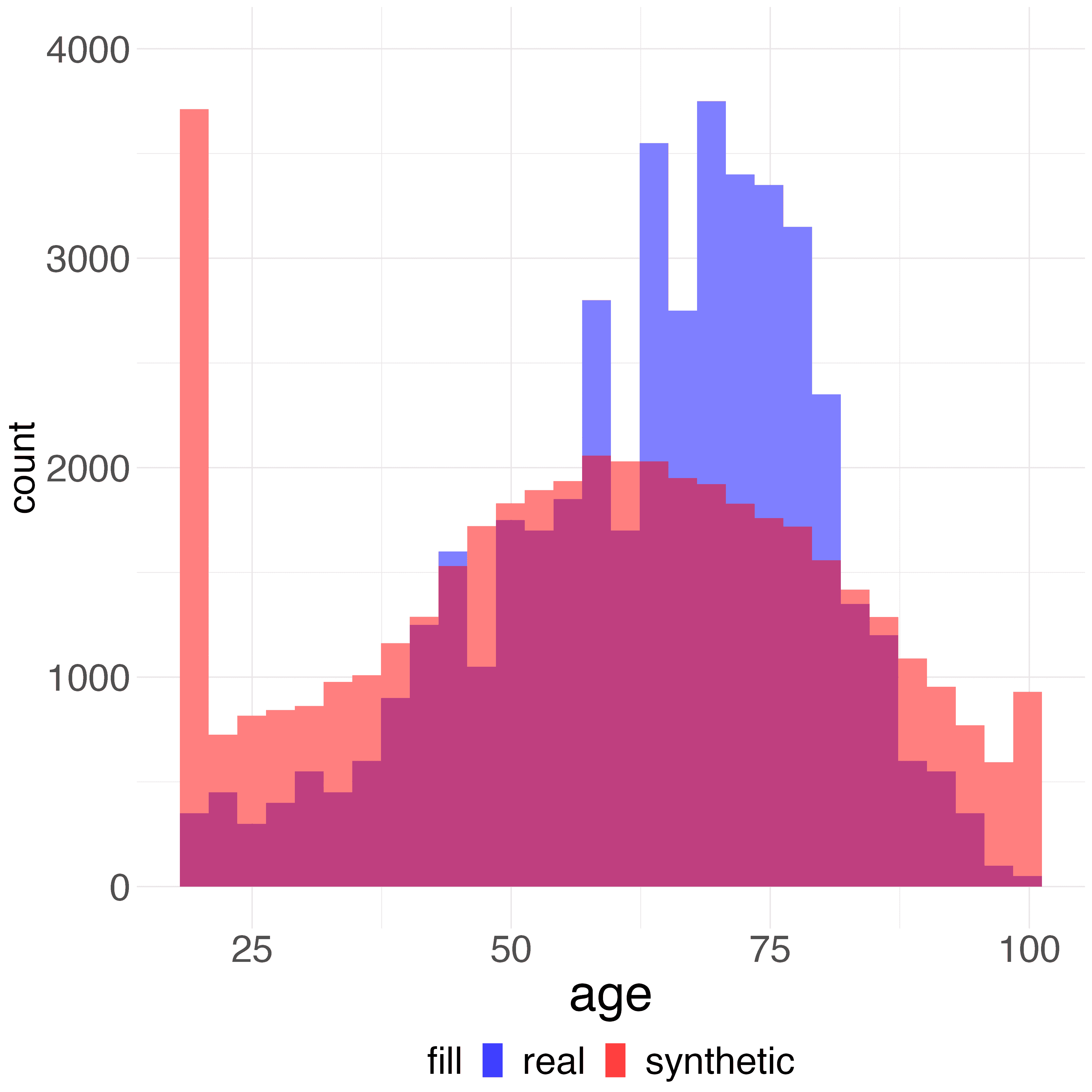} &  
            \includegraphics[width=0.14\textwidth]{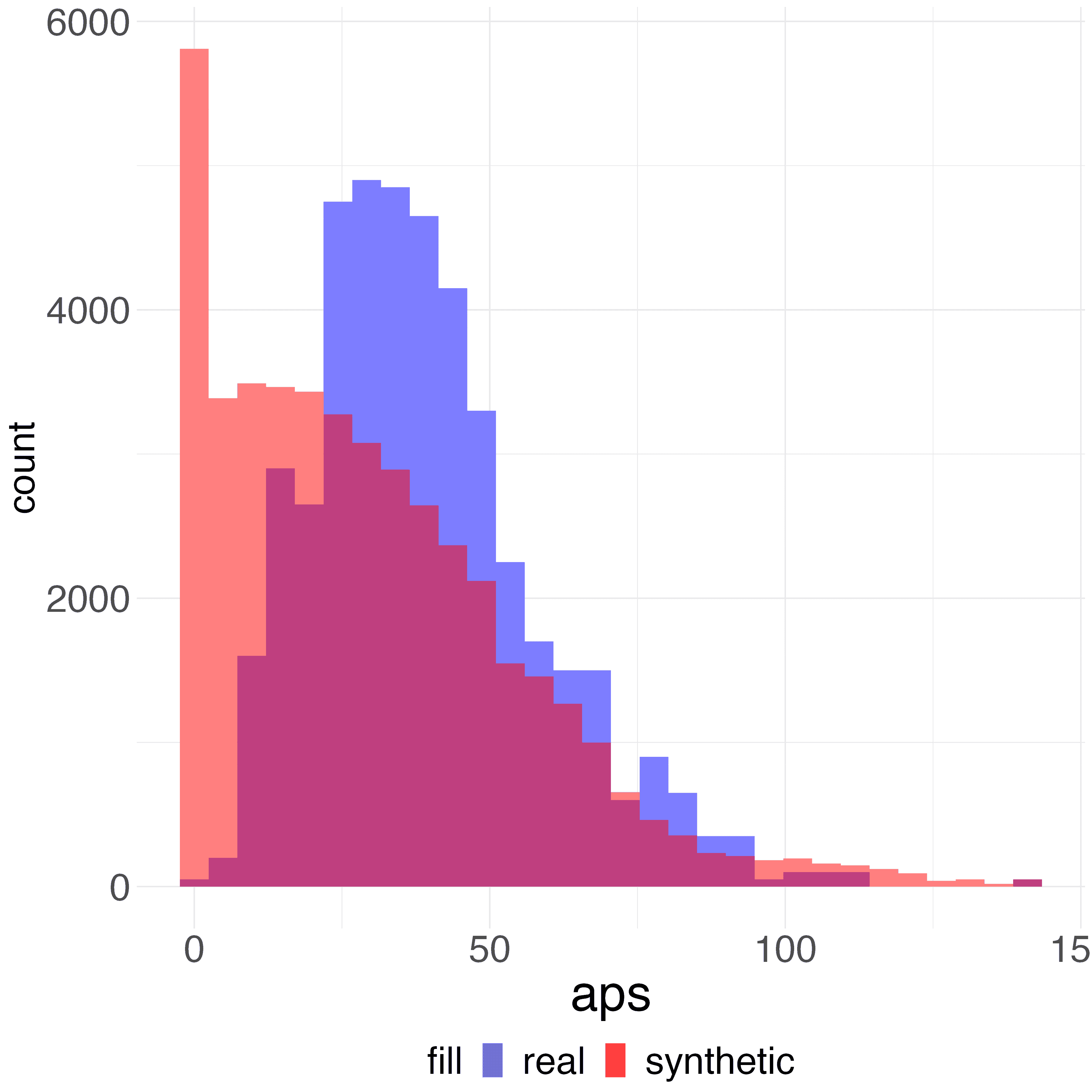} &
            \includegraphics[width=0.14\textwidth]{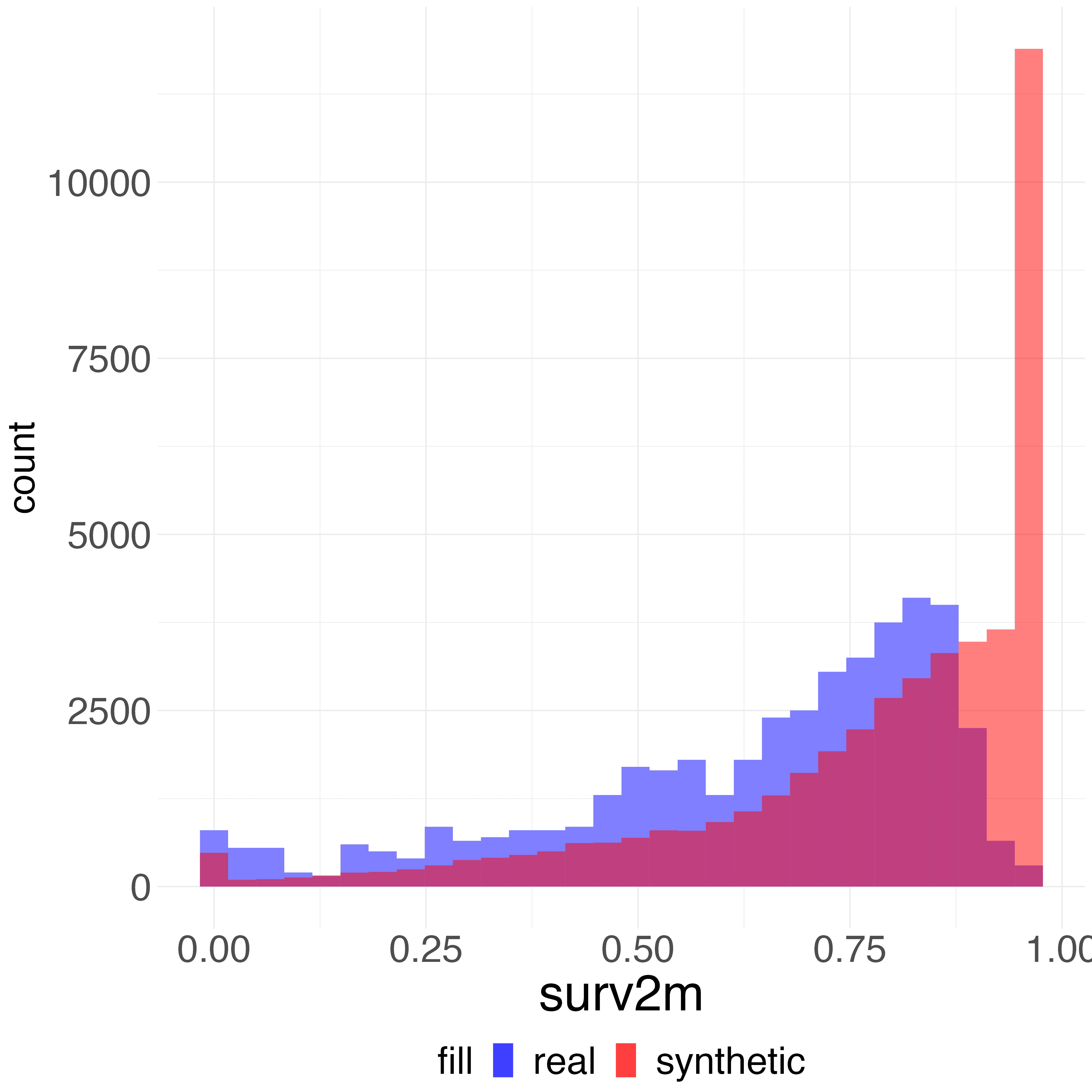} &
            \includegraphics[width=0.14\textwidth]{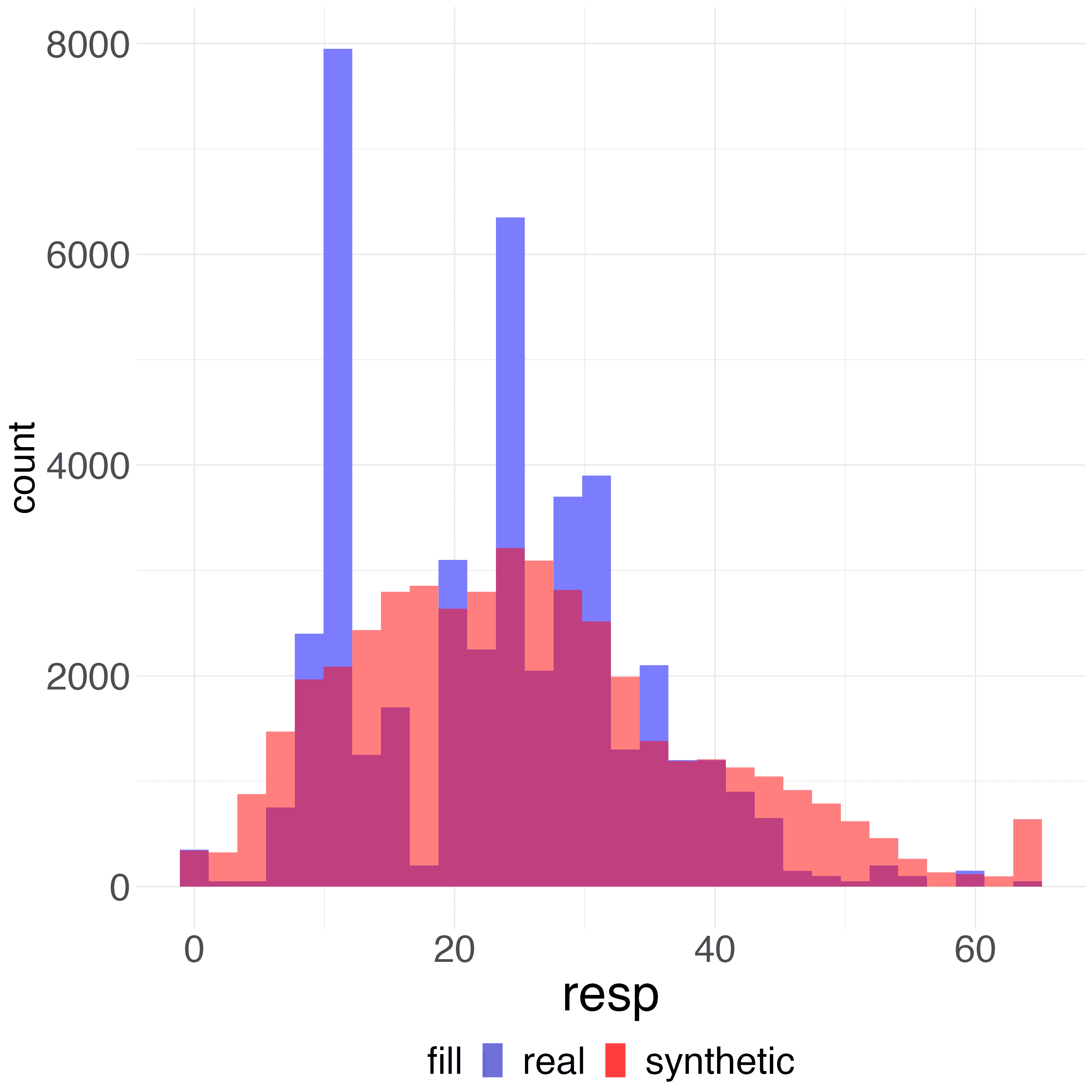} &
            \includegraphics[width=0.14\textwidth]{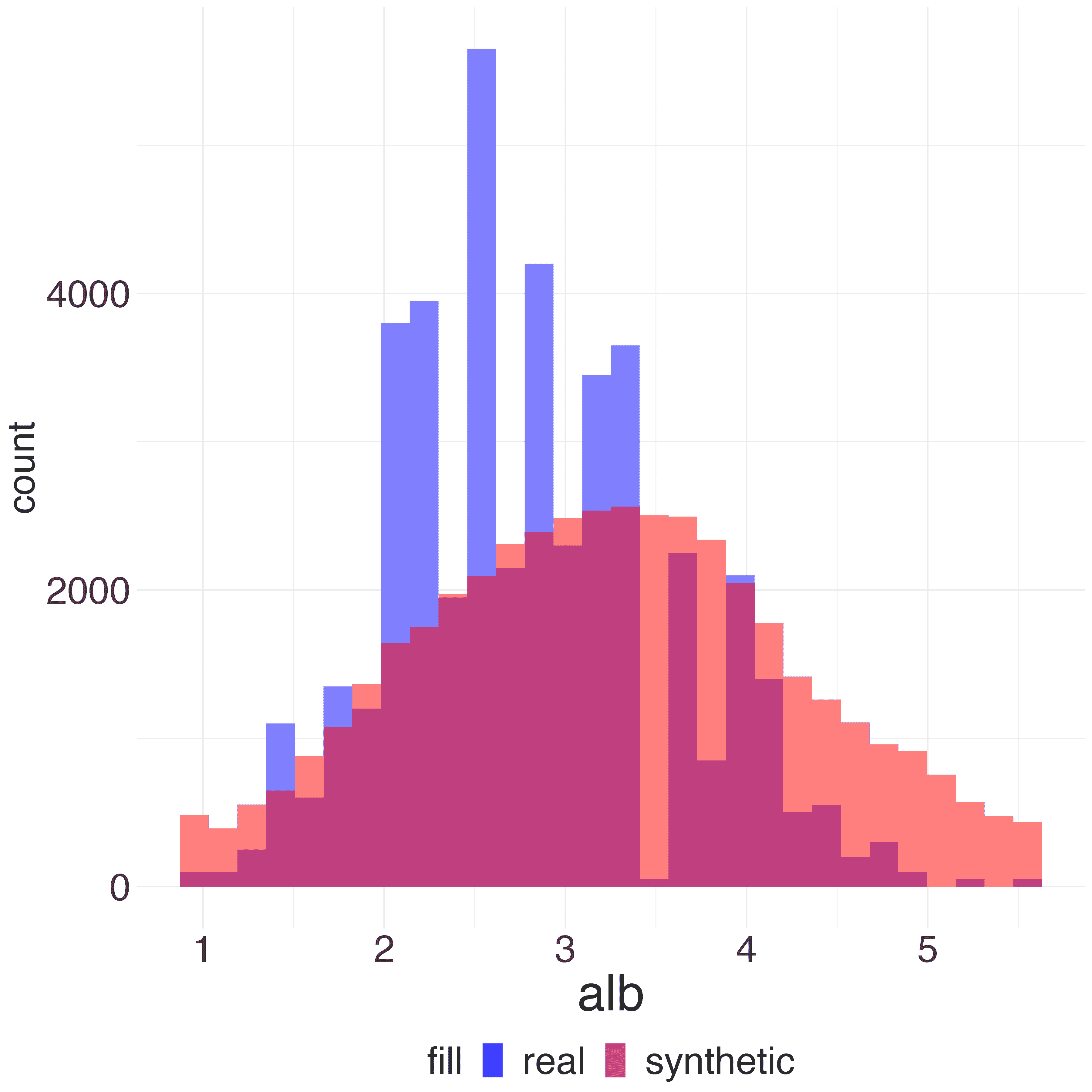} &
            \includegraphics[width=0.14\textwidth]{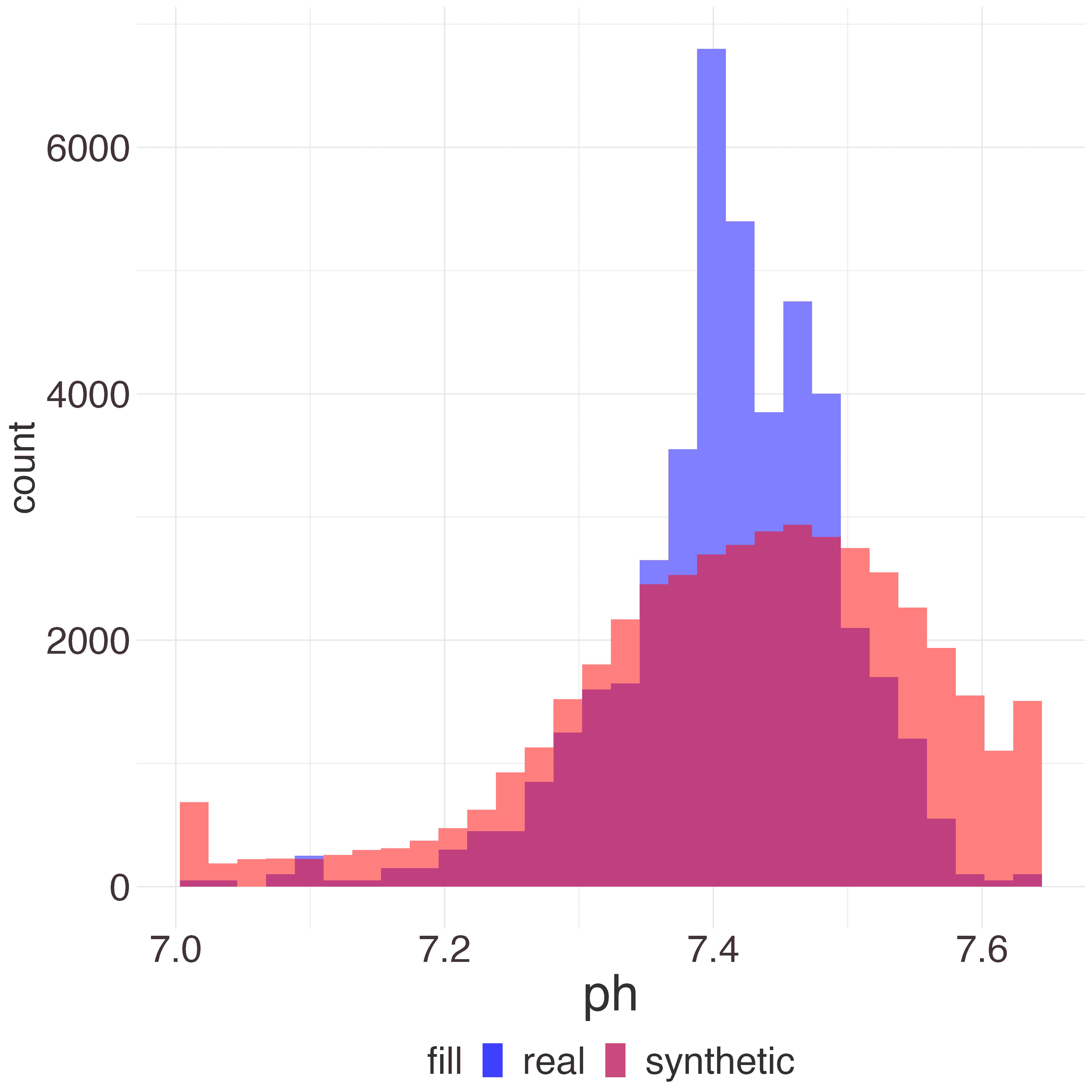}  \\
            (a) age & (b) aps & (c) surv2m & (d) resp & (e) alb & (f) ph \\
            \multicolumn{6}{c}{(4) CTGAN.} \\
            &&&&& \\
            \includegraphics[width=0.14\textwidth]{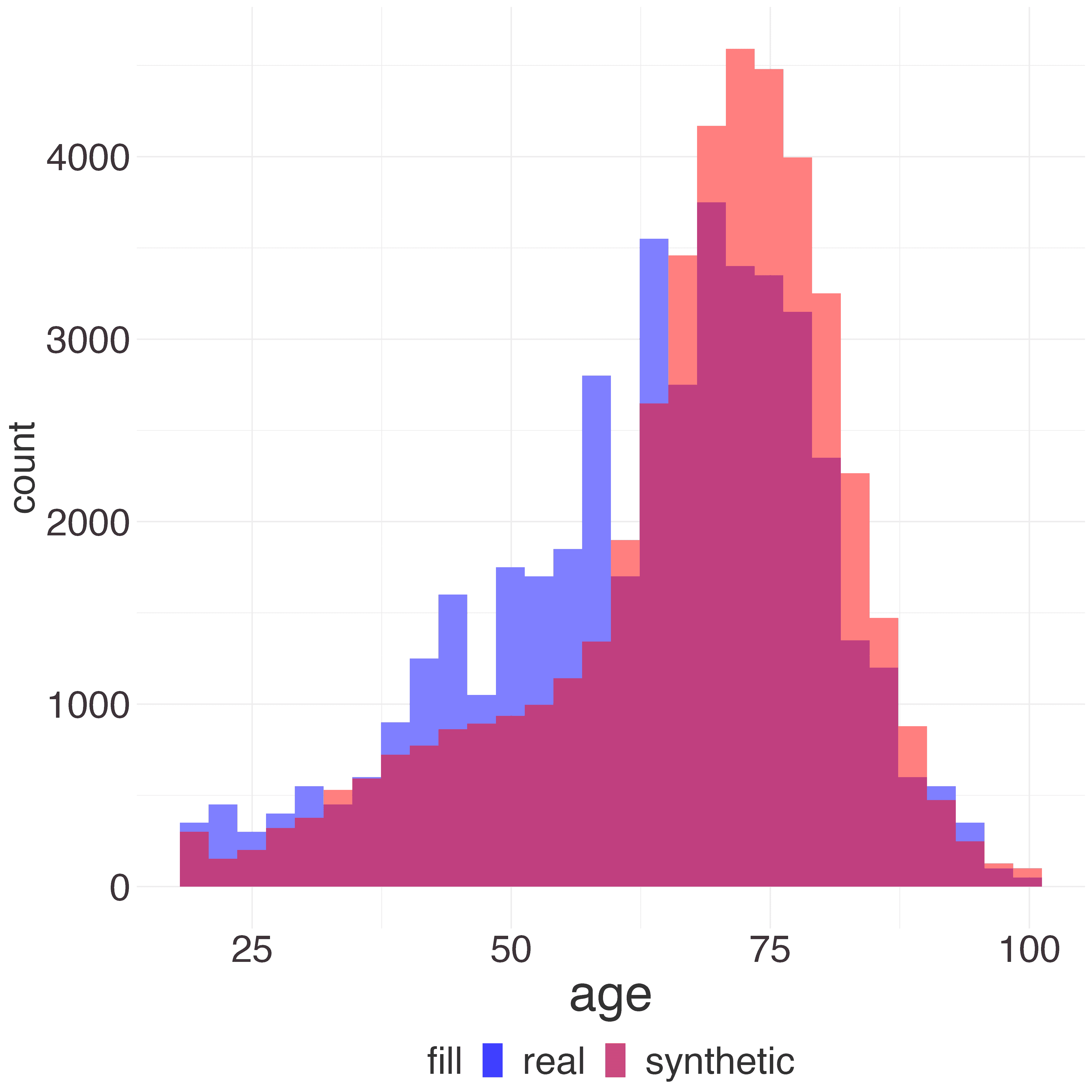} &  
            \includegraphics[width=0.14\textwidth]{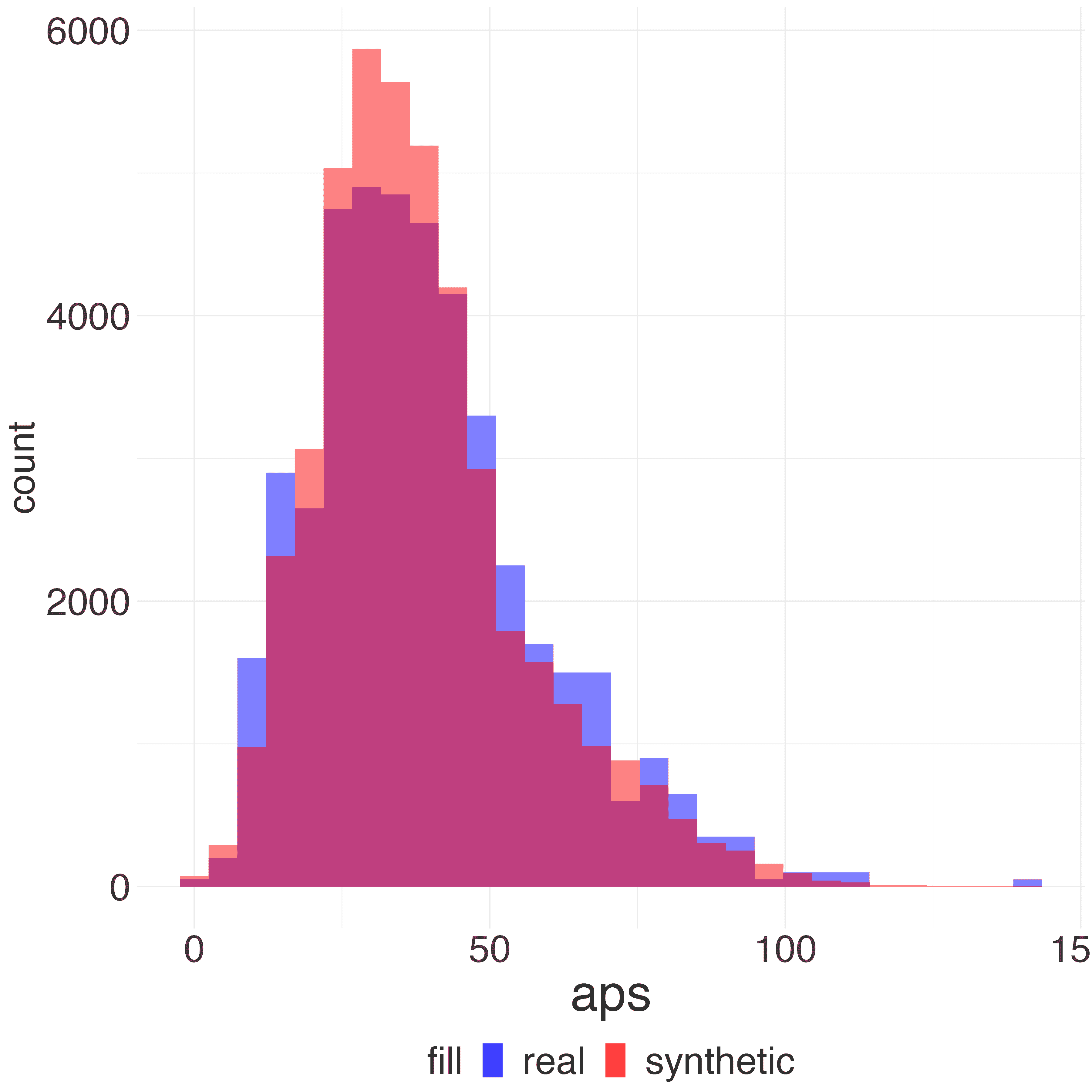} &
            \includegraphics[width=0.14\textwidth]{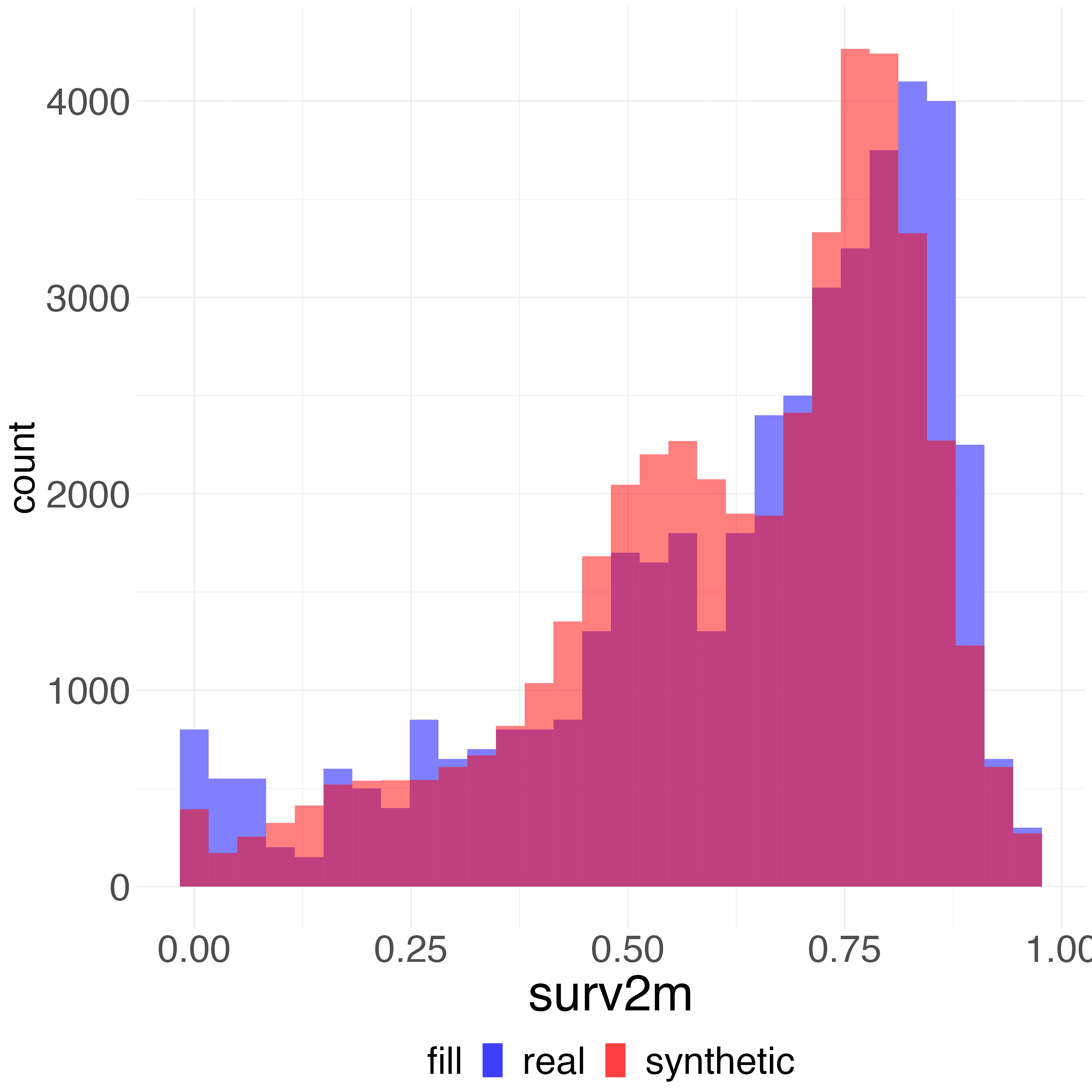} &
            \includegraphics[width=0.14\textwidth]{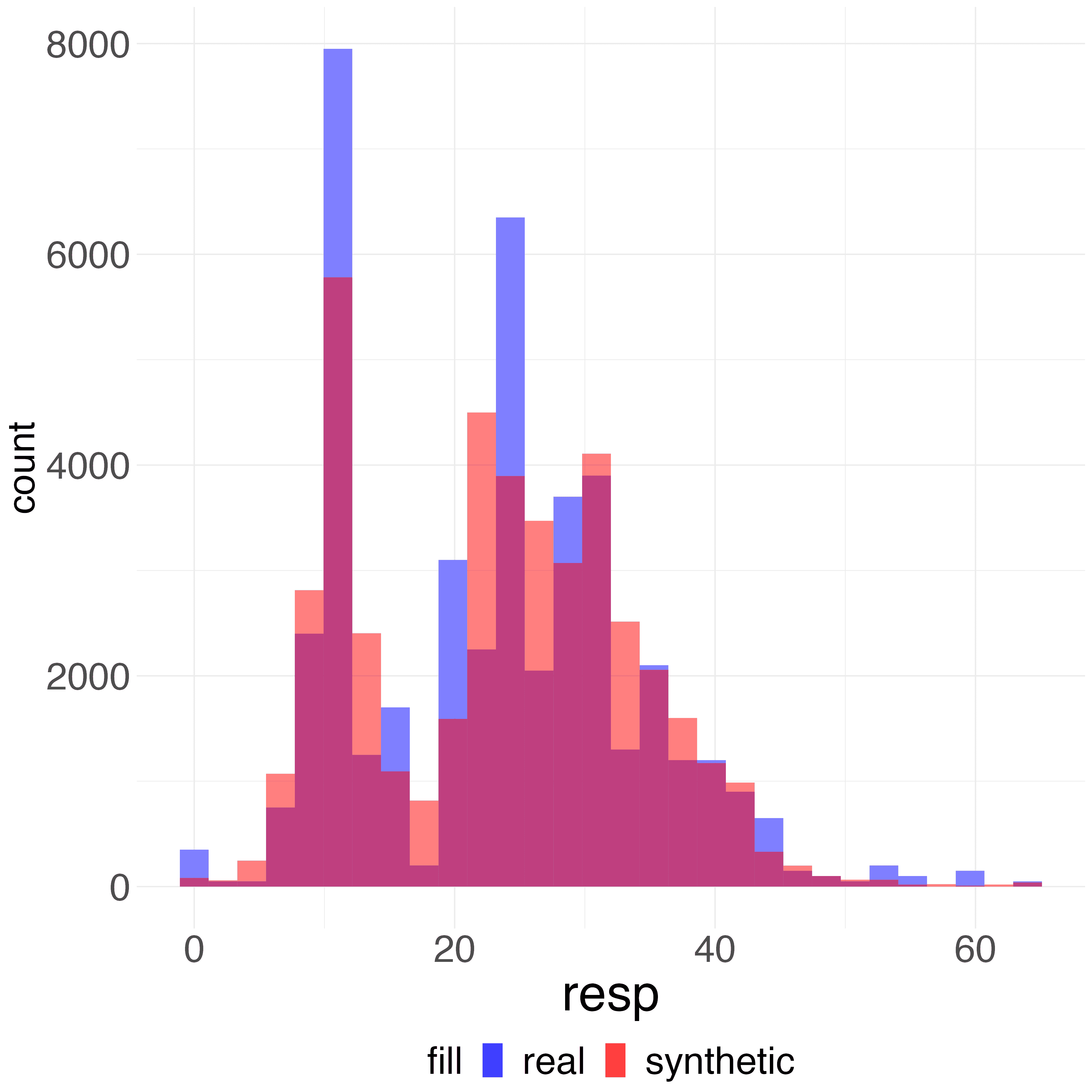} &
            \includegraphics[width=0.14\textwidth]{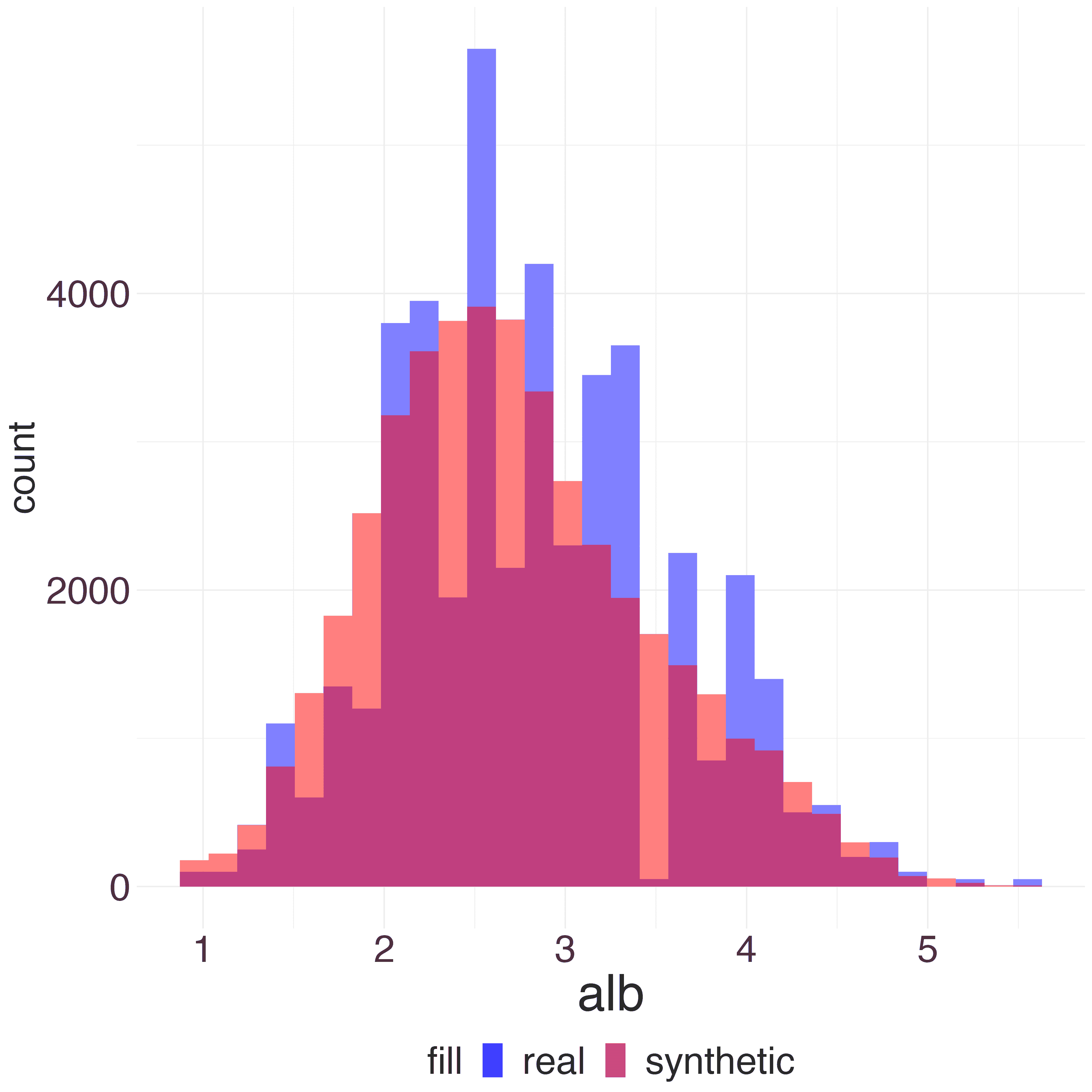} &
            \includegraphics[width=0.14\textwidth]{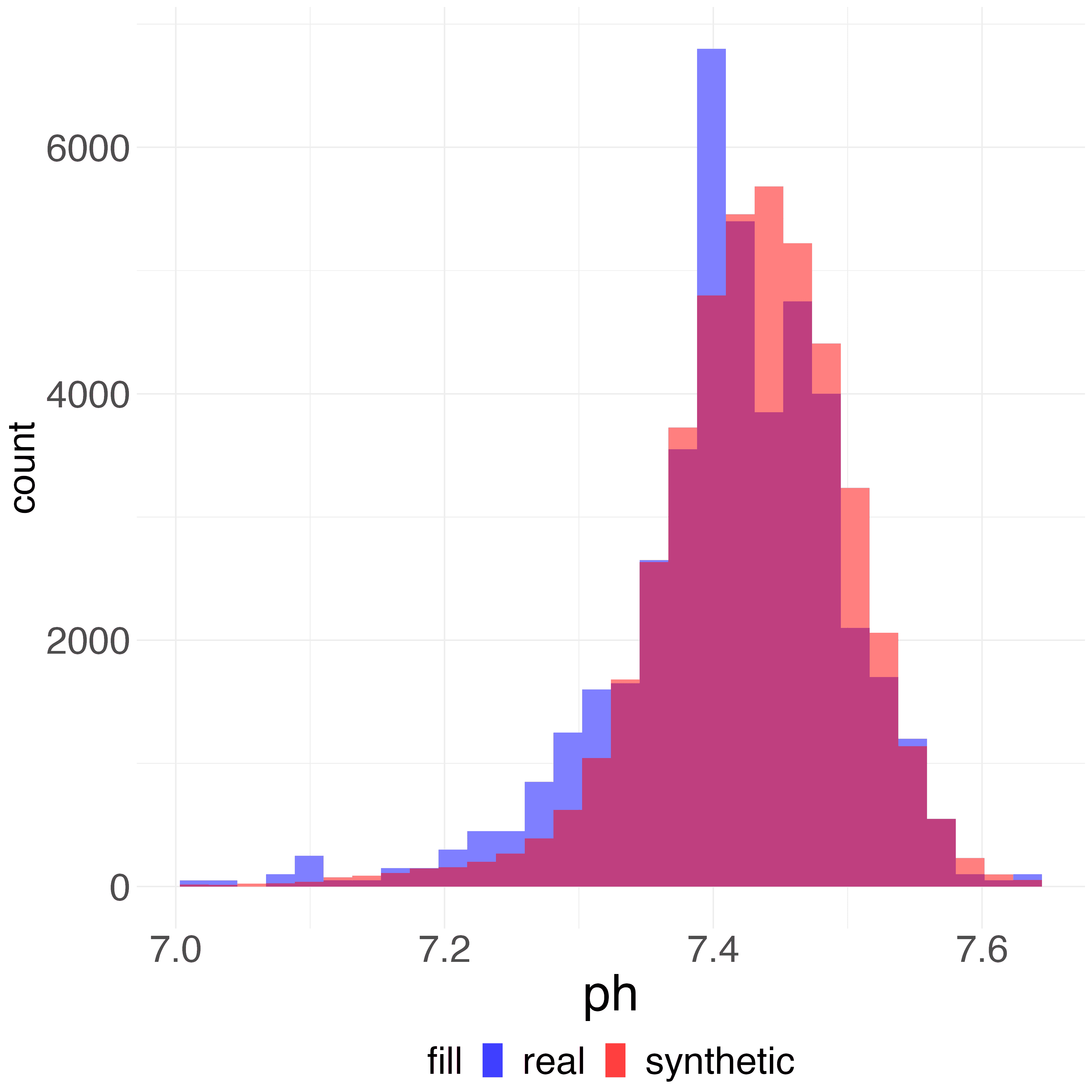}  \\
            (a) age & (b) aps & (c) surv2m & (d) resp & (e) alb & (f) ph \\
            \multicolumn{6}{c}{(5) TVAE.} \\
            &&&&& \\
    \end{tabular}
    \caption{SUPPORT2 data: Overlapping empirical marginal histograms of covariates \textit{age, aps, surv2m, resp, alb} and \textit{ph} estimate on the real data (blue) and synthetic data (red) generated by a CTGAN and TVAE.}
    \label{fig:margHist_support2small_CTGAN_TVAE}
\end{figure}

\begin{figure}[ht]
    \centering
    \begin{tabular}{cccccc}
            \includegraphics[width=0.14\textwidth]{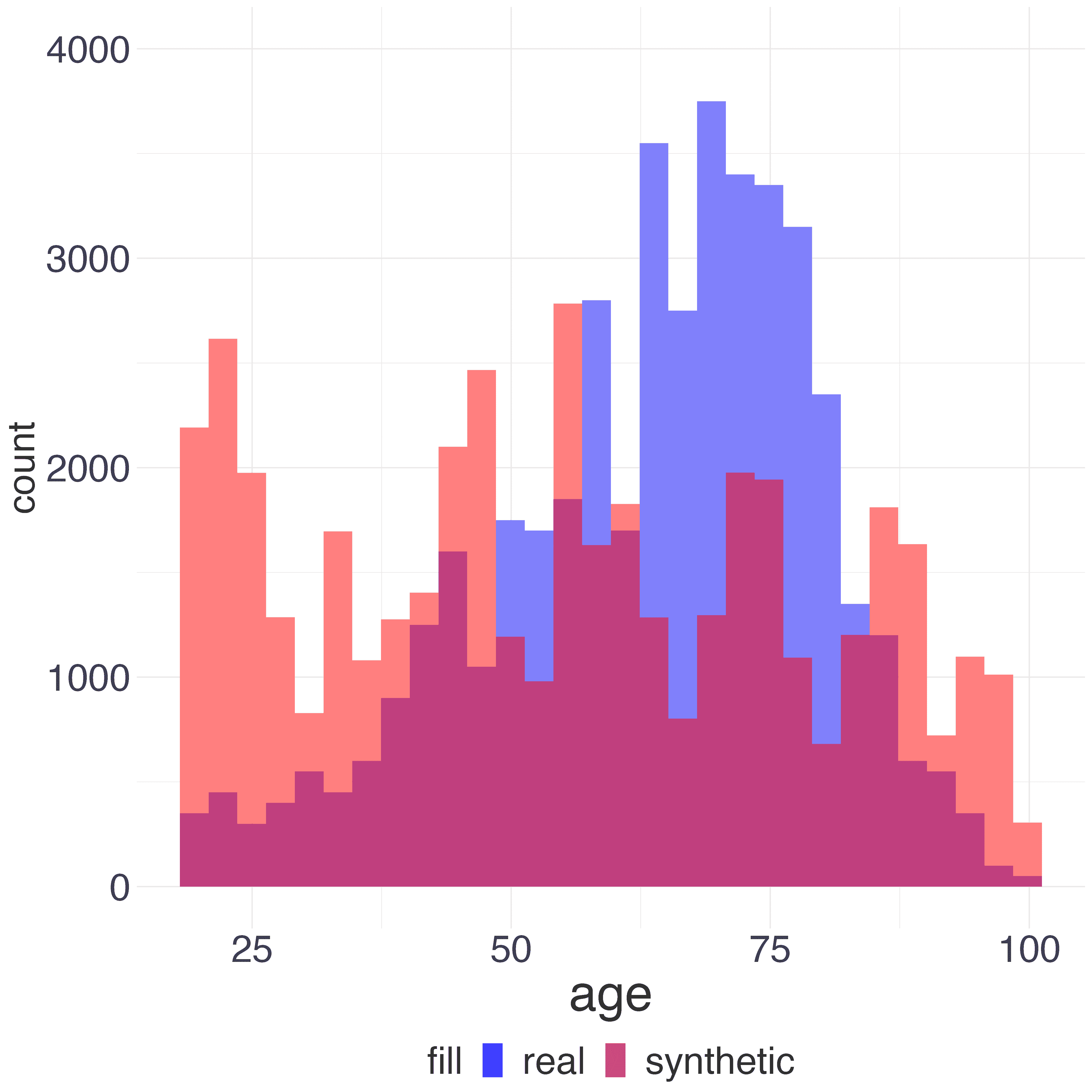} &  
            \includegraphics[width=0.14\textwidth]{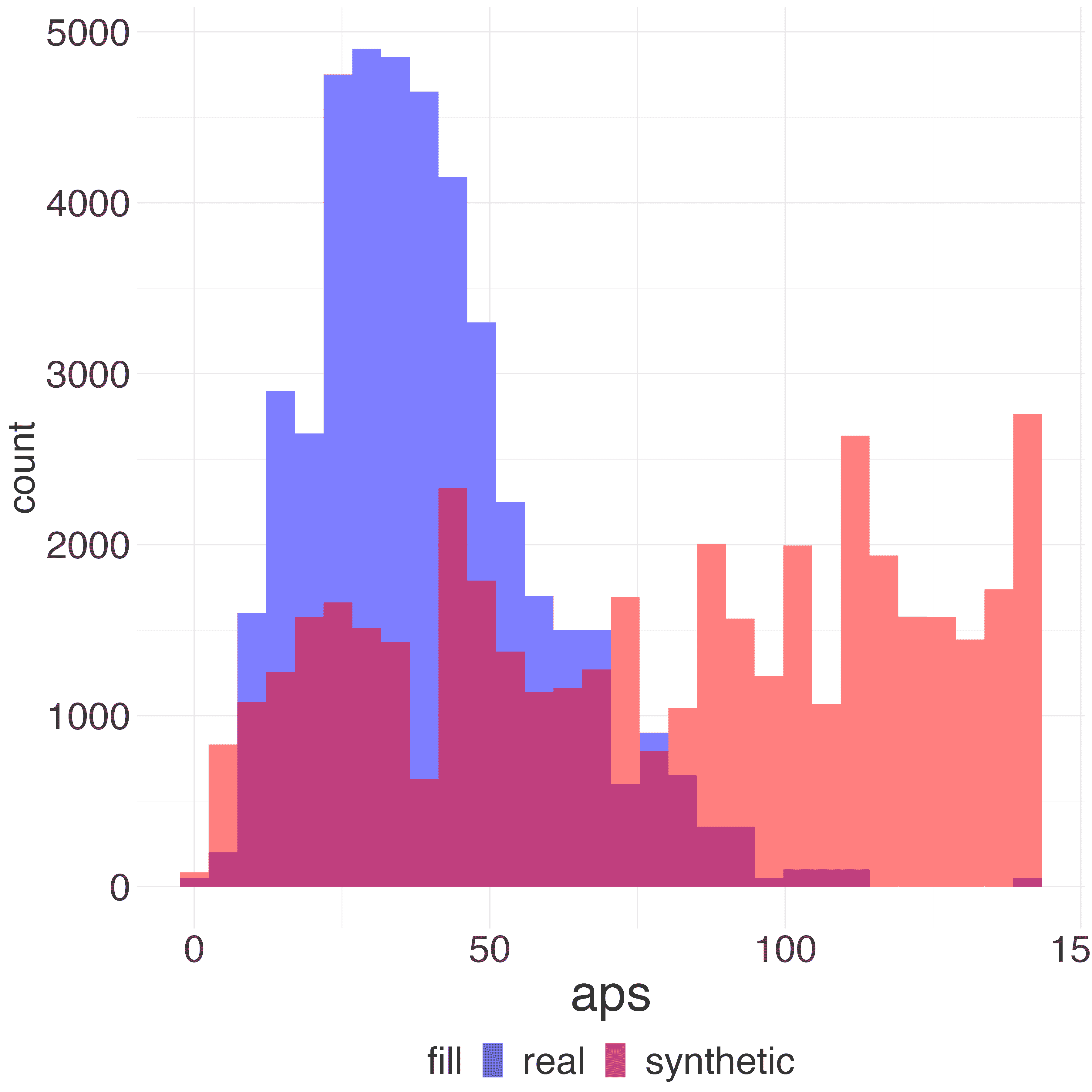} &
            \includegraphics[width=0.14\textwidth]{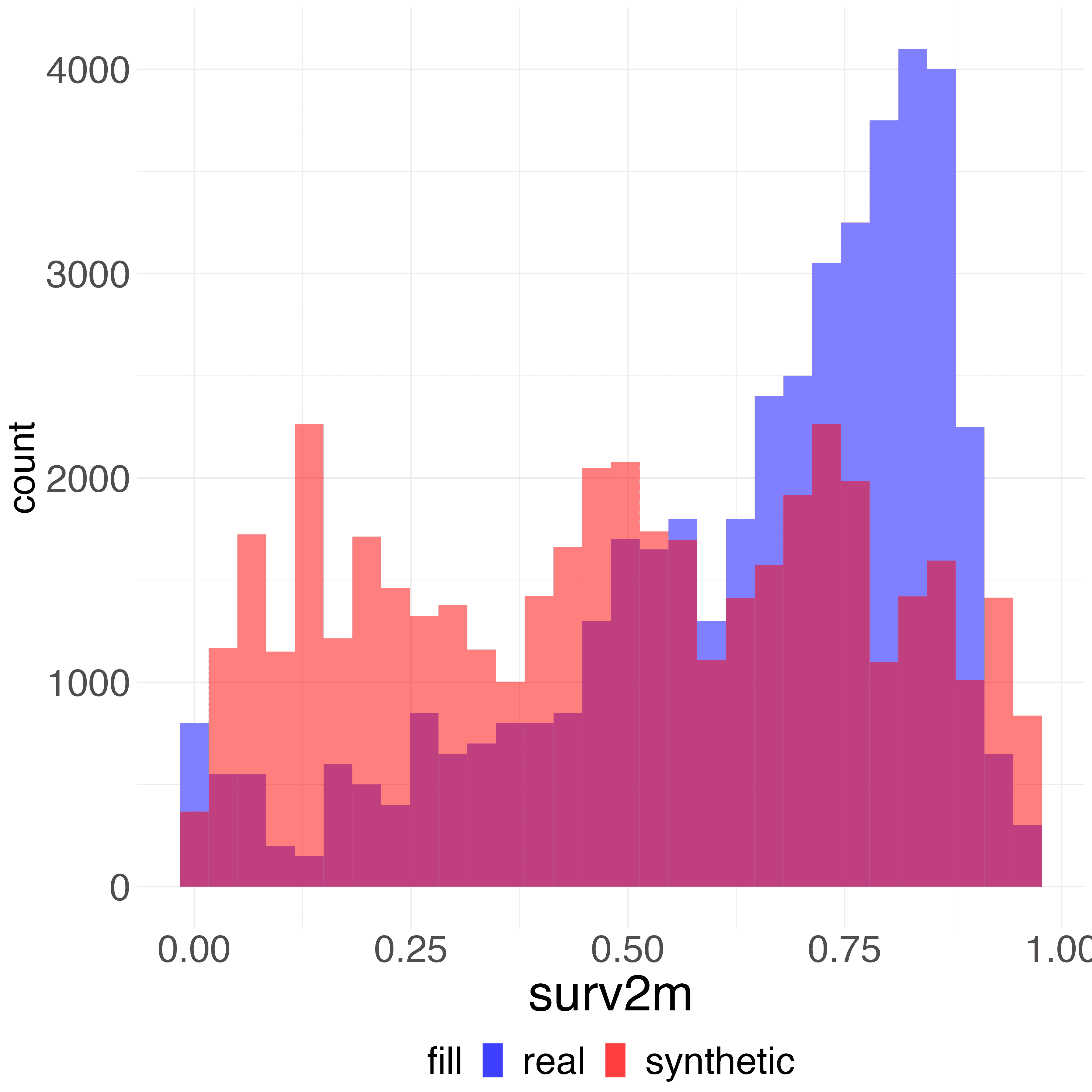} &
            \includegraphics[width=0.14\textwidth]{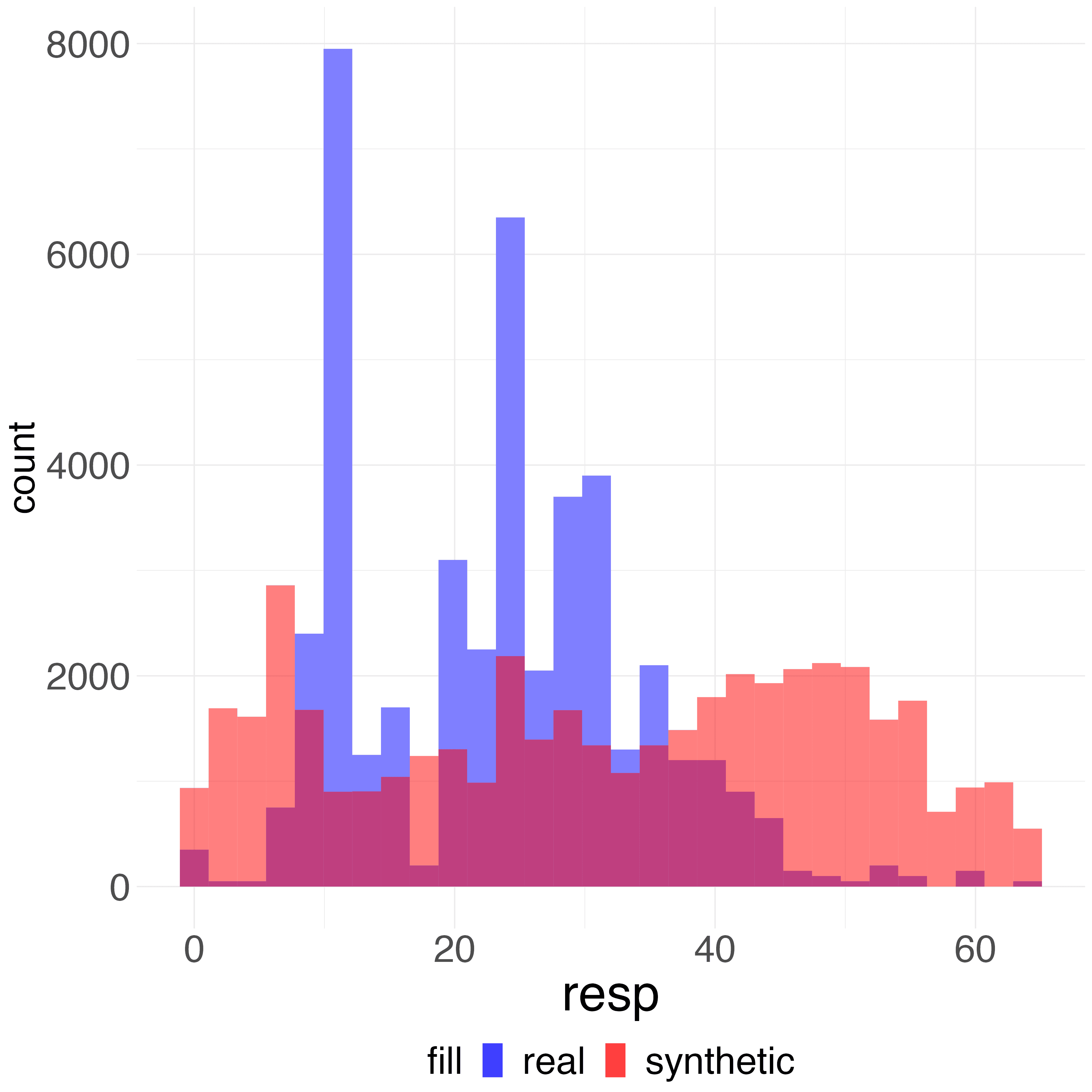} &
            \includegraphics[width=0.14\textwidth]{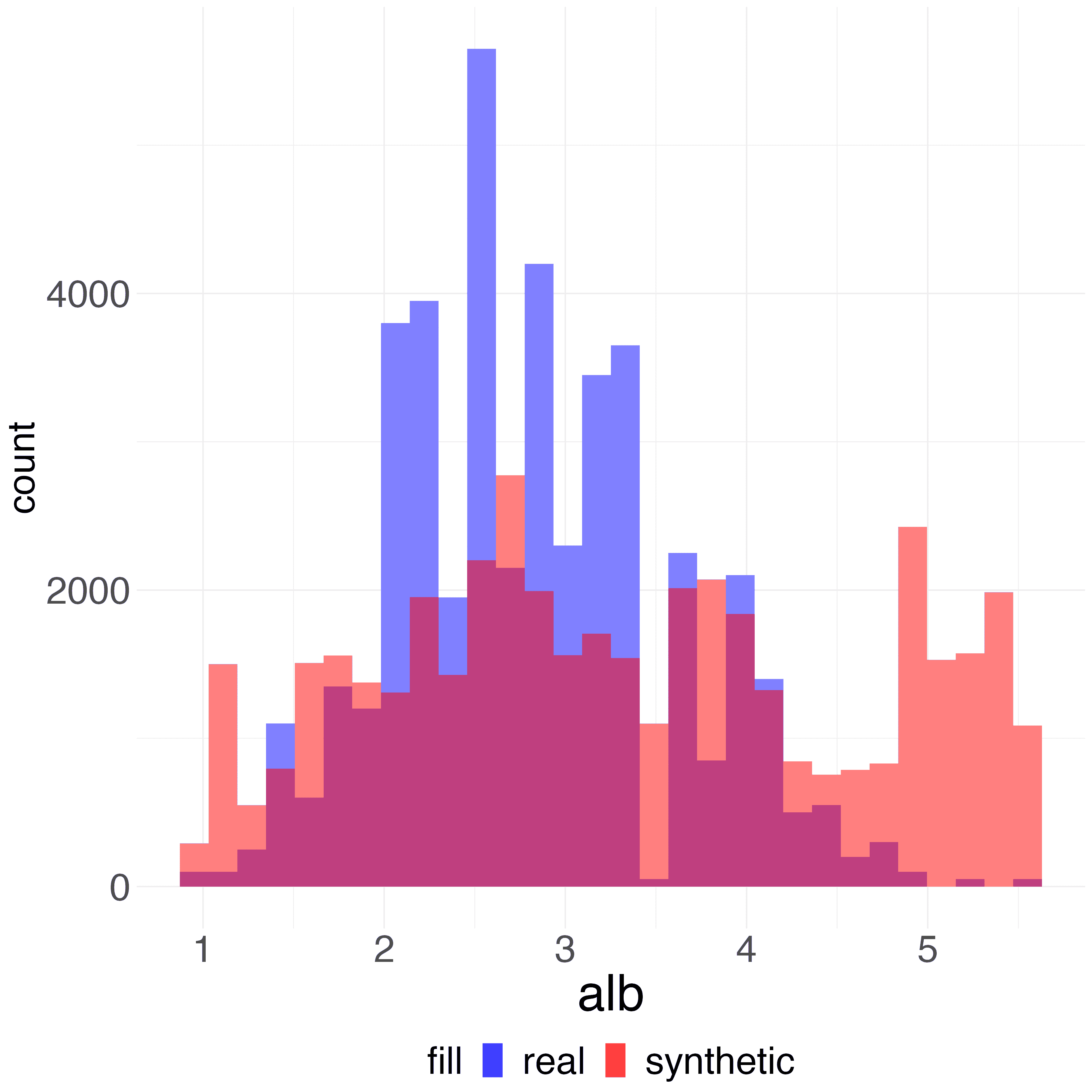} &
            \includegraphics[width=0.14\textwidth]{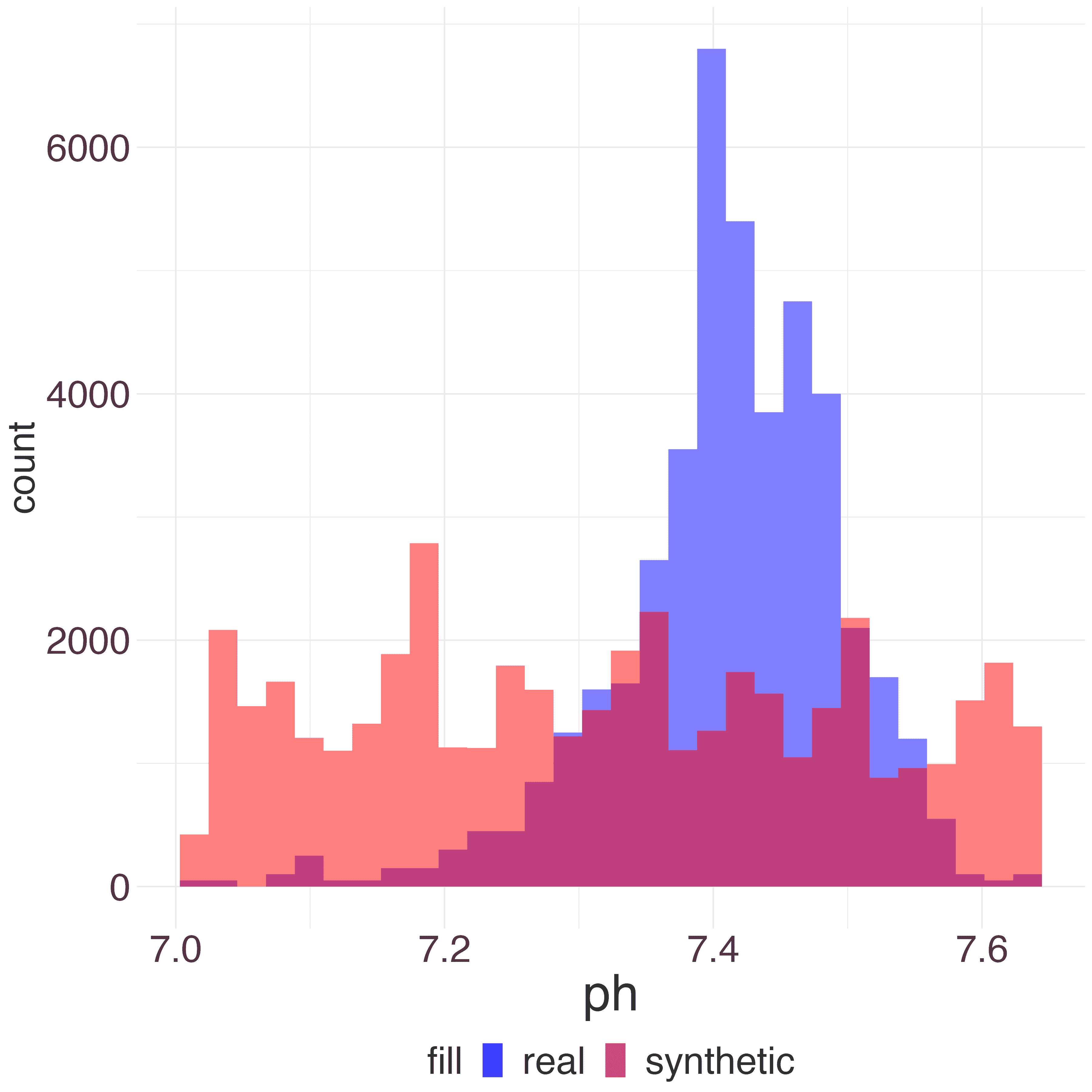}  \\
            (a) age & (b) aps & (c) surv2m & (d) resp & (e) alb & (f) ph \\
            \multicolumn{6}{c}{(6) PrivBayes, $\epsilon = 0.1$.} \\
            &&&&& \\
            \includegraphics[width=0.14\textwidth]{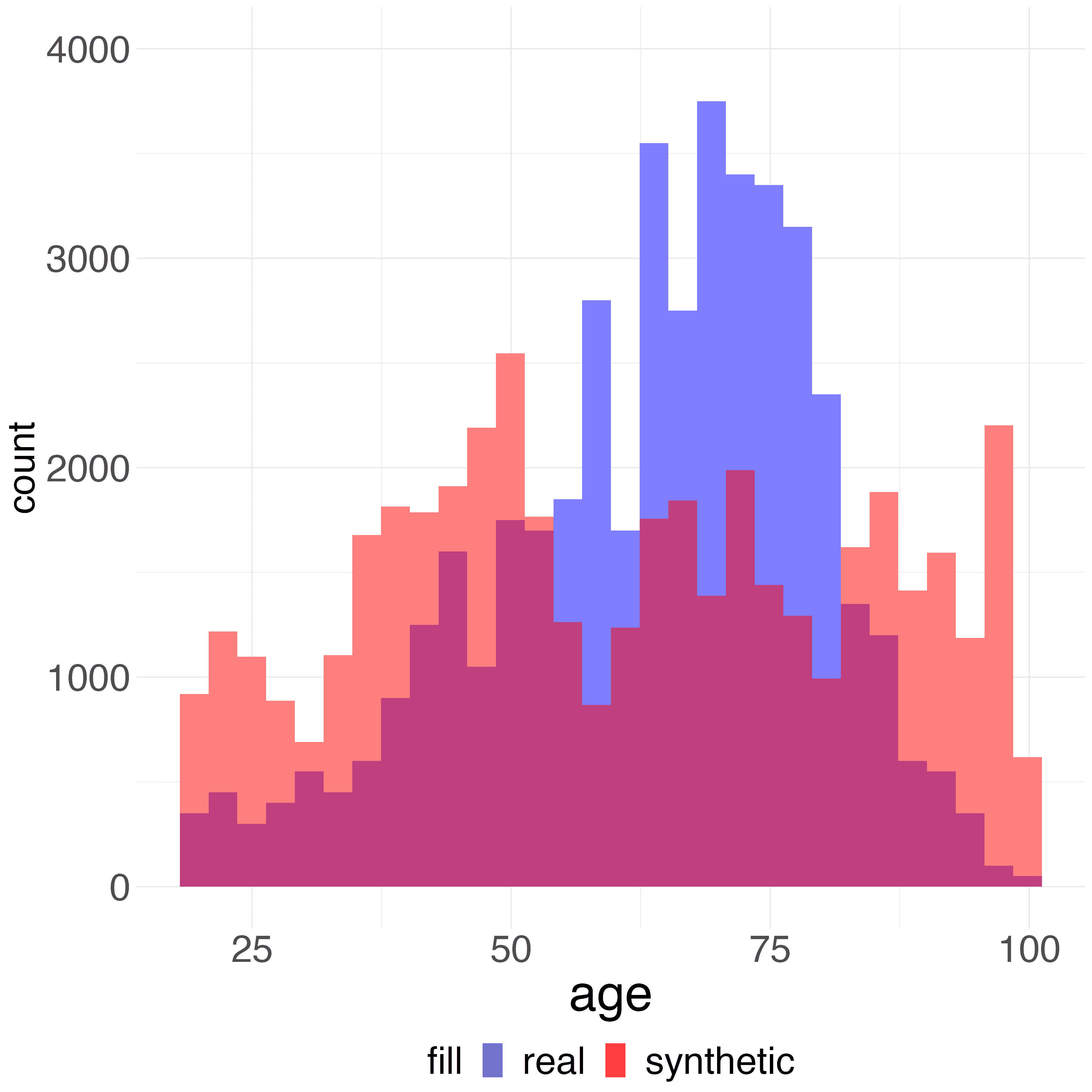} &  
            \includegraphics[width=0.14\textwidth]{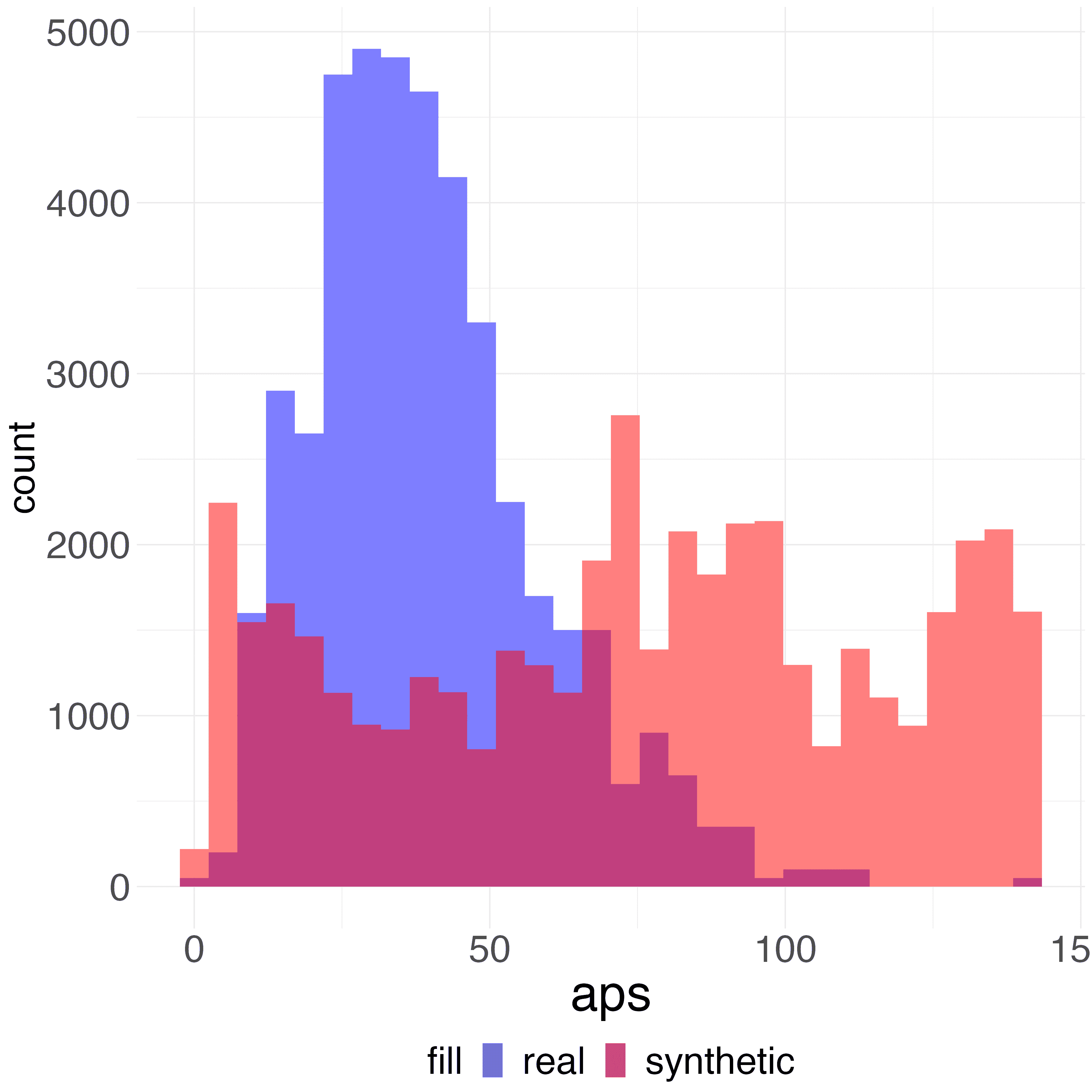} &
            \includegraphics[width=0.14\textwidth]{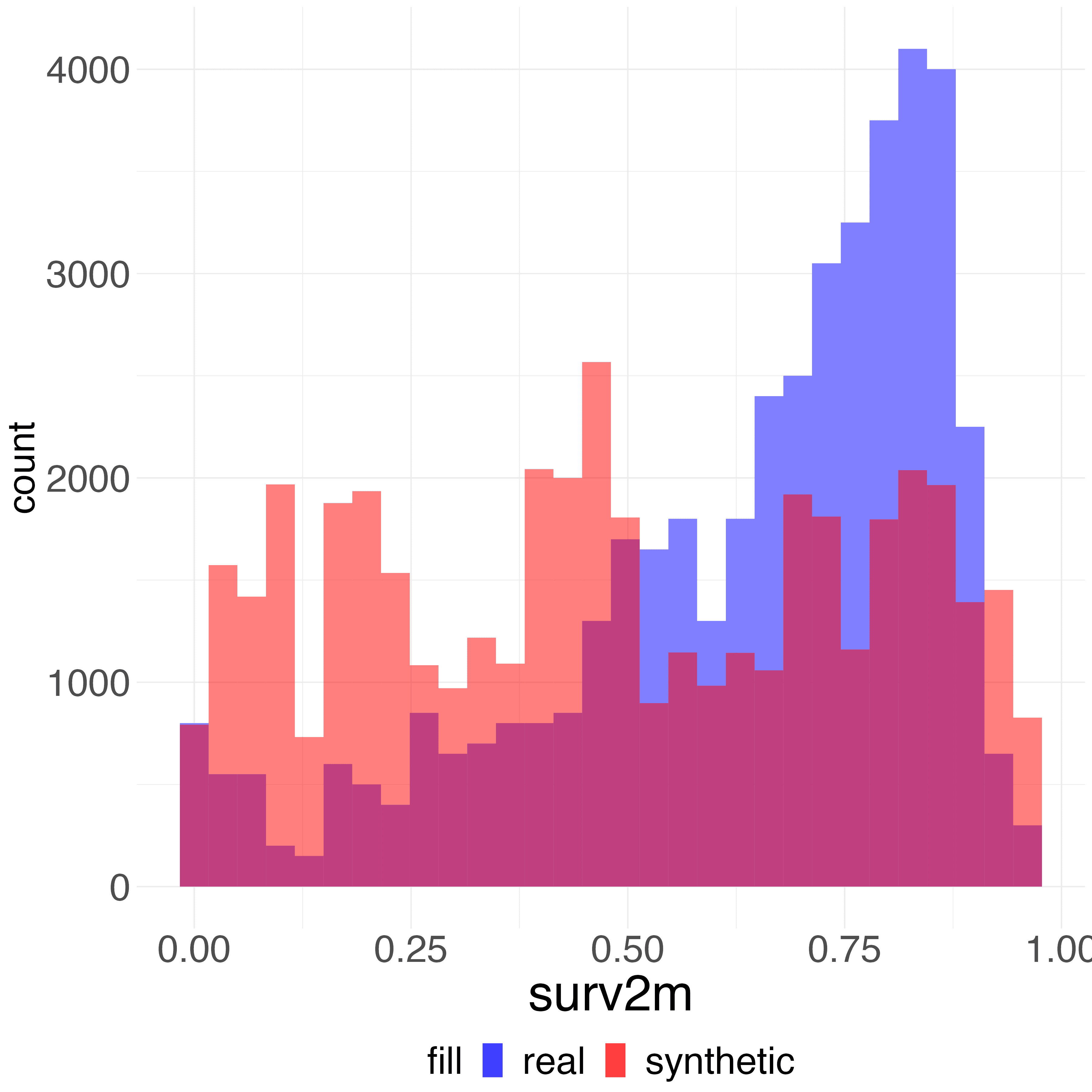} &
            \includegraphics[width=0.14\textwidth]{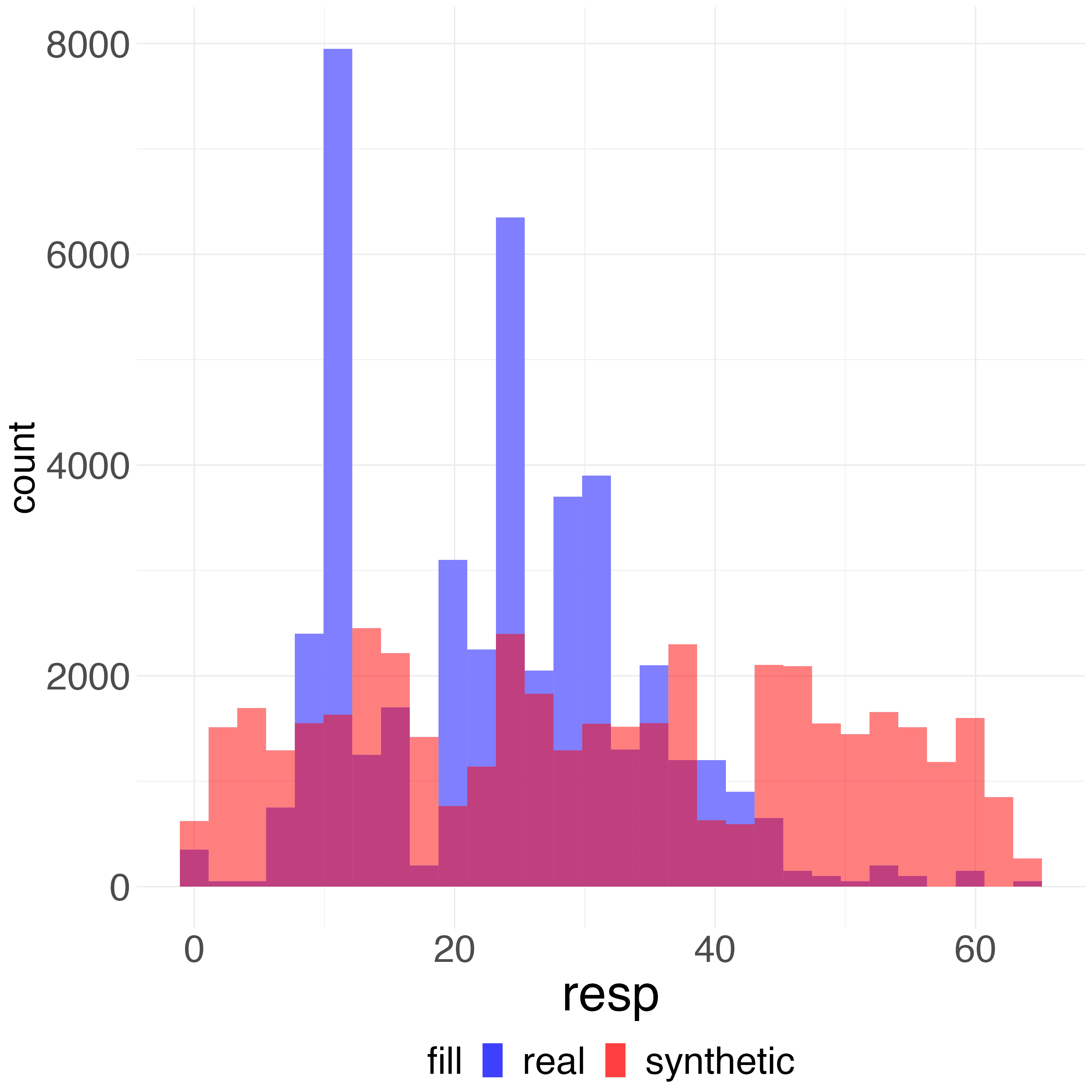} &
            \includegraphics[width=0.14\textwidth]{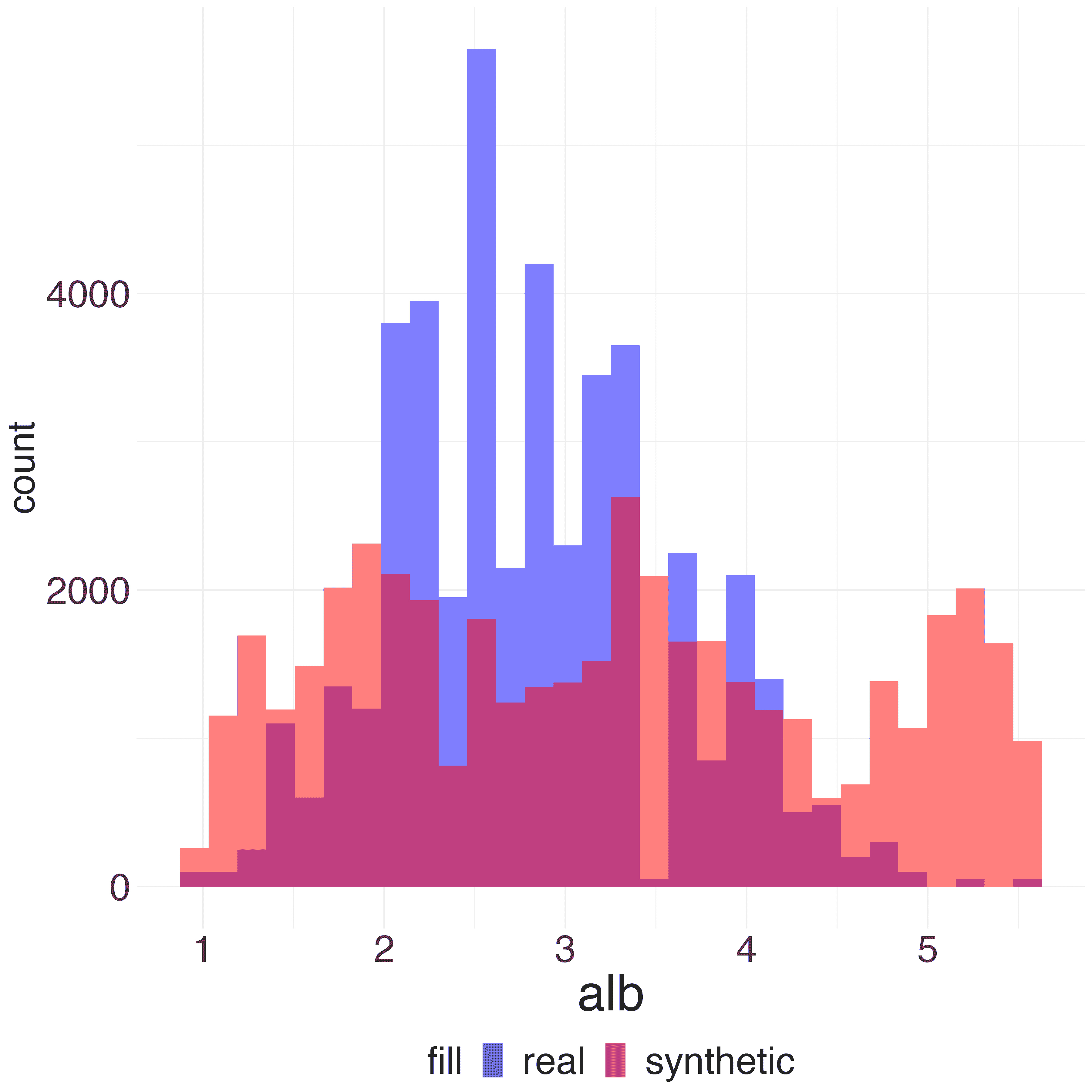} &
            \includegraphics[width=0.14\textwidth]{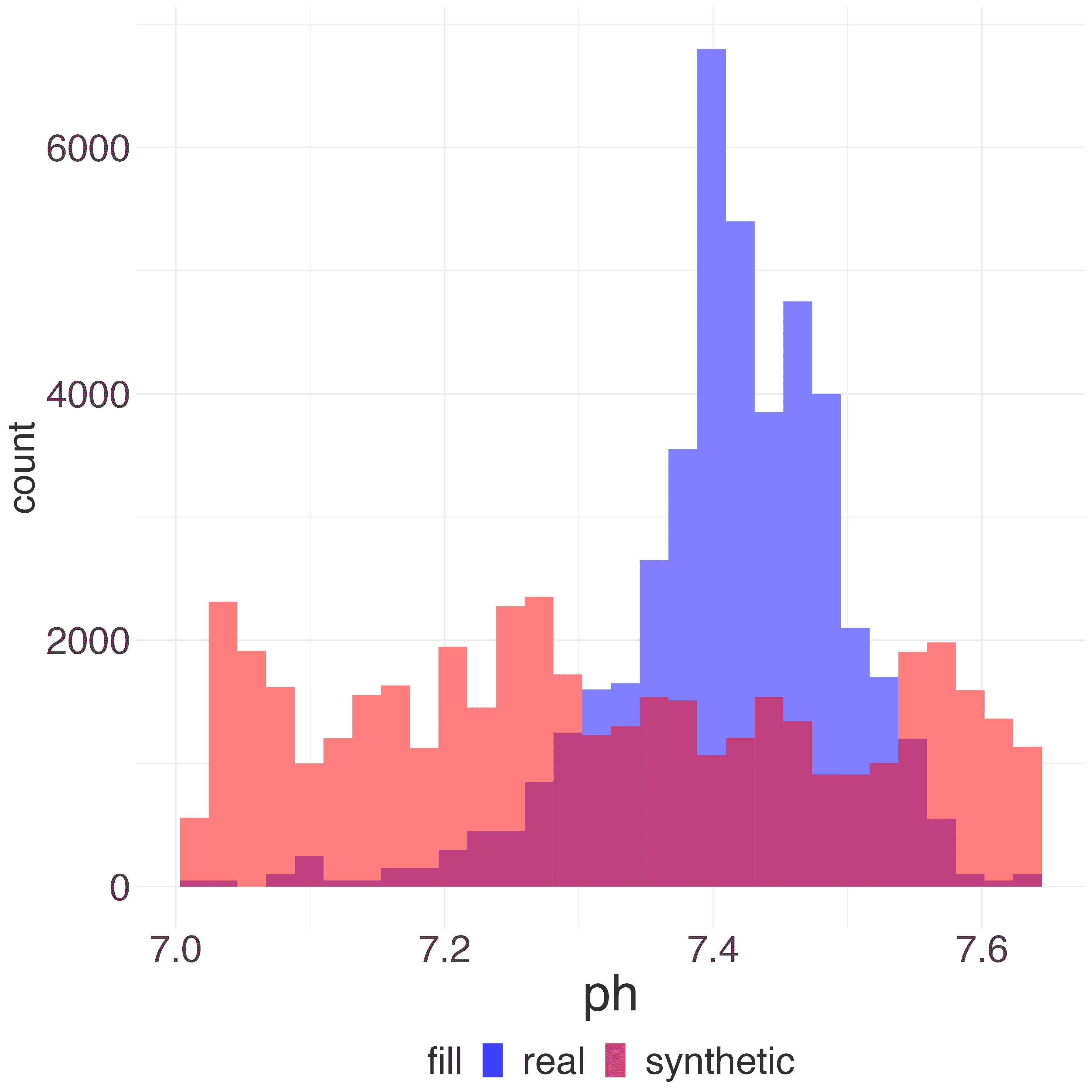}  \\
            (a) age & (b) aps & (c) surv2m & (d) resp & (e) alb & (f) ph \\
            \multicolumn{6}{c}{(7) PrivBayes, $\epsilon = 1$.} \\
            &&&&& \\
            \includegraphics[width=0.14\textwidth]{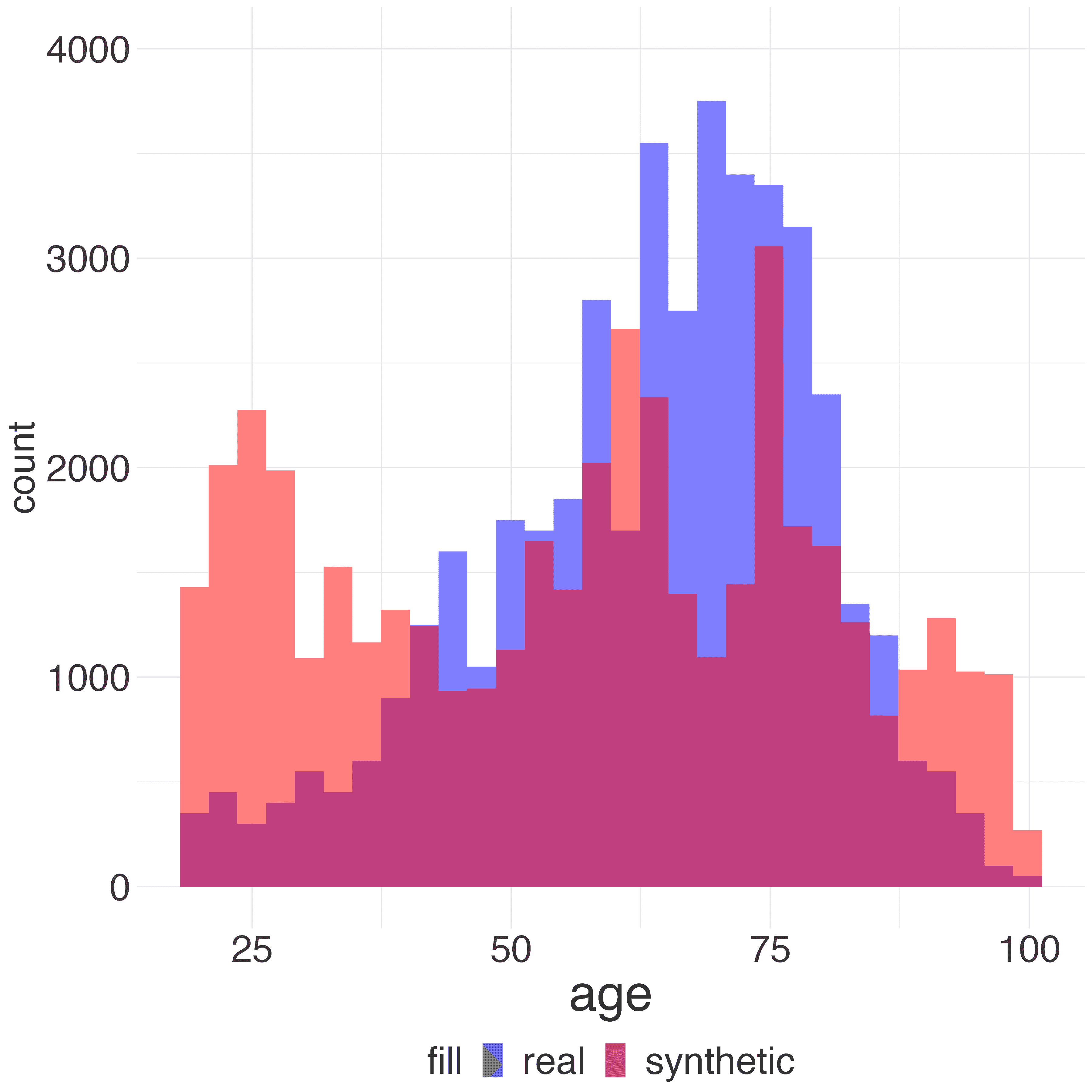} &  
            \includegraphics[width=0.14\textwidth]{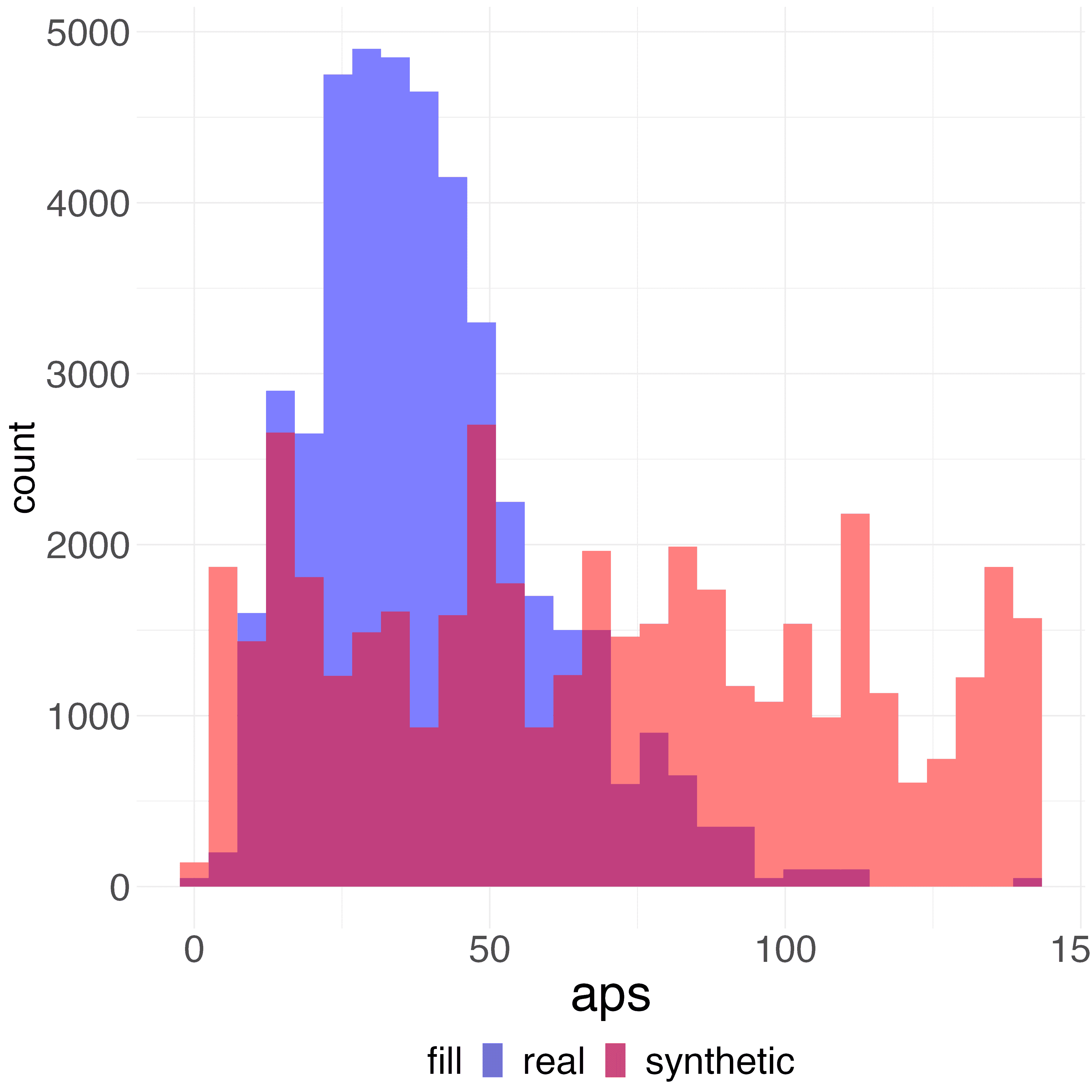} &
            \includegraphics[width=0.14\textwidth]{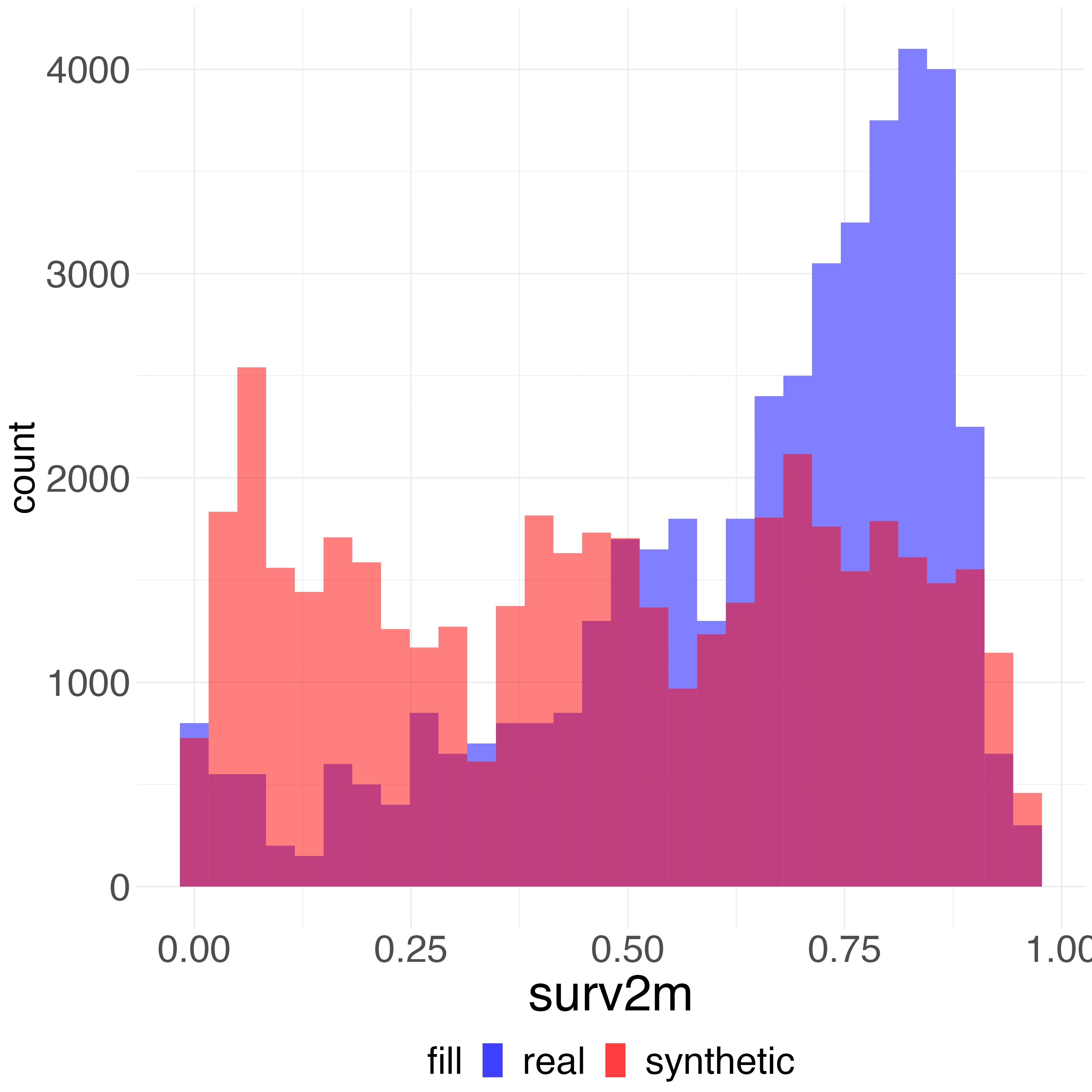} &
            \includegraphics[width=0.14\textwidth]{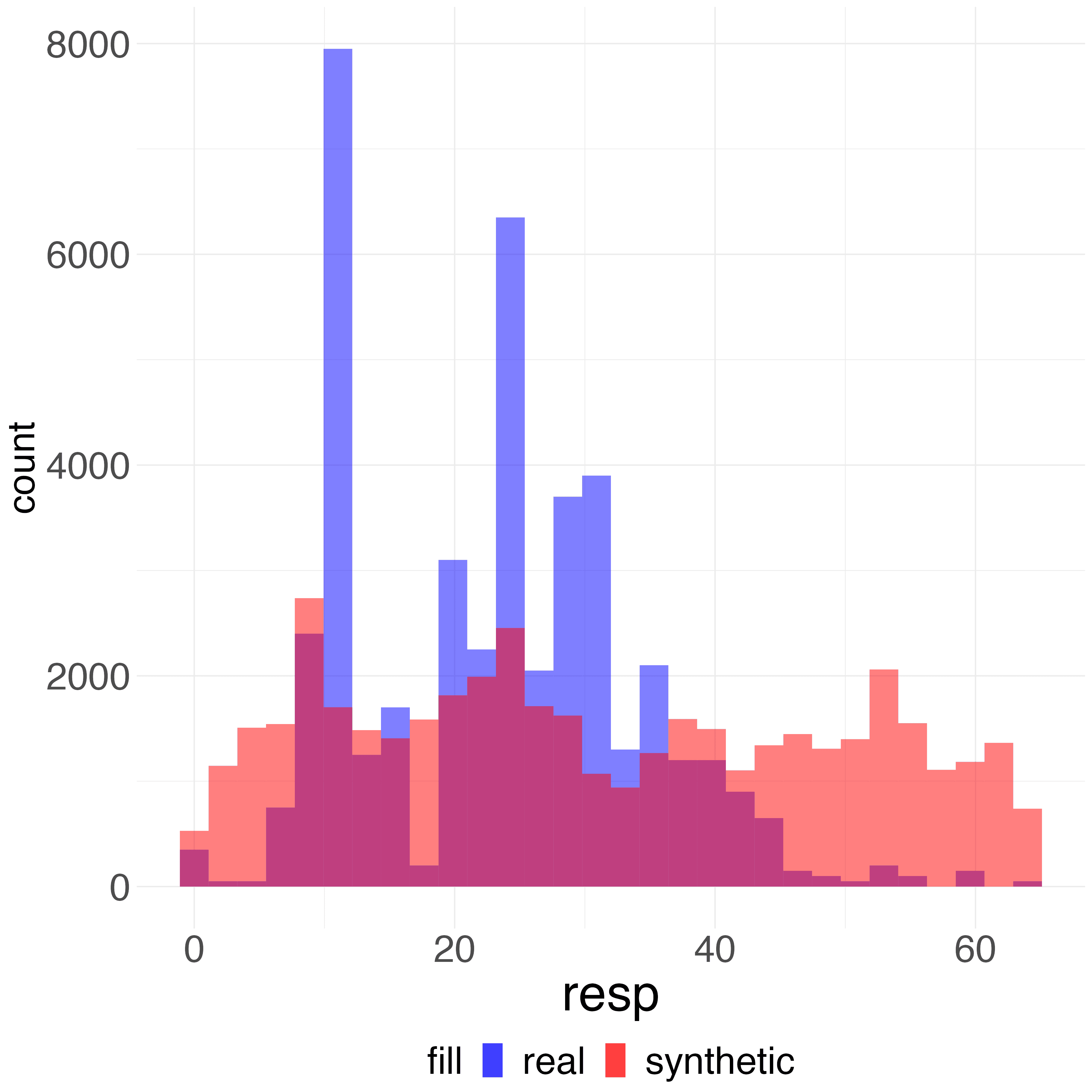} &
            \includegraphics[width=0.14\textwidth]{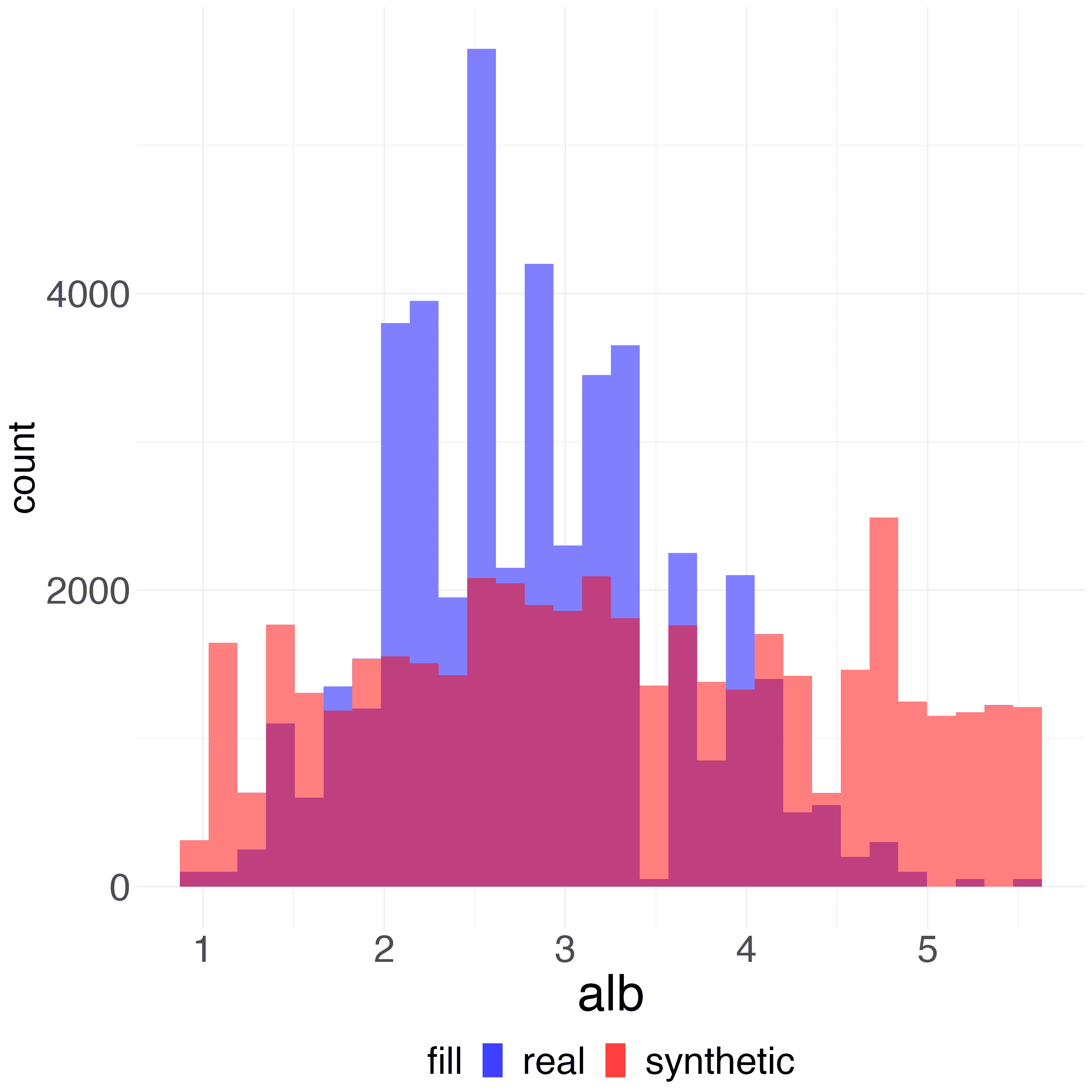} &
            \includegraphics[width=0.14\textwidth]{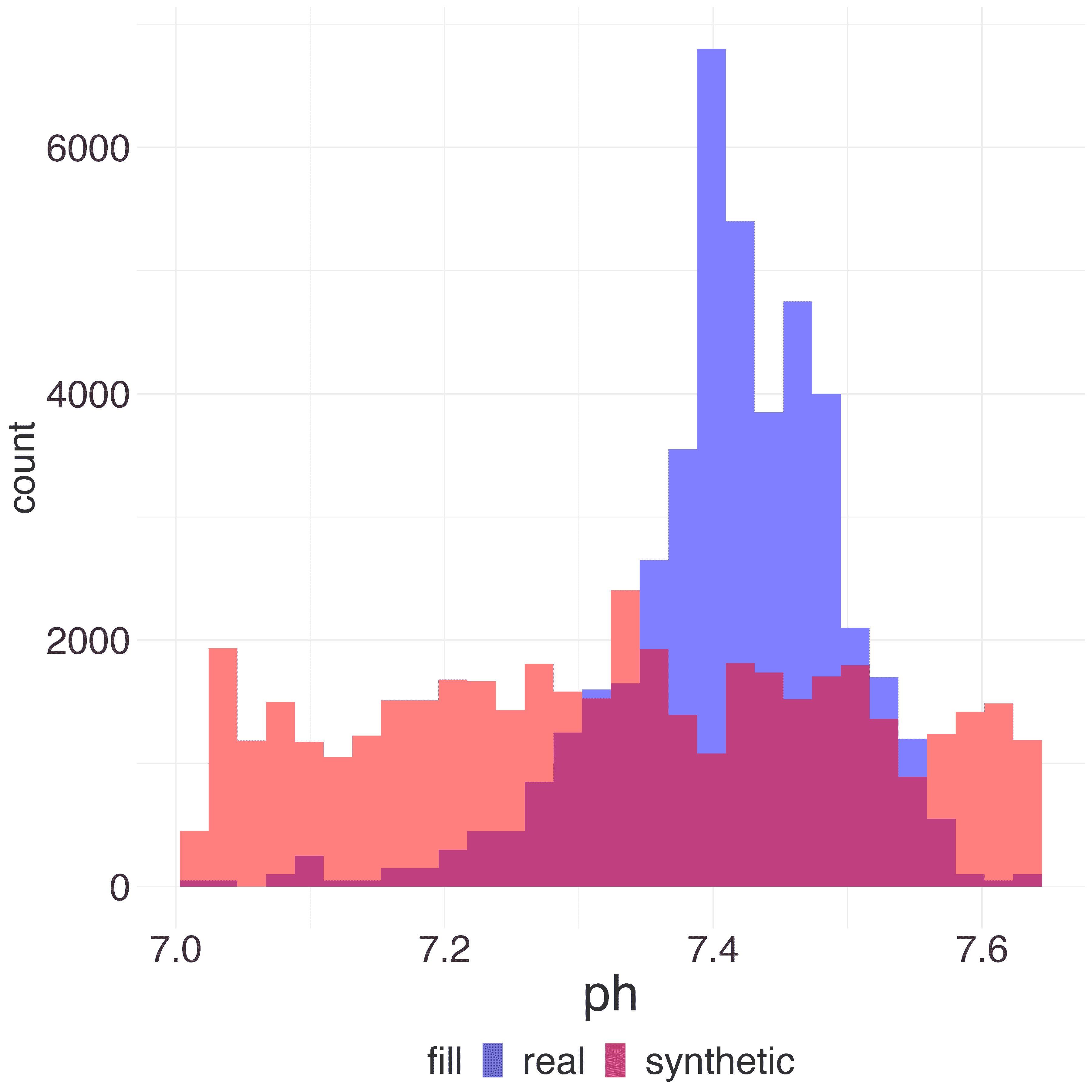}  \\
            (a) age & (b) aps & (c) surv2m & (d) resp & (e) alb & (f) ph \\
            \multicolumn{6}{c}{(8) PrivBayes, $\epsilon = 5$.} \\
            &&&&& \\
            \includegraphics[width=0.14\textwidth]{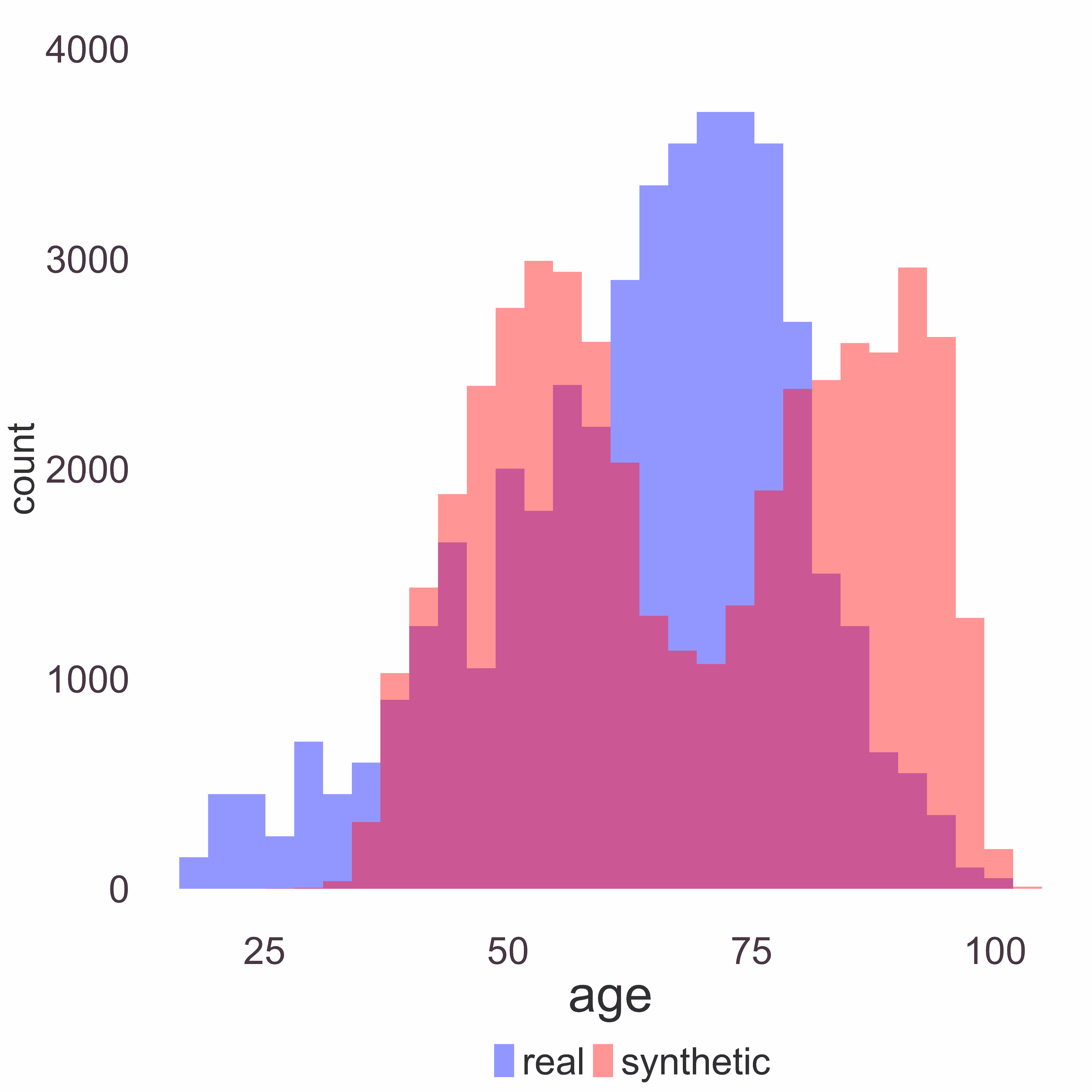} &  
            \includegraphics[width=0.14\textwidth]{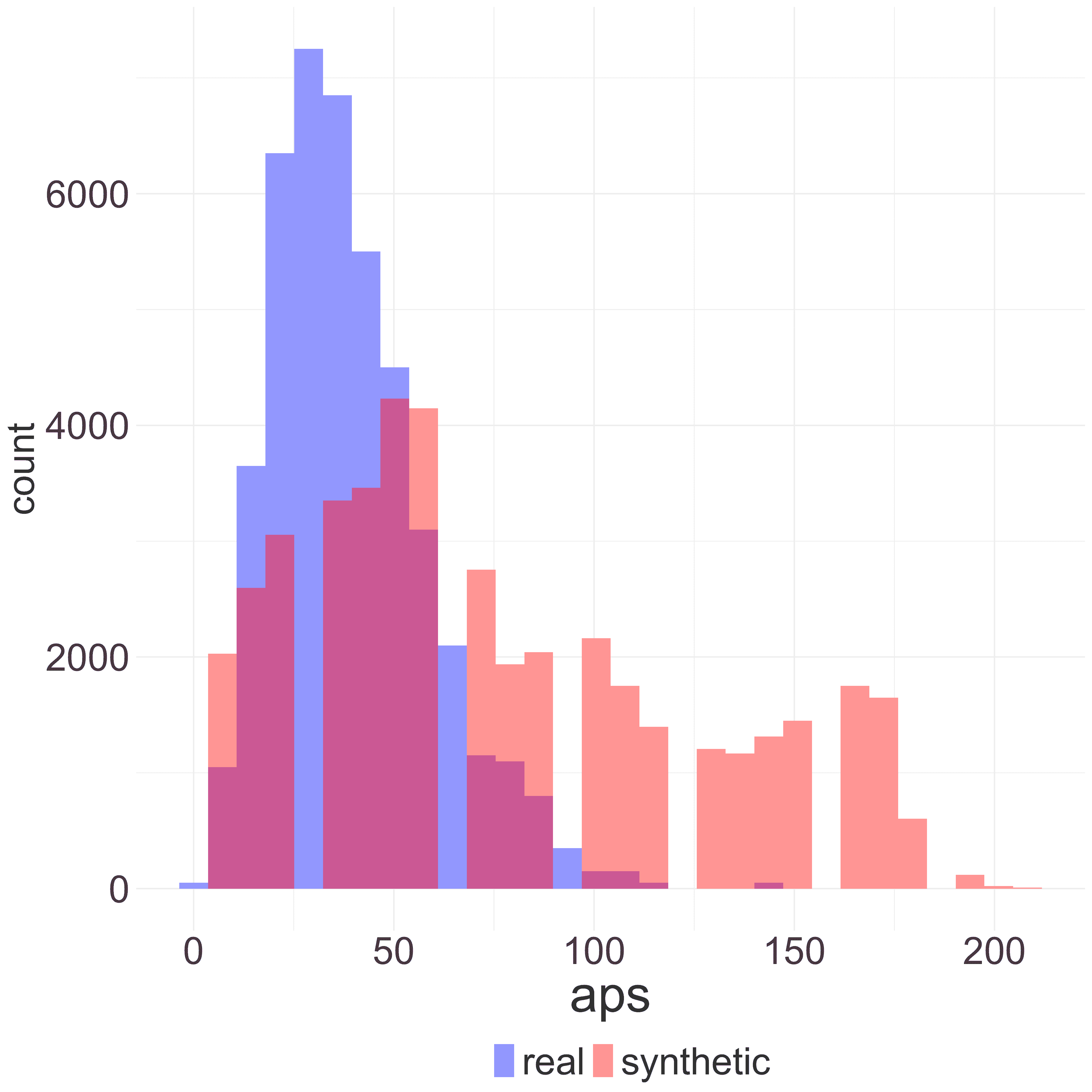} &
            \includegraphics[width=0.14\textwidth]{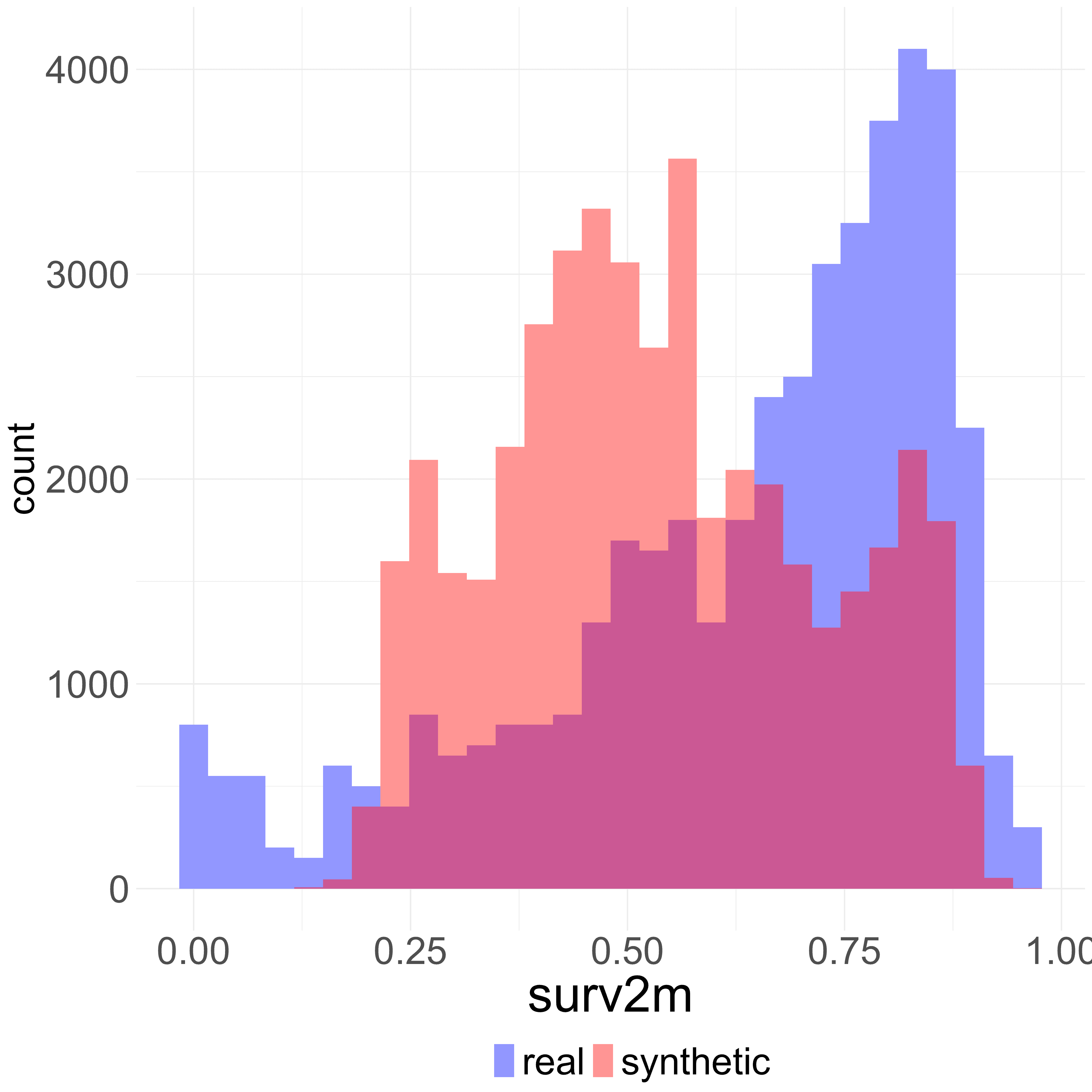} &
            \includegraphics[width=0.14\textwidth]{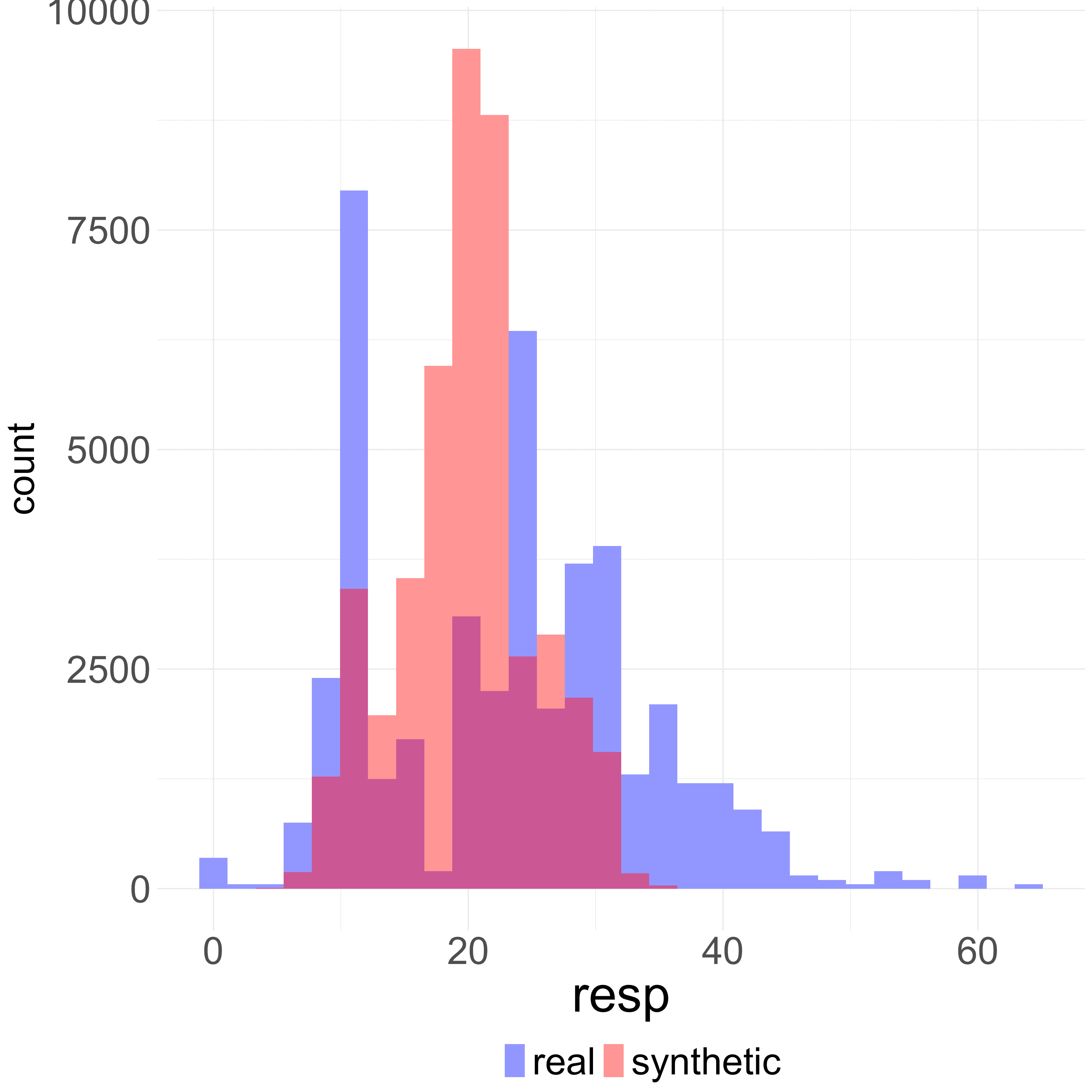} &
            \includegraphics[width=0.14\textwidth]{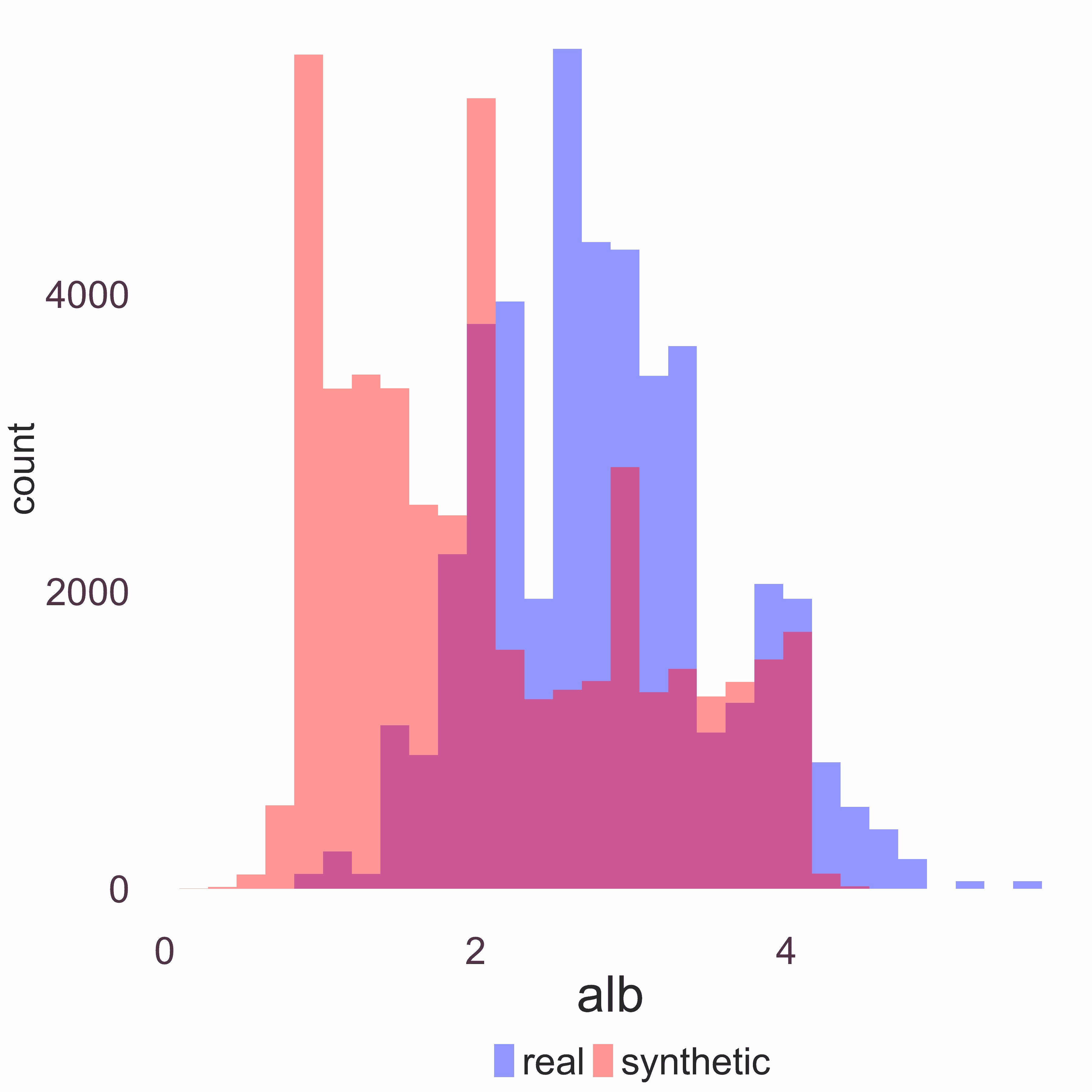} &
            \includegraphics[width=0.14\textwidth]{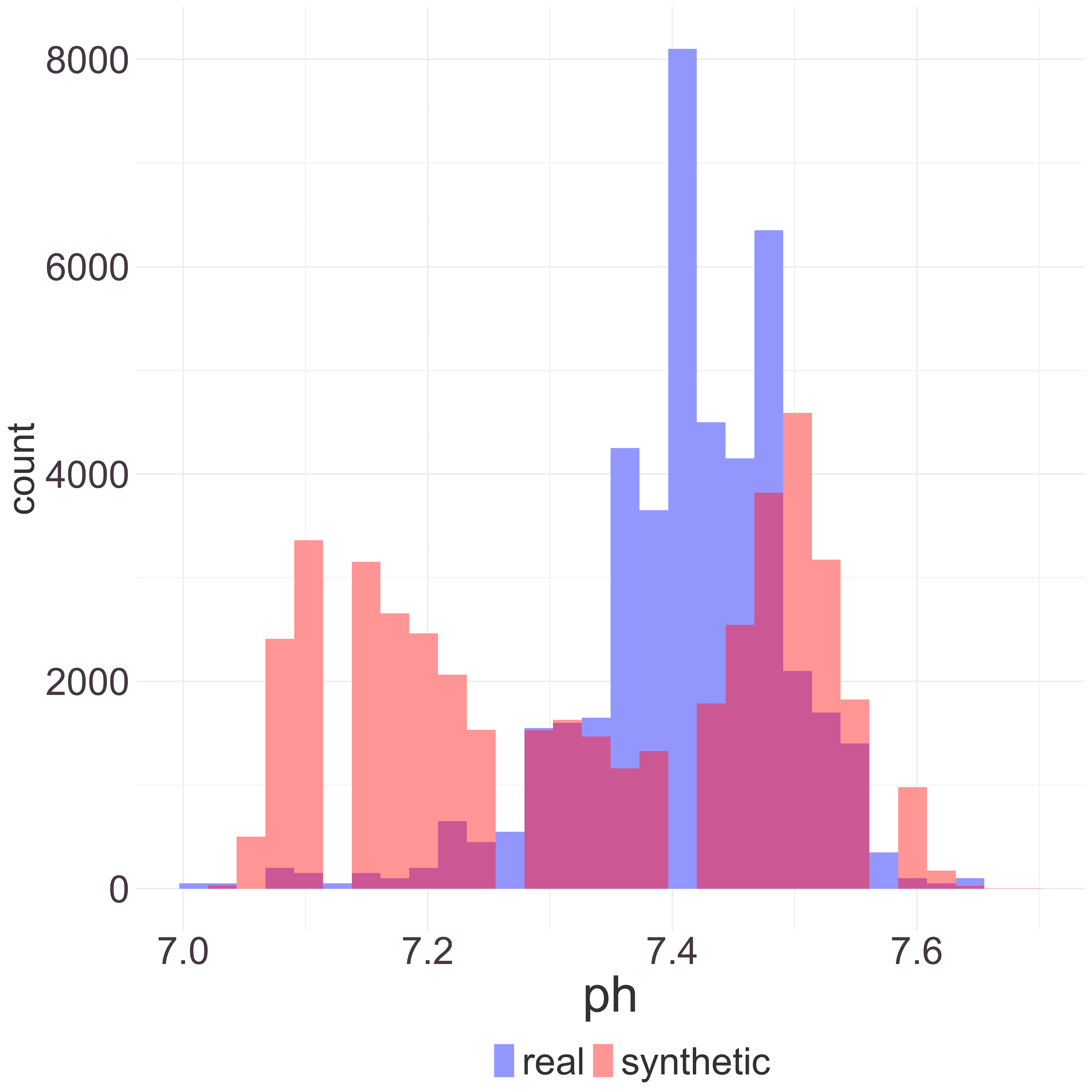}  \\
            (a) age & (b) aps & (c) surv2m & (d) resp & (e) alb & (f) ph \\
            \multicolumn{6}{c}{(9) PrivPG, $\epsilon=2.5$, $\delta=10^{-5}$.} \\
            &&&&& \\
    \end{tabular}
    \caption{SUPPORT2 data: Overlapping empirical marginal histograms of covariates \textit{age, aps, surv2m, resp, alb} and \textit{ph} estimate on the real data (blue) and synthetic data (red) generated by PrivBayes with $\epsilon \in \{0.1, 1, 5\}$ and PrivPGD with $\epsilon=2.5$ and $\delta=10^{-5}$.}
    \label{fig:margHist_support2small_PrivBayes_PrivPGD}
\end{figure}

\end{document}